\documentclass{article}

\usepackage{amsmath}

\newcommand{\eigen}[2]{\lambda_{#1}\left(#2\right)}
\newcommand{\eigenv}[2]{v_{#1}\left(#2\right)}

\newcommand{\R}{\mathbb{R}}
\newcommand{\one}{\mathbf{1}}
\newcommand{\mineta}{\eta_{\mathrm{min}}}
\newcommand{\maxeta}{\eta_{\mathrm{max}}}
\newcommand{\E}{\mathbb{E}}

\newcommand{\river}{\mathcal{M}}

\newcommand{\err}{\kappa}
\newcommand{\maxgamma}{\gamma_{\mathrm{max}}}
\newcommand{\flatgamma}{\gamma_{\mathrm{flat}}}
\newcommand{\normbound}{\Delta}
\newcommand{\noise}{{g}}
\newcommand{\normal}[2]{\mathcal{N} \left(#1, #2\right)}
\newcommand{\identity}{\mathcal{I}}
\newcommand{\sharpproj}{{P}_S}
\newcommand{\flatproj}{{P}_F}
\newcommand{\gradphi}{\nabla L(\phi(w,t))}
\newcommand{\sharpphi}{\sharpproj(\phi(w,t))}

\newcommand{\sharpgrad}{\sharpproj(w(t)) \nabla L(w(t)) }
\newcommand{\flatgrad}{\flatproj(w(t)) \nabla L(w(t)) }
\newcommand{\riverproj}{{P}_{\river}}
\newcommand{\sharpgradtau}{\sharpproj(w_{k,\tau}) \nabla L(w_{k,\tau})}
\newcommand{\flatgradtau}{\flatproj(w_{k,\tau}) \nabla L(w_{k,\tau})}
\newcommand{\prob}{ \mathbb{P}}
\newcommand{\Normal}{\mathcal{N}}
\newcommand{\maxT}{T_{\mathrm{max}}}

\newcommand{\convergeT}{T_{\mathrm{converge}}}
\newcommand{\cosineoracle}{\textit{Cosine-Oracle}}
\newcommand{\cycliccosine}{\textit{Cyclic-Cosine}}
\newcommand{\maintheta}{{\theta^{\mathrm{main}}}}
\newcommand{\wsd}{\textit{WSD}}
\newcommand{\wsds}{\textit{WSD-S}}

\usepackage{iclr2024_conference,times}

\usepackage[utf8]{inputenc} %
\usepackage[T1]{fontenc}    %
\usepackage{hyperref}       %
\usepackage{url}            %
\usepackage{booktabs}       %
\usepackage{amsfonts}       %
\usepackage{nicefrac}       %
\usepackage{microtype}      %
\usepackage{multirow}
\usepackage{amsthm}
\usepackage{wrapfig}
\usepackage{cleveref}
\newtheorem{assumption}{Assumption}
\Crefname{assumption}{Assumption}{Assumptions}
\definecolor{mypurple}{RGB}{128, 0, 128}
\definecolor{microsoftcyan}{RGB}{48, 192, 180}
\definecolor{darkyellow}{rgb}{0.85, 0.65, 0.13}
\iclrfinalcopy

\title{\center{
Understanding Warmup-Stable-Decay Learning Rates: A River Valley Loss Landscape Perspective}}

\author{Kaiyue Wen \\
Stanford University \\
\texttt{kaiyuew@stanford.edu}
\And
Zhiyuan Li \\
Toyota Technological Institute at Chicago \\
\texttt{zhiyuanli@ttic.edu}\\
\And
Jason  Wang \\
Stanford University \\
\texttt{jsywang@stanford.edu}
\And
David Hall \\
Stanford University \\
\texttt{dlwh@cs.stanford.edu}
\AND
Percy Liang \\
Stanford University \\
\texttt{pliang@cs.stanford.edu}
\And
Tengyu Ma \\
Stanford University \\
\texttt{tengyuma@cs.stanford.edu}
}

\newtheorem{theorem}{Theorem}[section]

\newtheorem{lemma}[theorem]{Lemma}

\newtheorem{definition}[theorem]{Definition}

\Crefname{lemma}{Lemma}{Lemmas}
\usepackage{graphicx}
\usepackage{subcaption}
\usepackage{enumitem}

\begin{document}

\maketitle
\begin{center}
\vspace{0.2in}
    \begin{minipage}{0.28\textwidth}
        \centering
        \textbf{Kaiyue Wen} \\
        Stanford University \\
        \texttt{kaiyuew@stanford.edu}
    \end{minipage}
    \hspace{0.05\textwidth} %
    \begin{minipage}{0.28\textwidth}
        \centering
        \textbf{Zhiyuan Li} \\
TTIC \\
\texttt{zhiyuanli@ttic.edu}
    \end{minipage}
    \hspace{0.05\textwidth}
    \begin{minipage}{0.28\textwidth}
        \centering
        \textbf{Jason Wang} \\
        Stanford University \\
        \texttt{jsywang@stanford.edu}
    \end{minipage} \\
    \vspace{0.5cm} %
    \begin{minipage}{0.28\textwidth}
        \centering
        \textbf{David Hall} \\
        Stanford University \\
        \texttt{dlwh@cs.stanford.edu}
    \end{minipage}
    \hspace{0.05\textwidth}
    \begin{minipage}{0.28\textwidth}
        \centering
        \textbf{Percy Liang} \\
        Stanford University \\
        \texttt{pliang@cs.stanford.edu}
    \end{minipage}
    \hspace{0.05\textwidth}
    \begin{minipage}{0.28\textwidth}
        \centering
        \textbf{Tengyu Ma} \\
        Stanford University \\
        \texttt{tengyuma@cs.stanford.edu}
    \end{minipage}
\end{center}
\vskip 0.3in minus 0.1in

\begin{abstract}

Training language models currently requires pre-determining a fixed compute budget because the typical cosine learning rate schedule depends on the total number of steps. In contrast, the Warmup-Stable-Decay ($\wsd$) schedule uses a constant learning rate to produce a main branch of iterates that can in principle continue indefinitely without a pre-specified compute budget. Then, given any compute budget, one can branch out from the main branch at a proper time with a rapidly decaying learning rate to produce a strong model.
Empirically, $\wsd$ generates an intriguing, non-traditional loss curve: the loss remains elevated during the stable phase but sharply declines during the decay phase. 
Towards explaining this phenomenon, we conjecture that pretraining loss exhibits a \emph{river valley landscape}, which resembles a deep valley with a river at its bottom. Under this assumption, we show that during the stable phase, the iterate undergoes large oscillations due to the high learning rate, yet it progresses swiftly along the river. During the decay phase, the rapidly dropping learning rate minimizes the iterate's oscillations, moving it closer to the river and revealing true optimization progress. Therefore, the sustained high learning rate phase and fast decaying phase are responsible for progress in the river and the mountain directions, respectively, and are both critical. Our analysis predicts phenomenons consistent with empirical observations and shows that this landscape can naturally emerge from pretraining on a simple bi-gram dataset.
Inspired by the theory, we introduce $\wsds$, a variant of $\wsd$ that reuses previous checkpoints' decay phases and keeps only one main branch, where we resume from a decayed checkpoint. $\wsds$ empirically outperforms $\wsd$ and $\cycliccosine$ in obtaining multiple pretrained language model checkpoints across various compute budgets in a single run for parameters scaling from 0.1B to 1.2B. 
\end{abstract}

\section{Introduction}

Pre-training large language models (LLMs) typically involves following a learning rate schedule that decreases over a pre-determined number of steps, such as a cosine schedule \citep{loshchilov2017sgdr,touvron2023llama}, where the learning rate starts high and gradually decreases in a smooth curve following the shape of a cosine function. 
This inflexible approach makes it difficult to adapt to additional compute or data, as the learning rate schedule for all the data is not a natural continuation of the schedule used with past data. Additionally, fitting scaling laws is costly because each compute budget requires retraining to adjust the learning rate schedule \citep{hoffmann2022training}.

In contrast to the cosine learning rate, recent work~\cite{hu2024minicpm} introduces the warmup-stable-decay ($\wsd$) schedule, which does not require committing to a pre-specified total compute budget.
After a standard warm-up period, the $\wsd$ schedule maintains a main ``branch'' using  a constant learning rate indefinitely and branch off using a fast-decaying learning rate schedule to obtain intermediate checkpoints (see the second row of Figure~\ref{fig:learningrate}). Using the $\wsd$ schedule, one can continue training from a checkpoint in the main branch by resuming with the same constant learning rate and can obtain training losses for multiple compute budgets with a single run.

Empirically, the $\wsd$ schedule produces a non-traditional loss curve (see~\Cref{fig:losscurve}): during the constant learning rate phase, the loss remains higher than the loss using other schedules like the cosine schedule; but during the decay phase, it drops sharply, often leading to a better final performance compared to the cosine schedule. This raises the main question the paper aims to address: 

\begin{wrapfigure}{r}{0.4\textwidth}
    \centering
    \includegraphics[width=0.4\textwidth]{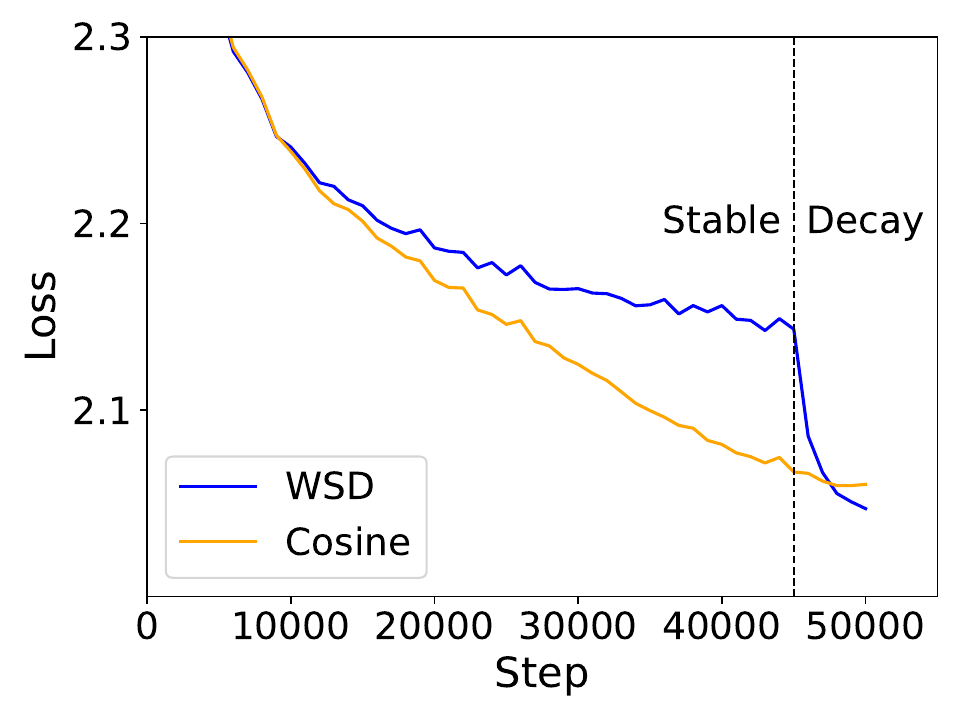}
    \caption{\textbf{The Distinctive Loss Curve produced by $\wsd$.} 
A constant learning rate phase, characterized by slow loss improvements, eventually leads to better validation loss after learning rate decay.}
    \label{fig:losscurve}
    \vspace{-0.3in}
\end{wrapfigure}

\emph{Why does $\wsd$ work, especially with such a non-traditional loss curve? Specifically, why does a constant learning rate phase, characterized by slow loss improvements, eventually lead to superior performance?}

\begin{figure}[t]
    \centering
    \begin{subfigure}[b]{0.36\textwidth}
        \centering\includegraphics[width=\textwidth]{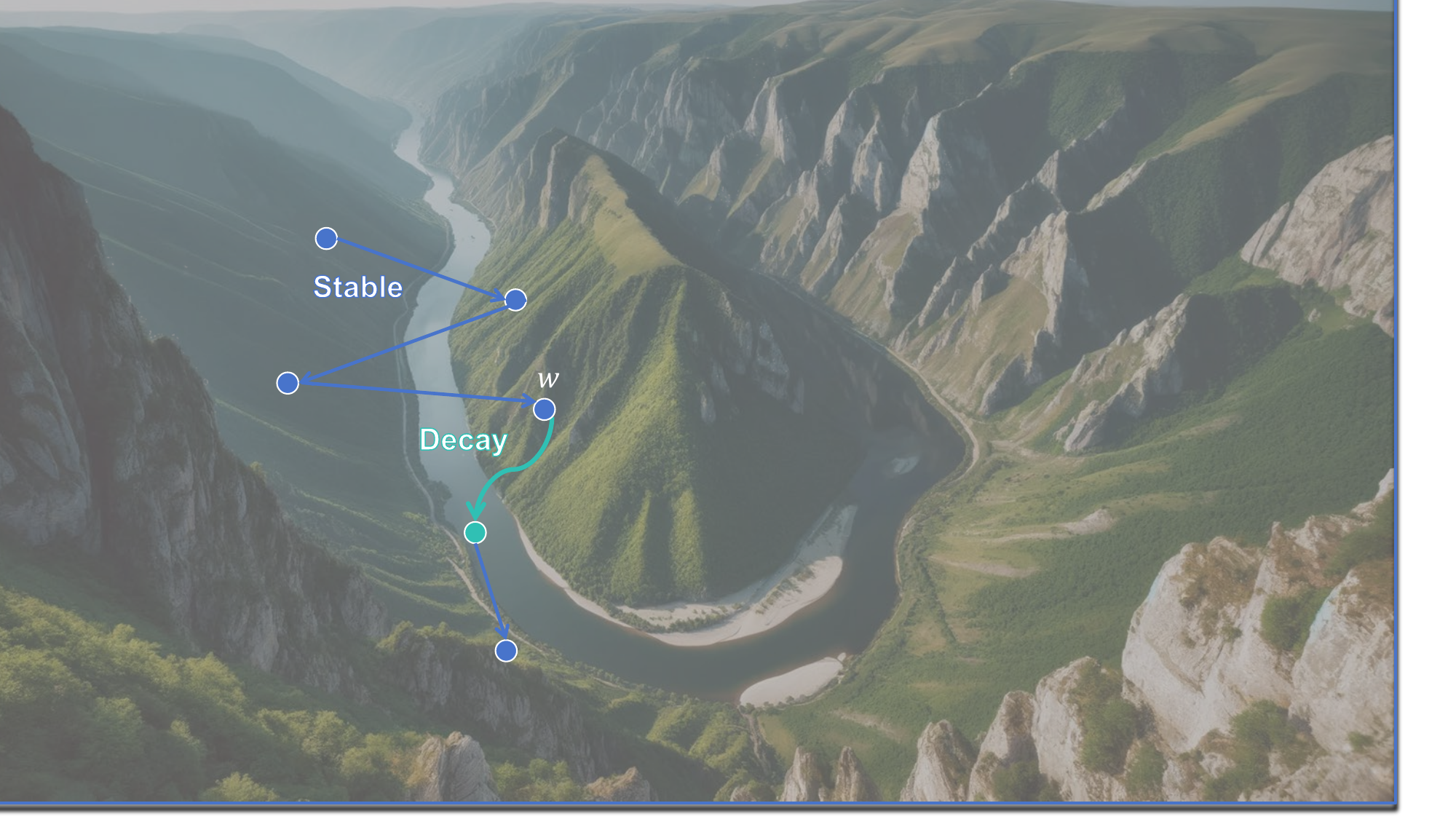}
        \caption{River Valley Landscape}
        \label{fig:landscape}
    \end{subfigure}
    \hfill
    \begin{subfigure}[b]{0.62\textwidth}
        \centering
        \includegraphics[width=\textwidth]{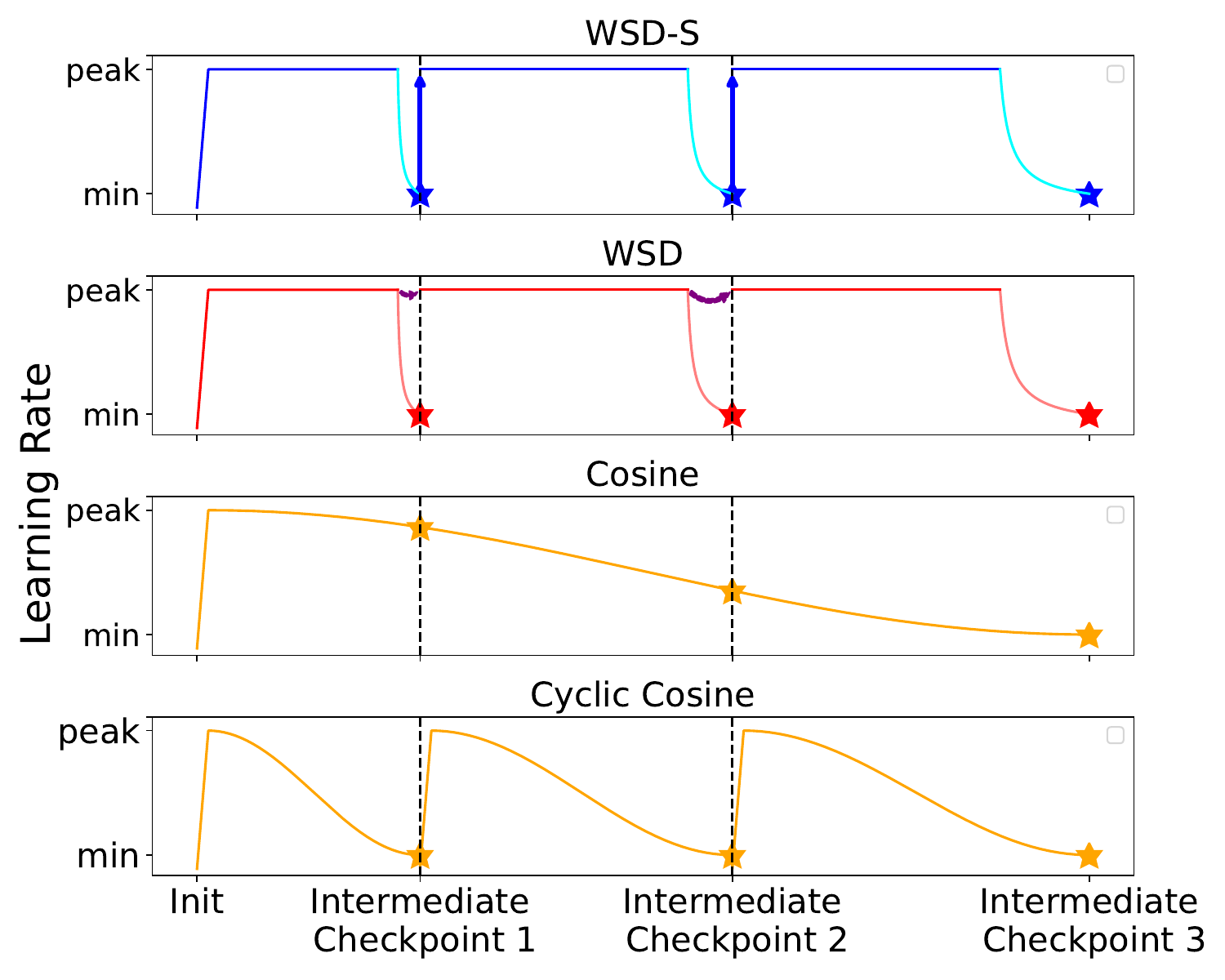}
        \caption{Learning rate schedules}
        \label{fig:learningrate}
    \end{subfigure}
    \caption{
        We theoretically analyze the Warmup-Stable-Decay ($\wsd$) schedule and demonstrate a \textbf{river valley} loss lanscape model to explain its effectiveness (demonstrated in~\Cref{fig:landscape}). The stable phase adopts a large learning rate and the iterate will progress along the river while oscillating between the sharp hillsides. Due to the large oscillation caused by the large learning rate, the run will potentially show a higher loss compared to a run using smaller learning rate in this phase. During the decay phase, the learning rate is dropped rapidly to ease the oscillation of the iterates, driving it closer to the river, revealing the optimization progress.
        Based on our theory, we propose $\wsds$implified ($\wsds$), an effective simplification of the $\wsd$ schedule in continual learning, where we start directly using a high learning rate from previous intermediate checkpoints. We visualize the learning rate schedule in~\Cref{fig:learningrate}.
        The {\color{mypurple}{arrow}} in the second row of~\Cref{fig:learningrate} indicates $\wsd$ reinitializes the checkpoint from the last checkpoint from the constant learning rate phase instead. 
    }
    \vspace{-0.2in}
    \label{fig:mot}
\end{figure}

The first contribution of this paper is a theoretical framework to explain the underlying mechanism of $\wsd$. We characterize a type of loss landscape, called {the} river valley landscape (\Cref{def:river-valley}), and theoretically show that $\wsd$ has superior performance on such loss landscapes. 
We show that the river valley landscape can provide multiple theoretical prediction matching the empirical observations and hence can serve as a useful conceptual picture for understanding the pretraining optimization process.

As the name suggests, a river valley landscape intuitively features steeply sloping hillsides with a river winding through the bottom of the gorge (see \Cref{fig:landscape}). During the stochastic gradient-based optimization process, the iterate bounces between the hillsides as it slowly and implicitly progresses along the river direction. The loss in this landscape can be decomposed into two components: the \emph{river component}, which represents the primary loss along the river at the bottom of the hills, and the \emph{hill component}, which accounts for the additional loss caused by deviations in height from the river's course. Progress is determined primarily by the river component in the long run.
We demonstrate that when the loss function exhibits this type of landscape, a learning rate schedule should satisfy the following two key properties to effectively minimize the loss.
\vspace{-0.1in}
\begin{enumerate}[leftmargin=*]
\setlength{\itemsep}{0pt}
    \item \textbf{Sustained high learning rate.} It is advantageous to maintain a large learning rate for as long as possible during training, even at the cost of less reduction in the loss. A large learning rate yields larger  bouncing due to the stochasticity of the gradient, increasing the hill component of the loss, but it also makes faster progress in the river direction. In contrast, a small learning rate results in less bouncing, keeping the iterate close to the river, but progress along the river direction is slower. Therefore, a larger learning rate leads to faster fundamental progress minimizing the river component, which is obscured by the oscillation in the hill component. This progress will be revealed by the decay phase discussed below.
    \item \textbf{Final low learning rate.} As training nears completion, it becomes essential to reduce the learning rate. This decay minimizes the oscillations in the mountain direction to decrease the hill component and ensures that the iterates converge to a point close to the river, which has a lower loss than any nearby points up the mountain.
\end{enumerate}
\vspace{-0.1in}
In Section~\ref{sec:optimization}, we provide formal theoretical statements analyzing the trajectories of (stochastic) gradient descent on the river valley landscape, fleshing out the intuitions above. 
Among our synthetic and real-world studies supporting the river valley landscape hypothesis, an intriguing observation in language model pretraining is that the loss on the linear interpolation of two checkpoints in the stable phase exhibits a convex and unimodal shape, resembling a valley, whereas between two checkpoints in the decay phase, the loss shows a smooth monotone decay.

All the theoretical results above assume a river valley landscape. How likely does the next-token prediction loss follow this pattern, and why?
We hypothesize the river valley landscape can naturally arise from the heterogeneity in the stochasticity of different tokens: highly deterministic tokens (which often involve facts and knowledge) contribute to the "river" direction, while uncertain tokens (which often involve flexibility and ambiguity in the language) create the steep hillsides. We demonstrate this insight by showing in~\Cref{sec:data} that under a bigram toy model, indeed the loss has a river valley landscape, and empirically, most properties of the loss curves under various learning rates on the real datasets are still seen in this toy model. We further show that the stable learning rate phase learns the deterministic tokens whereas the decay phase learns better the stochastic tokens.

Finally, we discuss the limitations of $\wsd$ within our framework and how to go beyond it. In $\wsd$, once an intermediate checkpoint is reached, the model and optimizer are rolled back to the end of the stable phase before resuming training with a constant learning rate. However, our theory suggests that the decay phase also contributes to progress along the river direction, so discarding this part of the progress is unnecessary. Moreover, when an organization pretrains an LLM with $\wsd$, it needs to release the last checkpoint in a stable-phase to facilitate continual pretraining for another party, adding extra complexity.

To overcome this limitation, we propose a simplified and improved version of $\wsd$, called \textbf{$\wsds$}, for continual learning. Specifically, $\wsds$ resumes training immediately from the intermediate checkpoint using a high constant learning rate, avoiding the need to roll back to a previous checkpoint.

We evaluate the effectiveness of $\wsds$ with extensive experiments on LLMs from 0.1B to 1.2B parameters in a continual learning setting with 50B, 100B, and 200B tokens as the three target compute budgets. We empirically show that $\wsds$ has performance comparable with independent oracle runs with cosine learning rate schedules optimally tuned on each of the three budgets. Furthermore, $\wsds$ leads to a better validation loss than $\wsd$ under the same compute budgets due to the re-use of the decay period. We also show through ablation studies that the performance is relatively insensitive to the precise fraction of time spent decaying as long as it is near $10\%$ and the decay does not start shortly after a coincidental loss spike.

\vspace{-0.1in}
\section{Related Work}
\vspace{-0.1in}

\textbf{Learning Rate Schedules.} Learning rate schedules are crucial in deep learning, with previous studies exploring various options. \citet{smith2017cyclical} was the first to propose a cyclic triangular learning rate schedule that interleaves decreasing and increasing learning rates. \citet{loshchilov2017sgdr} extended the idea to a cyclic cosine learning rate schedule. \citet{he2015deep} introduced the notion of warmup, which gradually increases the learning rate in the earlier training phase. \citet{goyal2018accurate, hoffer2018train, you2020large} concluded that the learning rate should scale linearly with the batch size, which is further theoretically examined in~\citet{smith2020generalization, li2021validity, malladi2023sdes}. 
\citet{you2019does} performed an analysis on why learning rate schedules are helpful and suspected that the large learning rate at the beginning phase is mostly useful for avoiding memorization of noisy data, which is consistent with our analysis in~\Cref{sec:data}.

In the LLM era, works including~\citet{hoffmann2022training, deepseekai2024deepseek, hu2024minicpm} examined how to choose learning rate schedules for pretraining. In particular, \citet{hu2024minicpm} introduced a learning rate schedule called Warmup-Stable-Decay ($\wsd$) that remains constant for the majority of the runs before decaying in language model pretraining, which were studied independently in~\citet{zhai2022scaling, ibrahim2024simple, hagele2024scalinglawscomputeoptimaltraining}. \citet{raffel2023exploring, ibrahim2024simple} explored another possibility of using an inverse square root schedule to pretrain the language models. \citet{defazio2023whenmuchadaptivelearning} proposes to use linear decay for the entire training run. \citet{defazio2024roadscheduled} shows that with appropriate iterate averaging, a constant learning rate schedule can reach better performance than the cosine learning rate schedule. \citet{rae2022scaling, gupta2023continual, hu2024minicpm, ibrahim2024simple} examined how to choose a learning rate schedule in a continual learning setting and verified that rewarming-up cosine learning rate brings performance drops that are costly to recover. A common belief is that the performance drop is due to the sudden increase in learning rate during rewarming-up. However, our work shows that increasing the learning rate after a short decay in $\wsd$ does not cause a similar performance drop as seen with the cosine learning rate, challenging the previous hypothesis. Instead, we suggest that the performance loss associated with rewarming-up cosine learning rate is due to the implicit damage it causes to the model, making it unsuitable for continual training. On the contrast, $\wsd$ avoids such damage by maintaining a high learning rate during the stable phase, hence the sudden increase in learning rate does not lead to performance drops in continual training.

\textbf{Continual Learning.}  Continual learning, the process of updating the model with newly collected data, can improve the models' knowledge and capability.  Previous continual learning research \citep{aljundi2019online, veniat2021efficient, cossu2022continual, 51163, harun2023siesta, mehta2023empirical} assumed  significant domain shift and aimed to avoid forgetting old knowledge while learning new knowledge. 
Recent works including \citet{hernandez2021scaling, lesort2023challenging} suggested that optimizers including SGD and Adam have a knowledge accumulation effect and the effect of catastrophic forgetting may be less significant than expected, especially when replay is applied. Our work mainly focuses on continual pre-training without necessarily a strong domain shift and hence does not touch upon the effect of covariance shift. Continual learning is also extensively employed in large language models such as LLaMA to extend their capabilities, such as handling longer contexts (e.g., see \citet{tworkowski2023focused, peng2023yarnefficientcontextwindow, chen2023extendingcontextwindowlarge, dubey2024llama3herdmodels} and references therein) or dealing with new languages and domains (e.g., see \citet{azerbayev2024llemmaopenlanguagemodel, rozière2024codellamaopenfoundation, cui2024efficienteffectivetextencoding} and references therein).

\textbf{Theoretical Understanding on Loss Landscape.} A long line of research aims to better understand the loss landscape in deep learning (e.g., see \citet{freeman2017topologygeometryhalfrectifiednetwork, garipov2018losssurfacesmodeconnectivity, li2020explaining} and references therein). We will highlight several phenomena that are related to our findings. 

(1) Ill-conditioned directional sharpness and heavy-tailed noise: \citet{zhang2020gradient, NEURIPS2020_b05b57f6} examined the gradient noise in language modeling and observed that the noise is heavy-tailed in multiple dimensions. \citet{pan2023understanding, liu2024sophia} showed that the loss has vastly different curvatures in different dimensions. \citet{pan2022eigencurveoptimallearningrate} analyzes optimizing a quadratic function with skewed curvature theoretically.
Our river valley landscape is consistent with these findings. 

(2) Benefit of large learning rates: Large learning rates have a provable regularizing effect in finding flatter minima~\citep{kong2020stochasticity, wang2022large}, and flatter minima typically have a better generalization effect, even in the pretraining setting~\citep{jiang2019fantastic, blanc2020implicit, liu2022pretraining, li2022happens, ma2022quadratic, lyu2023understanding, andriushchenko2023sgdlargestepsizes}. 

(3) Connecting loss landscape with feature learning: Some recent works~\citep{nakkiran2019sgd, rosenfeld2023outliers} tried to understand how the loss landscape is formed through the lens of feature learning. \citet{rosenfeld2023outliers} showed that a large learning rate will cause oscillation in learning subtle classification rules while continuing to learn other more deterministic features.~\citet{wang2024improvinggeneralizationconvergenceenhancing} studied how to improve generalization and convergence by amplifying the update provided by the optimizer in the flat direction of the loss landscape. ~\citet{wu2024largestepsizegradientdescent, cai2024largestepsizegradientdescent} studied gradient descent dynamics on logistic regression, showing that a large learning rate will cause oscillation in the earlier phase but will lead to higher progress later in training. \citet{pagliardini2024ademamixoptimizerbetterfaster} developed a modification of the Adam optimizer based on optimization analysis on the Rosenbrock function, which is a special case of the river valley landscape. \citet{song2024doessgdreallyhappen} shows that when SGD update is projected to the dominant subspace of the Hessian, the model's optimization progress slows down and they conjecture the existence of \emph{ill-conditioned valley} in the landscape, which can be viewed as a similar and simpler version of the river valley landscape discussed in this paper. 

(4) Ravines in the Loss Landscape. Concurrently with our work, \citet{davis2024gradientdescentadaptivestepsize} identified the existence of a ravine in the loss landscape—a manifold where every point has a vanishing gradient within the sharp eigenspace of the Hessian. This feature appears in any smooth loss function exhibiting fourth-order growth near minimizers. They also demonstrate the advantages of using adaptive step sizes in this context. The concept of a ravine aligns closely with the river structure described in our paper and can be considered a specific instance of it.

\section{Theoretical Analysis with River Valley Loss Landscapes}
\label{sec:optimization}

\begin{wrapfigure}{r}{0.32\textwidth}
    \centering
    \vspace{-0.2in}
    \includegraphics[width=0.3\textwidth]{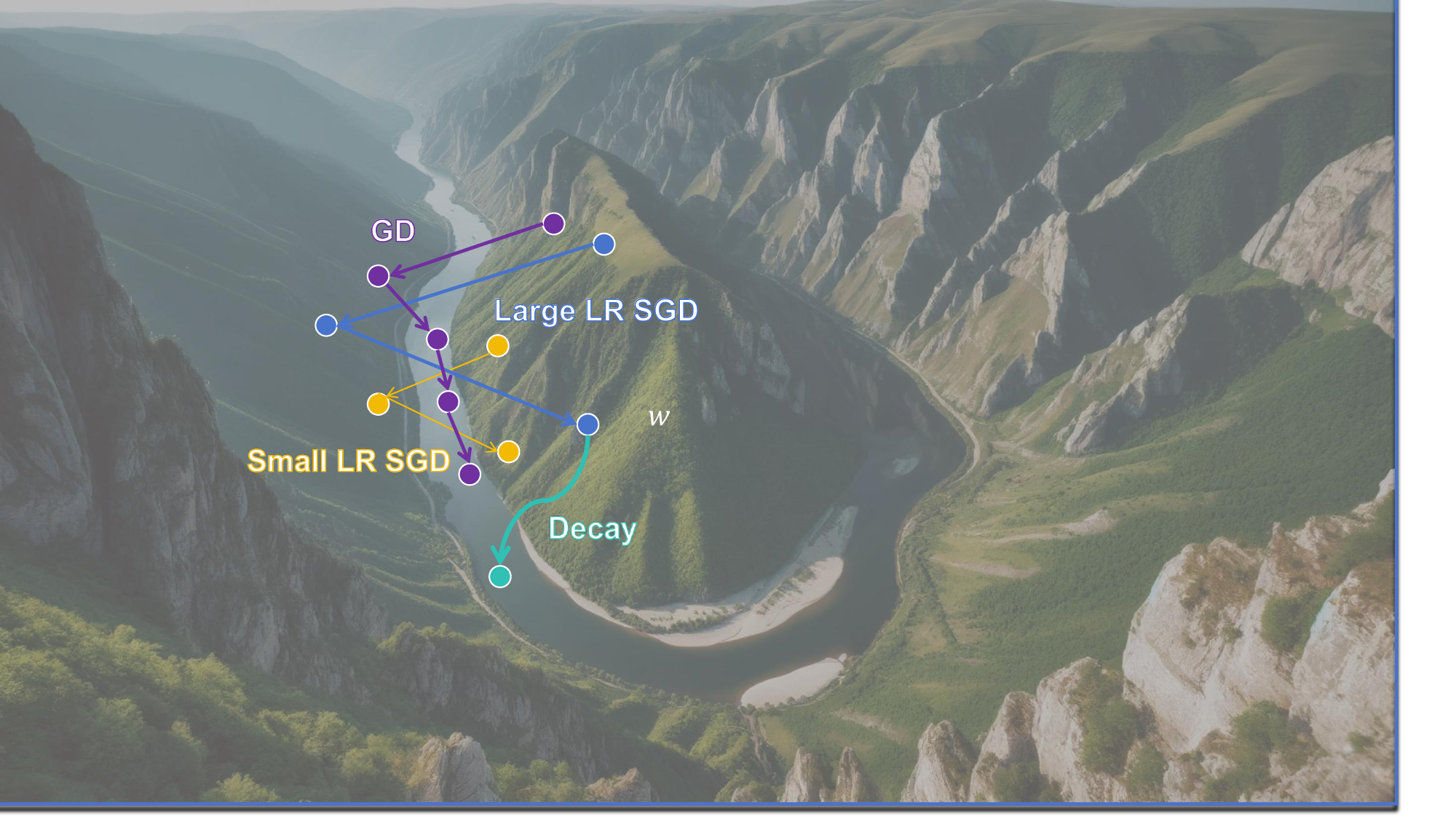}
    \caption{A River Valley Loss Landscape and the Optimization Dynamics with Various Learning Rates.}
    \label{fig:demo}
    \vspace{-0.2in}
\end{wrapfigure}

\textbf{Intuition and Outline.} As illustrated in Figure \ref{fig:demo}, we will analyze the dynamics of different optimization algorithm on the river valley loss landscape. (1) The \textcolor{purple}{gradient descent (GD)} iterates will quickly converge to the river and track the river afterward (\Cref{thm:gdtracksriver}). This holds for the continuous limit of GD, gradient flow, as well (\Cref{thm:gftracksriver}). (2) Stochastic gradient descent (SGD) algorithm on a river valley loss landscape will bounce between the hillsides due to the noise instead of converging to the river, while still gradually progressing along the river over time (\Cref{thm:sgd-main}). A 
large learning rate results in significant oscillations but also facilitates rapid advancement in the river's direction (the \textcolor{blue}{blue} trajectory). Conversely, a small learning rate keeps the iterates closer to the river's course but leads to slower movement along it (the \textcolor{darkyellow}{yellow} trajectory). While the oscillations in the mountain direction can cause a higher loss for a large learning rate in the short term, the large learning rate leads to faster fundamental progress along the river.
(3) Finally, reducing the learning rate before termination (the \textcolor{microsoftcyan}{cyan} trajectory) helps mitigate these oscillations and reveal the underlying fundamental progress in the river's direction (\Cref{thm:decay-main}). In this scenario, the $\wsd$ algorithm is preferred: the stable phase allows for substantial progress along the river, and the final decay phase reduces oscillations, ensuring that the iterates terminate near the river's course.

\subsection{Setting and Assumptions}

We will now formally present our theory. We use $w \in \R^d$ to denote the parameters and $L$ to denote the loss. Further, we use $\eigen{k}{H}$ and $\eigenv{k}{H}$ to denote the $k$-th largest eigenvalue and eigenvector of a matrix $H$, respectively. The ``river'' in the river valley is a 1-dimensional manifold $\river$ formalized below.

\begin{assumption}
\label{assum:river}
    We assume the existence of a ``river'', which is a 1-dimensional manifold $\river$ such that any point $w \in \river$ has a gradient $\nabla L(w)$ that is in the same direction as the minimal eigenvector direction of the Hessian, $\eigenv{d}{\nabla^2 L(w)}$. 
\end{assumption}
\begin{wrapfigure}{r}{0.3\textwidth}
    \centering
    \vspace{-0.5in}
    \includegraphics[width=0.3\textwidth]{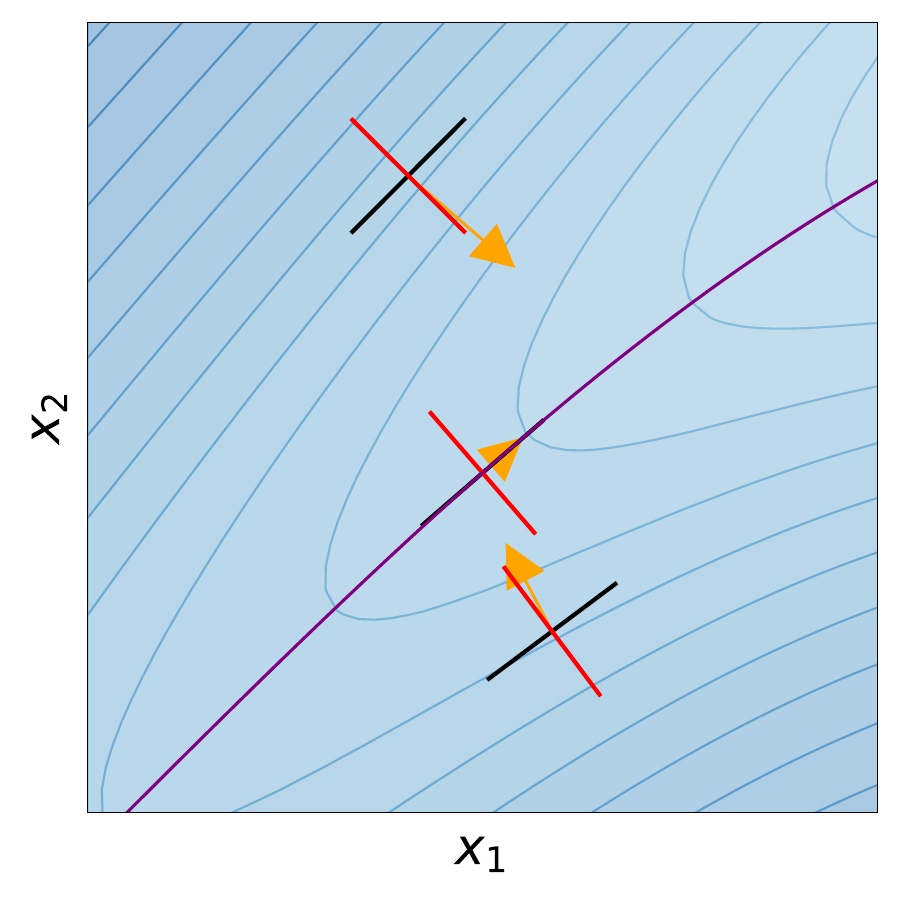}
    \caption{Illustration of the Definition of the River.}
    \label{fig:river}
    \vspace{-0.9in}
\end{wrapfigure}
Under this assumption, at every point on the river, the gradient $\nabla L(w)$ will align with the locally flattest direction, $\eigenv{d}{\nabla^2 L(w)}$, which we refer to as the \emph{river direction}. All other directions orthogonal to the river direction are considered as the \emph{mountain directions}, corresponding to the steep hillsides in our conceptual picture.
This definition is illustrated in the 2-dimensional loss landscape shown in \Cref{fig:river}. In this figure, the {\color{red}red} line represents the top eigenvector (the sharpest direction), while the black line represents the smallest eigenvector (the flattest direction) at each point. Along the \textcolor{blue}{blue} curve, which represents the river, the {\color{orange}orange} negative gradient aligns with the river direction.
For a point on the hillside, the negative gradient pulls the point downward along the hillside towards the river, while also while also moving it along the river. This intuition will be rigorously formalized in the theorems that follow.

To analyze the optimization dynamics, we will consider a neighborhood $U$ of the river $\river$. Inside this neighborhood, we will impose several technical assumptions. For example, we will assume that the Hessian has an eigengap between the smallest and the second smallest eigenvalues, so that the river direction is well-defined in $U$. We will also assume that the last eigenvector of the Hessian $v_d$ changes slowly, which implies that the river direction does not change rapidly during optimization. We summarize these assumptions below.

\begin{assumption}[Regularity Assumption]
\label{assum:technical}
    There exists an open set $U$ containing $\river$ satisfying the following assumptions:
    \begin{enumerate}[leftmargin=*]
        \item {Analyticity.} $L(w)$ is analytic with respect to $w$.
        \item {Bounded Hessian.}  There exists a constant $\maxgamma > 0$, such that $\forall w \in U, \| \nabla^2 L(w) \|_{\textup{op}} \le \maxgamma.$
        \item {Existence of Eigengap.} There exist constants $\flatgamma, \gamma > 0$, such that $$\forall w \in U, \eigen{d - 1}{\nabla^2 L(w)} > \gamma + 4 \flatgamma, |\eigen{d}{\nabla^2 L(w)}| < \flatgamma.$$
            \vspace{-0.2in}
        \item  {Slow Spinning of $v_d$.} There exist constants $\normbound > 0, \err \in [0, 0.01)$, such that $$\forall w \in U, \normbound_{\min} < \| \nabla L(w) \|_2 \le \normbound, \textup{ and } \| \nabla  \eigenv{d}{ \nabla^2 L(w)} \|_{\textup{op}}  \le \err \gamma/(2\normbound).$$
        This means that the river direction $v_d$ changes slowly during optimization.
        \item {Uniqueness of $\river$.} For any point $w \in U - \river$, the gradient $\nabla L(w)$ is not parallel to $v_d(\nabla^2 L(w))$.
        \item {Conservation of Gradient Flows.} There exists an open subset $V \subset U$ and  a constant $r > \frac{10 \normbound}{\gamma}$ for $\gamma$ defined in Assumption 2.3 such that $\forall w \in V$, the $r$-neighborhood of the gradient flow starting from $w$ stays in $U$ for continuous time $\maxT \ge {10 \log(2 \normbound / (\err \normbound_{\min}))}/{\gamma}$.
    \end{enumerate}
\end{assumption}

Throughout the analysis, $\kappa$ should be treated as a \emph{small} {dimensionless} constant, indicating the river spins slowly. We can now define the river valley landscape.
\begin{definition}[River Valley Landscape]
\label{def:river-valley}
If a loss function $L$ satisfies~\Cref{assum:river,assum:technical}, then we will claim that the loss function is a river valley.
\end{definition}

One simple example of a river valley landscape is the quadratic loss $L(x_1, x_2) = \frac{\gamma x_1^2}{2} - x_2$ with $\kappa$ equals to $0$. In this case, the river is simply the line $x_2 = 0$. However, the river valley landscape can also be more complex and non-convex, see \Cref{fig:demo} for an illustration.
We will demonstrate that when the loss function is a river valley, the optimization trajectory of (stochastic) gradient descent will track the river. In fact, we can even  prove that the iterates will follow the river with a predictable pace, which is characterized by the reference flow defined below.

\textbf{Reference Flow.} We introduce a Riemannian gradient flow constrained to the river $\river$, serving as a reference in the following theorems. This flow intuitively represents the dynamics of iterates during a gradient flow on the loss constrained by the river. We will denote the projection to the tangential space of the river as $\riverproj(w)$ for $w \in \river$ and choose an arbitrary starting point $x_0$ on the river. The reference flow is defined as
\begin{align}
\label{eq:refflow}
    d x(t) = -\riverproj \left(x\left(t\right)\right) \nabla L(x(t)) dt, \quad x(0) = x_0.  
\end{align}
Here, we use \( x \) to represent a point on the river, distinguishing it from \( w \), which denotes a weight in the original space. \( t \) refers to the continuous time variable.

\subsection{Gradient Flow Dynamics}

In this section, we will consider gradient flow in the river valley landscape starting from a point $w \in V$, with $V$ defined in \Cref{assum:technical}.6: 
\begin{align}
\label{eq:gfflow}
    d w(t) = - \nabla L(w(t)) dt, \quad w(0) = w \in V.  
\end{align}

We prove in the theorem below that the gradient flow starting from $w$ will eventually converge near the river and remain close to it. Subsequently, if we project the iterate $w(t)$ onto the river, the projection will move along the river at a pace similar to the reference flow $x(\cdot)$ (\cref{eq:refflow}). This phenomenon is visualized on \Cref{fig:gf}.

\begin{theorem}
\label{thm:gftracksriver}
If a loss $L$ is a river valley (\Cref{def:river-valley}), for the gradient flow $w(t)$ defined in~\Cref{eq:gfflow}, the iterate will obey the following dynamics:
\begin{enumerate}[leftmargin=*]
    \item The iterate will first converge to a neighborhood of the river. The distance betweeen the iterate and the river is bounded after a constant converge time $\convergeT = {2\log(2 \normbound / (\err \normbound_{\min}))}/{\gamma}$.
     \begin{align*}
        \mathrm{dist}(w(\convergeT), \river) =  \mathrm{min}_{t} \| x(t) - w(\convergeT) \|_2 \le 2\err \normbound/\gamma.
     \end{align*}
    \item After convergence, the iterate will track the river closely with the same pace as the reference flow. There exists a time shift $T_0$ depending on $w(\convergeT)$, such that for any $t \in [\convergeT, \maxT]$ for $\maxT$ defined in~\Cref{assum:technical}.6, there exists a $\tilde t \in [(1-\epsilon)T, (1+\epsilon)T]$ satisfying that,
    \begin{align*}
    \| x(T_0 + \tilde{t}) - w(t) \|_2 &\le   2\err\normbound/\gamma ,
    \end{align*}
    for $\epsilon = 30 \err$. 
\end{enumerate}
\end{theorem}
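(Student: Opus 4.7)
The plan is to split the gradient-flow velocity at every point $w(t)$ into a component along the river direction $v_d$ and a component in the orthogonal mountain subspace, using the Hessian eigengap. In the mountain subspace the loss is strongly convex with parameter at least $\gamma$, so the distance to $\river$ contracts at rate $e^{-\gamma t}$ (yielding Part 1). In the river direction the flow is close to the reference flow~\eqref{eq:refflow}, because $v_d(w)$ is almost constant (slow-spinning bound $\err$) and because, for $w$ close to $\river$, $\nabla L(w)$ differs from $\nabla L(P_{\river}(w))$ only through the mountain Hessian, whose action is orthogonal to $v_d$ (yielding Part 2). The conservation assumption (\Cref{assum:technical}.6) ensures $w(t)$ stays inside $U$, so the local eigenbasis is always well defined over the full time horizon $\maxT$.

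\textbf{Part 1: convergence to the river.} Let $x(t) = \arg\min_{x \in \river}\|x - w(t)\|$ and $n(t) = w(t) - x(t)$, so $n(t)$ lies in the mountain subspace at $x(t)$. Taylor expanding the gradient around $x(t)$ gives
\begin{equation*}
\nabla L(w(t)) = \nabla L(x(t)) + \nabla^2 L(x(t)) \, n(t) + O(\maxgamma \|n(t)\|^2).
\end{equation*}
Because $\nabla L(x(t)) \parallel v_d(x(t))$ by \Cref{assum:river} and $n(t) \perp v_d(x(t))$, the tangential component of $\nabla L(x(t))$ does not contribute to $d\|n\|^2/dt$, while the mountain block of $\nabla^2 L(x(t))$ has eigenvalues bounded below by $\gamma + 4\flatgamma$. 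The slow-spinning bound feeds into the rate at which $x(t)$ itself moves, contributing only an $O(\err \normbound)$ error, so
\begin{equation*}
\tfrac{d}{dt} \|n(t)\|_2 \le -\gamma \|n(t)\|_2 + O(\err \normbound).
\end{equation*}
A Gr\"onwall argument combined with the initial bound $\|n(0)\|_2 \le r$ from \Cref{assum:technical}.6 then gives $\|n(\convergeT)\|_2 \le 2\err\normbound/\gamma$ once $\convergeT = 2\log(2\normbound/(\err\normbound_{\min}))/\gamma$.

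\textbf{Part 2: tracking the reference flow.} For $t \ge \convergeT$, choose the time shift $T_0$ so that the reference-flow point $x(T_0)$ coincides with the projection of $w(\convergeT)$ onto $\river$. To compare pace, compute the tangential speed of $w(t)$ along $v_d(x(t))$:
\begin{equation*}
\langle v_d(x(t)), -\nabla L(w(t))\rangle = -\|\nabla L(x(t))\|_2 - \langle v_d(x(t)), \nabla^2 L(x(t)) n(t)\rangle + O(\maxgamma \|n(t)\|^2).
\end{equation*}
The middle term vanishes exactly because $\nabla^2 L(x(t)) v_d(x(t)) = \eigen{d}{\nabla^2 L(x(t))} v_d(x(t))$ and $n(t) \perp v_d(x(t))$. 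Consequently the actual tangential speed matches the reference-flow speed $\|\nabla L(x(t))\|_2$ up to a multiplicative factor of $1 + O(\err)$, once the contributions from $v_d$ rotating along the trajectory (bounded through \Cref{assum:technical}.4) and from the $\|n(t)\|^2$ remainder (bounded by Part 1) are gathered. Reparametrizing time by this $(1+O(\err))$ speed factor and integrating over $[\convergeT,\maxT]$ yields the claimed window $\tilde t \in [(1-30\err)T,(1+30\err)T]$, while the spatial error in directions normal to $\river$ stays at $2\err\normbound/\gamma$ by Part 1.

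\textbf{Main obstacle.} The delicate part is controlling the compounding of the slow-spinning error $\err$ over the long horizon $\maxT$. Two couplings require careful bookkeeping: (i) the projection $x(t)$ moves, so the mountain subspace used in the decomposition is itself time-varying and the ``bending'' of the moving frame must be absorbed into the $O(\err)$ budget without losing the exponential contraction; and (ii) the reference flow advances in its own time coordinate, so the tracking bound must be stated as a reparametrization rather than a pointwise-in-$t$ comparison, which is precisely where the factor $30$ in $\epsilon = 30\err$ appears. The assumption $\err < 0.01$ is used to ensure these compounded $\err$-perturbations remain within the tube of width $2\err\normbound/\gamma$ for the entirety of $\maxT$.
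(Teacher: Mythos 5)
The paper's proof takes a genuinely different route from yours, and your route as written has two technical gaps that the paper's approach is specifically designed to avoid.

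\textbf{Different decomposition.} You decompose the \emph{position} $w(t) = x(t) + n(t)$ via the nearest-point projection onto $\river$ and track the normal displacement $n(t)$. The paper instead decomposes the \emph{gradient} into $\sharpproj(w)\nabla L(w)$ and $\flatproj(w)\nabla L(w)$ (the components orthogonal and parallel to $v_d$ at the current iterate), shows via a Grönwall-type argument on a linear combination $\|\sharpproj\nabla L\| - \alpha_\err\|\flatproj\nabla L\|$ that the ratio contracts exponentially (\Cref{lem:gfsg}), and only then converts this gradient bound into a position bound via a separate \emph{projection flow} $\Phi$ (\Cref{def:project}, \Cref{lem:phiexists}), not the nearest-point projection. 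Pace-matching along the river is handled through $\partial\Phi$ (\Cref{lem:gfspeedcalc}), not through the tangential speed of the nearest-point projection.

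\textbf{Gap 1: the Taylor remainder.} Your first display asserts $\nabla L(w) = \nabla L(x) + \nabla^2 L(x)\,n + O(\maxgamma\|n\|^2)$. The quadratic remainder requires the Hessian to be $O(\maxgamma)$-Lipschitz, which is \emph{not} in \Cref{assum:technical}; only $\|\nabla^2 L\|_{\textup{op}} \le \maxgamma$ is assumed (a third-order bound $\rho$ appears only in \Cref{assum:noise} for the SGD theorems, and is not available here). With only the Hessian bound, the remainder is merely $O(\maxgamma\|n\|)$, which swamps the claimed $-\gamma\|n\|$ contraction since $\maxgamma \ge \gamma$. The paper never Taylor-expands; its ODEs for $\|\sharpproj\nabla L\|^2$ and $\|\flatproj\nabla L\|^2$ (and for the projection flow in \Cref{lem:phiexists}) use only the chain rule, the Hessian bound, and the $\|\nabla v_d\| \le \err\gamma/(2\normbound)$ bound — all within the stated assumptions.

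\textbf{Gap 2: the derivative of the nearest-point projection.} Your Grönwall step needs $\frac{d}{dt}n(t) = -\nabla L(w(t)) - \dot{x}(t)$, but $\dot{x}(t)$ — the velocity of the nearest-point footprint — is not the reference-flow velocity. Its magnitude depends on the curvature of $\river$ and can amplify tangential motion; you wave this into an $O(\err\normbound)$ error without establishing it, and the slow-spinning bound controls $\nabla v_d$, not directly the geodesic curvature of $\river$ as a curve. The paper's projection flow $\Phi$ is chosen precisely because $\partial\Phi$ is controllable via \Cref{lem:partialphi,lem:boundpartialphi}: it vanishes on $\sharpproj$ directions and is $(1\pm O(\err))$ on the river direction, which is what allows \Cref{lem:gfspeedcalc} to conclude $\tfrac{dT}{dt} \in [1-30\err, 1+30\err]$. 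Your sketch does not produce an analogue of that bound for the nearest-point projection.

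In short: the high-level intuition (normal contraction plus tangential tracking) is right, but the specific technical scaffolding you propose — nearest-point projection and second-order Taylor expansion — is not supported by \Cref{assum:technical}, whereas the paper's gradient-decomposition plus projection-flow machinery is built to stay within those assumptions.
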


\begin{figure}[t]
    \centering
    \begin{subfigure}[b]{0.33\textwidth}
        \centering
        \includegraphics[width=\textwidth]{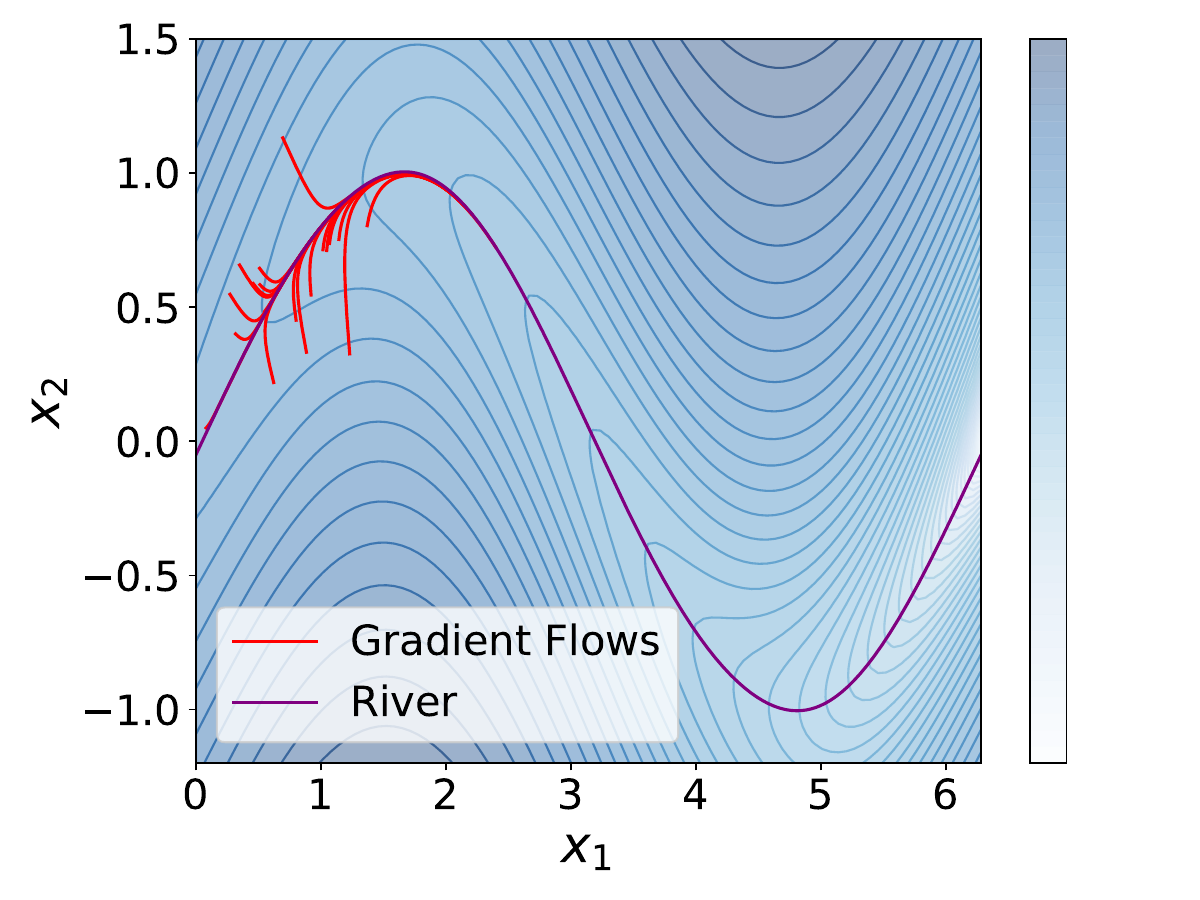}
        \caption{Gradient Flow Dynamics}
        \label{fig:gf}
    \end{subfigure}
    \begin{subfigure}[b]{0.32\textwidth}
        \centering
        \includegraphics[width=\textwidth]{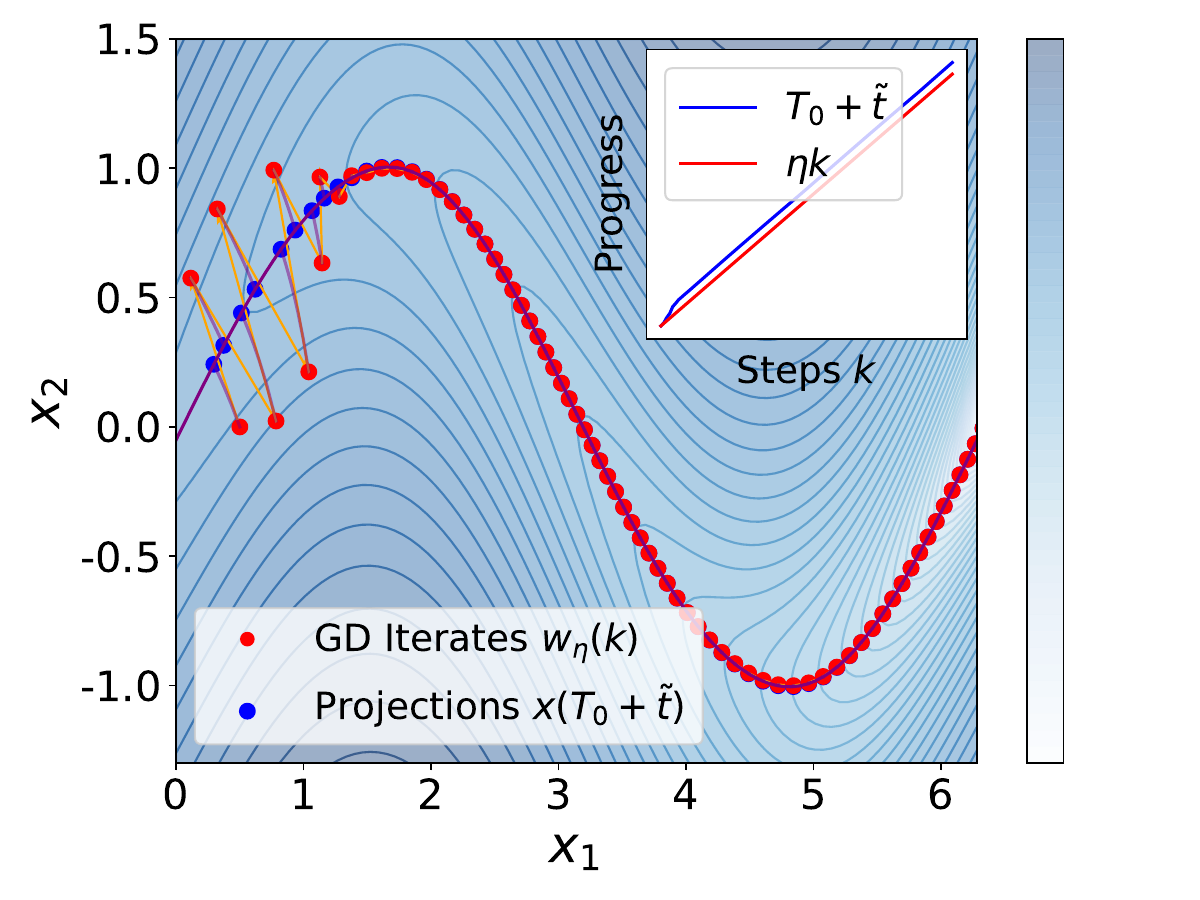}
        \caption{Gradient Descent Dynamics}
        \label{fig:gd}
    \end{subfigure}
    \begin{subfigure}[b]{0.33\textwidth}
        \centering
        \includegraphics[width=\textwidth]{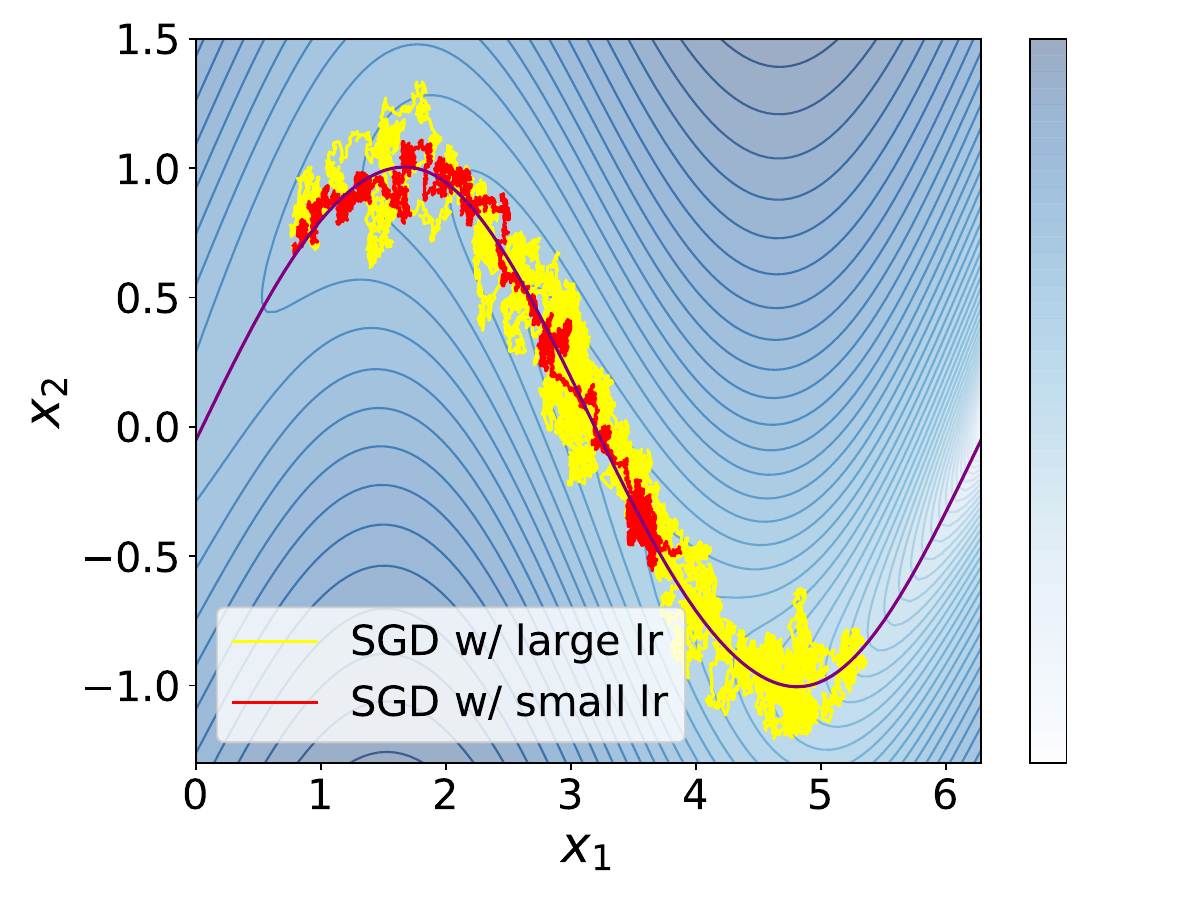}
        \caption{Stochastic Gradient Descent}
        \label{fig:sgd}
    \end{subfigure}
        \caption{
                \textbf{Illustration of Theory.} 
         We validate our theory using a 2D example. The \textcolor{blue}{blue curve} represents the "river", where the gradient aligns with the minimal eigenvector of the Hessian. (Left) Randomly initialized \textcolor{red}{gradient flows} converge near the river and follow it closely thereafter. (Middle) Discrete step-size gradient descent shows similar behavior: after initial oscillations, the \textcolor{red}{gradient descent iterates} align closely with their \textcolor{blue}{projections} on the river. (Right) Stochastic gradient descent (SGD) also tracks the river. In contrast to the discrete-step gradient descent, the iterates oscillate around the river rather than staying on it. The {trajectory} with a larger learning rate exhibits faster progress and greater oscillations than {trajectory} with a smaller learning rate.
        }
         \label{fig:toyloss}
        \vspace{-0.2in}
\end{figure}

The proof is deferred to~\Cref{app:gf}. In this theorem, the lower bound on $T$ represents the time required for the iterate to converge near the river. Here $x(T_0 + \tilde{t}) $ can be viewed as a projection of $w(t)$ onto the river. 
As both the geometric error ($2\err\normbound/\gamma$) and the time-alignment error ($\epsilon$) vanish when $\kappa$ is small, this projection is not only close to $w(t) $ but also moves at nearly the same rate as the reference flow. Here the term $T_0$ acts as a shift, reflecting the dependency on the initialization, as optimization trajectories starting from different initial points will enter the river at distinct locations. The term $\tilde t$ represents the progress made along the river. This interpretation will remain consistent in the subsequent sections.

\emph{Proof Sketch.} The proof of \Cref{thm:gftracksriver} relies heavily on the structure of the Hessian matrix. The eigengap assumption (\Cref{assum:technical}.3) ensures that the loss constrained to the mountain dimensions is steep and convex enough to guarantee convergence to the river. The slow spinning assumption (\Cref{assum:technical}.4) ensures that the river direction does not change rapidly, allowing the iterate to follow the river closely after convergence.

\subsection{Gradient Descent Dynamics}

We will now analyze gradient descent with a discrete learning rate. This process is inherently discrete, differentiating it from the continuous gradient flow analyzed above. Similar to the continuous case, an iterate far from the river will converge to the river (as visualized in the first few steps of \Cref{fig:gd}). 
To ease our analysis, we will skip the convergence analysis and assume the starting point $w$ lies on the course of the river. 

\begin{align}
\label{eq:gdflow}
    w_{\eta}(k + 1) - w_{\eta}(k) = - \eta \nabla L(w_{\eta}(k)), \quad w_{\eta}(0) = w \in \river.  
\end{align}

Here we use $k$ to denote the discrete time step, in contrast to the continuous time variable $t$ used in the previous section.
When the learning rate $\eta$ is small, despite the discretization error, gradient descent will still track the river closely at approximately the same pace as the reference flow similar to gradient flow.  In this case, the progress over $k$ steps will be approximately $\eta k$, as shown in the following theorem.

\begin{theorem}
\label{thm:gdtracksriver}
If a loss $L$ is a river valley (\Cref{def:river-valley}), when $\eta < \frac{\gamma}{2\maxgamma^2}$, for the gradient descent $w_{\eta}(k)$ defined in~\Cref{eq:gdflow} with initialization $w$ on the river, there exists a time shift $T_0$ depending on $w$ and $\eta$, satisfying that for any $\eta k \le \maxT$, there exists a $\tilde t \in [(1-\epsilon)\eta k, (1+\epsilon)\eta k]$  satisfying that,
\begin{align*}
\| x(T_0 + \tilde{t}) - w_{\eta}(k) \|_2 &\le  10 \err \normbound/\gamma\,,
\end{align*}
for $\epsilon = 30 \err + 4\eta \flatgamma$. 
\end{theorem}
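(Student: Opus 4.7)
The plan is to leverage \Cref{thm:gftracksriver} by comparing the discrete gradient descent trajectory $w_{\eta}(k)$ with the gradient flow $w(t)$ initialized at the same point $w \in \river$, and then absorbing the discretization error into the extra slack in both the distance bound ($10\err\normbound/\gamma$ versus the flow's $2\err\normbound/\gamma$) and the time-alignment bound ($\epsilon = 30\err + 4\eta\flatgamma$ versus the flow's $30\err$). Because the starting point already lies on $\river$, the ``convergence to the river'' portion of \Cref{thm:gftracksriver} is not needed, and the comparison can proceed from step $k=0$.

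First, I would decompose one GD step $w_{\eta}(k+1) - w_{\eta}(k) = -\eta \nabla L(w_{\eta}(k))$ into its components along the river direction $\eigenv{d}{\nabla^2 L(w_{\eta}(k))}$ and along the remaining mountain subspace. The step-size condition $\eta < \gamma/(2\maxgamma^2)$ together with \Cref{assum:technical}.3 ensures that the GD map is a strict contraction in the mountain subspace with rate at most $1 - \eta(\gamma + 4\flatgamma)/2$. Iterating this contraction against the one-step Taylor discretization error $\tfrac{\eta^2}{2}\nabla^2 L(w)\nabla L(w) + O(\eta^3)$ yields an equilibrium mountain-direction deviation from $\river$ of order $\eta \maxgamma \normbound/\gamma$; for the hypothesized $\eta$ (and the small-$\err$ regime) this fits into the $8\err\normbound/\gamma$ distance slack. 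I would also verify that $w_{\eta}(k)$ remains inside $U$ for the whole horizon $\eta k \le \maxT$, so that \Cref{assum:technical} continues to apply at every iterate.

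Next, I would control the cumulative river-direction drift, which furnishes the additive $4\eta\flatgamma$ in $\epsilon$. The per-step Taylor error projected along $\eigenv{d}{\nabla^2 L}$ has magnitude at most $\tfrac{\eta^2}{2}\flatgamma\normbound$ by \Cref{assum:technical}.3 (which bounds $|\eigen{d}{\nabla^2 L}| < \flatgamma$), and because this component is \emph{not} damped by the mountain-direction contraction it accumulates additively over $k$ steps to $O(\eta \flatgamma \normbound \cdot \eta k)$. Dividing by the along-river speed $\Theta(\normbound)$ of the reference flow from \Cref{eq:refflow} converts this into a relative time shift of $O(\eta\flatgamma)$ with respect to the reference flow. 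A triangle inequality then combines three ingredients: (i) the GF-tracking bound of \Cref{thm:gftracksriver} at time $\eta k$, giving $\|w(\eta k) - x(T_0 + \tilde t_1)\|_2 \le 2\err\normbound/\gamma$ for some $\tilde t_1$ within relative error $30\err$ of $\eta k$; (ii) the mountain-direction equilibrium bound; and (iii) the extra relative time shift $4\eta\flatgamma$ absorbing the river-direction drift, yielding the stated conclusion.

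The main obstacle is cleanly separating the damped mountain-direction component of the discretization error from the accumulating river-direction component, since a naive Gronwall argument on $\|w_{\eta}(k) - w(\eta k)\|_2$ would produce an error of order $\eta \maxgamma^2 k$ that blows up on the $\maxT = \Omega(\log(1/\err)/\gamma)$ timescale required by \Cref{assum:technical}.6. Performing the decomposition requires tracking how $\eigenv{d}{\nabla^2 L(w_{\eta}(k))}$ rotates between consecutive iterates, for which the slow-spinning assumption (\Cref{assum:technical}.4, parameterized by the small constant $\err$) is essential: it guarantees that the river direction computed at $w_{\eta}(k)$ and at $w_{\eta}(k+1)$ agree up to an $O(\err)$ rotation, keeping the decomposition self-consistent across steps and preventing the damped contribution from leaking into the accumulating one.
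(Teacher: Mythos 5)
Your outline captures the right intuitions (mountain-direction contraction, river-direction accumulation, slow rotation of $v_d$), but the execution has two concrete problems, one quantitative and one structural.

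\textbf{The mountain-direction error estimate is too crude to fit the stated slack.} You estimate the one-step GD-vs-GF Taylor error as $\tfrac{\eta^2}{2}\nabla^2 L\,\nabla L$ and bound its mountain component by $\tfrac{\eta^2}{2}\maxgamma\normbound$, giving an equilibrium deviation of order $\eta\maxgamma\normbound/\gamma$. Under the theorem's hypothesis $\eta < \gamma/(2\maxgamma^2)$ this is only bounded by $\normbound/(2\maxgamma)$, which is \emph{not} $\le 8\err\normbound/\gamma$ unless $\gamma/\maxgamma \lesssim \err$ -- an assumption the theorem does not make, and that generically fails since $\err < 0.01$ is a dimensionless smallness parameter while $\gamma/\maxgamma$ is just the eigengap ratio. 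The estimate is salvageable if you exploit that $\nabla L$ is nearly parallel to $v_d$: since $v_d$ is an eigenvector, $\sharpproj \nabla^2 L\,\nabla L = \nabla^2 L\,\sharpproj\nabla L$, whose norm is $\le \maxgamma\|\sharpproj\nabla L\|$, and $\|\sharpproj\nabla L\|$ is itself of order $\err\normbound$ once you are near the river. But that bound is exactly what you are trying to prove by induction, so you need a self-consistent bootstrap, which is the real content of the paper's \Cref{lem:gdsg}: it runs a Lyapunov argument on $\|sg(k)\|_2^2 - a_\kappa\|fg(k)\|_2^2$ (where $sg = \sharpproj\nabla L$, $fg = \flatproj\nabla L$, and $a_\kappa \lesssim \err^2$), shows this quantity is contracted each step, and concludes $\|sg(k)\| \le 10\err\|fg(k)\|$ \emph{independently of $\eta$}. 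As written, your argument gives a bound depending on $\eta\maxgamma/\gamma$ rather than on $\err$ alone.

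\textbf{The detour through gradient flow is the wrong reference trajectory.} You invoke \Cref{thm:gftracksriver} at time $\eta k$, which requires controlling $\|w_\eta(k) - w(\eta k)\|_2$, and you correctly flag that naive Gronwall blows up as $e^{\maxgamma\eta k}$ on the $\maxT\gtrsim\log(1/\err)/\gamma$ horizon. Your proposed fix -- splitting into a damped mountain component and an accumulating river component -- is the right idea, but if you carry it out properly you will find that you are directly tracking $\mathrm{dist}(w_\eta(k),\river)$ and the progress of $\Phi(w_\eta(k))$ along $\river$, and the gradient flow $w(t)$ from the same initial condition plays no role. That is exactly what the paper does: it parametrizes the segment $w_{k,\tau} = (1-\tau)w_k + \tau w_{k+1}$, applies \Cref{lem:phiexists} to convert the $sg/fg$ ratio bound into $\|w_k - \Phi(w_k)\|_2 \le 10\err\normbound/\gamma$ (\Cref{lem:gdclose}), and then differentiates $x(T(t)) = \Phi(w_{[t],t-[t]})$ along $\tau$ using $\partial\Phi$ together with \Cref{lem:boundpartialphigen,lem:similarmove} to obtain $dT/dt \in [\eta(1-30\err-4\eta\flatgamma),\,\eta(1+30\err+4\eta\flatgamma)]$ (\Cref{lem:gdspeedcalc}). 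This gives both the geometric bound and the time-alignment bound directly from the discrete trajectory, and the $4\eta\flatgamma$ term appears naturally from the Hessian eigenvalue $|\eigen{d}{\nabla^2 L}|<\flatgamma$ acting on the flat component, not from a Taylor comparison with gradient flow. Restructuring your argument to track $\Phi(w_\eta(k))$ directly, rather than routing through $w(\eta k)$ and \Cref{thm:gftracksriver}, would remove the Gronwall obstruction entirely.
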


The proof is deferred to~\Cref{app:gd}. We observe that the distance of the iterates from the river remains on the same order as in Theorem~\ref{thm:gftracksriver}. However, the learning rate will affect the pace of the iterates, introducing the additional error term $\eta \flatgamma$ in time-alignment error ($\epsilon$). This term is unavoidable, as the discretization inherent in gradient descent introduces a slight deviation from the continuous gradient flow, even in the absence of the hill component of the loss. Finally, the 
\Cref{thm:gdtracksriver} predicts that a larger learning rate $\eta$ will induce higher progress $\eta k$ down the river given the same number of steps $k$. 

We illustrate~\Cref{thm:gdtracksriver} in~\Cref{fig:gd}. Here the {\textcolor{red}{red}} points correspond to gradient descent iterates and {\textcolor[rgb]{0.0, 0.5, 0.0}{green}} points are the corresponding predicted projected points on the river. As the theory predicts, the distance between the iterates and projections diminishes after a few steps. Finally, the inset indicates that the projected points traverse the reference flow on the river at a rate proportional to the learning rate $\eta$.

\subsection{Stochastic Gradient Descent Dynamics}

The above analysis holds for deterministic dynamics and we will now proceed to model the stochasticity in the optimization process. This stochasticity will stop the iterate from fully converging to the river and lead to oscillation in the mountain direction. To simplify the analysis, we will consider a special case where the river direction is a constant and the river reduces to a straight line. We conjecture that the results below can be generalized to a more general setup and leave the detailed analysis to future work.

\begin{assumption}[Straight River]
\label{assum:straight}
For $U$ in~\Cref{assum:technical}, $\forall w \in U, \| \nabla  v_d(\nabla^2 L(w)) \|_2  = 0$. In this case, the river is a straight line parallel to the direction of $v_d(\nabla^2 L(w))$.
\end{assumption}

Under~\Cref{assum:straight}, $v_d(\nabla^2 L(w))$ is a constant vector for $w\in U$ and we will use $v_d$ to denote this vector. We will also assume that the update is deterministic in the direction of the river, which simplifies our proof while still capturing the essential dynamics of SGD. Consequently, we can express the SGD update as follows:
\begin{align}
    \tilde w(k + 1) = \tilde w_k - \eta_k \nabla L\left(\tilde w\left(k\right)\right)  + \eta_k \noise_k, \quad \noise_k \sim \normal{0}{\sigma^2 \left(\identity_d - v_d v_d^T \right)}, \quad \tilde w(0) = w \in \river.  \label{eq:sgd}
\end{align}
Here $\normal{\mu}{\Sigma}$ indicates the normal distribution with mean $\mu$ and covariance $\Sigma$. Compared to deterministic gradient descent, the introduced noise $\noise_k$ causes the iterates to deviate from the river instead of fully converging to it (see the difference between~\Cref{fig:gd} and~\Cref{fig:sgd}). Consequently, we need to impose additional assumptions on the loss landscape to analyze the dynamics of SGD.

\begin{assumption}[Regularity Assumption for SGD]
\label{assum:noise}
In the setting of Assumptions 2, we assume in addition the following:
\begin{enumerate}[leftmargin=*]
    \item {Bounded Hessian.} There exists a constant $\tau > 0$, such that for any weight $w \in U$, the nuclear norm of the Hessian is bounded. 
    $$\| \nabla^2 L(w) \|_{*}  =  \sum\limits_{i = 1}^{d} |\eigen{i}{\nabla^2 L(w)}| \le \tau. $$
    \item {Bounded Third Order Gradient.} There exist  constants $\rho > 0, \kappa' \in [0, 0.01]$, such that,
    \begin{align*}
        \| \nabla^3  L(w) \|_{\textup{op}} \le \rho, \quad \Delta \rho \le \kappa' \gamma^2.
    \end{align*}
    \item {Bounded Loss.} There exists a constant $M > 0$ such that $\forall w, L(w) < M$.
\end{enumerate}
\end{assumption}

In this assumption, we treat $\kappa'$ as a small constant, indicating that the influence of the third-order gradient is minimal. This suggests that the overall shape of the loss landscape is predominantly governed by the first and second-order information. 
We will now analyze the dynamics of SGD in two phases: the stable phase and the decay phase. In both phases, the iterates will track the reference flow with progress linear to the sum of the learning rates. At the same time, the iterates will also incur a loss due to the noise, which will be linear with respect to the learning rate on the last step.

\subsubsection{Stable Phase} 

We start with the stable phase, where the learning rate \(\eta_k = \eta\) remains constant. Similar to the deterministic case, we will demonstrate that the expected loss of the iterate \(\E[L(\tilde{w}(t))]\) closely follows the loss along the reference flow \(L(x(T))\), with \(T \approx \eta t\). However, the stochasticity in the updates introduces an additional term, \(\eta \sigma^2\), into the loss. This provides a formal basis for decomposing the loss into its river and hill components, that is, $L(x(T))$ and $\eta\sigma^2$, respectively.

\begin{theorem}
\label{thm:sgd-main}
Suppose a loss $L$ is a river valley (\Cref{def:river-valley}) and satisfies~\Cref{assum:straight,assum:noise}. Then, for any constants $\delta \in (0,1)$, for sufficiently small learning rate $\eta$ depending on the regularity constants, \footnote{Deferred to~\Cref{assum:regular-L} in Appendix.}, there exists a time shift $T_0$ depending on $w$ and $\eta$, the SGD iterates (defined in~\Cref{eq:sgd}) with $\eta_k = \eta$ satisfies that for any integer $k \in [1 /{\eta\gamma}, \maxT / \eta]$, there exists a $\tilde t \in [(1 - \epsilon_{t}) \eta k, (1 + \epsilon_t) \eta k]$ satisfying that,
\begin{align*}
\E[L\left( \tilde w (k)\right)] - L( x( T_0  + \tilde t))& =  {(d - 1) \eta \sigma^2}/{2} + \epsilon_{L} \end{align*}

where $\epsilon_{t} = 4 \eta \flatgamma$ and $|\epsilon_{L}| \le \tau \eta^2  \sigma^2+ \rho (C {d\eta \sigma^2}/{\gamma})^{3/2} + C \kappa' d \eta \sigma^2 +  \delta (2 M + \eta \sigma^2 d) \ll (d - 1) \eta \sigma^2$ with $C = 200 \log(64 \gamma T/\delta)$. 
\end{theorem}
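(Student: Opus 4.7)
The strategy is to use the Straight River Assumption to decompose the SGD trajectory into a deterministic river component along $v_d$ and a stochastic mountain component in $v_d^\perp$, then analyze each separately. Because $\noise_k$ lies entirely in $v_d^\perp$, the $v_d$-projection obeys a noiseless, slightly perturbed version of the discrete gradient descent along the river already analyzed in \Cref{thm:gdtracksriver}, while the mountain projection behaves as an approximately Ornstein--Uhlenbeck process driven by isotropic Gaussian noise. Concretely, write $\tilde w(k) = a_k v_d + \tilde y(k)$ with $\tilde y(k) \perp v_d$, let $w^*(a)$ be the (by item 5 of \Cref{assum:technical}, unique) river point with $v_d$-coordinate $a$, denote $y^*(a)$ its mountain component, and Taylor-expand $\nabla L$ around $w^*(a_k)$ using the third-order bound from \Cref{assum:noise} to arrive at the approximate recursion $\tilde y(k+1) - y^*(a_{k+1}) \approx (I - \eta H_m(a_k))(\tilde y(k) - y^*(a_k)) + \eta \noise_k$, where $H_m(a) := (I - v_d v_d^T) \nabla^2 L(w^*(a)) (I - v_d v_d^T)$ has eigenvalues bounded below by $\gamma$ on $v_d^\perp$ thanks to the eigengap in item 3 of \Cref{assum:technical}.

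The technical core then has three pieces. First, solve the discrete Lyapunov equation for this contracting OU-type recursion: the burn-in of $1/(\eta\gamma)$ steps lets the system equilibrate, and standard sub-Gaussian tails for linear-Gaussian dynamics give $\|\tilde y(k) - y^*(a_k)\|^2 \le C d \eta \sigma^2 / \gamma$ with probability $1-\delta$, for $C = 200 \log(64 \gamma T/\delta)$. Second, analyze $a_k$ as a perturbed discrete gradient descent along the reference flow, adapting the argument of \Cref{thm:gdtracksriver} to extract the time shift $T_0$ and the time-aligned $\tilde t \in [(1 - 4\eta\flatgamma)\eta k, (1 + 4\eta\flatgamma)\eta k]$ such that $x(T_0 + \tilde t)$ coincides with $w^*(a_k)$ up to errors already absorbed into $\epsilon_L$. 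Third, Taylor-expand $L(\tilde w(k))$ to second order around $w^*(a_k) = x(T_0 + \tilde t)$: the linear term $\langle \nabla L(w^*(a_k)), \tilde y(k) - y^*(a_k)\rangle$ vanishes because $\nabla L$ at river points is parallel to $v_d$ (\Cref{assum:river}) while the mountain deviation is orthogonal to $v_d$, and taking expectation of the quadratic term yields $\tfrac{1}{2}\, \mathrm{tr}(H_m(a_k)\Sigma) = (d-1)\eta\sigma^2/2 + O(\tau\eta^2\sigma^2)$, producing the main term. The third-order remainder is bounded on the good event by $\rho(Cd\eta\sigma^2/\gamma)^{3/2} + C\kappa' d\eta\sigma^2$ via \Cref{assum:noise}, and the bad event contributes $\delta(2M + \eta\sigma^2 d)$ via the Bounded Loss assumption.

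The hard part is coupling the equilibrium analysis of the mountain OU with the river progress. One must show that $w^*(a_k)$ drifts slowly enough relative to the contraction rate $\eta\gamma$ that the frozen-$H_m$ Lyapunov analysis remains valid along the trajectory, and that the averaged drift induced on $a_k$ by the nonzero mean of $\nabla L(\tilde w(k)) - \nabla L(w^*(a_k))$ under the mountain fluctuations does not accumulate into a first-order distortion of the time alignment. The smallness of $\kappa'$ in \Cref{assum:noise} is essential here: it guarantees that the nonlinear correction to the OU linearization is genuinely lower order, which in turn lets us close the induction on the mountain variance without a vicious circle between the accumulating third-order terms and the concentration radius $\sqrt{Cd\eta\sigma^2/\gamma}$ of $\tilde y(k) - y^*(a_k)$.
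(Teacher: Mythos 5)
Your proposal lands on the same basic decomposition the paper uses (river coordinate $a_k$ along $v_d$, mountain coordinate $\tilde y(k)$ orthogonal) and correctly identifies the ingredients: a contracting Gaussian-type process for the mountain component with burn-in $1/(\eta\gamma)$, the time-shifted argument from \Cref{thm:gdtracksriver} for the river coordinate, vanishing of the linear cross-term because $\nabla L$ at river points is parallel to $v_d$, and a third-order remainder controlled by $\rho$ on the good event plus a $\delta(2M+\eta\sigma^2 d)$ contribution from the bad event. However, you miss the structural observation that makes the paper's proof go through cleanly and that eliminates the coupling difficulties you flag as ``the hard part'': under \Cref{assum:straight}, the loss \emph{separates exactly}, $L(w) = g(\Phi(w)) + h(w - \Phi(w))$ (\Cref{lem:separable}). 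The reason is short: $v_d$ is a \emph{constant} eigenvector of $\nabla^2 L$ throughout $U$, so for any $y\perp v_d$ we have $y^T\nabla^2 L(w)\,v_d=0$, hence $\langle y,\nabla L\rangle$ is constant along $v_d$; integrating gives that $L(a+y)-L(a)$ is independent of the base river point $a$. Consequently the mountain Hessian $\nabla^2 h(0)$ is a single fixed matrix rather than your drifting $H_m(a_k)$, the mountain SGD on $h$ is an autonomous strongly convex problem analyzed in isolation (the paper's \Cref{lem:normboundsgd,lem:track,lem:tracksgd}), and the two dynamics decouple with no feedback to worry about.

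Your last paragraph therefore over-diagnoses the difficulty. In particular, your concern that ``the averaged drift induced on $a_k$ by the nonzero mean of $\nabla L(\tilde w(k)) - \nabla L(w^*(a_k))$ under the mountain fluctuations'' might distort the time alignment is a non-issue: since $v_d$ is a constant eigenvector, $v_d^T\nabla L(w)$ has zero derivative in every mountain direction, so $v_d^T\nabla L(\tilde w(k)) = v_d^T\nabla L(w^*(a_k))$ \emph{exactly} and the river coordinate evolves deterministically and autonomously. Likewise the frozen-$H_m$ Lyapunov worry disappears because $H_m$ is literally frozen. As written, your proposal has a genuine gap at that step — you acknowledge you have not closed the coupling argument — but it is a gap that the hypothesis you were handed (\Cref{assum:straight}) removes once you ask what that assumption buys. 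If instead you insist on working without noticing the separability, you would need to quantitatively bound the drift of $H_m(a_k)$ (of order $\eta\Delta\rho\le\eta\kappa'\gamma^2$ per step, hence $O(\kappa'\gamma)$ over one mixing time, so plausibly small enough) and separately verify that the $v_d$-projection of the gradient is mountain-independent; that is more work for the same theorem and is essentially rederiving \Cref{lem:separable} in pieces.
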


The proof is deferred to~\Cref{app:sgd}. In~\Cref{thm:sgd-main}, the error term in the approximation of the pace of the projection remains the same as in the deterministic case (\Cref{thm:gdtracksriver}). However, the stochasticity introduces an additional hill component $(d - 1)\eta \sigma^2 / 2$ to the expected loss at the iterate. The hill component increases linearly with the learning rate. The error term in loss term $\epsilon_{L}$ can be decomposed into three parts: (1) $ \tau \eta^2  \sigma^2 + \rho (C {\eta \sigma^2}/{\gamma})^{3/2}$ are higher order discretization effects of learning rate $\eta$; (2) $C \kappa' d \eta \sigma^2$ is caused by the change of the Hessian in the valley dimensions and will diminish when $\kappa'$ is small; (3)  $\delta (2M + \eta \sigma^2 d)$ accounts for the small chances that the iterate will escape the neighborhood of the river due to the stochastic updates. While the theorem only considers the case where $v_d$ is a constant vector, we conjecture that the theorem can be extended to a general setting and verify this conjecture on a toy loss (see \Cref{fig:sgd}).

\subsubsection{Decay Phase} Finally, we will consider the decay phase in training and will show that a proper decaying schedule can reduce the hill component of the loss rapidly. We will first define our decaying schedule, starting from step $k_s = \left\lceil T/\eta \right\rceil$:
\begin{align}
\label{eq:decaylr}
    \eta_k = 
        \frac{\eta}{2 + (k - k_s)\eta \gamma}, \quad  & k_s \le t \le  1.1 k_s.
\end{align}
We choose this schedule to maximize the loss decrease rate on a quadratic function (see~\Cref{app:quadratic}) because we perform quadratic approximations of the loss near the river in our analysis. Our theorem predicts that the hill component of the loss will decrease linearly with the learning rate under this learning rate schedule, consistent with the empirical findings in~\cite{hu2024minicpm}.

\begin{theorem}
\label{thm:decay-main}
Under the setting of~\Cref{thm:sgd-main}, the SGD iterates (defined in~\Cref{eq:sgd}) with the learning rate schedule defined in~\Cref{eq:decaylr} satisfies that for any integer $k \in [k_s, 1.1 k_s]$, there exists a $\tilde t \in [(1 - \epsilon_{t}) T(k), (1 + \epsilon_t) T(k)]$ satisfying that, 
\begin{align*}
\E[L\left( \tilde w (k)\right)] - L( x( T_0 + \tilde t ))& \le  {(d - 1) {\color{red}{ \eta_k}} \sigma^2}/{2} + \epsilon_{L} 
\end{align*}
with $T(k) = T + \sum\limits_{i = k_s}^{k} \eta_{i}$.
\end{theorem}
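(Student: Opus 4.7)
The plan is to extend the stable-phase analysis of \Cref{thm:sgd-main} to the time-varying schedule in \cref{eq:decaylr}, showing that the river-progress piece still telescopes linearly in the running sum $\sum_i \eta_i$ while the hill-component variance contracts to track the current learning rate $\eta_k$. Decompose $\tilde w(k) = x(T(k)) + \Delta_k$, with $\Delta_k$ living, to leading order, in the subspace orthogonal to $v_d$. A Taylor expansion of $L$ at $x(T(k))$ together with the third-order bound in \Cref{assum:noise}.2 reduces the excess loss $L(\tilde w(k)) - L(x(T(k)))$ to $\tfrac{1}{2}\Delta_k^\top \nabla^2 L(x(T(k)))\, \Delta_k$ plus a cubic remainder absorbed into $\epsilon_L$; taking expectations leaves $\tfrac{1}{2}\mathrm{tr}(H \Sigma_k)$, where $H$ denotes the Hessian restricted to the mountain directions and $\Sigma_k = \mathrm{Cov}(\Delta_k)$.

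For the river-direction piece I would telescope the SGD update exactly as in \Cref{thm:gdtracksriver,thm:sgd-main}: because the noise is orthogonal to $v_d$ (\Cref{assum:straight}) and the gradient on the river aligns with $v_d$, the expected projection onto $\river$ of $\tilde w(i+1)-\tilde w(i)$ is $-\eta_i \riverproj(\tilde w(i)) \nabla L(\tilde w(i)) + O(\eta_i^2)$. Summing from $k_s$ to $k$ matches the reference flow at continuous time $T + \sum_{i=k_s}^k \eta_i = T(k)$, with the time-alignment error $\epsilon_t = 4\eta_k \flatgamma$ inherited from the discretization analysis and the $\kappa'$- and $\delta$-type corrections copied from \Cref{thm:sgd-main}. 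The window $k \in [k_s, 1.1 k_s]$ keeps $T(k) \le 1.05 T \le \maxT$, so the iterate stays in $U$ with probability at least $1 - \delta$.

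The new technical ingredient is the upper bound $\mathrm{tr}(H\Sigma_k) \le (d-1)\eta_k \sigma^2 + (\text{error})$. Diagonalising $H$, each mountain eigenvalue $\lambda_i \ge \gamma$ yields the scalar recursion $\sigma_{i,k+1}^2 = (1-\eta_k \lambda_i)^2 \sigma_{i,k}^2 + \eta_k^2 \sigma^2 + O(\rho\,\sigma_{i,k}^3)$. I would verify that the ansatz $\sigma_{i,k}^2 = \eta_k \sigma^2/(2\lambda_i)$ is an approximate fixed point: plugging in gives $\sigma_{i,k+1}^2 = \eta_k \sigma^2/(2\lambda_i) + O(\eta_k^3 \lambda_i \sigma^2)$, while the schedule \cref{eq:decaylr} satisfies $1/\eta_{k+1} - 1/\eta_k = \gamma$ and hence $\eta_{k+1} = \eta_k - \gamma\eta_k^2 + O(\eta_k^3)$, so the mismatch between the true variance and the ansatz is higher order in $\eta_k$ and fits inside the $\rho(\cdot)^{3/2}$ and $\kappa' d \eta \sigma^2$ terms already present in $\epsilon_L$. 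Initialization at $k_s$ is supplied by \Cref{thm:sgd-main}, which gives $\sigma_{i,k_s}^2 \approx \eta\sigma^2/(2\lambda_i) = 2\eta_{k_s}\sigma^2/(2\lambda_i)$, i.e.\ within a factor of two of the target.

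The main obstacle is verifying that this contraction keeps up with the moving target for every $k$ in the window rather than only at equilibrium. Since $\lambda_i \ge \gamma$, the per-step damping $(1-\eta_k \lambda_i)^2 \le 1-2\eta_k\gamma + O(\eta_k^2\gamma^2)$ strictly dominates the schedule's per-step relative shrinkage $\gamma \eta_{k+1}$ of the target $\eta_k/\lambda_i$, so any initial surplus above the ansatz contracts multiplicatively and vanishes well within the decay window of length $0.1 k_s \ge 0.1/(\eta\gamma)$. Combining the river-progress identity with this hill-variance upper bound, summing $\tfrac{1}{2}\lambda_i \sigma_{i,k}^2$ over the $d-1$ mountain eigenvalues, and reusing the escape-probability estimate and third-order bookkeeping from \Cref{thm:sgd-main} then assemble the stated bound with the leading hill term $(d-1)\eta_k \sigma^2/2$ reflecting the current rather than the initial learning rate.
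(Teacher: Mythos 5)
Your proposal takes essentially the same route as the paper's proof of this theorem: reduce to the separable form $L(w) = g(\Phi(w)) + h(w-\Phi(w))$ via \Cref{assum:straight}, couple to the linear Gaussian surrogate $\tilde y_{k+1} = (\identity - \eta_k H)\tilde y_k - \eta_k \noise_k$, telescope the river-progress term against $T(k) = T + \sum_i \eta_i$, and run an induction on the per-eigenvector variance recursion $\sigma_{i,k+1}^2 = (1-\eta_k\lambda_i)^2 \sigma_{i,k}^2 + \eta_k^2\sigma^2$ using the schedule identity $1/\eta_{k+1} - 1/\eta_k = \gamma$. These are exactly the ingredients the paper uses.

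One caveat on the bookkeeping, though. Your ansatz $\sigma_{i,k}^2 \approx \eta_k\sigma^2/(2\lambda_i)$ is \emph{not} an approximate fixed point up to higher-order error: plugging the ansatz into the recursion leaves a one-step mismatch of $\gamma\eta_k^2\sigma^2/(2\lambda_i)$, which is second order per step but accumulates to a stationary relative surplus of size $\gamma/(2\lambda_i)$ — and since $\lambda_i$ can be as small as $\gamma + 4\flatgamma$, this is $\Theta(1)$, not higher order. Likewise the initial surplus from the stable phase (a factor of $2$, as you note) does not vanish within the window; near $k = k_s$ it is exactly of the same order as the ansatz. What actually saves the bound is precisely that this multiplicative surplus stays bounded by $1$ throughout the window (the damping rate $2\eta_k\lambda_i$ exceeds the target-shrinkage rate $\gamma\eta_k$ because $\lambda_i > \gamma$), so that $\sigma_{i,k}^2 \le \eta_k\sigma^2/\lambda_i$ holds, which after summing $\tfrac12\lambda_i\sigma_{i,k}^2$ gives the stated $(d-1)\eta_k\sigma^2/2$. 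The paper sidesteps this by choosing $\eta_k\sigma^2/\gamma_i$ directly as the induction ansatz, which matches the stable-phase initialization at $k_s$ and remains an upper bound without needing to claim the surplus is small. Your argument reaches the same place, but the phrase ``mismatch is higher order and fits inside $\epsilon_L$'' overstates what you can prove — you need the factor-of-two slack built into the ansatz, not a vanishing error term.
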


\begin{wrapfigure}{r}{0.34\textwidth}
    \centering
    \vspace{-0.3in}
    \includegraphics[width=0.3\textwidth]{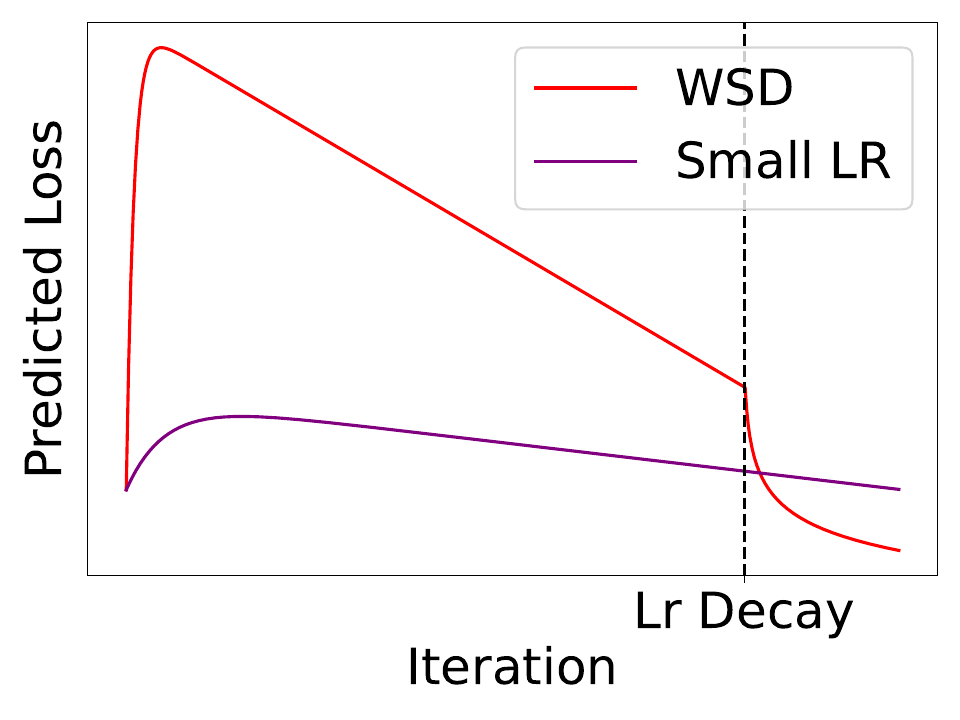}
    \caption{Predicted Loss Curve of SGD By~\Cref{thm:sgd-main,thm:decay-main} on Loss $L(x_1, x_2) = \gamma x_2^2/2 - x_1$.}
    \label{fig:thmsgd}
    \vspace{-0.4in}
\end{wrapfigure}
The formal proof is deferred to~\Cref{app:decay}. Compared with~\Cref{thm:sgd}, the hill component is now dominated by $(d - 1)\eta_k \sigma^2 / 2$, scaling linearly with the decaying learning rate. 
When the oscillation level $\sigma$ is large compared to the loss changes along the river, the loss decrease can then appear faster in the decay phase than in stable phases (see  \Cref{fig:thmsgd}). Further, the decaying phase also makes progress along the river, which corresponds to the term $\sum\limits_{i = k_s}^k \eta_i$ in the theorem. Finally, the terms used in our theorem match the scaling law formulation in the concurrent work~\citep{tissue2024scalinglawlearningrate}.

\subsection{Future Extensions}
We only consider a 1-dimensional river in the theoretical analysis above. However, it is possible to extend our assumptions to a generalized $k$-dimensional river, where the gradient lies in the flattest $k$-dimensional subspace for every point on the river.

\begin{assumption}
\label{assum:gen-river}
    We assume the existence of a ``generalized river'', which is a $p$-dimensional manifold $\river$ such that any point $w \in \river$ has a gradient $\nabla L(w)$ lies in the eigenspace spanned by the last $k$ eigenvectors' direction of the Hessian, $\{ \eigenv{i}{\nabla^2 L(w)} \mid i \in [d - p + 1, d] \}$. 
\end{assumption}

Based on this assumption, we can define \emph{generalized river valley Structure}. We believe that under some appropriate assumptions, there will be a similar coupling between the dynamics in the original river valley landscape and the dynamics constrained on the generalized river, and leave this for future work.

There are several other potential technical improvements to our theory that could be explored in future works. First, the analysis of the stochastic setting may be extended to include a river that is not a straight line. Second, the multiplicative constant of 4 in the eigengap condition (Assumption 2.3) might be eliminated through more refined analysis. Lastly, the upper bound on the learning rate in~\Cref{thm:gdtracksriver} (currently $\gamma / 2\maxgamma^2$) could potentially be improved to $\Theta(1/ \maxgamma)$ through more fine-grained analysis, which is the maximal learning rate in the quadratic case.

\subsection{Visualizing the River Valley} 
We use a direct probing method to verify our theory. Our theory suggests that when the learning rate is large, the model will bounce back and forth between the sharp valleys. However, in the decay phase, the model will move downwards the hillside to approach the river. This suggests that if we connect two checkpoints in the stable phase, we should expect to see a projection of the valley, and if we connect two checkpoints in the decay phase, we should expect to see smooth decreasing curves. To verify this, we pretrain a 124M GPT-2 model on OpenWebText. In the first run, we train the model with a constant learning rate for 25B tokens and interpolate between two checkpoints at 20B and 25B tokens (\Cref{fig:highconnect}). In the second run, we branch off from the first run at 20B tokens and decay the learning rate for 5B tokens, and we interpolate between two checkpoints at 20B and 25B tokens (\Cref{fig:rampconnect}).
The interpolation results closely resembles our theory. This observation is also consistent with~\cite{sanyal2023early} which shows weight averaging improves model performance in the earlier part of the cosine training runs, where the learning rates are higher. Additionally, the smooth decreasing curves we observed when connecting two checkpoints in the decay phase are consistent with the findings in~\cite{hagele2024scalinglawscomputeoptimaltraining}.
\begin{figure}[t]
    \centering
    \begin{subfigure}[b]{0.30\textwidth}
        \centering
        \includegraphics[width=\textwidth]{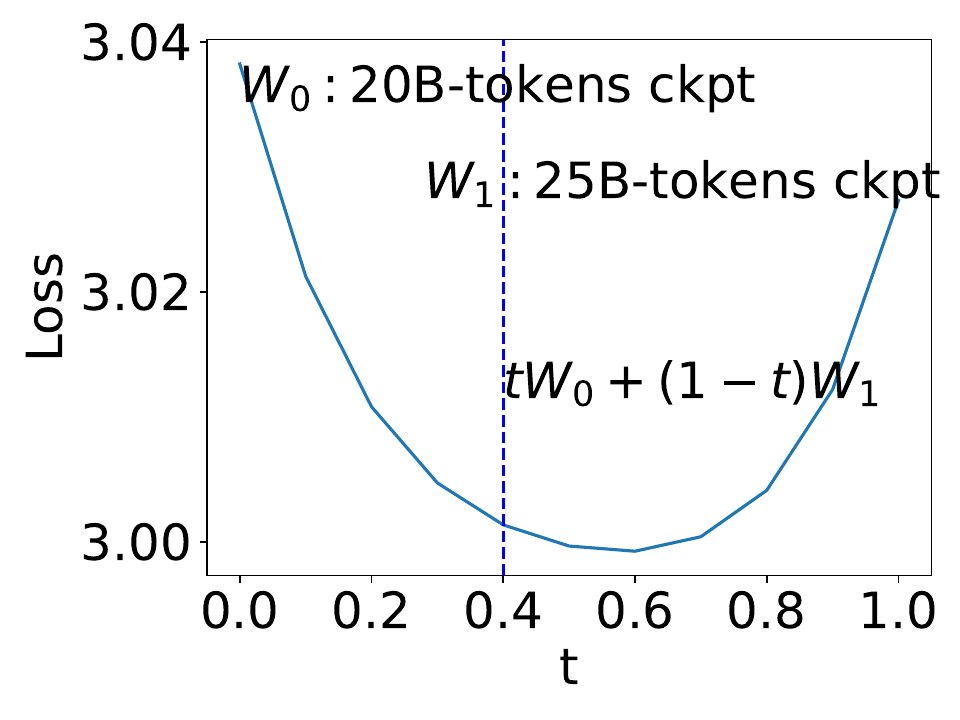}
        \caption{Stable Phase}
        \label{fig:highconnect}
    \end{subfigure}
    \begin{subfigure}[b]{0.30\textwidth}
        \centering
        \includegraphics[width=\textwidth]{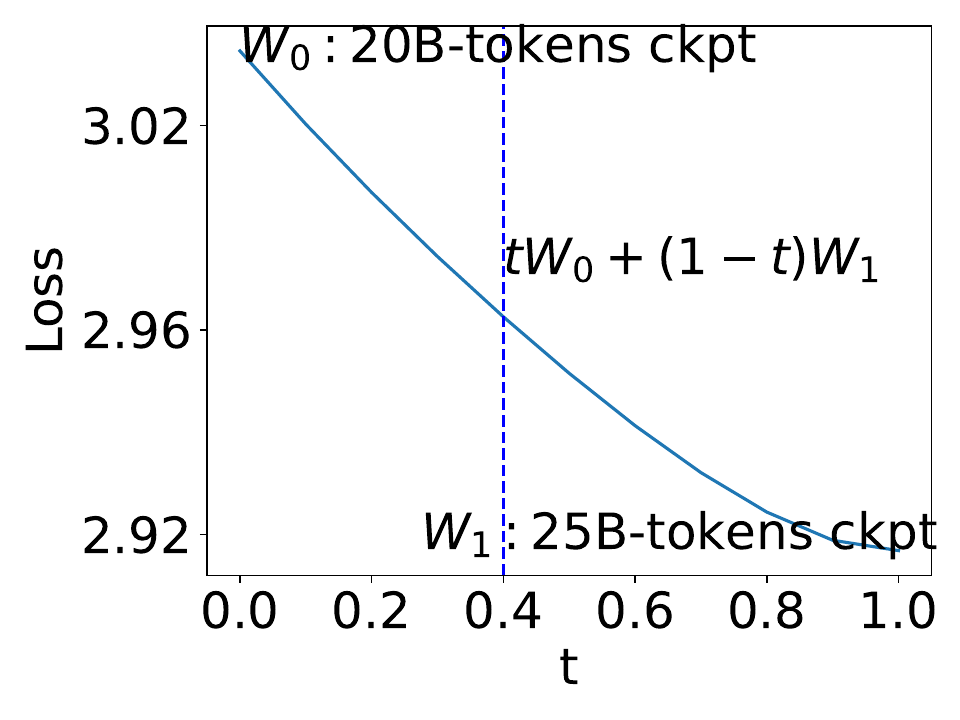}
        \caption{Decay Phase}
        \label{fig:rampconnect}
    \end{subfigure}
    \quad
    \begin{minipage}[b]{0.36\textwidth}%
        \caption{\textbf{Probing Loss Landscape.}  We validate the river valley analogy by interpolating stable and decay phases in GPT-2 pretraining experiments. We observe that loss resembles a valley when constrained on the segment connecting two models during the stable phase and smoothly decreases when connecting two models during the decay phase.}
        \label{fig:loss}
    \end{minipage}
\end{figure}

\section{Uncertainty Variation in Data Distribution Shapes the River Valley Landscape}
\label{sec:data}

What causes the loss landscape to resemble a river valley structure? In this section, we propose and validate the hypothesis that variations in next-token uncertainty shape the loss landscape. When predicting a deterministic fact, a large learning rate can boost the model's confidence, accelerating learning. However, when the next token is inherently ambiguous—such as the continuation of a phrase like "I am"—the model must learn a calibrated distribution, which may necessitate a smaller step size. This variation in uncertainty leads to differences in sharpness across the loss landscape, resulting in the river valley structure.

\textbf{A Toy Bigram Language.} We formalize this intuition using a synthetic language composed of cities and names, where each city corresponds to a unique distribution of its citizens' names. For instance, one city might have a highly deterministic distribution, with most residents named "Ken``, while another city may have a more diverse {distribution of names}. This synthetic language follows the structure in~\cite{allenzhu2024physicslanguagemodels33}. The goal is to learn the distribution of names conditioned on each city. We show that cities with more deterministic name distributions align with flatter regions in the loss landscape (the "river"). In contrast, cities with more diverse name distributions correspond to sharper regions (the "hillsides").

\begin{wrapfigure}{r}{0.34\textwidth}
    \centering
    \vspace{-0.3in}
    \includegraphics[width=0.3\textwidth]{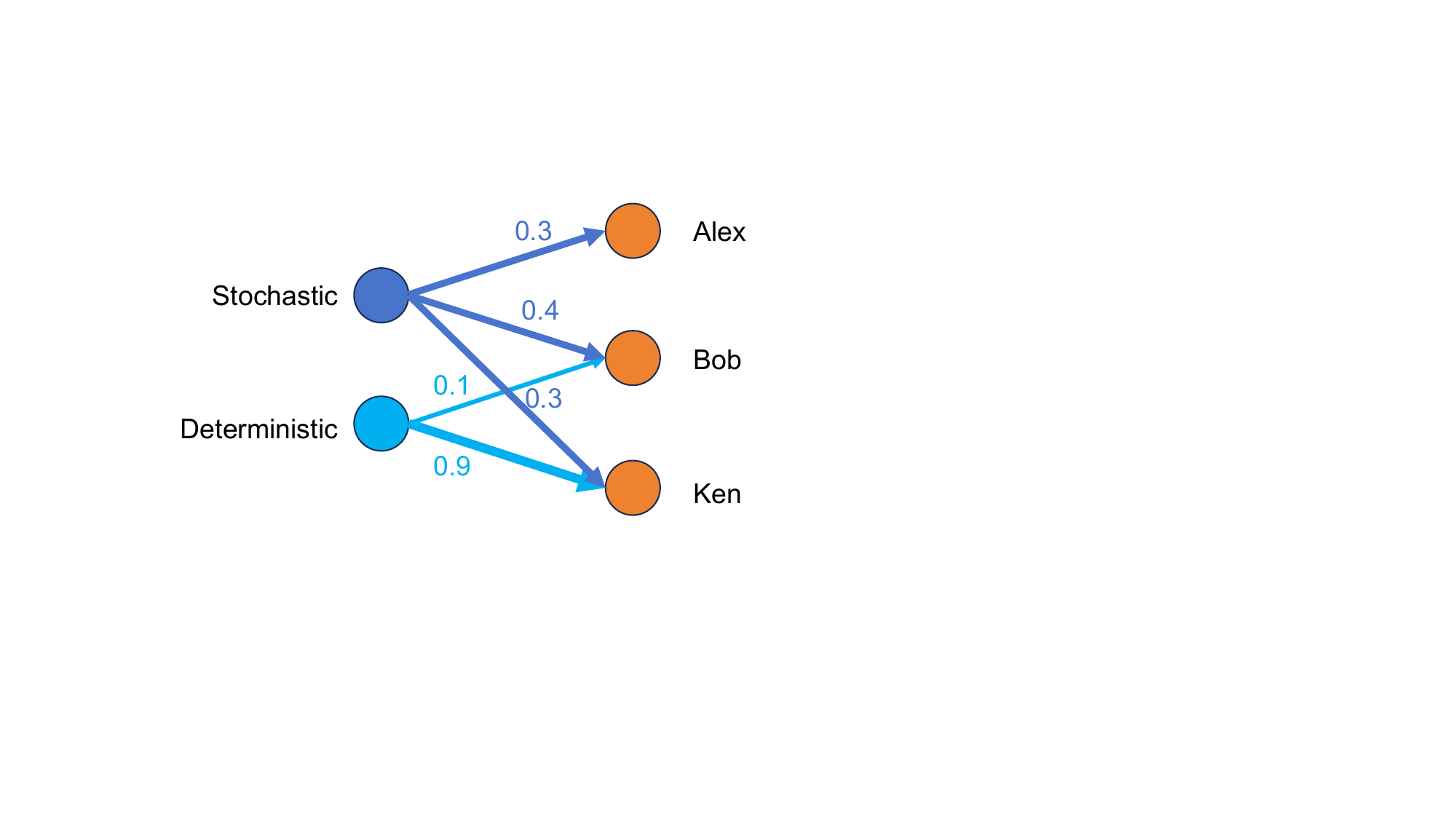}
    \caption{\textbf{Visualization of Toy Bigram Language.} We design a synthetic dataset where each city has a unique name distribution. The left shows the name distributions for two cities, one deterministic and one stochastic.}
    \label{fig:data}
    \vspace{-0.4in}
\end{wrapfigure}

Formally, let the set of cities be represented by \(\{1, \ldots, n\}\) and the set of names by \(\{1, \ldots, m\}\). Data is generated by first selecting a city \(i\) uniformly at random, then sampling a name \(j\) according to the city's name distribution. The \emph{name distribution} for city \(i\) is parameterized by a categorical distribution \(\mathrm{Categorical}([P_{i,j}]_{j=1}^m)\), where \(P_{i,j}\) represents the probability of selecting name \(j\) in city \(i\), and each \(P_{i,j} > 0\).
To quantify the uncertainty in each city’s name distribution, we compute the Gini impurity of the distribution as:
\[
U_i = \mathrm{I_G}(\mathrm{name} \mid \mathrm{city} = i) = 1 - \sum_{j = 1}^m P_{i,j}^2 \in \left[0, 1 - \frac{1}{m} \right).
\]

The value of $U_i$ reflects the uncertainty of city $i$'s name distribution. When the distribution is close to deterministic—i.e., there exists a $j$ such that $P_{i,j}$ is near 1—$U_i$ approaches its lower bound of 0. Conversely, for a nearly uniform distribution, $U_i$ approaches its upper bound of $1 - \frac{1}{m}$. 
Given this setup, we parameterize our model with $\Theta \in \mathbb{R}^{n \times m}$, where each row corresponds to a city and each column to a name. The model estimates the probability of name $j$ for city $i$ using the softmax function $\frac{\exp(\Theta_{i,j})}{\sum_{k=1}^m \exp(\Theta_{i,k})}$.

We use sampled data to train this model. Specifically, for each sampled city $i$, a name $j$ is drawn according to the true distribution $\mathcal{P}_{i,j}$. The model's task is to predict the correct name based on the sampled city, minimizing the negative log-likelihood (NLL) over the sampled data. 
The population loss is given by:
\begin{align}
\label{eq:toy}
    L(\Theta) = \frac{1}{n} \sum_{i = 1}^n \ell_i(\Theta_{i,:}) ,\quad
\ell_i(\Theta_{i,:}) = -\sum_{j = 1}^m \mathcal{P}_{i,j} \log \frac{\exp(\Theta_{i,j})}{\sum_{k=1}^m \exp(\Theta_{i, k})}.
\end{align}

This loss is separable across different cities, meaning that the contribution from each city is independent. The loss component $\ell_i(\Theta_{i,:})$ captures the contribution from city $i$, and different name distributions across cities lead to different forms of $\ell_i$.
Considering a parameter $\Theta^*$ that minimizes the loss $L$, we will show that cities with more stochastic name distributions correspond to sharper components in the loss landscape, as reflected by the average-direction sharpness of $\ell_i$ ($\mathrm{Tr}(\nabla^2 \ell_i(\theta)) \mid_{\theta = \Theta^*_{i,:}}$).

\begin{lemma}
\label{lem:sharp}
The average-direction sharpness of loss component $\ell_i$ at $\Theta^*$ equals the uncertainty of the name distribution ($U_i$).
$\mathrm{Tr}(\nabla^2 \ell_i(\theta)) \mid_{\theta = \Theta^*_{i,:}} = U_i.$
\end{lemma}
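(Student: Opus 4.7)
The statement reduces to a direct computation of the softmax cross-entropy Hessian at its minimizer. Since $L(\Theta)$ decomposes as $\frac{1}{n}\sum_i \ell_i(\Theta_{i,:})$ and each $\ell_i$ depends only on the $i$-th row, the minimizer $\Theta^*_{i,:}$ of $\ell_i$ coincides with $\Theta^*_{i,:}$ in the full problem, so I only need to analyze a single component.

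\textbf{Step 1: derivatives of $\ell_i$.} Write $\ell_i(\theta) = -\sum_j P_{i,j}\,\theta_j + \log\!\bigl(\sum_k \exp(\theta_k)\bigr)$ and let $p(\theta)\in\R^m$ denote the softmax vector with $p_a(\theta)=\exp(\theta_a)/\sum_k\exp(\theta_k)$. A standard calculation gives
\begin{equation*}
\nabla\ell_i(\theta) = p(\theta) - P_{i,:},\qquad \nabla^2\ell_i(\theta) = \mathrm{diag}(p(\theta)) - p(\theta)p(\theta)^\top.
\end{equation*}

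\textbf{Step 2: identify the minimizer.} Setting $\nabla\ell_i(\theta)=0$ forces $p(\theta) = P_{i,:}$; this is achievable because $P_{i,:}$ is a probability vector with strictly positive entries, so $\theta_j = \log P_{i,j} + c$ (any constant $c$) works, and $\ell_i$ is convex (as it is the KL divergence from $P_{i,:}$ to $p(\theta)$ plus the entropy of $P_{i,:}$), so this stationary point is a minimizer. Hence at $\theta = \Theta^*_{i,:}$ we have $p(\Theta^*_{i,:}) = P_{i,:}$.

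\textbf{Step 3: trace evaluation.} Substituting into the Hessian,
\begin{equation*}
\mathrm{Tr}\bigl(\nabla^2\ell_i(\Theta^*_{i,:})\bigr) = \sum_{j=1}^m P_{i,j} - \sum_{j=1}^m P_{i,j}^2 = 1 - \sum_{j=1}^m P_{i,j}^2 = U_i,
\end{equation*}
using $\sum_j P_{i,j} = 1$. This is exactly the claim.

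There is essentially no obstacle: the proof is a one-page exercise in softmax derivatives once the minimizer is identified. The only subtlety is that the minimizer is not unique (it is a 1-parameter family along the all-ones direction), but the Hessian is invariant under this shift, so the statement is well-defined regardless of which representative $\Theta^*_{i,:}$ one chooses.
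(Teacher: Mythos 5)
Your proof is correct and follows essentially the same route as the paper: compute the softmax gradient and Hessian of $\ell_i$, evaluate at the minimizer where $p(\Theta^*_{i,:}) = P_{i,:}$, and take the trace to get $1 - \sum_j P_{i,j}^2 = U_i$. The paper derives the Hessian formula in an auxiliary lemma (\Cref{lem:helpsharp}) in terms of the softmax probabilities $\mathcal{Q}_{i,j}$ and then sums the diagonal, leaving the identification $\mathcal{Q}_{i,:} = P_{i,:}$ at the minimizer implicit; you spell out that step (and the harmless non-uniqueness of the minimizer along the all-ones direction), which makes the argument slightly more self-contained but is not a different approach.
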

\Cref{lem:sharp} demonstrates that at the global minimum, the sharpness associated with a city decreases as the city's name distribution becomes more deterministic. This aligns with the intuition that a deterministic token corresponds to a flatter loss direction.

In a generalized river valley landscape, the hillsides represent the sharper components of the loss, which in this context correspond to cities with more stochastic name distributions. Conversely, the river corresponds to cities with more deterministic name distributions. We can further establish the existence of a generalized river (\Cref{assum:gen-river}) in this loss landscape under appropriate assumptions about $\mathcal{P}$ (see \Cref{thm:toy}). Along the river, the gradient remains nonzero only for the cities with more deterministic name distributions, reinforcing the connection between determinism and flatness in the loss landscape.

\begin{figure}[!h]
    \centering
    \begin{subfigure}[b]{0.45\textwidth}
        \centering
        \includegraphics[width=\textwidth]{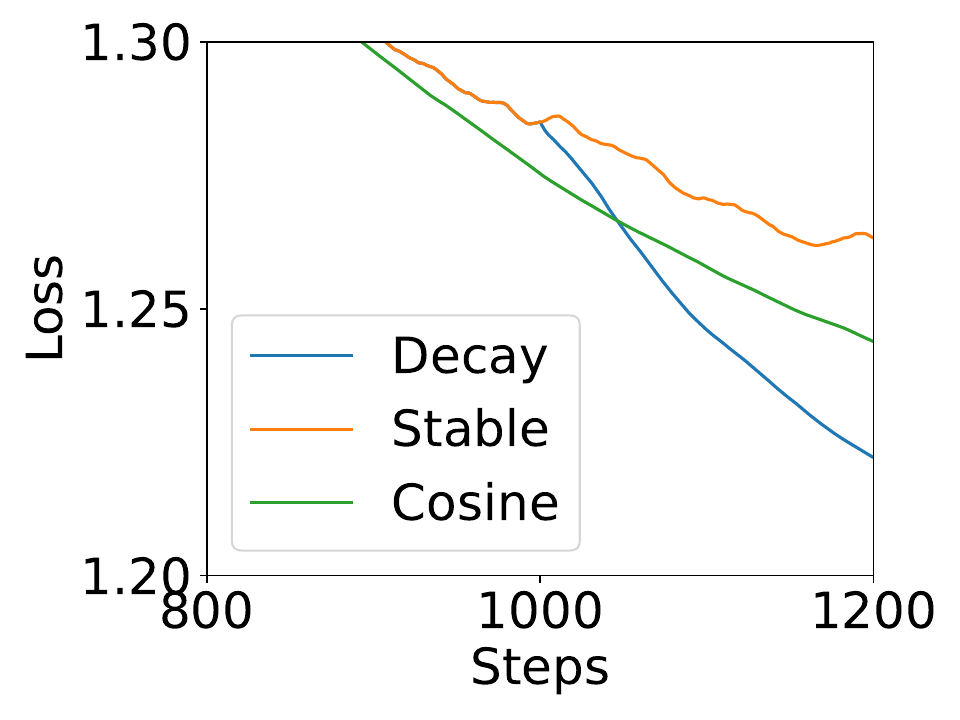}
    \end{subfigure}
    \hfill
    \begin{subfigure}[b]{0.54\textwidth}
        \centering
        \includegraphics[width=\textwidth]{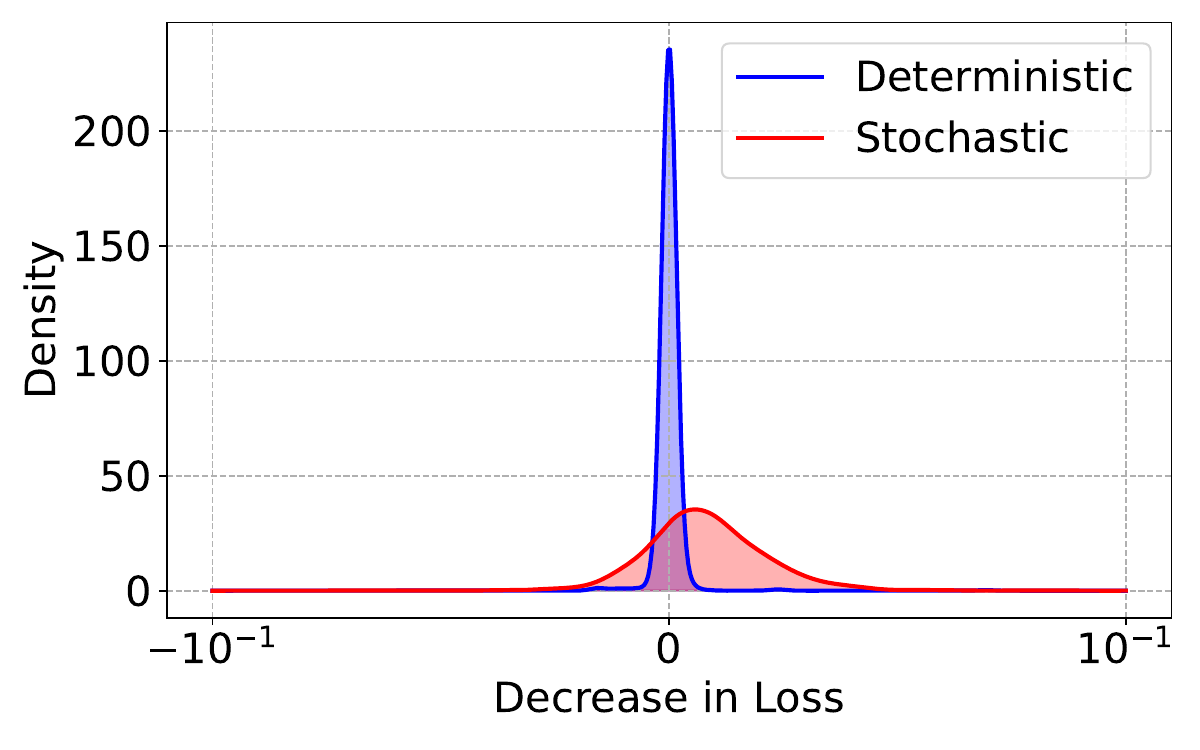}
    \end{subfigure}
    \caption{\textbf{Loss Curves and Relative Loss Decrease Distribution.} The left figure reproduces the non-traditional loss curve of $\wsd$ on the synthetic language, showing that the loss remains elevated during the stable phase but sharply declines during the decay phase. The right figure displays the distribution of the relative loss difference from the two different distributions comparing the stable phase and the decay phase. We can observe that the loss decrease over the stochastic distribution is much larger than the decrease in the deterministic distribution. }
    \label{fig:toy}
\end{figure}
\textbf{Empirical Verification.} 
We empirically verify that the loss curve of $\wsd$ can be reproduced in our synthetic setting. The dataset used contains two types of cities: (1) a deterministic type with name distribution's entropy less than 0.2, and (2) a stochastic type with name distribution's entropy greater than 1. Each type contains 1.8k cities and there are 10 possible names. We first train a toy model on this synthetic data and successfully replicate the non-traditional loss curve of $\wsd$ (\Cref{fig:toy},left).

Next, we convert the data into a synthetic language in the format "The resident of [CITY]: [NAME]" and fine-tune a GPT-2 model, pretrained on OpenWebText, using this synthetic data. We experiment with two different learning rate schedules: a constant schedule (stable) and a decaying schedule (decay).
We then calculate the difference in loss between the two models' predictions for the first token of "[NAME]". A significant Spearman correlation of 0.388 (\Cref{fig:toy},right) is observed between the loss difference and the ground truth entropy per city. This correlation indicates that the loss decrease is greater during the decay phase for more stochastic populations. Furthermore, although the decay phase achieves a lower overall loss, the mean loss for the deterministic sub-population is slightly higher than in the stable run, suggesting that the stable run better learns the deterministic sub-population.

\section{$\wsd$-S: A Simplification of the $\wsd$ Schedule}
\begin{figure}[t]
    \centering
        \begin{subfigure}[b]{0.3\textwidth}
            \centering
            \includegraphics[width=\textwidth]{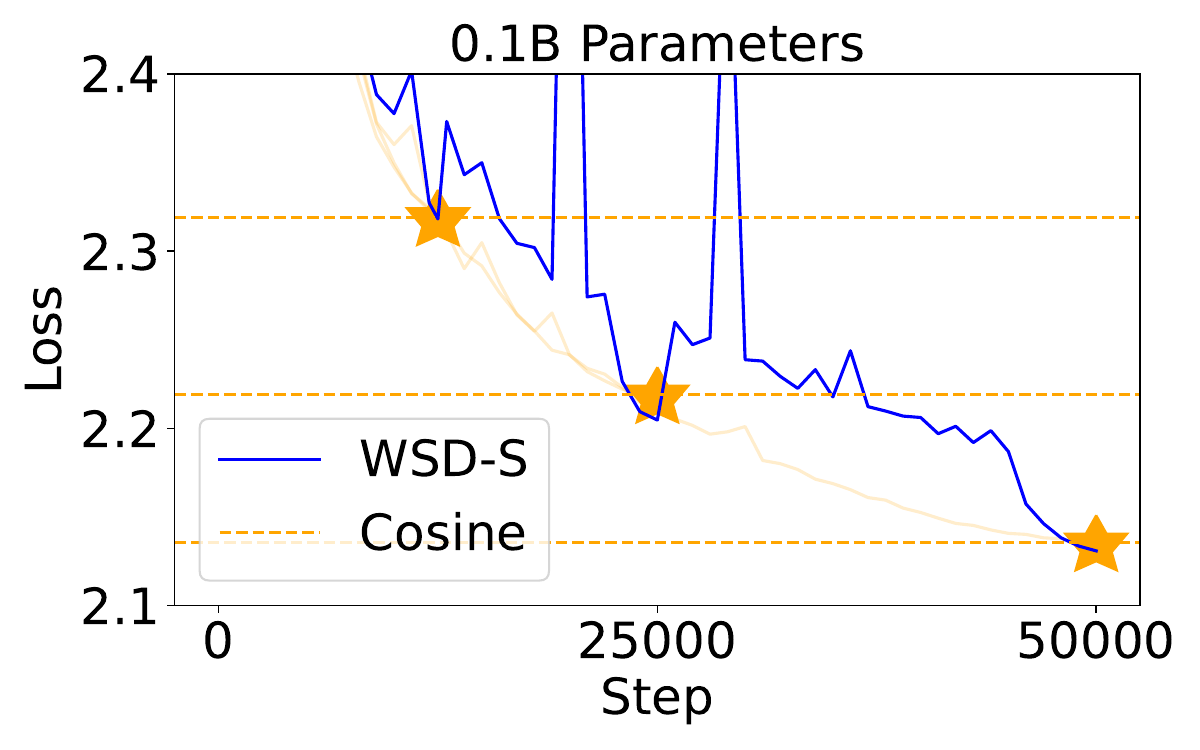}
        \end{subfigure}
        \centering
        \begin{subfigure}[b]{0.3\textwidth}
            \centering
            \includegraphics[width=\textwidth]{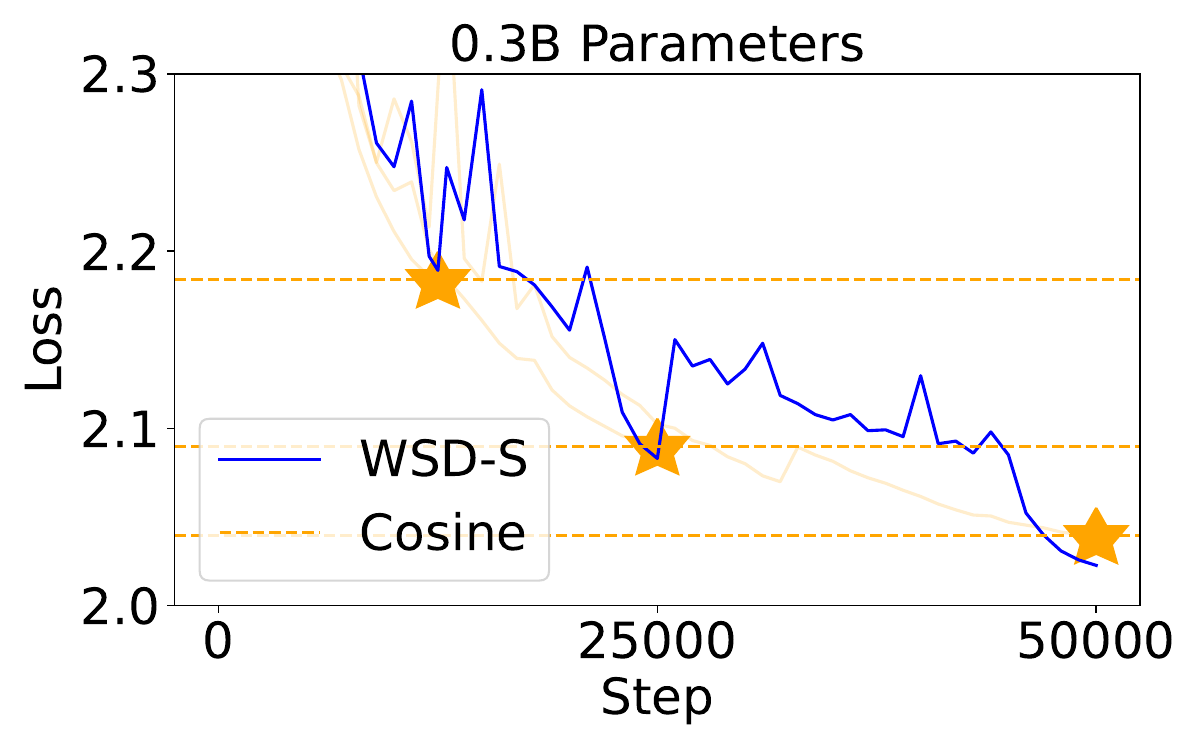}
        \end{subfigure}
        \begin{subfigure}[b]{0.3\textwidth}
            \centering
            \includegraphics[width=\textwidth]{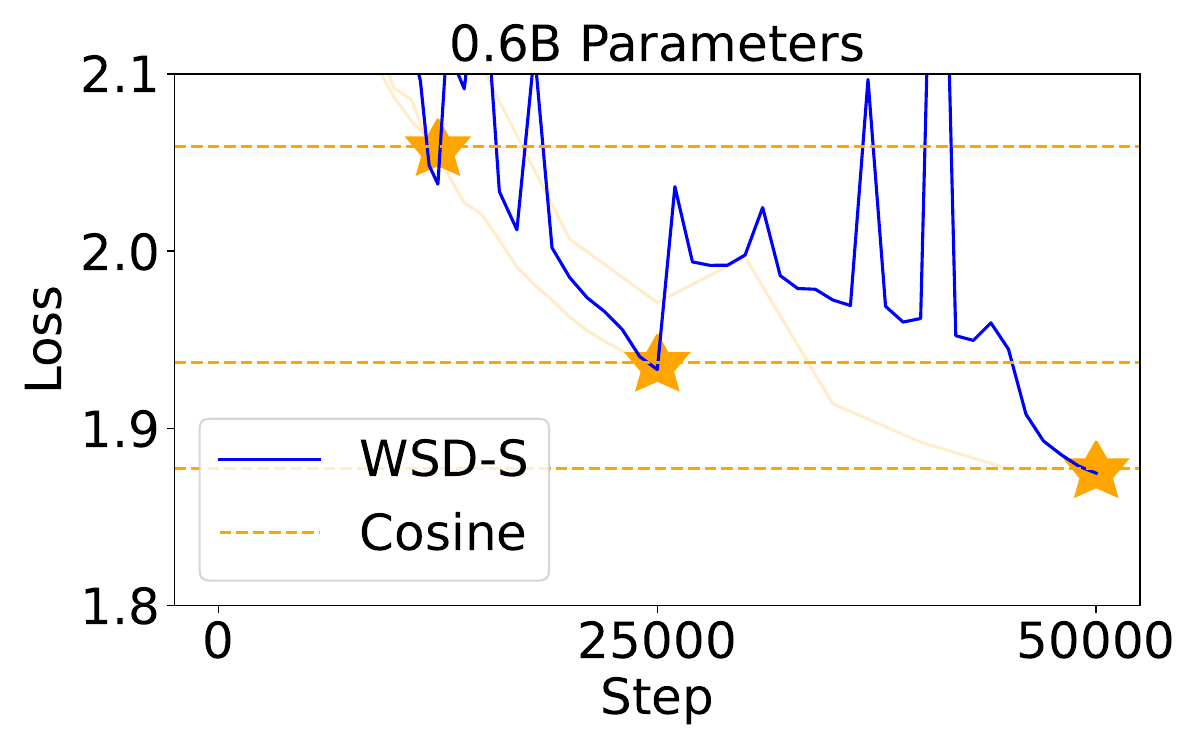}
        \end{subfigure}
        \begin{subfigure}[b]{0.3\textwidth}
            \centering
            \includegraphics[width=\textwidth]{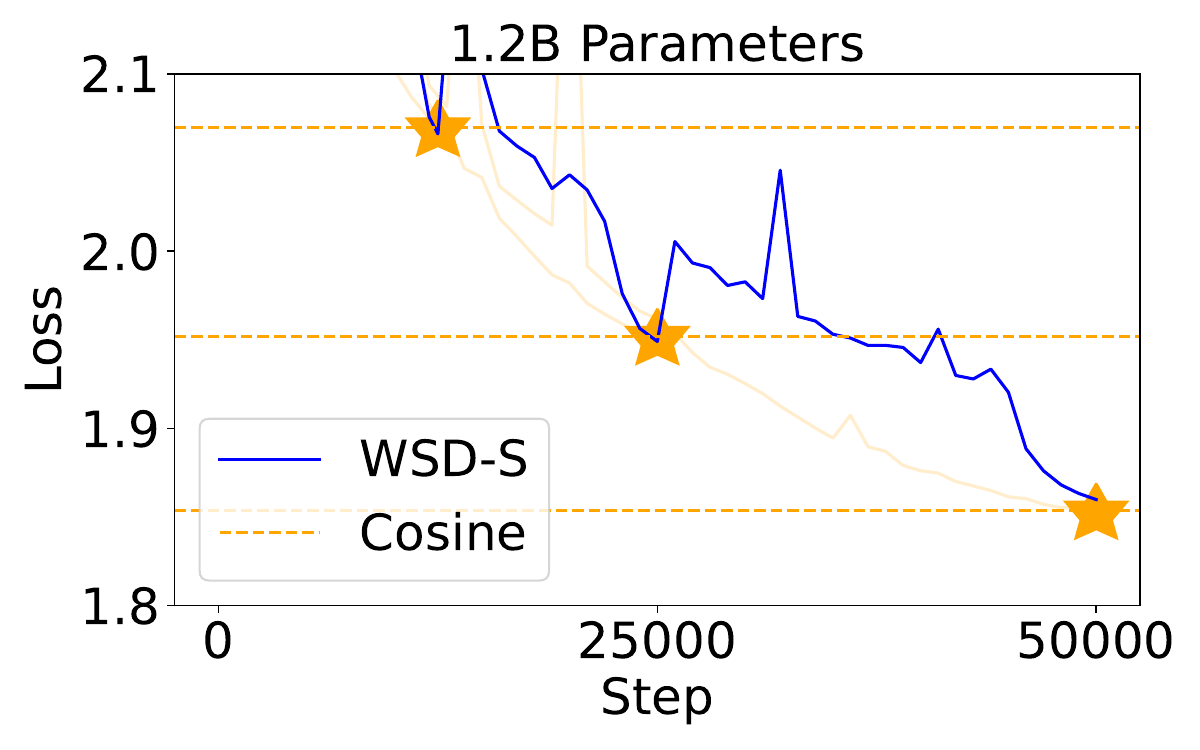}
        \end{subfigure}
        \begin{subfigure}[b]{0.58\textwidth}
            \centering
            \includegraphics[width=\textwidth]{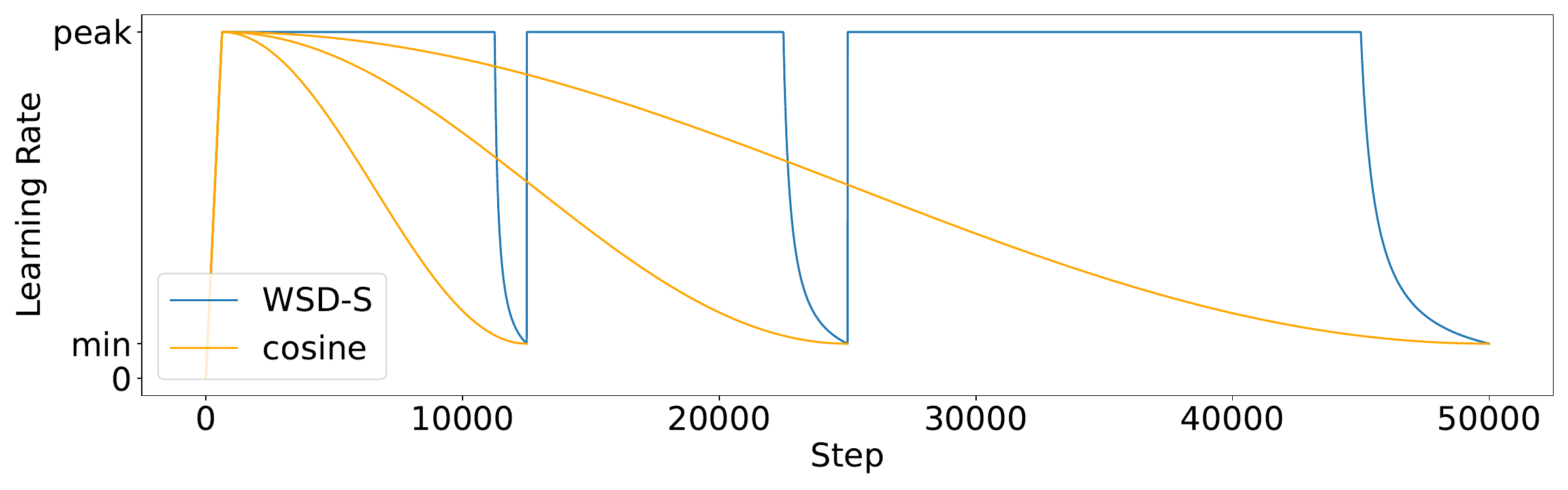}
        \end{subfigure}
    \caption{\textbf{Comparison with the Cosine Oracles.} We show that the $\wsds$ schedule can perform similarly to the Cosine schedules in a single run. The {\color{orange}{$\star$}} in the graphs visualize the terminating validation loss of different Cosine runs. The largest validation loss gap between the $\wsds$ and the Cosine schedules is 6e-3. The lower right figure plots the learning rate curves used in this experiment.}
    \label{fig:main}
\end{figure}

\subsection{Method}

The goal of continual pretraining is to generate checkpoints that exhibit good performance at multiple compute budgets in one run. Formally, our goal is to achieve multiple intermediate checkpoints $\theta_{T_k}$, each corresponding to a computing budget (number of steps) $T_k$ for $k \in \{1, \ldots, K\}$.

A strong baseline to measure the performance of $\theta_{T_k}$ would be running cosine learning rates (\Cref{fig:main}, lower right) for each budget $T_k$ separately,  decays the learning rate linearly to the cosine function between $[0, \pi]$.  We will dub this \emph{oracle} method as \textbf{$\cosineoracle$}. However, $\cosineoracle$ can't be done in a single run and will incur a high total compute budgets $\sum_{k} T_k$. A simple modification to $\cosineoracle$ is to use multiple consecutive cosine learning rates between $T_{k-1}$ and $T_k$ (\Cref{fig:learningrate}, last row), which we will dub as \textbf{$\cycliccosine$}. $\cycliccosine$ only requires a total compute budgets $T_K$. However, it leads to non-negligible performance lost compared to $\cosineoracle$ (\cite{hu2024minicpm}).

Warmup-Stable-Decay (\textbf{$\wsd$}) address this issue by maintaining a {main branch} that keep using a constant learning rate after warmup process and branch off using a decaying learning rate to achieve intermediate checkpoints. One can then continue pretrain from a checkpoint in the main branch by resuming with the same constant learning rate. Formally, $\wsd$ introduce decay starting points $D_1, \ldots, D_k$ such that $T_{i - 1} < D_{i} < T_{i}$. $\wsd$ will then correspond to the following process (\Cref{fig:learningrate}, second row):
\begin{enumerate}[leftmargin=*]
    \item Get a main branch of checkpoints $\maintheta$ by running a constant learning rate schedule for $D_K$ steps.
    \item For each $k$, run a decaying learning rate schedule for $T_k - D_k$ steps starting from $\theta^{\mathrm{main}}_{D_k}$ to get $\theta_{T_k}$.
\end{enumerate}

The above process reutilizes the main branch of checkpoints $\maintheta$ for each $T_k$ and hence reduces the total compute budget to $T_K + \sum_{k} (T_k - D_k)$. 
Recall that in the river valley landscape model, the Warmup-Stable-Decay ($\wsd$) algorithm can be viewed as a combination of a large learning rate phase to speed up progress down the river and a rapid learning rate drop at the end to reduce the oscillation. Because the decay phase also makes progress along the river (see~\Cref{thm:decay-main}), we proprose a simplified version of $\wsd$, called Warmup-Stable-Decay-Simplified (\textbf{$\wsds$}), that continues with another stable phase leaving off of the end of the previous decay phase (see the first row of \Cref{{fig:learningrate}}) without separating the training process into two branches. Formally, the $\wsds$ learning rate schedule is defined as follows:
\begin{align}
\label{eq:lrs}
\eta_k = \begin{cases} 
\text{decay}(T_{i} - D_i, \maxeta, \mineta)[t - D_i] & \text{if } \exists i, D_i < t \leq T_i; \\
\maxeta & \text{otherwise.} \\
\end{cases}
\end{align}

The key difference from our methods is the choice of initialization point when retraining starts. In $\wsd$, the second stable phase uses the model before the decay phase, whereas we use the model after it.
This process is more convenient to implement because it does not require rolling back to the main branch after each decay phase. Here the learning rate decay function $\mathrm{decay}$ can take many forms that decay the learning rate from $\maxeta$ to $\mineta$ over $T_i - D_i$ steps. In this paper, we will use the inverse proportional decay function as defined in~\Cref{eq:decay} for all experiments (visualized in~\Cref{fig:learningrate}, first two rows).

\begin{align}
\label{eq:decay}
\frac{1}{\text{decay}(T, \mineta, \maxeta)} &= \left[ \frac{t}{T} \frac{1}{\mineta} + \left(1 - \frac{t}{T}\right) \frac{1}{\maxeta} \ \Bigg| \ t \in \{0, 1, \ldots, T\} \right]. 
\end{align}

This function is motivated by the analysis on quadratic functions in~\Cref{thm:decay-main}. The reciprocal
 of the learning rate linearly interpolates from the reciprocal
 of maximal to the reciprocal
 of minimal learning rate.

\subsection{Experiments}

\subsubsection{Setup}

\textbf{Architecture and data.}
We adopt the LLaMA architecture from \citet{touvron2023llama}, adjusting the hyperparameters to create four model sizes: 0.1B, 0.3B, 0.6B, and 1.2B. The exact hyperparameters are deferred to~\Cref{app:exp}. These models are trained on the Pile dataset \citep{gao2020pile} with a context length of 4096 and a batch size of 4M tokens.

\begin{figure}[t]
    \centering
    \begin{subfigure}[b]{0.49\textwidth}
        \centering
         \includegraphics[width=\textwidth]{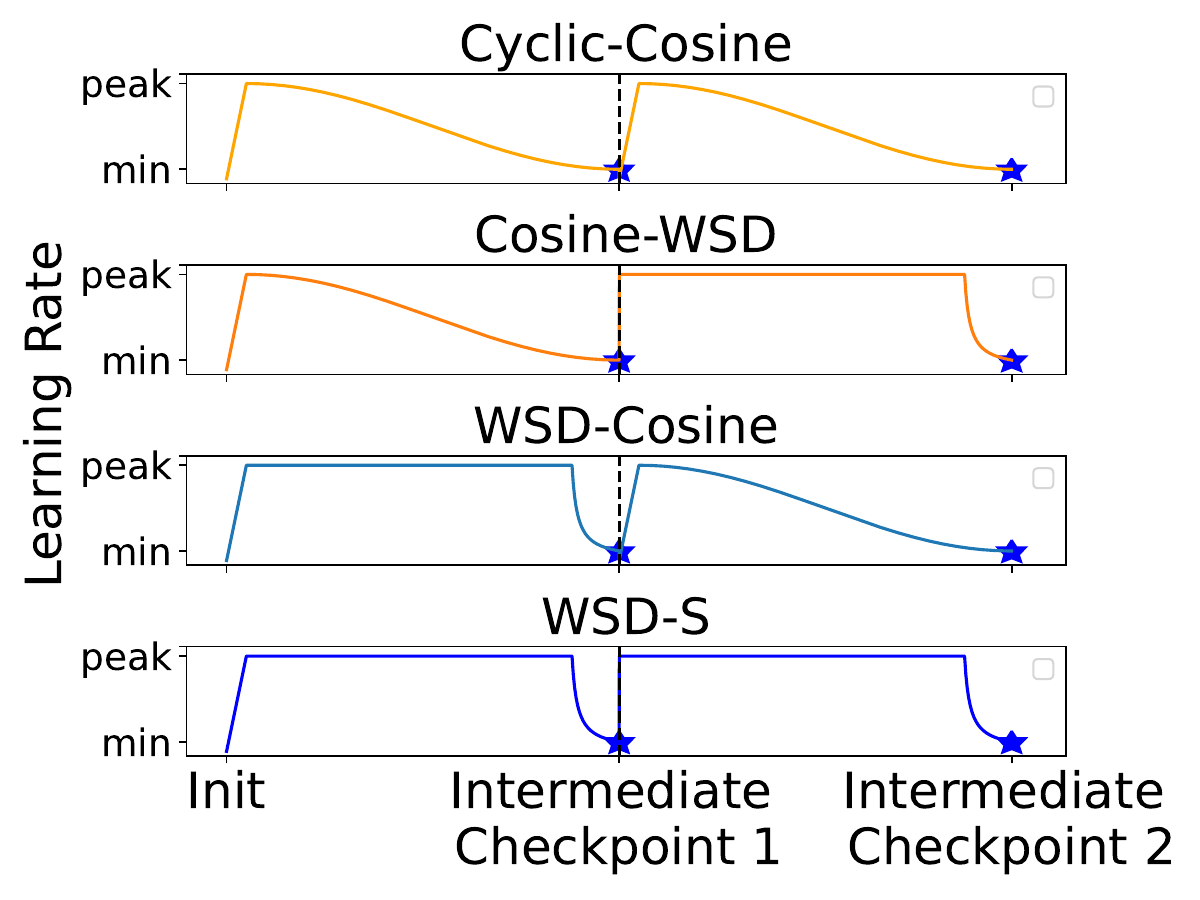}
    \end{subfigure}
    \begin{subfigure}[b]{0.49\textwidth}
        \centering
        \includegraphics[width=\textwidth]{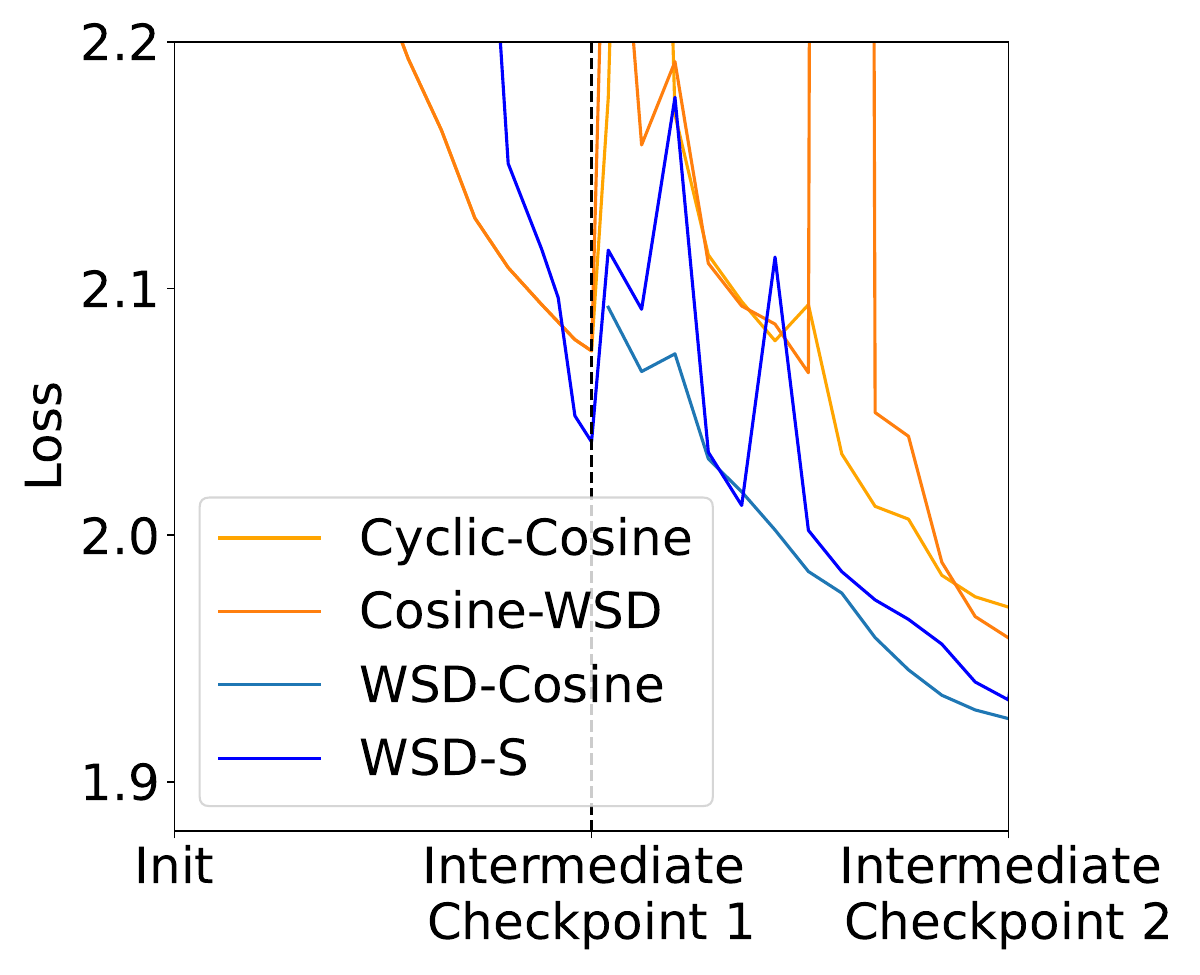}
    \end{subfigure}
    \caption{\textbf{Cosine Learning Rate Implicitly Hurts the Models for Future Continual Learning.} We show that while $\wsd$ and the cosine learning rate schedule may produce similar validation loss in a single run, a model trained with the cosine learning rate schedule is implicitly hurt compared to the model trained with $\wsd$ for future Continual learning. On the 0.6B models, after training the models for 50B tokens using both $\wsds$ and the cosine learning rate schedule, we continually train two models for another 50B tokens using both learning rates. We observe that the model trained with $\wsds$ consistently outperforms the model trained with the cosine learning rate when used as the starting point for further training.}
    \label{fig:comparison}
\end{figure}

\textbf{Implementation.} We set the batch size to 1024 and fixed the peak learning rate for the same model size for all the methods. For the 0.1B and 0.3B models, we use a peak learning rate of 6e-4, and for the 0.6B and 1.2B models, we use a peak learning rate of 4e-4. These values are chosen following current empirical practice (e.g. see \cite{groeneveld2024olmoacceleratingsciencelanguage}). We set the minimal learning rate to 0.1 of the peak learning rate. We use a TPU v3-256 model to train the model with the Levanter framework in Jax \citep{jax2018github, levanter}. The  fraction of time spent decaying is chosen to be $10\%$. The only exception is that when running $\wsd$ on the 0.3B models, we encounter a loss spike after training for 22.5B tokens and decay at the checkpoint trained for 22B tokens instead.
This change is in favor of $\wsd$ in our comparison between $\wsds$ and $\wsd$.
The detailed hyperparameters are deferred to~\Cref{app:exp}.

\subsubsection{Results}
\begin{figure}[t]
    \centering
        \begin{subfigure}[b]{0.24\textwidth}
            \centering
            \includegraphics[width=\textwidth]{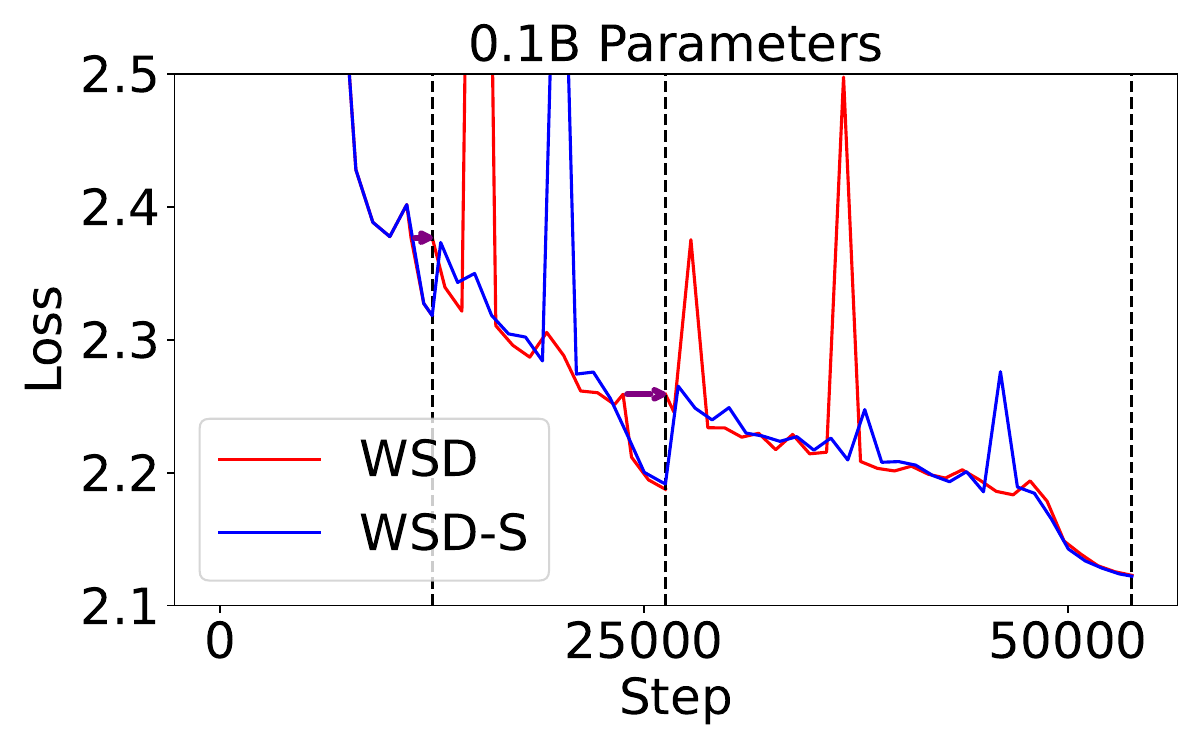}
        \end{subfigure}
        \begin{subfigure}[b]{0.24\textwidth}
            \centering
            \includegraphics[width=\textwidth]{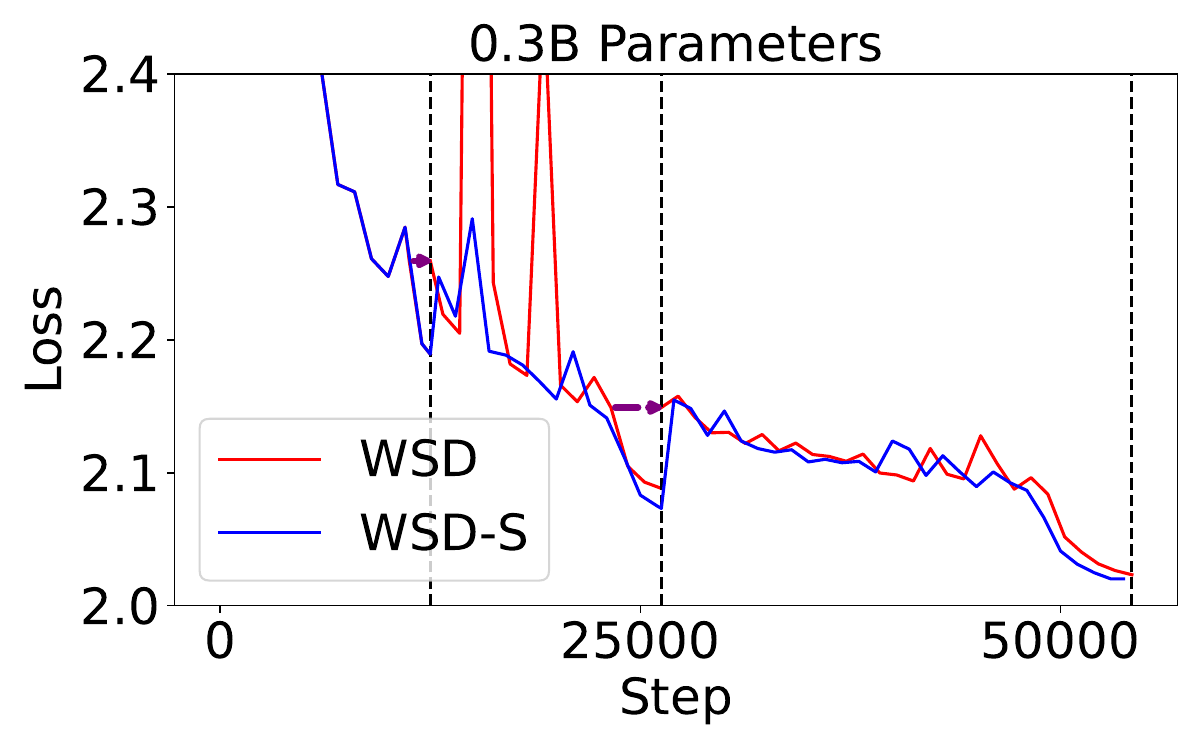}
        \end{subfigure}
        \begin{subfigure}[b]{0.24\textwidth}
            \centering
            \includegraphics[width=\textwidth]{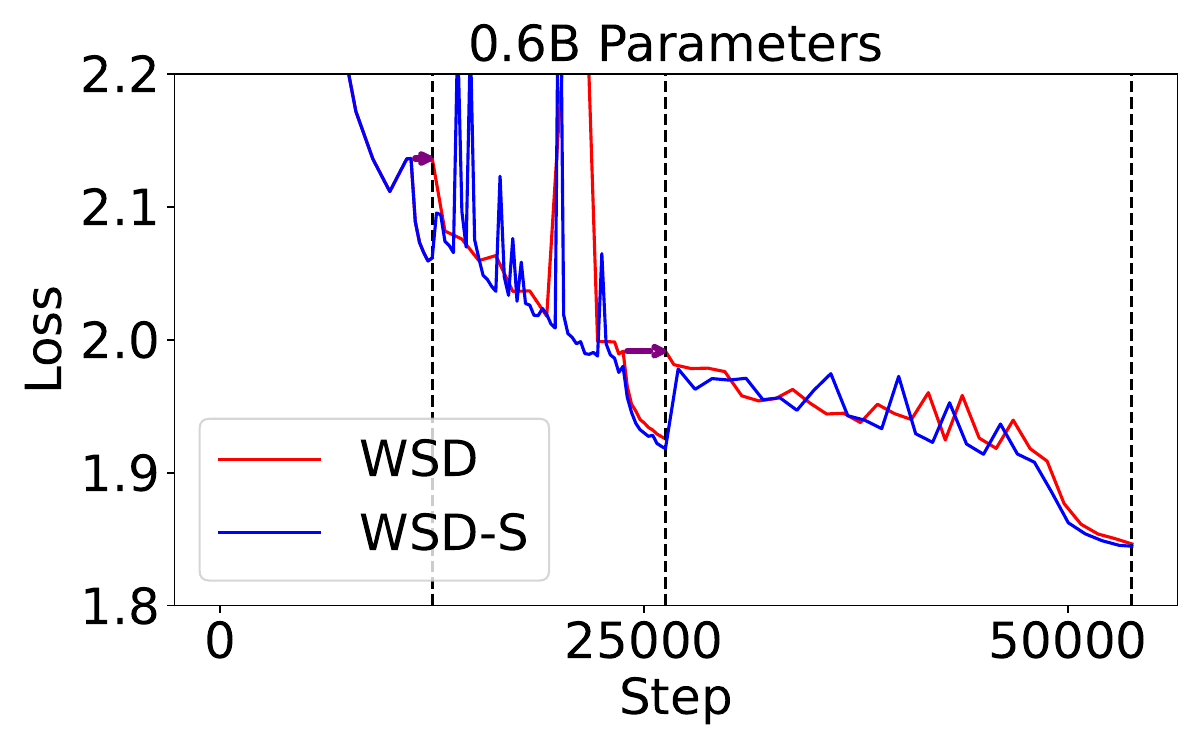}
        \end{subfigure}
        \begin{subfigure}[b]{0.24\textwidth}
            \centering
            \includegraphics[width=\textwidth]{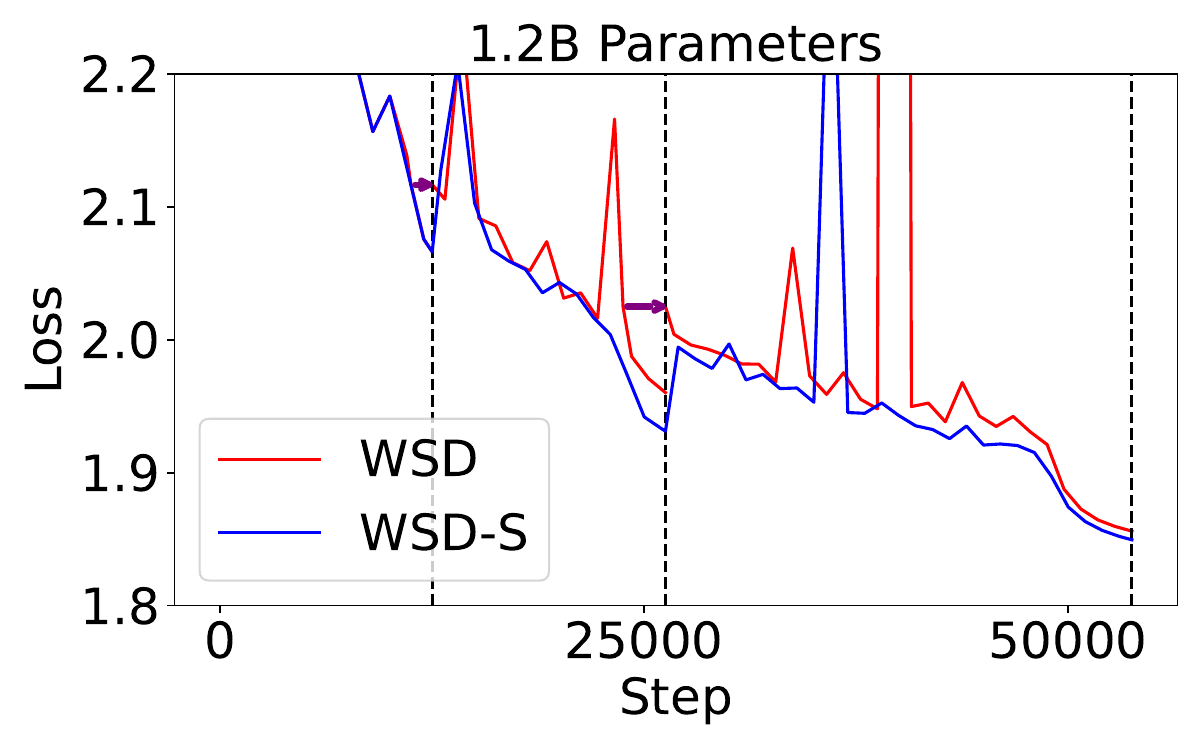}
        \end{subfigure}
        \quad
    \caption{\textbf{Comparison With $\wsd$.} We show that $\wsds$ performs favorably compared with $\wsd$ when the total computes is fixed, achieving a consistent improvement over $\wsd$ on all the model sizes when trained for approximately 200B tokens.}
    \label{fig:wsd}
    \vspace{-0.2in}
\end{figure}

\textbf{$\wsds$ performs competitively with $\cosineoracle$.} The three endpoints of $\wsds$ are set at 50B, 100B, and 200B tokens for all models. As shown in \Cref{fig:main}, $\wsds$ delivers competitive results compared to $\cosineoracle$ in a single run.

\textbf{$\wsds$ significantly outperforms $\cycliccosine$.} 
We compare the $\cycliccosine$ and the $\wsds$ on 0.6B models with a total token budget of 100B tokens. Both schedules reduce to a minimal learning rate at 50B tokens to obtain an intermediate checkpoint. Our results show that $\wsds$ outperforms $\cycliccosine$ with a significant performance gap of 4e-2 (\Cref{fig:comparison}). A common belief is that loss spiking after increasing the learning rate is the main cause of the performance loss in $\cycliccosine$. However, this belief does not explain the advantage of $\wsds$. We hypothesize that a model trained with a small learning rate for too long, as with Cosine, is implicitly hurt compared to a model trained with a large learning rate for the majority of the run, as with $\wsd$ or $\wsds$. 

To show that the model trained with $\wsd$ is more suitable for continual training, we conducted ablation studies by interchanging the schedules in the latter half of the runs to create two new learning rates (Cosine-$\wsd$ and $\wsd$-Cosine). Among the four runs, the model trained using $\wsd$ for the first half consistently achieved lower loss in continual learning, indicating that $\wsd$ produces models more suitable for continual learning, even after learning rate decay.

\textbf{$\wsds$ matches (and slightly outperforms) $\wsd$ given the same total compute.} For $\wsd$, we adopt the following comparison methodology: assuming a 10$\%$ decay portion, to get three checkpoints at 12.5k, 25k, and 50k steps, $\wsd$ then requires corresponding total steps of 12.5k, 26.25k, and 53.75k.  Hence, we examine whether $\wsds$ can output three models of matching or better performance in the same corresponding steps (see \Cref{fig:wsd}). Our results suggest that $\wsds$ consistently outperforms $\wsd$ when trained on 200B tokens and underperforms $\wsd$ only on the smallest scale experiments when we trained 0.1B models for 25k steps. As this is the smallest scale experiment, we conclude that $\wsds$ has a slight advantage over $\wsd$ when the total compute is fixed. This matches our intuition that $\wsds$ can reuse the decay phases of previous checkpoints, leading to a more efficient use of the total compute. As a simpler version of $\wsd$, $\wsds$ is more user-friendly for open-source pretrained models, allowing users to continue training the final checkpoint without needing intermediate ones given that the pretrained models are trained with $\wsd$ or $\wsds$.

\begin{figure}[t]
    \centering
    \begin{subfigure}[b]{0.28\textwidth}
        \centering
        \includegraphics[height=3cm,width=\textwidth]{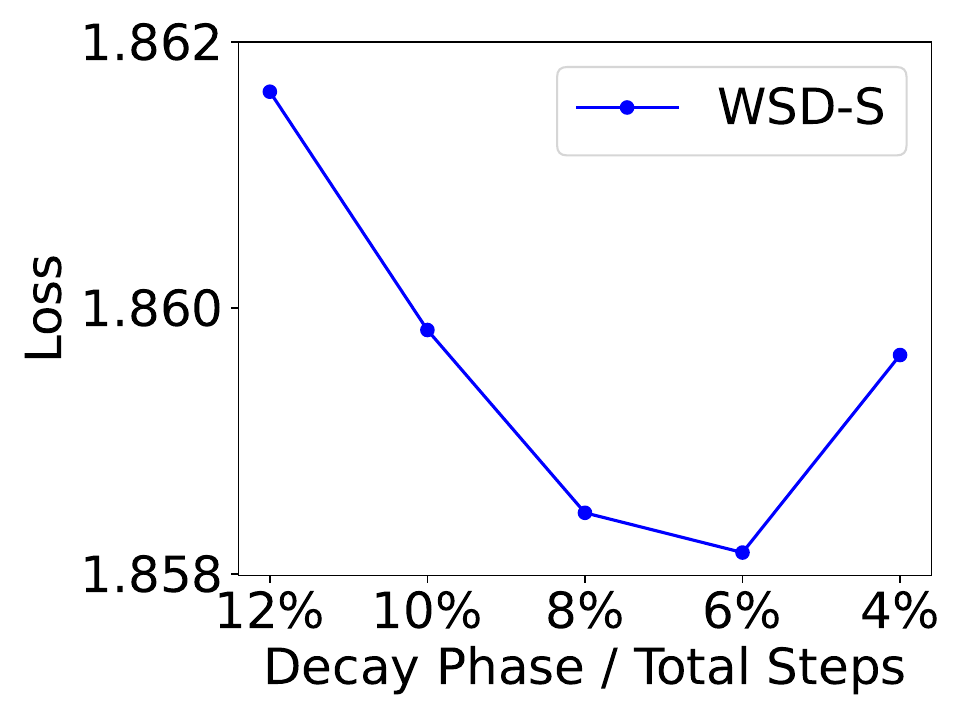}
        \caption{1.2B Models on 200B tokens}
        \label{fig:1B200B}
    \end{subfigure}
    \hfill
    \begin{subfigure}[b]{0.28\textwidth}
        \centering
        \includegraphics[height=3cm,width=\textwidth]{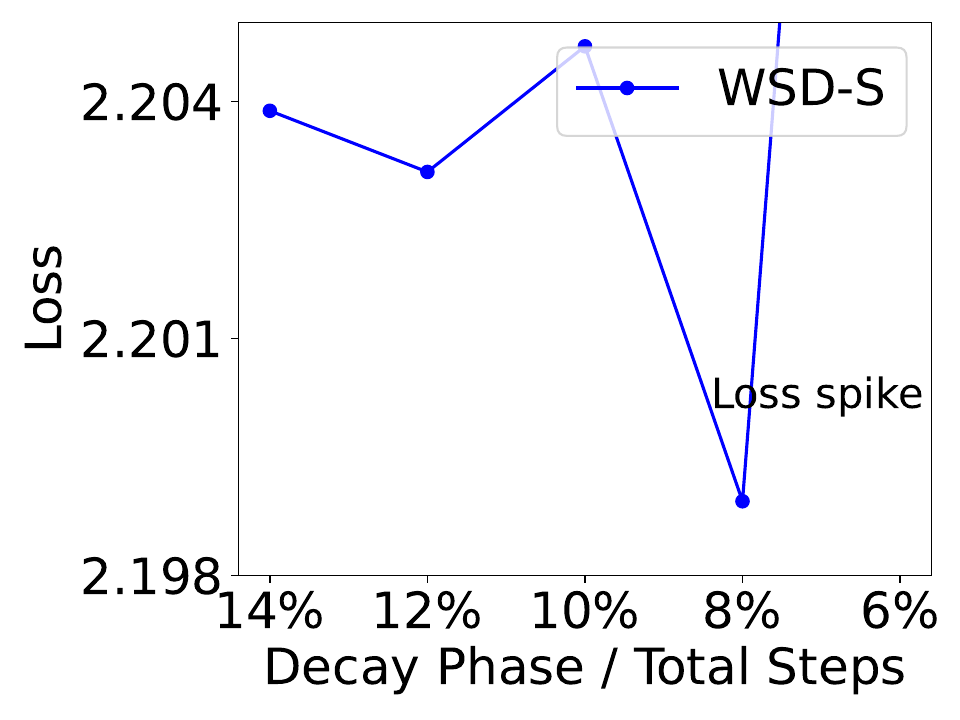}
        \caption{0.1B Models on 100B tokens}
        \label{fig:100M50B}
    \end{subfigure}
    \hfill
    \begin{subfigure}[b]{0.4\textwidth}
        \centering
        \includegraphics[height=3cm,width=\textwidth]{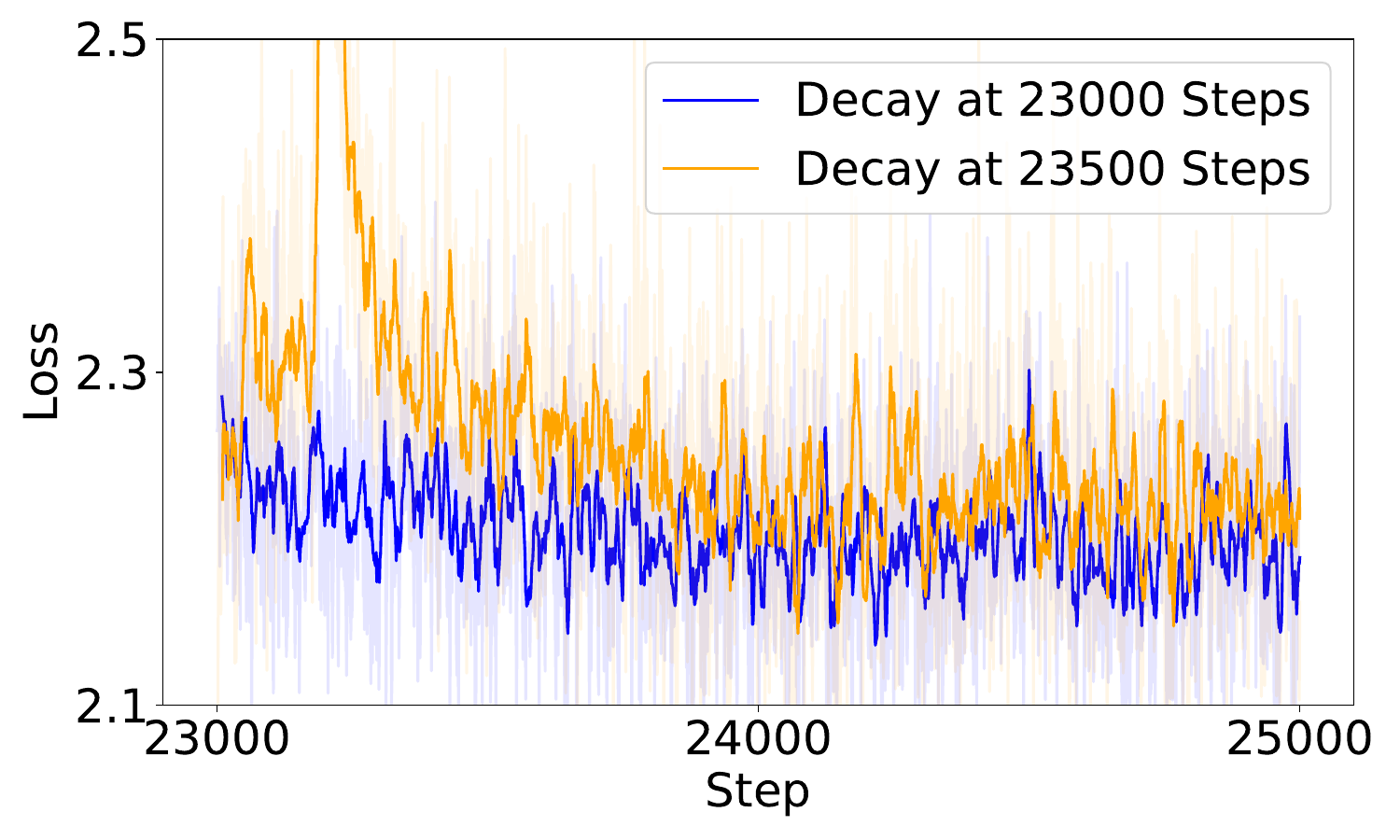}
        \caption{0.1B Models on 100B Tokens.
         Decay Near a Loss Spike (6\%)}
        \label{fig:spikeobscure}
    \end{subfigure}
    \caption{\textbf{Ablation Study on the Sensitivity of Fraction of Time Decaying}. This study examines two settings: a smaller scale with 0.1B parameters trained on 100B tokens (middle figure) and a larger scale with 0.6B parameters trained on 200B tokens (left figure). The results indicate that the final performance is similar when the decay phase is 8\%-12\% of the total training steps. However, the right figure demonstrates a significant performance loss when decaying near a loss spike. It compares two training loss curves with decay phases of 8\% and 6\% of the total compute on the 0.lB models, where the latter starts immediately after a loss spike, leading to a validation loss increase of 2e-2.}
    \label{fig:portion}
    \vspace{-0.2in}
\end{figure}

\textbf{$\wsd$ and $\wsds$ is not sensitive to the fraction of time spent decaying} We conclude with an ablation study on the fraction of time spent decaying, and the result is shown in~\Cref{fig:portion}.
The final performance matches tightly within the range of $8\%$ to $12\%$, showing small sensitivity to the choice of the decay portion. 

However, in our experiments, we observe that decaying near a loss spike can lead to a significant performance loss (\Cref{fig:portion}, right).With the large learning rate, the training runs tend to be very volatile and there are multiple loss spikes in the training (see~\Cref{fig:main}). If a decay happens closely \emph{after} a loss spike and the loss has not yet decreased to its original level, it is typical that the final validation loss will be worse by 1e-2 or even more. We observe the same phenomenon for $\wsd$, and when such a scenario happens, we suggest either running longer till the loss stabilizes or rolling back to a slightly earlier checkpoint before the loss spikes and decays from there.

\section{Conclusion}

In this work, we propose a river valley landscape metaphor to explain why large learning rates can make implicit progress in training that can only be revealed when the learning rate is decayed. Further, we propose a simple data model that can generate the river valley landscape, attributing the difference of sharpness in different directions to the difference in uncertainty between different tokens. Based on our theory, we propose a learning rate schedule called $\wsds$ that can produce multiple intermediate checkpoints in a single-consecutive run as good as if we prespecified the individual budgets and used a tuned cosine learning rate. The learning rate schedule involves rapidly decaying to a minimal learning rate level at these budgets and resuming to a high constant learning rate immediately afterward and keeping the learning rate constant in the rest of the runs. From training language models from 0.1B to 1.2B parameters, we show $\wsds$ performs competitively with cosine learning rate and outperforms other compute-agnostic learning rates including $\wsd$ and cyclic-cosine learning rate fixing the total compute budget.

\section*{Acknowledgement}

The authors would like to thank the support of NSF 2211780 and also would like to thank the Google TPU Research Cloud for the computing resources that enabled these experiments

\newpage
\bibliographystyle{iclr2024_conference}
\bibliography{our}

\newpage
\appendix
\section{Omitted Proofs}

\subsection{Notation.}

To denote $a^Tb$ for two vectors, we will $\langle a, b \rangle$.
We will use the following function to denote the directional derivative of a mapping $F: \R^d \to \R^m$:
\begin{align*}
    \nabla F(x)[v] = \lim_{\alpha \to 0} \frac{F(x + \alpha v) - F(x)} {\alpha}.
\end{align*}

\subsection{A warmup on the quadratic function}
\label{app:quadratic}

We will first motivate the decaying function we choose~\Cref{eq:decay} using a simple example on quadratic function.

\begin{lemma}
\label{lem:fast}
Assuming that we are considering the following gradient descent
\begin{align*}
    y_{k + 1} =  y_k - \eta_k \nabla (\gamma y_k^2/2) - \eta \noise_k, \noise_k \in \Normal(0, \sigma^2 \identity).
\end{align*}
Suppose $\eta_0 = \maxeta$ and $y_0$ follows a normal distribution $\Normal(0, \maxeta \frac{\sigma^2}{2 \gamma - \maxeta \gamma^2})$. Then the following two statements hold, 
\begin{enumerate}
    \item If $\forall t, \eta_k = \eta_0$, $y_k$ will follow the same distribution as $y_0$.
    \item Consider all the learning rate schedule $\eta_k$, the following is the optimal
    \begin{align*}
       \forall t \ge 1, \eta_k^* = \frac{1}{\gamma (k - 1) + \frac{2}{\maxeta}}
    \end{align*}
    in the sense that it yields the fastest expected loss decrease. Suppose $\eta_k^*$ corresponds to iterates variables $y_k^*$, for any $\eta_k$ and its corresponding iterates variables $y_k$,
    \begin{align*}
        \E[\gamma y_k^2 / 2] \ge \E[\gamma (y_k^*)^2 / 2] = \frac{\sigma^2}{\gamma} \eta_k^*.
    \end{align*}
\end{enumerate}

\end{lemma}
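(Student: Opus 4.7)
Both parts reduce to the scalar recursion on the second moment. Because the update $y_{k+1} = (1-\eta_k\gamma)y_k - \eta_k g_k$ is linear in $y_k$ with independent Gaussian noise, Gaussianity is preserved and the variance $V_k := \E[y_k^2]$ evolves as
\[
V_{k+1} = (1-\eta_k\gamma)^2 V_k + \eta_k^2 \sigma^2,
\]
while the expected loss is $\E[\gamma y_k^2/2] = \gamma V_k/2$. For Part 1, I would verify that $v := \maxeta\sigma^2/(2\gamma - \maxeta\gamma^2)$ is the fixed point of this recursion at $\eta_k = \maxeta$: using $1-(1-\maxeta\gamma)^2 = \maxeta\gamma(2-\maxeta\gamma)$, the fixed-point equation collapses to $v\cdot\maxeta\gamma(2-\maxeta\gamma) = \maxeta^2\sigma^2$, which holds by inspection. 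Combined with preservation of Gaussianity, $y_k$ is distributed as $y_0$ for every $k$.

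For Part 2 the plan is to show that a greedy one-step policy is globally optimal, then solve the resulting recursion in closed form. Setting $\tfrac{d}{d\eta}\bigl[(1-\eta\gamma)^2 V + \eta^2\sigma^2\bigr]=0$ gives the one-step minimizer $\eta^\star(V) = \gamma V/(\sigma^2 + \gamma^2 V)$ with achieved value $\underline V(V) := \sigma^2 V/(\sigma^2 + \gamma^2 V)$. To upgrade to global optimality I would induct on $k$ using two monotonicity facts: the map $V \mapsto \underline V(V)$ is increasing (derivative $\sigma^4/(\sigma^2+\gamma^2 V)^2>0$), and for each fixed $\eta$ the map $V \mapsto (1-\eta\gamma)^2 V + \eta^2\sigma^2$ is also non-decreasing in $V$. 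Assuming inductively that the greedy iterate satisfies $V_k^\star \le V_k$ under any schedule,
\[
V_{k+1}^\star = \underline V(V_k^\star) \le \underline V(V_k) \le (1-\eta_k\gamma)^2 V_k + \eta_k^2\sigma^2 = V_{k+1}
\]
for every $\eta_k$, closing the induction.

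Taking reciprocals linearizes the greedy recursion: $1/V_{k+1}^\star = 1/V_k^\star + \gamma^2/\sigma^2$. By Part 1 the forced $\eta_0=\maxeta$ step leaves $V_1 = V_0 = \maxeta\sigma^2/(2\gamma-\maxeta\gamma^2)$, so for $k\ge 1$ telescoping gives $1/V_k^\star = \gamma\bigl(2/\maxeta + (k-2)\gamma\bigr)/\sigma^2$. Substituting back into $\eta^\star(V_k^\star) = \gamma V_k^\star/(\sigma^2 + \gamma^2 V_k^\star)$ produces $\eta_k^\star = 1/\bigl(\gamma(k-1) + 2/\maxeta\bigr)$, matching the stated schedule, and the explicit expression for $\E[\gamma(y_k^\star)^2/2] = \gamma V_k^\star/2$ is then immediate from the closed form for $V_k^\star$. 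The main obstacle is the global-optimality step; once the monotonicity of $\underline V(\cdot)$ and of the unoptimized one-step map are checked (both elementary), the rest is routine algebra on a one-dimensional recursion.
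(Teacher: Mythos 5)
Your proposal is correct and follows essentially the same route as the paper: reduce to the scalar variance recursion $V_{k+1} = (1-\eta_k\gamma)^2 V_k + \eta_k^2\sigma^2$, verify the fixed point for Part 1, and for Part 2 take the one-step greedy minimizer and telescope the reciprocals $1/V_k$. The one place you go beyond the paper is the global-optimality step: the paper simply asserts that ``the optimality of $\eta_k^*$ can be easily inferred from the proof,'' whereas you make this explicit via the two monotonicity facts (the minimized map $\underline{V}$ is increasing, and the unoptimized one-step map is non-decreasing in $V$ for each fixed $\eta$) together with the induction $V_k^\star \le V_k$; this is exactly the Bellman-type argument the paper is implicitly invoking, and it is the right way to close the gap.
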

\begin{proof}
We will denote $\sigma_k = \E[y_k^2]$ and assume WLOG we start decayping at step $0$. Then we will have
\begin{align*}
    \sigma_k = (1 - \eta_k \gamma)^2 \sigma_{k - 1} + \eta_k^2 \sigma^2.
\end{align*}

If we choose all $\eta_k = \maxeta$, we can directly verify that $\sigma_k = \sigma$.

If we choose $\eta_k = \frac{\sigma_{k - 1 }\gamma}{\sigma_{k - 1}\gamma^2 + \sigma^2}$ to minimize the right hand side, we will have that
\begin{align*}
&\sigma_k = \frac{\sigma_{k - 1} \sigma^2}{\sigma_{k- 1}\gamma^2 + \sigma^2}. \\
\iff     &\frac{1}{\sigma_k} = \frac{1}{\sigma_{k - 1}} + \frac{\gamma^2}{\sigma^2} = \frac{1}{\sigma_0} + \frac{\gamma^2 k}{\sigma^2}.
\end{align*}

This implies $\sigma_k = \frac{1}{\frac{1}{\sigma_0} + \frac{\gamma^2 k}{\sigma^2}}$ and plugging into $\eta_k = \frac{\sigma_{k - 1 }\gamma}{\sigma_{k - 1}\gamma^2 + \sigma^2}$ we have that
\begin{align*}
    \eta_k^* = \frac{\gamma}{\gamma^2 + \frac{\sigma^2}{\sigma_{k - 1}}}=  \frac{\gamma}{\gamma^2 k + \frac{\sigma^2}{\sigma_0}} = \frac{\gamma}{\sigma^2} \sigma_k.
\end{align*}

The optimality of $\eta_k^*$ can be easily inferred from the proof.
\end{proof}

\subsection{Landscape Analysis}
We will parameterize $\flatproj(x)$ as $v_d(x) v_d(x)^T$ and $\sharpproj(x)$ to denote $\identity - \flatproj(x)$. Throughout this section, we will assume $v_d(x)$ is continuous and pointing towards the direction of the gradient for all the $x$ on the river.
The following technical lemmas will be used repetitively in the proof.

\begin{lemma}
\label{lem:blockdiag}
Under~\Cref{assum:technical,assum:river}, the directional derivative of $\sharpproj(x)$ and $\flatproj(x)$ exist and satisfied that 
\begin{align*}
    \nabla \flatproj(x)[v] = -\nabla \sharpproj(x)[v] = \nabla v_d(x)[v] v_d(x)^T  + v_d(x) \nabla v_d(x)[v]^T.
\end{align*}
Further 
\begin{align*}
    \nabla \sharpproj(x)[v] \sharpproj a &= \langle \nabla v_d(x)[v], \sharpproj a \rangle  v_d(x), \\
    \nabla \sharpproj(x)[v] \flatproj a &= \langle v_d, \flatproj a \rangle  \nabla v_d(x)[v]. \\
    \| \nabla \sharpproj(x)[v] \|_2 &\le \frac{\gamma \epsilon}{\normbound} \|v\|_2.
\end{align*}
\end{lemma}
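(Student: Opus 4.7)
The plan is to verify all three identities by a direct product-rule computation, relying on two elementary facts about $v_d$: (i) it is a well-defined, differentiable unit-norm vector field on $U$ (differentiability is already implicit in Assumption~2.2.4, and analyticity of $L$ together with the eigengap of Assumption~2.2.3 in fact guarantees $v_d$ is analytic on $U$ by standard spectral perturbation theory), and (ii) $\|v_d(x)\|_2 \equiv 1$ forces $v_d(x)^T \nabla v_d(x)[v] = 0$ for every direction $v$.

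First I would apply the product rule to $\flatproj(x) = v_d(x) v_d(x)^T$ to obtain
\begin{align*}
\nabla \flatproj(x)[v] = \nabla v_d(x)[v]\, v_d(x)^T + v_d(x)\, \nabla v_d(x)[v]^T,
\end{align*}
and since $\sharpproj = \identity - \flatproj$, this yields $\nabla \sharpproj(x)[v] = -\nabla \flatproj(x)[v]$, establishing the first claim (up to the sign convention adopted in the statement).

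Second, to derive the two action formulas, I would substitute this expression and simplify using orthogonality. For $\sharpproj a$, the identities $v_d^T \sharpproj = 0$ and $\sharpproj \nabla v_d[v] = \nabla v_d[v]$ (both consequences of $v_d \perp \nabla v_d[v]$) kill one of the two rank-one summands and turn $\nabla v_d[v]^T \sharpproj a$ into $\langle \nabla v_d[v], \sharpproj a\rangle$, leaving a single rank-one term pointing along $v_d$. For $\flatproj a$, the dual observations $v_d^T \flatproj a = \langle v_d, \flatproj a\rangle$ and $\nabla v_d[v]^T \flatproj = 0$ collapse the right-hand side to a single term in the direction $\nabla v_d[v]$, matching the stated forms.

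Finally, for the operator-norm bound I would write $\nabla \sharpproj(x)[v]$ (up to sign) as $uw^T + wu^T$ with $u := \nabla v_d(x)[v]$ and $w := v_d(x)$ satisfying $u \perp w$ and $\|w\|_2 = 1$. This symmetric matrix vanishes outside $\mathrm{span}\{u,w\}$, and in the orthonormal basis $\{u/\|u\|_2, w\}$ of that $2$-dimensional plane it reduces to an antidiagonal matrix with off-diagonal entry $\|u\|_2$, so $\|\nabla \sharpproj(x)[v]\|_{\textup{op}} = \|u\|_2 \le \|\nabla v_d(x)\|_{\textup{op}}\, \|v\|_2$. Plugging in the bound $\|\nabla v_d\|_{\textup{op}} \le \err \gamma/(2\normbound)$ from Assumption~2.2.4 yields the claimed inequality (with a factor of $2$ to spare). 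The entire argument is a routine chain-rule exercise; no step is a genuine obstacle, and the only observation with any content is the rank-two operator-norm evaluation in the final step.
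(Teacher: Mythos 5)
Your proposal is correct and follows essentially the same route as the paper: cite spectral perturbation theory (the paper invokes Kato's Theorem 6.1 via the eigengap and analyticity of $L$) to get differentiability of $v_d$, then apply the product rule to $v_d v_d^T$ and use $v_d^T \nabla v_d[v] = 0$ from the unit-norm constraint. The paper's proof is terse and leaves the two action formulas and the operator-norm bound as immediate consequences, whereas you spell out the rank-two reduction showing $\|uw^T + wu^T\|_{\textup{op}} = \|u\|_2$ when $u\perp w$, $\|w\|=1$; this is a worthwhile bit of bookkeeping and confirms the stated bound (in fact with a factor of $2$ to spare, matching the $\err\gamma/(2\normbound)$ from Assumption~\ref{assum:technical}.4).
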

\begin{proof}

    As $\flatgamma$ is an unique eigenvalue of $\nabla^2 L(x + vt)$ and $\nabla^2 L(x + vt)$ is analytical with respect to $t$, by Theorem 6.1 of~\cite{kato1995perturbation}, we know that $v_d(x + vt)$ is analytical with respect to $t$. Hence, the directional derivative exists.

    The proof is by applying the chain rule and noticing that $\langle v_d(x), \nabla v_d(x)[v] \rangle = 0$ because $v_d(x)$ is always a unit vector.
\end{proof}

We will now define the projection of iterate to the river as the progress measure of the optimization dynamics.
\begin{definition}
\label{def:project}
For $U$ in~\Cref{assum:technical} and any $w \in U$, we define the following ODE as the projection flow:
\begin{align}
\label{eq:projection}
    \phi(w, 0) = w, d \phi(w, t) = -\sharpproj(w) \nabla L\left( \phi(w, t)\right) dt.
\end{align}
When $\lim_{t \to \infty} \phi(w, t)$ is well defined, we will define $\Phi(w) = \lim_{t \to \infty} \phi(w, t)$ as the projection of $w$ to the river.
\end{definition}

The following lemma ensures that the projection function is well-defined and is close to $w$:

\begin{lemma}
\label{lem:phiexists}
Under~\Cref{assum:river,assum:technical}, for any $w$ satisfying that $\mathcal{B}(w, \frac{2 \normbound}{\gamma}) \subset U$, $\Phi(w) \in \river$ exists and $\|w  - \Phi(w) \|_2 \le \frac{2 \| \sharpproj(w) \nabla L(w) \|_2}{\gamma + 2\flatgamma}$.
Moreover, movement along the projection flow decays exponentially,  $\|\sharpphi \gradphi \|_2 \le \exp(-\gamma t/2) \| \sharpproj(w) \nabla L(w) \|_2$.
\end{lemma}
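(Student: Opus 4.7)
Set $F(t):=\sharpphi\gradphi$; under the natural reading of~\Cref{eq:projection} (with the projection evaluated along the flow) one has $\dot\phi(w,t)=-F(t)$, so both claims of the lemma reduce to establishing exponential contraction of $\|F(t)\|_2$. The distance bound then follows by integrating $\|\dot\phi\|_2=\|F\|_2$ in time, and the existence of $\Phi(w)\in\river$ follows because a flow of finite total length is Cauchy, its limit satisfies $F=0$, and by~\Cref{assum:technical}.5 such a limit can only occur on $\river$.

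\textbf{Differential inequality.} Differentiating $F$ along the flow and using $\dot\phi=-F$ yields
\[
\dot F \;=\; -\nabla\sharpphi[F]\,\gradphi \;-\; \sharpphi\,\nabla^2 L(\phi(w,t))\,F.
\]
Taking the inner product with $F$, I plan to treat the two terms separately. For the Hessian term, $F\in\textup{range}(\sharpphi)$ is orthogonal to $\eigenv{d}{\nabla^2 L(\phi(w,t))}$, so the eigengap condition~\Cref{assum:technical}.3 gives $\langle F,\nabla^2 L\,F\rangle\ge(\gamma+4\flatgamma)\|F\|_2^2$. For the projection-derivative term, I decompose $\gradphi=\sharpphi\gradphi+\flatproj(\phi(w,t))\gradphi$ and apply~\Cref{lem:blockdiag}: the sharp piece produces a vector parallel to $v_d$, which is annihilated by inner product with $F$; the flat piece yields $\langle v_d,\gradphi\rangle\,\nabla v_d[F]$. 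Using $|\langle v_d,\gradphi\rangle|\le\normbound$ and the slow-spinning bound $\|\nabla v_d[F]\|_2\le\err\gamma/(2\normbound)\|F\|_2$ from~\Cref{assum:technical}.4, this term contributes at most $\err\gamma/2\cdot\|F\|_2^2$ in absolute value. Since $\err<0.01$, combining the two estimates gives
\[
\tfrac{1}{2}\tfrac{d}{dt}\|F\|_2^2 \;\le\; -\bigl(\gamma+4\flatgamma-\tfrac{\err\gamma}{2}\bigr)\|F\|_2^2 \;\le\; -\tfrac{\gamma}{2}\|F\|_2^2,
\]
so $\|F(t)\|_2\le\exp(-\gamma t/2)\|F(0)\|_2$, matching the stated rate.

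\textbf{Integration, existence, and main obstacle.} Integrating $\|\dot\phi\|_2=\|F\|_2$ over $[0,t]$ and using the sharper constant from the line above gives $\|w-\phi(w,t)\|_2\le\|F(0)\|_2/(\gamma+4\flatgamma-\err\gamma/2)\le 2\|F(0)\|_2/(\gamma+2\flatgamma)$, which is the distance claim. Since $\|F(0)\|_2=\|\sharpproj(w)\nabla L(w)\|_2\le\normbound$, the trajectory never leaves $\mathcal{B}(w,2\normbound/\gamma)\subset U$, and a standard continuation/bootstrapping argument legitimises the use of the regularity assumptions on the whole half-line. Finite total length then makes $\phi(w,\cdot)$ Cauchy with limit $\Phi(w)\in U$; continuity sends $F\to 0$ at the limit, so $\nabla L(\Phi(w))$ is parallel to $\eigenv{d}{\nabla^2 L(\Phi(w))}$, and~\Cref{assum:technical}.5 forces $\Phi(w)\in\river$. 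The main obstacle is the near-cancellation in the projection-derivative term: in isolation $\nabla\sharpphi[F]\gradphi$ is of order $\gamma\|F\|_2$ and would overwhelm the Hessian contraction, but the orthogonality $\langle F,v_d\rangle=0$ annihilates the leading piece via~\Cref{lem:blockdiag}, leaving only an $\err\gamma/2$ residual. Carrying out that cancellation cleanly—together with the bootstrap that keeps the flow inside $U$—is the only genuinely delicate bookkeeping in the argument.
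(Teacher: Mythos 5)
Your proof is correct and follows essentially the same route as the paper's: both track $F(t)=\sharpphi\gradphi$, split $\langle F,\dot F\rangle$ into the Hessian contraction (bounded via the eigengap of~\Cref{assum:technical}.3) and the projection-derivative term (estimated via the $v_d\perp F$ cancellation in~\Cref{lem:blockdiag} together with the slow-spinning bound of~\Cref{assum:technical}.4), integrate the resulting exponential decay for the distance bound, and invoke~\Cref{assum:technical}.5 to place the limit on $\river$. The only difference is cosmetic: you keep the sharper $\err\gamma/2$ constant from~\Cref{assum:technical}.4 where the paper loosens it to $\err\gamma$, which changes nothing downstream.
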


\begin{proof}
We will track $\| \sharpphi \gradphi\|_2^2$ along the projection flow before $\phi(w, t)$ leaves $U$,
\begin{align*}
    &\frac{d \|\sharpphi \gradphi \|_2^2}{dt} \\=& 2 \langle \sharpphi \gradphi, \frac{d  \sharpphi}{dt} \gradphi + \sharpphi \frac{d  \gradphi}{dt}\rangle.
\end{align*}

By~\Cref{lem:blockdiag,assum:technical}, the first term can be bounded as
\begin{align*}
    & \langle \sharpphi \gradphi, \frac{d  \sharpphi}{dt} \gradphi \rangle  \\
    =&-\langle \sharpphi \gradphi,\nabla  \sharpphi[\sharpphi\gradphi ]\gradphi \rangle \\
    =&- \langle \sharpphi \gradphi, \nabla v_d (x)[\sharpphi\gradphi ] \rangle  \langle v_d, \flatproj(\phi(x,t)) \nabla L(\phi(x,t)) \rangle  \\
    \le& \| \gradphi \| \| \sharpphi \gradphi\|^2 \frac{\err \gamma}{\normbound} \le \err \gamma  \| \sharpphi \gradphi\|^2.
\end{align*}

The second term is always negative
\begin{align*}
    &\langle \sharpphi \gradphi,   \sharpphi \frac{d  \gradphi}{dt}\rangle \\
    =& -\langle\sharpphi \gradphi, \sharpphi \nabla^2 L(\phi(x,t))  \sharpphi \gradphi \rangle \\
    \le& - (\gamma + 4\flatgamma) \| \sharpphi \gradphi \|^2.
\end{align*}

Summing up the two terms,
\begin{align*}
    \frac{d \|\sharpphi \gradphi \|_2^2}{dt}  \le -(\gamma + 2 \flatgamma) \| \sharpphi \gradphi \|^2.
\end{align*}

By~\Cref{lem:expdecay}, we have $\|\sharpphi \gradphi \|_2^2 \le \exp(-(\gamma + 2 \flatgamma)t) \| \sharpproj(w) \nabla L(w) \|_2^2$.
Hence $\forall t > 0$,
\begin{align*}
\| \phi(w, t) - w \|_2 &\le \int_{0}^{\infty} \| \sharpproj(\phi(w,\tau)) \nabla L(\phi(w, \tau)) \|_2 d \tau \\
&\le \| \sharpproj(w) \nabla L(w) \|_2 \int_{0}^{\infty} \exp(-(\gamma + 2 \flatgamma)t/2)  \\
&\le \frac{2 \| \sharpproj(w) \nabla L(w) \|_2}{(\gamma + 2 \flatgamma)}.
\end{align*}
As $\mathcal{B}(w, \frac{\normbound}{2 \gamma}) \subset U$, the analysis hold along the trajectory and this shows that $\Phi(w) = \lim\limits_{t \to \infty} \phi(w, t)$ exists and that $\| \Phi(w) - w \|_2 \le  \frac{2 \| \sharpproj(w) \nabla L(w) \|_2}{(\gamma + 2 \flatgamma)}$.

Further $\Phi(w)$ satisfies that $\sharpproj(\Phi(w)) \nabla L(\Phi(w)) = 0$, and by~\Cref{assum:technical}, $\Phi(w) \in \river$.  
\end{proof}

The following lemmas focus on the properties of $\partial \Phi$.

\begin{lemma}
\label{lem:partialphiexists}
Under~\Cref{assum:river,assum:technical}, for any $w$ satisfying that $\mathcal{B}(w, \frac{2 \normbound}{\gamma}) \subset U$, $\partial \Phi(w)$ is well-defined. 
\end{lemma}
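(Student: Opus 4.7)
The plan is to show that the Jacobian $J(t) := \partial_w \phi(w, t)$ of the projection flow (\ref{eq:projection}) converges as $t \to \infty$, and then set $\partial \Phi(w) = \lim_{t \to \infty} J(t)$. For each finite $t$, smoothness of $\phi(w, t)$ in $w$ follows from standard ODE theory since $\nabla L$ is analytic (\Cref{assum:technical}.1) and $\sharpproj$ is analytic by Kato perturbation theory as invoked in \Cref{lem:blockdiag}. The Jacobian then satisfies the variational equation
\begin{equation*}
\dot J(t) = -M(t) J(t), \qquad J(0) = \identity,
\end{equation*}
where $M(t) = \bigl[\nabla \sharpproj(x)[\,\cdot\,]\bigr]\nabla L(x) + \sharpproj(x) \nabla^2 L(x)$ evaluated at $x = \phi(w, t)$. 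By \Cref{lem:phiexists}, $\phi(w,t)$ stays in $U$ for all $t \ge 0$ (since $\mathcal{B}(w, 2\normbound/\gamma)\subset U$) and converges to $\Phi(w) \in \river$, so $M(t)$ is well-defined and converges.

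Next, I would split $J(t) = \tilde J(t) + \hat J(t)$ with $\tilde J(t) = \sharpproj(\phi(w,t)) J(t)$ and $\hat J(t) = \flatproj(\phi(w,t)) J(t)$, and analyze each piece separately. For $\tilde J$, I would mimic the scalar computation in the proof of \Cref{lem:phiexists}: differentiate $\|\tilde J\|_{\textup{op}}^2$, use \Cref{assum:technical}.3 to bound the dominant dissipative term $\sharpproj\,\nabla^2 L\,\sharpproj \succeq (\gamma + 4\flatgamma)\sharpproj$, and apply \Cref{lem:blockdiag} together with \Cref{assum:technical}.4 to bound the cross terms coming from $\nabla \sharpproj[\,\cdot\,]\,\nabla L$ by at most $\err\gamma \|\tilde J\|_{\textup{op}}^2$. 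Since $\err \le 0.01$, the dissipative term dominates, and a Gr\"onwall / exponential-decay argument (as in \Cref{lem:phiexists}) yields a bound of the form $\|\tilde J(t)\|_{\textup{op}} \le C e^{-\gamma t / 2}$.

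For the flat component $\hat J(t)$, I would show that $\|\dot{\hat J}(t)\|_{\textup{op}}$ is controlled by a constant multiple of $\|\tilde J(t)\|_{\textup{op}}$. This is because $\flatproj\,\sharpproj\,\nabla^2 L = 0$, so the only contributions to $\dot{\hat J}$ come from the $\nabla \sharpproj$ and $\nabla \flatproj$ cross terms, each of which, by \Cref{lem:blockdiag}, is proportional to $\|\tilde J\|_{\textup{op}}$ up to factors of $\err\gamma/\normbound \cdot \|\nabla L\|_2 \le \err\gamma$. Integrating against the exponential bound on $\tilde J$ shows that $\hat J(t)$ is Cauchy, hence converges. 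Combining with $\tilde J(t) \to 0$, we obtain $J(t) \to \hat J(\infty) =: \partial \Phi(w)$.

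The main obstacle is the matrix-valued Gr\"onwall argument for $\tilde J$: unlike the vector-valued computation in \Cref{lem:phiexists}, one must track how $\sharpproj\,\nabla^2 L\,\sharpproj$ acts on the image of $\tilde J(t)$ as the sharp subspace itself slowly rotates along the trajectory, so the decomposition $\tilde J + \hat J$ keeps mixing infinitesimally. The key to closing the argument is that every such mixing term picks up a factor of $\err$ by \Cref{assum:technical}.4, so the net decay rate is at worst $(\gamma + 4\flatgamma) - O(\err\gamma)$, which remains strictly positive. Once this is established, independence of the limit on the decomposition basepoint follows from continuity of $\sharpproj$ and $\flatproj$ at $\Phi(w)$.
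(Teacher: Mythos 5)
Your approach is genuinely different from the paper's. The paper simply observes that $\Phi$ is the limit of the iterated time-one map $\phi(\cdot,1)$ and cites Theorem 5.1 of Falconer (1983) for differentiability of such limits; you instead propose a direct analysis of the variational ODE $\dot J = -M(t)J$, splitting $J$ into sharp/flat components and driving a Cauchy argument. The high-level plan is the right one, and it mirrors the computation the paper carries out later in~\Cref{lem:boundpartialphi} (where $s(t)$ and $f(t)$ play the roles of $\tilde J(t)$ and $\hat J(t)$). Getting this right would give a self-contained proof, which is arguably preferable to the paper's bare citation.

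However, two of your key claims are wrong, and fixing them changes the structure of the argument. First, $\tilde J(t)$ does \emph{not} decay to zero exponentially. The variational equation for $\tilde J$ has a persistent source term: writing out the projection of $-M(t)J$ onto the sharp subspace, one finds a contribution of the form $\langle v_d,\flatproj \nabla L\rangle\,\nabla v_d[\hat J]$ coming from $\nabla \sharpproj[\cdot]\flatproj\nabla L$. Since $\flatproj\nabla L$ does not vanish along the flow (it converges to $\nabla L(\Phi(w)) \ne 0$) and $\hat J$ is $O(1)$, this source is $O(\err\gamma)$ and does not decay; consequently $\tilde J$ settles to an $O(\err)$ limit rather than $0$. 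This is consistent with~\Cref{lem:boundpartialphi,lem:boundpartialphigen}, which only bound the limiting sharp component by $5\err$, and with~\Cref{lem:alignspace}, which shows the river tangent is only $O(\err)$-close to $v_d$. Second, your bound $\|\dot{\hat J}\|_{\textup{op}} \le C\|\tilde J\|_{\textup{op}}$ misidentifies the source of decay. Every surviving term in $\dot{\hat J}$ carries a factor of $\sharpproj(\phi(w,t))\nabla L(\phi(w,t))$ (the flow velocity), and it is the exponential decay $\|\sharpphi\gradphi\|_2 \le e^{-\gamma t/2}\|\sharpproj(w)\nabla L(w)\|_2$ from~\Cref{lem:phiexists} that makes $\dot{\hat J}$ integrable, not smallness of $\tilde J$. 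As written, your chain ``$\tilde J$ decays $\Rightarrow$ $\hat J$ is Cauchy $\Rightarrow$ $J \to \hat J(\infty)$'' breaks at the first step, and the conclusion $J(t)\to\hat J(\infty)$ is also false since $\tilde J$ contributes a nonzero piece to the limit. The argument can be repaired: show $\hat J$ is Cauchy using the exponential decay of the flow velocity, then treat $\dot{\tilde J} = -A(t)\tilde J + b(t)$ with $A(t) \succeq (\gamma + 4\flatgamma - O(\err\gamma))\sharpproj$ a uniformly contracting operator and $b(t)$ a source that converges because $\phi(w,t)\to\Phi(w)$ and $\hat J(t)$ converges, concluding $\tilde J(t) \to A_\infty^{-1} b_\infty$. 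But that is a different argument from the one you wrote.
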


\begin{proof}
Recall that $\Phi(w) = \lim\limits_{n \to \infty} \underbrace{(\phi \circ \phi \circ \ldots \phi)}_{n \text{ times}}(w)$, as $\phi$ is differentiable (\Cref{lem:blockdiag}) and $\river$ is the fixed point of $\Phi$, by Theorem 5.1 of~\cite{Falconer1983}, we have that $\partial \Phi(w)$ is well-defined.
\end{proof}

\begin{lemma}
\label{lem:partialphi}
Under~\Cref{assum:river,assum:technical}, for any $w$ satisfying that $\mathcal{B}(w, \frac{2 \normbound}{\gamma}) \subset U$,  it holds that $\partial \Phi(w) \sharpproj(w) \nabla L(w) = 0$. Further for any $w = x(t) \in \river$, it holds that $\partial \Phi(w) \sharpproj(w) = 0, \partial \Phi(w) \frac{d x(t)}{dt} = \frac{d x(t)}{dt}$ and that for any $v$, $\partial \Phi(w) v$ aligns with $\frac{d x(t)}{dt}$.
\end{lemma}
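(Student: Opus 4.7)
The plan is to deduce all four conclusions from two structural facts about $\Phi$: it is the infinite-time limit of the projection flow \eqref{eq:projection}, and $\Phi$ is the identity on $\river$. The first, third, and fourth statements will follow from differentiating appropriate identities, while the second (main) statement requires linearizing the projection flow at a point of $\river$ and analysing its long-time behaviour.

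For the first claim, I would use the semigroup property $\Phi(\phi(w,s))=\Phi(w)$, which holds for all sufficiently small $s$ by uniqueness of ODE solutions together with \Cref{lem:phiexists}: the trajectory starting at $\phi(w,s)$ is a time-shift of the one starting at $w$ and hence converges to the same limit. Differentiating the constant function $s\mapsto \Phi(\phi(w,s))$ at $s=0$ and using $\frac{d\phi}{ds}|_{s=0}=-\sharpproj(w)\nabla L(w)$ gives $\partial\Phi(w)\sharpproj(w)\nabla L(w)=0$ by the chain rule. For $w=x(t)\in\river$, the identity $\Phi(x(t+\epsilon))=x(t+\epsilon)$ yields the third claim by differentiation at $\epsilon=0$. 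For the fourth, since $\Phi$ takes values in the $1$-dimensional manifold $\river$, the image of $\partial\Phi(w)$ is contained in $T_{\Phi(w)}\river=T_w\river$, which is spanned by $\frac{dx(t)}{dt}$; hence $\partial\Phi(w)v$ is a scalar multiple of that tangent for every $v$.

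The heart of the proof is the second claim $\partial\Phi(w)\sharpproj(w)=0$ for $w\in\river$. Since $w$ is a fixed point of the flow, $\partial_w\phi(w,t)=e^{At}$ with $A=-\partial_{w'}[\sharpproj(w')\nabla L(w')]|_{w'=w}$, and $\partial\Phi(w)=\lim_{t\to\infty}e^{At}$ (the interchange of limit and derivative being justified by the exponential bound of \Cref{lem:phiexists} and the analyticity assumption). Writing $\nabla L(w)=\alpha v_d(w)$ on the river and applying the product rule together with \Cref{lem:blockdiag} gives
\[
Au \;=\; -\alpha\,\nabla v_d(w)[u] \;-\; \sharpproj(w)\nabla^2 L(w)\,u.
\]
Because $\|v_d\|\equiv 1$ forces $\nabla v_d(w)[u]\perp v_d(w)$, and because $v_d$ is an eigenvector of $\nabla^2 L(w)$, both summands keep the sharp subspace invariant, so $A$ restricts to a linear map of that subspace. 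On this invariant subspace I would then establish strict dissipativity,
\[
\langle u,-Au\rangle \;\ge\; (\gamma+4\flatgamma)\|u\|^2 - \tfrac{\kappa\gamma}{2}\|u\|^2 \;\ge\; \tfrac{\gamma}{2}\|u\|^2,
\]
where the first term comes from the eigengap in \Cref{assum:technical}.3 and the second from $|\alpha|\le\normbound$ combined with $\|\nabla v_d(w)[u]\|\le \frac{\kappa\gamma}{2\normbound}\|u\|$ from \Cref{assum:technical}.4. This yields $\|e^{At}u\|\le e^{-\gamma t/2}\|u\|$ on the sharp subspace, so $e^{At}u\to 0$ and $\partial\Phi(w)\sharpproj(w)=0$.

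The main obstacle is the non-symmetric coupling $\alpha\,\nabla v_d(w)[u]$ inside $A$, which arises precisely because the sharp projector rotates with $w$ and could a priori destroy the flow's contractivity. The slow-spin condition in \Cref{assum:technical}.4 is exactly calibrated so that this correction is dominated by the Hessian's spectral gap, allowing the clean dissipativity estimate above; verifying that all the pieces assemble quantitatively is the only nontrivial calculation in the argument.
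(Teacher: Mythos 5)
Your proof is correct, but the argument for the key claim $\partial\Phi(w)\sharpproj(w)=0$ on the river takes a genuinely different route from the paper's. The paper's proof is algebraic: it differentiates the universally valid identity $\partial\Phi(w)\sharpproj(w)\nabla L(w)=0$ in an arbitrary direction $v$ at a point $w\in\river$, uses $\sharpproj(w)\nabla L(w)=0$ to kill the $\partial^2\Phi$ term, obtains $\partial\Phi(w)\,J_w(v)=0$ where $J_w(v)=\partial\sharpproj(w)[v]\nabla L(w)+\sharpproj(w)\nabla^2L(w)v$, and then shows directly (via the eigengap and slow-spin bounds) that the range of $J_w$ is exactly the sharp subspace, hence $\partial\Phi(w)\sharpproj(w)=0$. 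Your proof is dynamical: you linearize the projection flow at the fixed point $w$, write $\partial\phi(w,t)=e^{At}$, observe the sharp subspace is $A$-invariant, prove $A$ is strictly dissipative there using exactly the same two ingredients (eigengap and slow spinning), and conclude $e^{At}\to 0$ on the sharp subspace so $\partial\Phi(w)\sharpproj(w)=0$. Your route makes the \emph{mechanism} transparent (the linearized flow contracts exponentially transverse to the river) and avoids any appeal to $\partial^2\Phi$, whereas the paper's route avoids the flow/limit picture entirely and sidesteps the interchange $\partial\Phi=\lim_{t\to\infty}\partial\phi(\cdot,t)$, which you correctly flag as requiring the exponential bound of~\Cref{lem:phiexists}. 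The quantitative estimates end up being essentially the same computation in disguise: your bound $\langle u,-Au\rangle\ge(\gamma+4\flatgamma-\tfrac{\kappa\gamma}{2})\|u\|^2$ and the paper's lower bound $\|J_w(v)\|_2\ge\gamma\|u\|_2-\gamma\kappa\|u\|_2$ both combine the eigengap with the slow-spin bound. Your arguments for the first and third claims agree with the paper's; for the fourth claim your observation that the image of $\partial\Phi(w)$ must lie in $T_w\river$ simply because $\Phi$ maps into the one-dimensional manifold $\river$ is slightly more direct than the paper's rank count. One small notational caveat: the sign of the $\alpha\,\nabla v_d(w)[u]$ term in your formula for $A$ is flipped relative to what~\Cref{lem:blockdiag} gives, but since you only ever bound that term in absolute value the dissipativity estimate and the conclusion are unaffected.
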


\begin{proof}
According to~\Cref{lem:phiexists,lem:partialphi}, $\Phi, \partial \Phi$ is well-defined when $\mathcal{B}(w, \frac{2 \normbound}{\gamma}) \subset U$. Based on~\Cref{def:project}, we have that
\begin{align*}
    \forall t, \Phi( \phi(w, t)) = \Phi(w). 
\end{align*}
Hence,
\begin{align*}
     \frac{ d\Phi( \phi(w, t)) }{dt}\mid_{t = 0} = 0,
\end{align*}
Therefore,
\begin{align*}
    0 = \partial \Phi(w) \frac{d \phi(w,t)}{dt} \mid_{t = 0} = -\partial \Phi(w) \sharpproj(w) \nabla L(w).
\end{align*}

For any $w \in \river$ and any $v \in \R^d$, it holds that
\begin{align}
    0&= \frac{d \partial \Phi(w + \alpha v) \sharpproj(w + \alpha v) \nabla L(w + \alpha v)}{d \alpha}  \mid_{\alpha = 0} \notag \\
    &= \partial^2 \Phi(w)[v] \sharpproj(w) \nabla L(w) + \partial \Phi(w) \Big[ \partial \sharpproj(w)[v]\nabla L(w) + \sharpproj(w) \nabla^2 L(w) v\Big] \notag \\
    &=  \partial \Phi(w) \Big[ \partial \sharpproj(w)[v]\nabla L(w) + \sharpproj(w) \nabla^2 L(w) v\Big].\label{eq:onriverphi}
\end{align}

Define $J_w(v)$ as the projection from $v$ to $ \partial \sharpproj(w)[v]\nabla L(w) + \sharpproj(w) \nabla^2 L(w) v$. 

\begin{lemma}
\label{lem:range}
    $J_w(v)$ is a linear projection and the range of $J_w$ is the range of $\sharpproj$. 
\end{lemma}
\begin{proof}
Based on~\Cref{lem:blockdiag}, $\sharpproj \partial \sharpproj(w)[v]\nabla L(w) = \partial \sharpproj(w)[v]\nabla L(w)$. Hence the range of $J_w$ is a subspace of the range of $\sharpproj$. 
    When $v = \sharpproj(w) u \neq 0$, based on~\Cref{assum:technical},
\begin{align*}
    \| J_w(v) \|_2 \ge \| \sharpproj \nabla^2 L(w) \sharpproj(w) u \|_2 - \| \partial \sharpproj(w)[\sharpproj(w) u]\nabla L(w) \|_2 \ge \gamma \|u\|_2 - \gamma \err \| u \|_2  > 0.
\end{align*}
Hence the range of $J_w$ has a dimension no smaller than the dimension of the range of $\sharpproj(w)$. This concludes that the range of $J_w$ is the range of $\sharpproj(w)$.
\end{proof}

Hence by~\Cref{eq:onriverphi,lem:range}, it holds that for $w \in \river$, $ \partial \Phi(w) \sharpproj(w) = 0$. This shows that the range of $\partial \Phi(w)$ has dimension $1$.

Finally for any $w \in \river$, $\Phi(w) = w$. Hence,
\begin{align*}
    \frac{d \Phi(x (t)) - d x(t)}{dt} = 0. \\
    \partial \Phi(x(t)) \frac{d x(t)}{dt} = \frac{dx(t)}{dt}.
\end{align*}
Hence the range of $\partial \Phi(w)$ contains $ \frac{dx(t)}{dt}$, this concludes the proof.
\end{proof}

\begin{lemma}
\label{lem:boundpartialphi}
Under~\Cref{assum:river,assum:technical}, for any $w$ satisfying that $\mathcal{B}(w, \frac{2 \normbound}{\gamma}) \subset U$,  it holds that 
\begin{align*}
     \frac{\| \flatproj[\Phi(w)] \partial \Phi(w) \nabla L(w) -  \flatproj(w) \nabla L(w) \|_2}{ \| \flatproj(w) \nabla L(w)  \|_2} &\le 5\err,\\
     \frac{\| \sharpproj[\Phi(w)] \partial \Phi(w) \nabla L(w) \|_2}{ \| \flatproj(w) \nabla L(w)  \|_2} &\le 5\err.
\end{align*}
\end{lemma}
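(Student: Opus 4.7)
The plan is to lean on Lemma \ref{lem:partialphi} for the algebraic reductions and on Assumption \ref{assum:technical}.4 (slow spinning) for the quantitative $O(\err)$ bound. Before anything else, note that $\Phi$ takes values in $\river$, so for every $v$ the image $\partial\Phi(w)v$ lies in $T_{\Phi(w)}\river$. Since $\Phi(w)\in\river$, Lemma \ref{lem:partialphi} identifies this tangent space with $\mathrm{span}(v_d(\Phi(w)))$, which is precisely the kernel of $\sharpproj[\Phi(w)]$. Applying this with $v=\nabla L(w)$ immediately gives $\sharpproj[\Phi(w)]\partial\Phi(w)\nabla L(w)=0$, so the second bound holds (trivially with slack).

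For the first bound, Lemma \ref{lem:partialphi} part 1 says $\partial\Phi(w)\sharpproj(w)\nabla L(w)=0$, so
\[
\partial\Phi(w)\nabla L(w)=\langle v_d(w),\nabla L(w)\rangle\,\partial\Phi(w)v_d(w).
\]
Since $\flatproj(w)\nabla L(w)=\langle v_d(w),\nabla L(w)\rangle\,v_d(w)$, the ratio to be bounded reduces to showing $\|\partial\Phi(w)v_d(w)-v_d(w)\|_2\le 5\err$. To compute $\partial\Phi(w)v_d(w)$, I would differentiate the projection ODE (\Cref{eq:projection}) in the initial condition. Setting $X(t)=\partial_w\phi(w,t)v_d(w)$ gives a linear ODE
\[
\dot X(t)=-\bigl(\nabla\sharpproj(\phi(w,t))[X(t)]\bigr)\nabla L(\phi(w,t))-\sharpproj(\phi(w,t))\nabla^2 L(\phi(w,t))\,X(t),
\]
with $X(0)=v_d(w)$ and $X(\infty)=\partial\Phi(w)v_d(w)$. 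I would split $X=X^\flat+X^\sharp$ in the moving frame $(\flatproj(\phi(w,t)),\sharpproj(\phi(w,t)))$, so that $X^\sharp(0)=0$ and $X^\flat(0)=v_d(w)$. Using Lemma \ref{lem:blockdiag} one sees that the mixing between the two components is driven by $\nabla v_d$, whose size is bounded by $\err\gamma/(2\normbound)$, while the sharp component is damped at rate $\gamma+4\flatgamma$ from the Hessian term (via the eigengap Assumption \ref{assum:technical}.3). A standard Gr\"onwall argument along the flow then keeps $\|X^\sharp(t)\|_2=O(\err)$ and $\|X^\flat(t)-v_d(w)\|_2=O(\err)$ for all $t$.

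The main obstacle is the bookkeeping in the moving frame: the projection $\flatproj(\phi(w,t))$ itself rotates as $\phi$ evolves, so one must carefully combine three error sources of size $O(\err)$ each — (i) the spinning of $v_d$ along the flow, (ii) the discrepancy between $v_d(w)$ and $v_d(\Phi(w))$, which by Lemma \ref{lem:phiexists} is controlled by integrating the spinning rate over an arc of length $\le 2\normbound/\gamma$, and (iii) the residual sharp component $X^\sharp(\infty)$. Each is bounded by Assumption \ref{assum:technical}.4, and their sum fits comfortably inside the $5\err$ slack; multiplying back by the scalar $\langle v_d(w),\nabla L(w)\rangle$ converts this additive bound into the multiplicative ratio claimed, and completes the proof.
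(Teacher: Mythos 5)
Your plan for the quantitative core — differentiating the projection ODE in the initial condition, splitting the resulting variational solution $X(t)=\partial_w\phi(w,t)v$ into its flat and sharp components in the moving frame of $\flatproj(\phi(w,t))$ and $\sharpproj(\phi(w,t))$, and running a Gr\"onwall argument driven by the $O(\err)$ spinning rate and the $\gamma$ eigengap — is essentially the paper's own argument (the paper's $f(t)$, $s(t)$ are exactly your $X^\flat$, $X^\sharp$, and the unit vector $v=\flatproj(w)\nabla L(w)/\|\flatproj(w)\nabla L(w)\|_2$ is $\pm v_d(w)$, so the ODE is identical).

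However, there is a genuine gap at the start. You claim that $T_{\Phi(w)}\river=\mathrm{span}(v_d(\Phi(w)))$ and hence $\sharpproj[\Phi(w)]\partial\Phi(w)\nabla L(w)=0$, disposing of the second bound ``trivially.'' This identification is false. The tangent direction of the river is determined implicitly as the kernel of $\nabla\bigl(\sharpproj(\cdot)\nabla L(\cdot)\bigr)$, and \Cref{lem:alignspace} shows only that this tangent vector $v=\riverproj(w)\nabla L(w)$ satisfies $\|\sharpproj(w)v\|_2\le\err\|v\|_2$ — small, but not zero. Equivalently, \Cref{lem:partialphi} says the range of $\partial\Phi$ at a river point is $\mathrm{span}(\tfrac{dx(t)}{dt})$ and its kernel is the range of $\sharpproj$, but nowhere does it assert that $\tfrac{dx(t)}{dt}$ is a scalar multiple of $v_d$. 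So the second bound is \emph{not} trivially zero: it needs the same quantitative control ($\|X^\sharp(\infty)\|_2\le 5\err$) that your Gr\"onwall argument would supply, and the paper in fact obtains both bounds at once from the coupled bounds on $s(t)$ and $f(t)$.

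This also contaminates your reduction of the first bound: you drop the outer $\flatproj[\Phi(w)]$ and claim it suffices to show $\|\partial\Phi(w)v_d(w)-v_d(w)\|_2\le 5\err$. The quantity you actually need is $\|\flatproj[\Phi(w)]\partial\Phi(w)v_d(w)-v_d(w)\|_2=\lim_{t\to\infty}\|X^\flat(t)-v_d(w)\|_2$, and the discrepancy between this and $\|X(t)-v_d(w)\|_2$ is precisely $\|X^\sharp(t)\|_2$, which you incorrectly set to zero. The fix is simple — track $X^\flat$ and $X^\sharp$ separately throughout and bound both, exactly as the paper does — but as written the proposal asserts a false algebraic identity where a genuine $O(\err)$ estimate is required.
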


\begin{proof}

First, by~\Cref{lem:partialphi}, it holds that $\partial \Phi(w) \nabla L(w) = \partial \Phi(w)  \flatproj(w) \nabla L(w)$. Define
\begin{align*} 
    v &= \flatproj(w) \nabla L(w) / \|\flatproj(w) \nabla L(w)\|_2, \\
    s(t) &= \sharpproj(\phi(w,t)) \partial \phi(w,t)[v], \\
    f(t) &= \flatproj(\phi(w,t)) \partial \phi(w,t)[v],
\end{align*}
it holds that $s(0) = 0$ and $f(0) = v$ as $\phi(w,0) = w$.

We will bound the changes of $s(t)$ and $f(t)$. We will begin with calculating the time derivative of $\partial \phi(w,t)[v]$.
\begin{align}
    \frac{d \partial \phi(w,t)[v]}{dt} =& \partial\left( \frac{d\phi(w,t)}{dt} \right)[v]\notag \\
    =& -\partial\left(\sharpproj(\phi(w,t)) \nabla L(\phi(w,t)) \right)[v] \notag \\
    =& - \partial \sharpproj(\phi(w,t))[\partial \phi(w,t)[v]] \nabla L(\phi(w,t)) \notag\\
    &- \sharpproj(\phi(w,t)) \nabla^2 L(\phi(w,t)) \partial \phi(w,t)[v] \notag \\
    =& - \partial \sharpproj(\phi(w,t))[\partial \phi(w,t)[v]] \nabla L(\phi(w,t)) \notag\\
    &- \sharpproj(\phi(w,t)) \nabla^2 L(\phi(w,t)) s(t). \label{eq:doublepartialphi} 
\end{align}

We will now bound $\frac{d \| s(t) \|_2}{dt}$,
\begin{align*}
    \frac{d \| s(t) \|_2^2}{dt} =& 2\langle s(t), \frac{d s(t)}{dt} \rangle \\
    =& 2\langle s(t), \nabla \sharpproj(\phi(w,t)) [\frac{d \phi(w,t)}{dt}] \partial \phi(w,t)[v] 
    +  \sharpproj(\phi(w,t))  \frac{d \partial \phi(w,t)[v]}{dt}  \rangle \\
    =& -2 \langle s(t), \nabla \sharpproj(\phi(w,t)) [\sharpproj(\phi(w,t)) \nabla L(\phi(w,t))] \partial \phi(w,t)[v] \rangle \\
    & -2\langle s(t),  \sharpproj(\phi(w,t))  \frac{d \partial \phi(w,t)[v]}{dt}  \rangle.
\end{align*}

By~\Cref{lem:phiexists,assum:technical,lem:blockdiag}, the first term satisfies that,
\begin{align*}
    \langle s(t), \nabla \sharpproj(\phi(w,t)) [\sharpproj(\phi(w,t)) \nabla L(\phi(w,t))] \partial \phi(w,t)[v] \rangle  &\le \gamma \err \| s(t) \|_2 \| s(t) + f(t) \|_2.
\end{align*}

By~\Cref{eq:doublepartialphi,assum:technical}, the second term satisfies that,
\begin{align*}
    &\langle s(t),  \sharpproj(\phi(w,t))  \frac{d \partial \phi(w,t)[v]}{dt}  \rangle \\
    =& -\langle s(t),  \partial \sharpproj(\phi(w,t))[\partial \phi(w,t)[v]] \nabla L(\phi(w,t))  \rangle \\
    &-\langle s(t), \sharpproj(\phi(w,t)) \nabla^2 L(\phi(w,t)) s(t) \rangle \\
    \le& \gamma \err \| s(t) \|_2 \| s(t) + f(t) \|_2 - \gamma \| s(t) \|_2^2.
\end{align*}

Hence $2 \|s(t)\|_2 \frac{d \| s(t) \|_2}{dt} =\frac{d \| s(t) \|_2^2}{dt} \le - 2 \gamma \| s(t) \|_2^2  + 4\gamma \err \| s(t) \|_2 \| s(t) + f(t) \|_2$ and we can conclude that
\begin{align}
\label{eq:st}
    \frac{d\| s(t) \|_2}{dt} \le -\gamma \|s(t)\|_2 / 2 + 2 \gamma \err \|f(t)\|_2.
\end{align}

Similarly, we can provide a bound for  $\| \frac{d f(t)}{dt} \|_2$,
\begin{align*}
     \| \frac{d f(t) }{dt} \| 
    =& 2 \| \nabla \flatproj(\phi(w,t)) [\frac{d \phi(w,t)}{dt}] \partial \phi(w,t)[v] 
    +  \flatproj(\phi(w,t))  \frac{d \partial \phi(w,t)[v]}{dt} \|_2\\
    =& -2 \|\nabla \flatproj(\phi(w,t)) [\sharpproj(\phi(w,t)) \nabla L(\phi(w,t))] \partial \phi(w,t)[v]  \\
    & +  \flatproj(\phi(w,t))  \frac{d \partial \phi(w,t)[v]}{dt}  \|_2.
\end{align*}
By~\Cref{lem:phiexists,assum:technical,lem:blockdiag}, the first term satisfies that,
\begin{align*}
    &\| \nabla \flatproj(\phi(w,t)) [\sharpproj(\phi(w,t)) \nabla L(\phi(w,t))] \partial \phi(w,t)[v]  \| \\  =& \|  \nabla \sharpproj(\phi(w,t)) [\sharpproj(\phi(w,t)) \nabla L(\phi(w,t))] \partial \phi(w,t)[v] \| \\\le& \gamma \err \exp(-\gamma t /2) \| f(t) \|_2 \| s(t) + f(t) \|_2 .
\end{align*}
By~\Cref{lem:phiexists,assum:technical,lem:blockdiag}, the second term satisfies that,
\begin{align*}
    & \|   \flatproj(\phi(w,t))  \frac{d \partial \phi(w,t)[v]}{dt} \| \\
    =& \| \flatproj(\phi(w,t)) \partial \flatproj(\phi(w,t))[\partial \phi(w,t)[v]] \nabla L(\phi(w,t))  \rangle  \| \\
    =& \|\flatproj(\phi(w,t))  \partial \flatproj(\phi(w,t))[\partial \phi(w,t)[v]] \sharpproj(\phi(w,t)) \nabla L(\phi(w,t))  \rangle \| \\
    \le& \gamma \err  \exp(-\gamma t /2) \| f(t) \|_2 \| s(t) + f(t) \|_2.
\end{align*}
Hence we can conclude that
\begin{align}
\label{eq:ft}
    \| \frac{d f(t)}{dt}  \|_2  \le  2 \gamma \err  \exp(-\gamma t /2)  (\|f(t)\|_2 + \| s(t)\|_2).
\end{align}

By~\Cref{eq:st}, it holds that
\begin{align*}
    \frac{d (\exp(\gamma t /2)\| s(t)\|_2)}{dt} = \gamma \exp(\gamma t / 2) \| s(t)\|_2 / 2 + \exp(\gamma t / 2) \frac{d \| s(t)\|_2}{dt} \le 2\gamma \err  \exp(\gamma t / 2) \| f(t) \|_2.
\end{align*}

Integrating the above equation from $0$ to $t$, and we have 
\begin{align}
\label{eq:connectsf}
    \| s(t) \|_2 \le 2\gamma \err \int_{0}^{t}  \exp(\gamma (\tau - t) / 2) \| f(\tau) \|_2 d \tau.
\end{align}

By~\Cref{eq:ft,eq:connectsf}, we have that
\begin{align*}
     \Big | \frac{d\| f(t) \|_2}{dt} \Big |   \le 2 \gamma \err  \exp(-\gamma t /2)  \|f(t)\|_2 +  4\gamma^2 \err^2 \int_{0}^{t}  \exp(\gamma (\tau - 2t)/ 2) f(\tau) d \tau.
\end{align*}

This suggests that
\begin{align*}
    \|f(T)\|_2 \le 1 + 2 \gamma \err \int_{0}^T  \exp(-\gamma t /2)  \|f(t)\|_2 dt  +  4\gamma^2 \err^2 \int_{0}^T \int_{0}^{t}  \exp(\gamma (\tau - 2t) / 2) f(\tau) d \tau dt.
\end{align*}

Define $M(t) = \sup_{0 \le \tau \le t} f(t)$, then it holds that
\begin{align*}
    \| M(T) \|_2 &\le 1 + \| M(T) \|_2 \left(2 \gamma \err \int_{0}^T  \exp(-\gamma t /2) dt +  4\gamma^2 \err^2 \int_{0}^T \exp(-\gamma t/2)  \int_{0}^{t}  \exp(\gamma  (\tau - t)/ 2)  d \tau dt \right). \\
    &\le 1 +  \| M(T) \|_2 \left( 4 \err + 16 \err^2 \right).
\end{align*}

This implies that $\forall t, \|f(t)\|_2 \le \| M(t) \|_2 \le \frac{1}{ 1- 4 \err - 16 \err^2} \le 1 + 5 \err$. By~\Cref{eq:connectsf}, this suggests that $\| s(t)\|_2 \le 4 \err ( 1 + 5 \err) \le 5 \err$. Finally, returning to~\Cref{eq:ft}, we have that
\begin{align*}
     \| \frac{f(t)}{dt} \|_2 &\le  2 \gamma \err  \exp(-\gamma t /2)  (\|f(t)\|_2 + \| s(t)\|_2) \\
     &\le  2 \gamma \err  \exp(-\gamma t /2)  (1 + 10 \err).
\end{align*}

Hence $\| f(t) - f(0)\|_2 \le 2 \int_{0}^{\infty} 2 \gamma \err  \exp(-\gamma t /2)  (1 + 10 \err)\le 2 \err (1 + 10 \err) \le 5 \err$.

We have that
\begin{align*}
     \frac{\| \flatproj[\Phi(w)] \partial \Phi(w) \nabla L(w) -  \flatproj(w) \nabla L(w) \|_2}{ \| \flatproj(w) \nabla L(w)  \|_2} = \lim_{t \to \infty} \| f(t) - f(0) \|_2 \in [0, 5\err],
\end{align*}
and 
\begin{align*}
     \frac{\| \sharpproj[\Phi(w)] \partial \Phi(w) \nabla L(w) \|_2}{ \| \flatproj(w) \nabla L(w)  \|_2} = \lim_{t \to \infty} \| s(t) \|_2 \in [0, 5\err],
\end{align*}
The proof is then complete.
\end{proof}

The following lemma generalizes~\Cref{lem:boundpartialphi} to general direction instead of $\nabla L(w)$.

\begin{lemma}
\label{lem:boundpartialphigen}
Under~\Cref{assum:river,assum:technical}, for any $w$ satisfying that $\mathcal{B}(w, \frac{2 \normbound}{\gamma}) \subset U$,  it holds that 
\begin{align*}
     \frac{\| \flatproj[\Phi(w)] \partial \Phi(w) u -  \flatproj(w) u \|_2}{ \| \flatproj(w) \nabla L(w)  \|_2} &\le 5\err,\\
     \frac{\| \sharpproj[\Phi(w)] \partial \Phi(w) u  \|_2}{ \| \flatproj(w) u \|_2} &\le 5\err.
\end{align*}
\end{lemma}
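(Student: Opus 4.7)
The plan is to decompose $u$ into its flat and sharp components relative to $w$ and reuse the ODE machinery developed in the proof of~\Cref{lem:boundpartialphi}. By linearity of $\partial \Phi(w)$ (well-defined by~\Cref{lem:partialphiexists}),
\[
    \partial \Phi(w) u \;=\; \partial \Phi(w) \flatproj(w) u \;+\; \partial \Phi(w) \sharpproj(w) u,
\]
so I would bound the two summands separately and combine them with the triangle inequality.

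For the flat summand $\partial \Phi(w) \flatproj(w) u$, the argument from~\Cref{lem:boundpartialphi} carries over essentially verbatim. That proof chose the unit vector $v = \flatproj(w) \nabla L(w)/\|\flatproj(w)\nabla L(w)\|_2$ only for normalization; every subsequent estimate---the differential inequalities~\cref{eq:st,eq:ft} and the bootstrap on $\sup_t \|s(t)\|_2$ and $\sup_t \|f(t)\|_2$---uses only that $s(0)=0$ and that $f(0)$ lies in the flat subspace of $w$. Substituting $\flatproj(w) u / \|\flatproj(w) u\|_2$ for $v$ and rescaling by linearity then yields $\| \flatproj[\Phi(w)] \partial \Phi(w) \flatproj(w) u - \flatproj(w) u\|_2 \le 5\err \|\flatproj(w) u\|_2$ and $\|\sharpproj[\Phi(w)] \partial \Phi(w) \flatproj(w) u\|_2 \le 5\err \|\flatproj(w) u\|_2$.

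The sharp summand requires the only genuinely new work. I would introduce $\tilde s(t) = \sharpproj(\phi(w,t)) \partial \phi(w,t)[\sharpproj(w) u]$ and $\tilde f(t) = \flatproj(\phi(w,t)) \partial \phi(w,t)[\sharpproj(w) u]$, but now with initial conditions $\tilde s(0) = \sharpproj(w) u$ and $\tilde f(0) = 0$. The same differential inequalities as before,
\[
\frac{d\|\tilde s(t)\|_2}{dt} \le -\tfrac{\gamma}{2}\|\tilde s(t)\|_2 + 2\gamma \err \|\tilde f(t)\|_2, \qquad \left\|\frac{d\tilde f(t)}{dt}\right\|_2 \le 2\gamma \err \exp(-\gamma t/2)\bigl(\|\tilde s(t)\|_2 + \|\tilde f(t)\|_2\bigr),
\]
continue to hold because they follow from~\cref{eq:doublepartialphi} and~\Cref{lem:blockdiag} without reference to the initial conditions. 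A Gronwall integration of the first inequality yields $\|\tilde s(t)\|_2 \le \exp(-\gamma t/2)\|\sharpproj(w) u\|_2 + 2\gamma \err \int_0^t \exp(\gamma(\tau - t)/2)\|\tilde f(\tau)\|_2\,d\tau$, so the initial sharp component decays exponentially. A bootstrap analogous to the one in~\Cref{lem:boundpartialphi}, but with the roles of ``starts at zero'' exchanged between $\tilde s$ and $\tilde f$, then gives $\sup_t \|\tilde f(t)\|_2 \le 5\err \|\sharpproj(w) u\|_2$; passing to $t\to\infty$ in the Gronwall bound and using the exponential decay of the transient gives $\lim_{t\to\infty}\|\tilde s(t)\|_2 = O(\err^2) \|\sharpproj(w) u\|_2$.

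Combining the flat and sharp contributions via the triangle inequality then produces the two $5\err$-type bounds claimed in the lemma, with the $O(\err^2)$ cross-terms absorbed into the leading constant. The main obstacle compared with~\Cref{lem:boundpartialphi} is precisely showing that the potentially large initial sharp component $\sharpproj(w) u$ does not survive in $\partial \Phi(w) \sharpproj(w) u$ at the limit; this is exactly where the exponential contraction $\exp(-\gamma t/2)$ from the eigengap in~\Cref{assum:technical}.3 combines with the slow-spinning bound $\err$ from~\Cref{assum:technical}.4 to keep the sharp--flat coupling small enough that the bootstrap closes.
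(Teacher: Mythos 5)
Your approach is genuinely different from the paper's, and it has a gap in the combining step. The paper's proof is a two-line reduction: because $\flatproj(w)$ has rank one (its range is $\mathrm{span}(v_d)$), the vector $\flatproj(w)u$ is a scalar multiple of $\flatproj(w)\nabla L(w)$, namely $\flatproj(w)u = \flatproj(w)\nabla L(w)\,\langle\flatproj(w)\nabla L(w),\flatproj(w)u\rangle/\|\flatproj(w)\nabla L(w)\|_2^2$; using $\partial\Phi(w)u=\partial\Phi(w)\flatproj(w)u$ and linearity, both bounds then fall out of \Cref{lem:boundpartialphi} with no new ODE work. Your route instead decomposes $u = \flatproj(w)u + \sharpproj(w)u$ and reruns the Gr\"onwall argument separately on the sharp summand.

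The problem is your final claim that the sharp-summand contribution is an ``$O(\err^2)$ cross-term absorbed into the leading constant.'' Your own estimate says $\sup_t\|\tilde f(t)\|_2 \lesssim \err\,\|\sharpproj(w)u\|_2$, so the piece $\flatproj[\Phi(w)]\partial\Phi(w)\sharpproj(w)u = \lim_{t\to\infty}\tilde f(t)$ entering the first inequality is of order $\err\,\|\sharpproj(w)u\|_2$, not $\err^2$ --- and, more to the point, it scales with $\|\sharpproj(w)u\|_2$, which is not controlled by either denominator $\|\flatproj(w)\nabla L(w)\|_2$ or $\|\flatproj(w)u\|_2$ appearing in the lemma. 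The triangle inequality therefore leaves a residual that can be arbitrarily large relative to the claimed right-hand side; the second inequality inherits the same problem through $\lim_t\|\tilde s(t)\|_2 = O(\err^2)\|\sharpproj(w)u\|_2$. (The paper's one-liner avoids this by invoking $\partial\Phi(w)u=\partial\Phi(w)\flatproj(w)u$, i.e.\ $\partial\Phi(w)\sharpproj(w)=0$; note this identity is only established in \Cref{lem:partialphi} at points $w\in\river$, so the paper is itself leaning on a claim it has not proven for general $w$ --- but since it contracts everything onto the $\nabla L(w)$ case by rescaling, it never picks up a loose $\|\sharpproj(w)u\|_2$ factor.) To close your decomposition route you would either need to prove $\partial\Phi(w)\sharpproj(w)=0$ off $\river$, or restrict the lemma to inputs $u$ satisfying $\|\sharpproj(w)u\|_2 = O(\err)\|\flatproj(w)u\|_2$, which is in fact the regime in which the lemma is used downstream in \Cref{lem:gdspeedcalc}.
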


\begin{proof}
    We only need to notice that $\flatproj(w) u $ aligns with $v_d(\nabla^2 L(w))$. Hence, it always holds that
    \begin{align*}
         \flatproj(w) u &= \flatproj(w) \nabla L(w) \frac{\langle \flatproj(w) \nabla L(w), \flatproj(w) u\rangle}{\| \flatproj(w) \nabla L(w) \|_2^2}, \\
         \partial \Phi(w) u &=  \partial \Phi(w) \flatproj(w) u = \partial \Phi(w) \nabla L(w)  \frac{\langle \flatproj(w) \nabla L(w), \flatproj(w) u\rangle}{\| \flatproj(w) \nabla L(w) \|_2^2}.
    \end{align*}
The proof is then complete.
\end{proof}

The following lemma states that the angle between the gradient and the tangent direction is small for any point on the river.

\begin{lemma}
\label{lem:alignspace}
For any $w \in \river$, it holds that 
\begin{align*}
    \| \riverproj(w) \nabla L(w)  - \nabla L(w) \|_2 \le 4\err  \| \riverproj(w) \nabla L(w)   \|_2.
\end{align*}
\end{lemma}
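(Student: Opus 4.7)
The plan is to parameterize $\river$ locally near $w$ as a smooth curve and study its unit tangent vector. Since $\river$ is a one-dimensional manifold, I would pick a smooth curve $w(s)\in\river$ with $w(0)=w$ and $w'(0)=\tau$ for some unit vector $\tau$, so that $\riverproj(w)=\tau\tau^T$. Every point on the curve satisfies the defining identity $\sharpproj(w(s))\nabla L(w(s))=0$ of~\Cref{assum:river}. Differentiating this at $s=0$ gives
\begin{align*}
    \partial \sharpproj(w)[\tau]\,\nabla L(w) + \sharpproj(w)\,\nabla^2 L(w)\,\tau = 0,
\end{align*}
which is the single equation coupling $\tau$ to the local geometry.

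Next I would simplify each term using the structure of the river. By~\Cref{lem:blockdiag} and the fact that on the river $\nabla L(w)=\|\nabla L(w)\|\,v_d(w)$ is purely in the flat direction (with the chosen orientation), the first term reduces to $\|\nabla L(w)\|\,\nabla v_d(w)[\tau]$. Because $v_d(w)$ is an eigenvector of $\nabla^2 L(w)$, the cross block vanishes, $\sharpproj(w)\nabla^2 L(w)\flatproj(w)=0$, so the second term collapses to $\sharpproj(w)\nabla^2 L(w)\sharpproj(w)\tau$. Rearranging,
\begin{align*}
    \sharpproj(w)\,\nabla^2 L(w)\,\sharpproj(w)\,\tau = -\|\nabla L(w)\|\,\nabla v_d(w)[\tau].
\end{align*}
The eigengap in~\Cref{assum:technical}.3 forces the left side to have norm at least $(\gamma+4\flatgamma)\,\|\sharpproj(w)\tau\|$, while the right side is bounded by $\|\nabla L(w)\|\cdot\err\gamma/(2\normbound)\le\err\gamma/2$ via the slow-spinning bound~\Cref{assum:technical}.4 together with $\|\nabla L(w)\|\le\normbound$. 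Dividing yields $\|\sharpproj(w)\tau\|\le\err/2$.

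To finish, I would compute the projection residual explicitly. Writing $\nabla L(w)=\|\nabla L(w)\|\,v_d(w)$ and using $\|\tau\|=1$ with WLOG $\langle\tau,v_d(w)\rangle\ge 0$, a direct expansion gives
\begin{align*}
    \|\riverproj(w)\nabla L(w)-\nabla L(w)\| &= \|\nabla L(w)\|\,\|\sharpproj(w)\tau\|, \\
    \|\riverproj(w)\nabla L(w)\| &= \|\nabla L(w)\|\,\sqrt{1-\|\sharpproj(w)\tau\|^2}.
\end{align*}
The ratio is therefore at most $(\err/2)/\sqrt{1-\err^2/4}$, which is comfortably below $4\err$ since $\err<0.01$.

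The main obstacle I anticipate is the first step: cleanly justifying that $\river$ admits a smooth local parameterization through $w$. This amounts to applying the implicit function theorem to $\sharpproj(\cdot)\nabla L(\cdot)=0$, whose linearization in the sharp directions is essentially the operator analyzed in~\Cref{lem:range}; the eigengap together with the slow-spinning bound ensure this linearization is invertible on the sharp subspace, so $\river$ is a $C^1$ curve near $w$ and $\tau$ is well defined. Once that is in place, the remaining calculation is short and elementary, and the looseness in the factor $4$ versus the bound $\approx \err/2$ we actually obtain leaves ample slack.
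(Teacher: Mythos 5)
Your proof follows essentially the same route as the paper's: differentiate the defining identity $\sharpproj(\cdot)\nabla L(\cdot)=0$ along the tangent of $\river$, use the eigengap (\Cref{assum:technical}.3) and slow-spinning bound (\Cref{assum:technical}.4) to control the sharp component of the tangent vector, then translate that into the angle between $\riverproj(w)\nabla L(w)$ and $\nabla L(w)$. Your final angle computation is, if anything, tidier than the paper's, which lower-bounds $\|\riverproj(w)v_d\|_2$ by $1-\err$ and then asserts $\sqrt{1-(1-\err)^2}\le 2\err$ — an inequality that is actually false for small $\err$ (it requires $\err\ge 2/5$); the intended bound should use $\|\flatproj(w)v\|_2\ge\sqrt{1-\err^2}\,\|v\|_2$, giving $\|\riverproj(w)v_d\|_2\ge\sqrt{1-\err^2}$ and hence $\sqrt{1-(1-\err^2)}=\err\le 2\err$, which is precisely what your exact identity $\|\riverproj(w)\nabla L(w)-\nabla L(w)\|_2=\|\nabla L(w)\|_2\,\|\sharpproj(w)\tau\|_2$ delivers directly.
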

\begin{proof}

Assume $w = x(T)$, we will denote $\riverproj(w) \nabla L(w) $ by $v$.

It holds that
    \begin{align*}
        \nabla \big( \sharpproj(w) \nabla L(w) \big)[v] = 0,
    \end{align*}
    which can be simplified to
    \begin{align*}
        \sharpproj(w) \nabla^2 L(w) v + \nabla \sharpproj(w)[v] \nabla L(w) = 0.
    \end{align*}
    The first term satisfies that $\|\sharpproj(w) \nabla^2 L(w) v \|_2 \ge \gamma \| \sharpproj(w) v\|$ and the second term satisfies that $\|  \nabla \sharpproj(w)[v] \nabla L(w)\|_2 \le \gamma \err \| v\|$. This then suggests $\| \sharpproj(w) v \|_2 \le \err \| v \|_2$. 
    
    Therefore $\| \flatproj(w) v\|_2  \ge (1 - \err) \| v \|_2$. As 
    \begin{align*}
        v = \frac{d x(t)}{dt} \mid_{t = T} = -\riverproj \left(w\right) \nabla L(w) 
    \end{align*}

   We know that $\Big | v_d^\top \riverproj(w) v_d \Big | \ge (1 - \err) \|  \riverproj \left(w\right) v_d\|_2$, which suggests that $\Big | v_d^\top \frac{ \riverproj(w) v_d} {\|  \riverproj(w) v_d \|_2} \Big | \ge ( 1- \err)$. Hence we can conclude that $\| \riverproj(w) v_d \|_2 \ge (1 - \err)$. Hence, we know that
   \begin{align*}
       \| v + \nabla L(w) \|_2 \le \sqrt{1 - (1- \err)^2} \| \nabla L(w) \|_2\le 2 \err \| \nabla L(w) \|_2 \le \frac{2 \err}{1 - \err} \| v \|_2 \le 4 \err \| v\|_2.
   \end{align*}
   This concludes the proof.
\end{proof}

The next lemma states that $\flatproj(w) \nabla L(w)$ and $\nabla L(\Phi(w))$ is always close.

\begin{lemma}
\label{lem:simgrad}
Under~\Cref{assum:river,assum:technical}, for any $w$ satisfying that $\mathcal{B}(w, \frac{2 \normbound}{\gamma}) \subset U$,  it holds that 
\begin{align*}
    \| \flatproj(w) \nabla L(w) - \nabla L(\Phi(w)) \|_2 \le (\gamma \err  + \flatgamma) \| w - \Phi(w)\|_2.
\end{align*}
\end{lemma}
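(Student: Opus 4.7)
The plan is to express $\flatproj(w) \nabla L(w) - \nabla L(\Phi(w))$ as an integral along the straight-line path $\xi(s) = \Phi(w) + s\bigl(w - \Phi(w)\bigr)$ for $s \in [0,1]$. The key preliminary observation is that $\Phi(w) \in \river$, so by~\Cref{assum:river} the gradient $\nabla L(\Phi(w))$ aligns with $\eigenv{d}{\nabla^2 L(\Phi(w))}$, which means $\nabla L(\Phi(w)) = \flatproj(\Phi(w)) \nabla L(\Phi(w))$. Therefore the difference we care about is exactly $\flatproj(w) \nabla L(w) - \flatproj(\Phi(w)) \nabla L(\Phi(w))$, which by the fundamental theorem of calculus equals $\int_0^1 \tfrac{d}{ds}\bigl[\flatproj(\xi(s)) \nabla L(\xi(s))\bigr] ds$. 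The segment lies inside $U$ since $\|w - \Phi(w)\|_2 \le 2\normbound/\gamma$ by~\Cref{lem:phiexists} and $\mathcal{B}(w, 2\normbound/\gamma) \subset U$ by hypothesis.

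Next, I would split the derivative by the product rule into
\begin{align*}
\frac{d}{ds}\bigl[\flatproj(\xi(s)) \nabla L(\xi(s))\bigr] = \nabla \flatproj(\xi(s))[w - \Phi(w)]\, \nabla L(\xi(s)) + \flatproj(\xi(s))\, \nabla^2 L(\xi(s))(w - \Phi(w)),
\end{align*}
and bound the two terms separately. The second term simplifies dramatically via the eigenvector identity: because $\eigenv{d}{\nabla^2 L(\xi(s))}$ is an eigenvector of $\nabla^2 L(\xi(s))$ with eigenvalue $\eigen{d}{\nabla^2 L(\xi(s))}$ of magnitude at most $\flatgamma$ by~\Cref{assum:technical}, we have $\flatproj(\xi(s))\nabla^2 L(\xi(s)) = \eigen{d}{\nabla^2 L(\xi(s))}\,\flatproj(\xi(s))$, so the norm of this term is at most $\flatgamma \|w - \Phi(w)\|_2$. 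The first term is handled with~\Cref{lem:blockdiag}: writing $\nabla L = \sharpproj \nabla L + \flatproj \nabla L$ and using the two structural identities for $\nabla \sharpproj[\cdot]\sharpproj(\cdot)$ and $\nabla \sharpproj[\cdot]\flatproj(\cdot)$, both resulting inner products are bounded by $\|\nabla v_d[w-\Phi(w)]\|_2 \cdot \|\nabla L\|_2 \le \tfrac{\err\gamma}{2\normbound}\|w-\Phi(w)\|_2 \cdot \normbound$, giving an overall bound of $\err\gamma \|w - \Phi(w)\|_2$ on this term.

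Combining the two bounds yields an integrand uniformly bounded by $(\err\gamma + \flatgamma)\|w - \Phi(w)\|_2$, and integrating over $[0,1]$ gives the claim. The conceptually interesting step is the second one---the eigenvector identity $\flatproj \nabla^2 L = \eigen{d}{\nabla^2 L}\flatproj$ is what prevents $\maxgamma$ from appearing in the bound; a naive mean-value-theorem bound on $\nabla L(w) - \nabla L(\Phi(w))$ would only give $\maxgamma \|w - \Phi(w)\|_2$, which is much weaker. The only remaining subtlety is ensuring the straight-line path stays in $U$ so that~\Cref{assum:technical} applies pointwise along it, which follows directly from~\Cref{lem:phiexists} and the ball hypothesis on $w$.
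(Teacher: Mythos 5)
Your proof is correct and follows essentially the same route as the paper: rewrite $\nabla L(\Phi(w))$ as $\flatproj(\Phi(w))\nabla L(\Phi(w))$ using $\Phi(w)\in\river$, express the difference as a line integral of the product-rule derivative of $\flatproj(\cdot)\nabla L(\cdot)$ along the segment from $\Phi(w)$ to $w$, and bound the two terms by $\err\gamma\|w-\Phi(w)\|_2$ (via \Cref{lem:blockdiag}) and $\flatgamma\|w-\Phi(w)\|_2$ (via the eigenvector identity $\flatproj\nabla^2 L=\eigen{d}{\nabla^2 L}\flatproj$). Your explicit remark that the eigenvector identity is what replaces the naive $\maxgamma$ with $\flatgamma$ correctly identifies the key step that the paper's terse one-line estimate leaves implicit.
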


\begin{proof}

By~\Cref{lem:phiexists}, the line segment from $\Phi(w)$ to $w$ lies in $U$. By~\Cref{assum:technical,lem:phiexists},
\begin{align*}
    &\| \flatproj(w) \nabla L(w) - \flatproj(\Phi(w)) \nabla L(\Phi(w)) \|_2 \\
    =& \| \int_{0}^1 \nabla \flatproj\left( \Phi\left(w\right) + t\left(w - \Phi\left(w\right)\right)\right)\left[w - \Phi(w)\right]\nabla L( \Phi\left(w\right) + t\left(w - \Phi\left(w\right)\right)) dt \\
    &+ \int_{0}^1  \flatproj\left( \Phi\left(w\right) + t\left(w - \Phi\left(w\right)\right)\right) \nabla^2 L( \Phi\left(w\right) + t\left(w - \Phi\left(w\right)\right)) (w - \Phi(w)) dt \|_2 \\
    \le& (\gamma \err  + \flatgamma) \| w - \Phi(w)\|_2.
\end{align*}
This concludes the proof.
\end{proof}

The final theorem states that when $w$ is near the river, the movement of its projection has a similar value as the inherent speed at the river.

\begin{lemma}
\label{lem:similarmove}
    Under~\Cref{assum:river,assum:technical}, when $\| w- \Phi(w) \|_2 \le \frac{10 \err \| \flatgrad\|_2}{\gamma + \flatgamma}$,
    \begin{align*}
    \|  \flatproj(w) \nabla L(w) +  \frac{d x(\tau)}{d\tau} \mid_{\tau = T} \|_2 &\le 16\err \|\frac{d x(\tau)}{d\tau} \mid_{\tau = T}\|_2.\\ 
        \| \partial \Phi(w) \nabla L(w) +  \frac{d x(\tau)}{d\tau} \mid_{\tau = T} \|_2 &\le 30\err \|\frac{d x(\tau)}{d\tau} \mid_{\tau = T}\|_2.
    \end{align*}
\end{lemma}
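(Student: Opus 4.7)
\textbf{Proof Plan for Lemma~\ref{lem:similarmove}.}

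The plan is to chain together three already-proven facts. Let $T$ be chosen so that $x(T) = \Phi(w)$ and abbreviate $v = \frac{d x(\tau)}{d\tau}\big|_{\tau = T} = -\riverproj(\Phi(w)) \nabla L(\Phi(w))$. I would first prove the bound on $\flatproj(w)\nabla L(w) + v$ and then derive the bound on $\partial \Phi(w)\nabla L(w) + v$ from it via Lemma~\ref{lem:boundpartialphi}.

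\textbf{Step 1 (comparing $\flatproj(w)\nabla L(w)$ to $\nabla L(\Phi(w))$).} Since $\Phi(w) \in \river$, Assumption~\ref{assum:river} gives $\nabla L(\Phi(w)) = \flatproj(\Phi(w))\nabla L(\Phi(w))$, so Lemma~\ref{lem:simgrad} yields
\[
\|\flatproj(w)\nabla L(w) - \nabla L(\Phi(w))\|_2 \le (\gamma\err + \flatgamma)\,\|w - \Phi(w)\|_2.
\]
The hypothesis $\|w - \Phi(w)\|_2 \le 10\err\|\flatproj(w)\nabla L(w)\|_2/(\gamma + \flatgamma)$ then bounds the right-hand side by $10\err\|\flatproj(w)\nabla L(w)\|_2$ (using $\gamma\err + \flatgamma \le \gamma + \flatgamma$).

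\textbf{Step 2 (comparing $\nabla L(\Phi(w))$ to $-v$).} Applying Lemma~\ref{lem:alignspace} at the river point $\Phi(w)$ gives $\|\nabla L(\Phi(w)) + v\|_2 = \|\nabla L(\Phi(w)) - \riverproj(\Phi(w))\nabla L(\Phi(w))\|_2 \le 4\err\|v\|_2$. Combining with Step 1 via the triangle inequality yields
\[
\|\flatproj(w)\nabla L(w) + v\|_2 \le 10\err\|\flatproj(w)\nabla L(w)\|_2 + 4\err\|v\|_2.
\]
This is a self-referential bound in $\|\flatproj(w)\nabla L(w)\|_2$, and the main technical step is to untangle it. By the reverse triangle inequality, $\|\flatproj(w)\nabla L(w)\|_2 \le \|v\|_2 + 10\err\|\flatproj(w)\nabla L(w)\|_2 + 4\err\|v\|_2$, hence $\|\flatproj(w)\nabla L(w)\|_2 \le \frac{1 + 4\err}{1 - 10\err}\|v\|_2 \le 1.2\|v\|_2$ for $\err < 0.01$. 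Substituting back gives $\|\flatproj(w)\nabla L(w) + v\|_2 \le (12\err + 4\err)\|v\|_2 = 16\err\|v\|_2$, which is the first inequality.

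\textbf{Step 3 (passing to $\partial \Phi(w)\nabla L(w)$).} Decomposing $\partial\Phi(w)\nabla L(w)$ through $\identity = \sharpproj(\Phi(w)) + \flatproj(\Phi(w))$ and applying Lemma~\ref{lem:boundpartialphi} componentwise,
\[
\|\partial\Phi(w)\nabla L(w) - \flatproj(w)\nabla L(w)\|_2 \le \|\sharpproj(\Phi(w))\partial\Phi(w)\nabla L(w)\|_2 + \|\flatproj(\Phi(w))\partial\Phi(w)\nabla L(w) - \flatproj(w)\nabla L(w)\|_2 \le 10\err\|\flatproj(w)\nabla L(w)\|_2.
\]
Using $\|\flatproj(w)\nabla L(w)\|_2 \le 1.2\|v\|_2$ from Step 2, this is at most $12\err\|v\|_2$. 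Adding the first inequality via the triangle inequality, $\|\partial\Phi(w)\nabla L(w) + v\|_2 \le 12\err\|v\|_2 + 16\err\|v\|_2 \le 30\err\|v\|_2$, finishing the second inequality.

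The main obstacle is Step 2: the assumption bounds $\|w - \Phi(w)\|_2$ in terms of $\|\flatproj(w)\nabla L(w)\|_2$, while the conclusion is phrased in terms of $\|v\|_2$, so one must resolve the self-referential bound (and verify that the constants $16\err$ and $30\err$ survive once $\err \le 0.01$ is used to absorb the $(1 - 10\err)^{-1}$ factor). Everything else is a straightforward triangle-inequality chain.
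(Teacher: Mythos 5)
Your proof is correct and follows essentially the same route as the paper's: bound $\|\flatproj(w)\nabla L(w) - \nabla L(\Phi(w))\|$ via Lemma~\ref{lem:simgrad} and the hypothesis, bound $\|\nabla L(\Phi(w)) + v\|$ via Lemma~\ref{lem:alignspace}, resolve the self-referential estimate to get the $16\err$ bound, then transfer to $\partial\Phi(w)\nabla L(w)$ via Lemma~\ref{lem:boundpartialphi}. The only cosmetic difference is that you untangle the self-reference by comparing $\|\flatproj(w)\nabla L(w)\|$ directly to $\|v\|$ whereas the paper routes it through $\|\nabla L(\Phi(w))\|$; the constants come out the same.
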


\begin{proof}

By~\Cref{lem:boundpartialphi}
\begin{align*}
    \| \partial \Phi(w) \nabla L(w) - \flatproj(w) \nabla L(w) \|_2 \le 10\err \| \flatproj(w) \nabla L(w) \|_2.
\end{align*}

    Combining~\Cref{lem:simgrad} and $\| w- \Phi(w) \|_2 \le \frac{10 \err \| \flatgrad\|_2}{\gamma + \flatgamma}$, we have that
    \begin{align*}
        \|  \flatproj(w) \nabla L(w) - \nabla L(\Phi(w)) \|_2 \le (\gamma \err + \flatgamma) \| w- \Phi(w) \| \le  10\err \| \flatproj(w) \nabla L(w) \|_2. 
    \end{align*}

By~\Cref{lem:alignspace}, let $v =  \frac{d x(\tau)}{d\tau} \mid_{\tau = T}$,
\begin{align*}
    \| v  +  \nabla L(\Phi(w)) \|_2 \le 4\err  \|v   \|_2.
\end{align*}

Combining the three inequalities, we have that
\begin{align*}
    \| \flatproj(w) \nabla L(w) -\nabla L(\Phi(w))  \|_2 &\le \frac{10 \err}{1 - 10 \err}  \| \nabla L(\Phi(w)) \|_2 \le \frac{ 1 + 4\err }{1 - 10\err} 10 \err \| v\|_2 \le 12 \err \| v \|_2. 
\end{align*}

This suggests that
\begin{align*}
    \| \flatproj(w) \nabla L(w) - v\|_2 \le | \flatproj(w) \nabla L(w) -\nabla L(\Phi(w))  \|_2 + \| v  +  \nabla L(\Phi(w)) \|_2 \le 16 \err \| v\|_2.
\end{align*}

Hence 
\begin{align*}
    &\|  v +  \partial \Phi(w) \nabla L(w)  \|_2 \\ \le &
     \| \flatproj(w) \nabla L(w) - v\|_2 + \|  \flatproj(w) \nabla L(w) - \nabla L(\Phi(w)) \|_2 \\
     \le& 16 \err \| v\|_2 + 10 \err \|\flatproj(w) \nabla L(w) \|_2 \\
     \le& 30 \err \| v\|_2.
\end{align*}
This concludes the proof.
\end{proof}

\subsection{Proof of \Cref{thm:gftracksriver}}
\label{app:gf}
We will consider the following gradient flow:
\begin{align}
\label{eq:gf}
    d w(t) = - \nabla L(w(t)) dt, w(0)  \in V.
\end{align}

We will first prove that along the gradient flow trajectory, it holds that $\| \sharpproj(w) \nabla L(w)\|_2$ is bounded. 

\begin{lemma}
\label{lem:gfsg}
    Under~\Cref{assum:river,assum:technical}, along the gradient flow~\Cref{eq:gf}, it holds that for $t \ge {2\log(2 \normbound / (\err\normbound_{\min})) }/{\gamma}$, $\| \sharpgrad\|_2 \le 2 \err \| \flatgrad\|_2$.
\end{lemma}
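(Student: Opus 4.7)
The plan is to differentiate $\|\sharpgrad\|_2^2$ along the gradient flow and show that the ratio $\|\sharpgrad\|_2 / \|\flatgrad\|_2$ contracts exponentially toward a quasi-equilibrium of order $\err$, which after the stated time yields $\|\sharpgrad\|_2 \le 2\err\|\flatgrad\|_2$. Writing $s(t) = \|\sharpproj(w(t))\nabla L(w(t))\|_2$ and $f(t) = \|\flatproj(w(t))\nabla L(w(t))\|_2$, the chain rule applied to $\sharpproj(w)\nabla L(w)$ along $\dot w = -\nabla L(w)$ produces two pieces: a rotation piece $-\nabla\sharpproj(w)[\nabla L(w)]\nabla L(w)$, and a Hessian piece $-\sharpproj(w)\nabla^2 L(w)\nabla L(w)$.

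First I would bound these two pieces. Because $v_d(\nabla^2 L(w))$ is an eigenvector of the Hessian by construction, we have $\sharpproj\nabla^2 L\,\flatproj = 0$ pointwise in $U$, so the Hessian piece reduces to $\sharpproj\nabla^2 L\,\sharpproj\nabla L$; combined with the eigengap bound $\lambda_{d-1}(\nabla^2 L) > \gamma + 4\flatgamma$ from \Cref{assum:technical}.3, inner-producting against $\sharpproj\nabla L$ contributes $-2(\gamma + 4\flatgamma)s^2$. For the rotation piece, \Cref{lem:blockdiag} together with $\|\nabla v_d\|_{\mathrm{op}} \le \err\gamma/(2\normbound)$ from \Cref{assum:technical}.4 gives $\|\sharpproj\nabla\sharpproj(w)[\nabla L]\nabla L\|_2 \le (\err\gamma/2)\,f$, contributing at most $\err\gamma f s$. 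Together, $\frac{d s^2}{dt} \le -2(\gamma+4\flatgamma)s^2 + \err\gamma f s$. A parallel computation for $\flatproj\nabla L$, using $|\lambda_d| \le \flatgamma$, yields $\frac{d f^2}{dt} \ge -2\flatgamma f^2 - \err\gamma f s$.

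Next I would study the ratio $r(t) = s(t)/f(t)$. In the regime $r \le 1$ the two differential inequalities combine to $\frac{dr}{dt} \le -\gamma r + \err\gamma$, a linear ODE with equilibrium $\err$ and contraction rate $\gamma$, giving $r(t) \le \err + r(0)\,e^{-\gamma t}$. The transient regime $r(0) > 1$ (possible when $f(0)$ is small) I would handle separately by showing that $\frac{d(s^2-f^2)}{dt}$ is strictly negative whenever $s \ge f$---the asymmetry between the sharp contraction $\gamma+4\flatgamma$ and the small flat perturbation $\flatgamma$ absorbs the rotation feedback---which forces $r$ below $1$ within $O(1/\gamma)$ time. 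Finally I would convert a small $s$ into the target bound using the gradient lower bound $\|\nabla L\|_2 \ge \normbound_{\min}$ from \Cref{assum:technical}.4: once $s \le \err\normbound_{\min}$, we have $f^2 = \|\nabla L\|^2 - s^2 \ge (1-\err^2)\normbound_{\min}^2$, so $2\err f \ge 2\err\normbound_{\min}\sqrt{1-\err^2} \ge \err\normbound_{\min} \ge s$, closing the argument. The worst-case $r(0) \le \normbound/\normbound_{\min}$ combined with $\gamma$-rate contraction gives the time bound $(2/\gamma)\log(2\normbound/(\err\normbound_{\min}))$ stated in the lemma.

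The main obstacle will be the coupling between $s$ and $f$ through the rotation term $\err\gamma f s$, which appears in both inequalities with opposite signs and prevents a clean exponential decay of $s$ alone to zero. The ratio-based argument is what breaks this coupling, but it only cleanly applies in the regime $r \le 1$, so the transient must be handled by a separate monotonicity-of-$s^2-f^2$ argument, and constants must be tracked carefully to extract the exact $\log(2\normbound/(\err\normbound_{\min}))$ factor rather than a looser bound. A secondary technical point is that the trajectory must remain in $U$ for the entire time interval so that all the structural assumptions apply; this is guaranteed by \Cref{assum:technical}.6 (conservation of gradient flows) for the initialization $w(0) \in V$.
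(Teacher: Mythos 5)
Your differential inequalities for $s(t) := \|\sharpgrad\|_2$ and $f(t) := \|\flatgrad\|_2$ are derived exactly as in the paper's proof (you state slightly tighter constants because you apply $\|\nabla v_d\|_{\mathrm{op}}\le\err\gamma/(2\normbound)$ directly; the paper is a factor-two more conservative, which is immaterial). From there the paper does not form the ratio $s/f$. Instead it constructs a \emph{single} Lyapunov functional $s - \alpha_\err f$ with $\alpha_\err$ the small root of $\err\alpha^2 - \alpha + \err = 0$ (so $\alpha_\err\approx\err$, and the paper uses $\alpha_\err < 1.5\err$). Linearly combining the two inequalities and using the defining identity $(-1+\err\alpha_\err)/\err = -1/\alpha_\err$, one gets
$$\tfrac{d}{dt}\bigl(s - \alpha_\err f\bigr) \le -\bigl(\gamma(1-\err\alpha_\err) + \flatgamma\bigr)\bigl(s - \alpha_\err f\bigr),$$
so this quantity contracts at rate at least $\gamma/2$ uniformly from time $0$, with no regime splitting. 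Plugging $s(0)-\alpha_\err f(0)\le s(0)\le\normbound$ and $t=(2/\gamma)\log(2\normbound/(\err\normbound_{\min}))$ gives $s(t)\le\alpha_\err f(t)+\err\normbound_{\min}/2$, which together with $\|\nabla L\|_2\ge\normbound_{\min}$ yields the claimed inequality. The one Lyapunov quantity is exactly what buys the clean $\log$-time bound.

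Your ratio argument has a genuine gap in the transient. First, the claim ``worst-case $r(0)\le\normbound/\normbound_{\min}$'' is unfounded: the lower bound $\normbound_{\min}$ in \Cref{assum:technical}.4 constrains $\|\nabla L\|_2 = \sqrt{s^2+f^2}$, not $f$ alone, so $f(0)$ can be arbitrarily small (even zero if the initial gradient lies entirely in the sharp subspace) and hence $r(0)$ can be unbounded. Second, knowing only that $\frac{d}{dt}(s^2 - f^2) < 0$ when $s\ge f$ is a monotonicity statement, not a rate; if you try to extract a rate from it the natural bound is $\frac{d}{dt}(s^2-f^2)\lesssim -\gamma\normbound_{\min}^2$ (since $s^2\ge\normbound_{\min}^2/2$ when $s\ge f$), giving a transient time of order $\normbound^2/(\gamma\normbound_{\min}^2)$ --- exponentially worse than the $(2/\gamma)\log(2\normbound/(\err\normbound_{\min}))$ you need. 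Third, your ``conversion'' step via $s\le\err\normbound_{\min}$ does not follow from the ratio ODE (which only gives $s\le 2\err f$), so the two closing strategies you describe do not actually link up. If you insist on the ratio route, the transient should be handled not by $s^2-f^2$ monotonicity but by observing that when $s\ge f$, your own inequality $\frac{ds^2}{dt}\le -2(\gamma+4\flatgamma)s^2 + \err\gamma s f$ gives $\dot s \le -(\gamma+4\flatgamma-\err\gamma/2)s \le -\gamma s/2$, so $s$ alone decays exponentially from $s(0)\le\normbound$; once $s<\normbound_{\min}/\sqrt{2}$ the constraint $s^2+f^2\ge\normbound_{\min}^2$ forces $f>s$ and hence $r<1$, and this happens by time $(2/\gamma)\log(\sqrt{2}\normbound/\normbound_{\min})$. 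That replaces your flawed transient argument and would let the ratio ODE finish. But the paper's single Lyapunov functional accomplishes all of this in one step, and is the cleaner route.
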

\begin{proof}
    We will first compute how fast $\| \sharpgrad \|_2$ can change
    \begin{align*}
        \frac{d \|\sharpgrad\|_2^2}{dt} =& -2 \langle \sharpgrad, \partial \sharpproj(w(t))[\nabla L(w(t))] \nabla L(w(t)) \rangle \\ &-2 \langle \sharpgrad, \sharpproj(w(t)) \nabla^2 L(w(t)) \nabla L(w(t))\rangle \\
    \end{align*}
    By~\Cref{lem:blockdiag,assum:technical}, the first term satisfies,
    \begin{align*}
        &-2\langle \sharpgrad, \partial \sharpproj(w(t))[\nabla L(w(t))] \nabla L(w(t)) \rangle \\
        =& -2\langle \sharpgrad, \partial \sharpproj(w(t))[\nabla L(w(t))] \flatproj L(w(t)) \nabla L(w(t)) \rangle \\
        \le& 2 \err \gamma \|\sharpgrad\|_2 \|\flatgrad\|_2
    \end{align*}
    By~\Cref{assum:technical}, the second term satisfies,
    \begin{align*}
        &-2 \langle \sharpgrad, \sharpproj(w(t)) \nabla^2 L(w(t)) \nabla L(w(t))\rangle \\
        &\le -2(\gamma + \flatgamma) \| \sharpgrad\|_2^2.
    \end{align*}
    Hence,
    \begin{align}
    \label{eq:sgf}
        \frac{d \|\sharpgrad\|_2}{dt} \le \err \gamma \|\flatgrad\|_2 - (\gamma + \flatgamma) \|\sharpgrad\|_2.
    \end{align}

    We then consider the corresponding $\flatgrad$.

    \begin{align*}
 \frac{d \|\flatgrad\|_2^2}{dt} =& -2 \langle \flatgrad, \partial \flatproj(w(t))[\nabla L(w(t))] \nabla L(w(t)) \rangle \\ &-2 \langle \flatgrad, \flatproj(w(t)) \nabla^2 L(w(t)) \nabla L(w(t))\rangle \\
    \end{align*}

    By~\Cref{lem:blockdiag,assum:technical}, the first term satisfies,
    \begin{align*}
        &-2\langle \flatgrad, \partial \flatproj(w(t))[\nabla L(w(t))] \nabla L(w(t)) \rangle \\
        =& -2\langle \flatgrad, \partial \flatproj(w(t))[\nabla L(w(t))] \sharpproj L(w(t)) \nabla L(w(t)) \rangle \\
        \le& 2 \err \gamma \|\sharpgrad\|_2 \|\sharpgrad\|_2
    \end{align*}
    By~\Cref{assum:technical}, the second term satisfies,
    \begin{align*}
        &-2 \langle \flatgrad, \flatproj(w(t)) \nabla^2 L(w(t)) \nabla L(w(t))\rangle \\
        &\le 2\flatgamma \| \flatgrad\|_2^2.
    \end{align*}

    Hence, we have that
    \begin{align}
    \label{eq:fgf}
         \Big| \frac{d \|\flatgrad\|_2}{dt} \Big| \le \err \gamma \|\sharpgrad\|_2 + \flatgamma \| \flatgrad\|_2.
    \end{align}

    Choose $\alpha_\err = \frac{1 - \sqrt{1 - 4 \err^2}}{2 \err} < 1.5 \err$ as the solution to the quadratic equation $\err \alpha^2 -  \alpha + \err = 0$.

    Then combining~\Cref{eq:sgf,eq:fgf}, it holds that
    \begin{align*}
        &\frac{d \left( \| \sharpgrad\|_2 - \alpha_{\err} \|\flatgrad\|_2\right)} {dt} \\ \le&  \gamma ( -1 + \err \alpha_{\err}) \|\sharpgrad \|_2 + \err \gamma \| \flatproj \|_2\\
        &-\flatgamma \left(| \sharpgrad\|_2 - \alpha_{\err} \|\flatgrad\|_2 \right).
    \end{align*}

    Notice that
    \begin{align*}
         \frac{( -1 + \err \alpha_{\err})}{\err} = \frac{-1}{\alpha_{\err}}
    \end{align*}

    Hence,
    \begin{align*}
         &\frac{d \left( \| \sharpgrad\|_2 - \alpha_{\err} \|\flatgrad\|_2\right)} {dt}\\ \le& -( \gamma(1 - \err \alpha_{\err}) + \flatgamma ) \left( \| \sharpgrad\|_2 - \alpha_{\err} \|\flatgrad\|_2\right).
    \end{align*}

    By~\Cref{lem:expdecay}, this suggests that
    \begin{align*}
         &\| \sharpgrad\|_2 - \alpha_{\err} \|\flatgrad\|_2 \\ \le& \exp(-( \gamma(1 - \err \alpha_{\err}) + \flatgamma )t) \left( \| \sharpproj(w(0)) \nabla L(w(0))\|_2 \right) \\
         \le& \exp(-\gamma t/2)  \| \sharpproj(w(0)) \nabla L(w(0))\|_2.
    \end{align*}

    Hence, 
    \begin{align*}
        \| \sharpgrad\|_2 \le 1.5\err \|\flatgrad\|_2 + \exp(-\gamma t/2) \left( \| \sharpproj(w(0)) \nabla L(w(0))\|_2 \right).
    \end{align*}

\end{proof}

\begin{lemma}
\label{lem:gfclose}
    Under~\Cref{assum:river,assum:technical}, along the gradient flow~\Cref{eq:gf}, it holds that $w(t) \in U$ and $\| w(t)- \Phi(w(t)) \|_2 \le \frac{4 \err \| \flatgrad\|_2 + 2 \exp(-\gamma t/2) \Delta }{\gamma + \flatgamma}$.
\end{lemma}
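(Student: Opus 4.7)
The plan is to combine the two previously established building blocks, Lemma \ref{lem:phiexists} and Lemma \ref{lem:gfsg}, with Assumption 2.6 providing the ambient room needed so that $\Phi$ is well-defined along the entire trajectory.

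First, I would verify that $w(t)$ stays inside $U$ for every $t$ in the admissible range. Since $w(0) \in V$, Assumption 2.6 guarantees that the $r$-neighborhood of the gradient flow trajectory lies in $U$ for $t \le \maxT$, where $r > 10\normbound/\gamma > 2\normbound/\gamma$. In particular $w(t) \in U$, and more usefully the ball $\mathcal{B}(w(t), 2\normbound/\gamma) \subset U$, which is exactly the hypothesis needed to invoke Lemma \ref{lem:phiexists} at every point of the trajectory.

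Next, applying Lemma \ref{lem:phiexists} at each $t$ yields
\begin{align*}
\| w(t) - \Phi(w(t)) \|_2 \le \frac{2 \| \sharpgrad \|_2}{\gamma + 2\flatgamma}.
\end{align*}
To control the right-hand side, I would substitute the bound from Lemma \ref{lem:gfsg}, namely
\begin{align*}
\| \sharpgrad \|_2 \le 1.5 \err \,\|\flatgrad\|_2 + \exp(-\gamma t/2)\, \| \sharpproj(w(0)) \nabla L(w(0)) \|_2,
\end{align*}
and use $\| \sharpproj(w(0)) \nabla L(w(0)) \|_2 \le \| \nabla L(w(0)) \|_2 \le \normbound$ from Assumption 2.4. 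Combining the two displays, and bounding $\gamma + 2\flatgamma \ge \gamma + \flatgamma$ in the denominator together with $3\err \le 4\err$ in the numerator, gives the claimed inequality.

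The main obstacle is essentially the first step: one must secure that the trajectory remains inside $U$ before the previous lemmas may be invoked. This is precisely what the Conservation of Gradient Flows assumption (2.6) is designed to hand us, so the argument reduces to a bookkeeping exercise that stitches together Lemma \ref{lem:phiexists}, Lemma \ref{lem:gfsg}, and Assumption 2.6. The one subtlety to watch is that Lemma \ref{lem:gfsg}'s exponentially decaying term tracks $\|\sharpproj(w(0)) \nabla L(w(0))\|_2$ at the initial time, so the factor $\exp(-\gamma t/2) \normbound$ in the target bound is inherited directly from there and does not require a separate argument.
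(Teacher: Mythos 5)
Your proposal is correct and follows precisely the route the paper takes, which is to combine Lemma~\ref{lem:phiexists} with Lemma~\ref{lem:gfsg} (the paper's proof is literally the one-liner "direct combination of these two lemmas"). You also correctly noticed the key subtlety that the extra $\exp(-\gamma t/2)\normbound$ term requires the intermediate inequality $\|\sharpgrad\|_2 \le 1.5\err\,\|\flatgrad\|_2 + \exp(-\gamma t/2)\,\|\sharpproj(w(0))\nabla L(w(0))\|_2$ established inside the proof of Lemma~\ref{lem:gfsg}, rather than the cruder statement of that lemma which only holds for $t\ge\convergeT$.
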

\begin{proof}
    This is a direct combination of~\Cref{lem:gfsg,lem:phiexists}.
\end{proof}

\begin{lemma}
\label{lem:gfspeedcalc}
 Under~\Cref{assum:river,assum:technical}, along the gradient flow~\Cref{eq:gf}, if $T(t)$ satisfies $x(T(t)) = \Phi(w(t))$, then 
 \begin{align*}
     \frac{d T(t)}{d t} \in [1 - 30 \err, 1 + 30 \err].
 \end{align*}
\end{lemma}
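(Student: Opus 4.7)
The plan is to differentiate the defining relation $x(T(t)) = \Phi(w(t))$ with respect to $t$ and identify $\frac{dT(t)}{dt}$ as the ratio of two norms that~\Cref{lem:similarmove} tells us are close to each other. First I would compute the left-hand side using the reference flow~\Cref{eq:refflow}:
\begin{align*}
    \frac{d}{dt} x(T(t)) \;=\; \frac{dx(\tau)}{d\tau}\bigg|_{\tau = T(t)} \cdot \frac{dT(t)}{dt} \;=\; -\riverproj(\Phi(w(t)))\,\nabla L(\Phi(w(t))) \cdot \frac{dT(t)}{dt},
\end{align*}
and the right-hand side by the chain rule together with~\Cref{eq:gf}:
\begin{align*}
    \frac{d}{dt} \Phi(w(t)) \;=\; \partial \Phi(w(t))\, \frac{dw(t)}{dt} \;=\; -\partial \Phi(w(t))\,\nabla L(w(t)).
\end{align*}

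Equating the two expressions yields
\begin{align*}
    \frac{dT(t)}{dt}\cdot \riverproj(\Phi(w(t)))\,\nabla L(\Phi(w(t))) \;=\; \partial \Phi(w(t))\,\nabla L(w(t)).
\end{align*}
Next I would invoke~\Cref{lem:similarmove}, which requires the proximity hypothesis $\|w(t) - \Phi(w(t))\|_2 \le \tfrac{10\err \|\flatproj(w(t))\nabla L(w(t))\|_2}{\gamma + \flatgamma}$. This is where I would first feed in~\Cref{lem:gfclose}: after $t \ge \convergeT$, the decaying term $2\exp(-\gamma t/2)\Delta$ is dominated by $\err\normbound_{\min}\le\err\|\flatgrad\|_2$, so the hypothesis holds. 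Then~\Cref{lem:similarmove} (noting $\frac{dx(\tau)}{d\tau}|_{\tau = T(t)} = -\riverproj(\Phi(w(t)))\nabla L(\Phi(w(t)))$) gives
\begin{align*}
    \bigl\| \partial \Phi(w(t))\,\nabla L(w(t)) \;-\; \riverproj(\Phi(w(t)))\,\nabla L(\Phi(w(t))) \bigr\|_2 \;\le\; 30\err\,\bigl\| \riverproj(\Phi(w(t)))\,\nabla L(\Phi(w(t))) \bigr\|_2.
\end{align*}

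Finally, since both sides of the equation for $\frac{dT(t)}{dt}$ are parallel vectors (the left-hand side by construction, and the right-hand side must be parallel too because $\Phi(w(t))$ stays on the one-dimensional manifold $\river$, so its time derivative must be tangent to it, aligning with $\frac{dx}{d\tau}$), I can take norms on both sides and rearrange to obtain
\begin{align*}
    \frac{dT(t)}{dt} \;=\; \frac{\|\partial \Phi(w(t))\,\nabla L(w(t))\|_2}{\|\riverproj(\Phi(w(t)))\,\nabla L(\Phi(w(t)))\|_2},
\end{align*}
and the triangle inequality applied to the above bound gives $\frac{dT(t)}{dt} \in [1-30\err, 1+30\err]$, as desired. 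The main obstacle I anticipate is the parallelism claim in the last step: strictly speaking~\Cref{lem:boundpartialphi} shows $\partial\Phi(w)\nabla L(w)$ has a small sharp component at $\Phi(w)$ of size $O(\err)\|\flatgrad\|_2$, while $\riverproj(\Phi(w))\nabla L(\Phi(w))$ has a similarly small sharp component by~\Cref{lem:alignspace}; so the equality of the two time derivatives (a consequence of $\Phi(w(t))\in\river$) already forces these small perpendicular pieces to match, and only the parallel magnitudes are unconstrained. This should be carefully accounted for, but since both deviations are $O(\err)$, they are absorbed into the $30\err$ margin without loss.
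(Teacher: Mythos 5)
Your proposal is correct and follows essentially the same route as the paper's proof: differentiate the defining identity $x(T(t)) = \Phi(w(t))$, then combine the proximity bound from \Cref{lem:gfclose} (which is valid after $\convergeT$) with \Cref{lem:similarmove} to conclude $|\tfrac{dT(t)}{dt} - 1| \le 30\err$. Your extra remarks about parallelism are not really needed --- \Cref{lem:partialphi} already shows the range of $\partial\Phi(w)$ is exactly the tangent line to $\river$ at $\Phi(w)$, so $\partial\Phi(w(t))\nabla L(w(t))$ is tangent by construction rather than only approximately so --- but this does not affect the correctness of your argument.
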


\begin{proof}
As $T(t)$ satisfies $x(T(t)) = \Phi(w(t))$, taking derivative on both sides yield,
\begin{align*}
    \frac{d T(t)}{d t} \frac{d x(\tau)}{d \tau} \mid_{\tau = T(t)} = -\partial \Phi(w(t)) \nabla L(w(t)).
\end{align*}

By~\Cref{lem:gfclose,lem:simgrad}, it holds that 
\begin{align*}
    \| \partial \Phi(w(t)) \nabla L(w(t)) - \frac{d x(\tau)}{d \tau} \mid_{\tau = T(t)}\|_2 \le 30 \err   \frac{d x(\tau)}{d \tau} \mid_{\tau = T(t)}.
\end{align*}

We then have that
\begin{align*}
    | \frac{d T(t)}{d t} - 1 | \le 30 \err,
\end{align*}
which concludes the proof.
\end{proof}

\begin{proof}[Proof of~\Cref{thm:gftracksriver}]
The proof is a direct combination of~\Cref{lem:gfclose,lem:gfspeedcalc}.
\end{proof}

\subsection{Proof of \Cref{thm:gdtracksriver}}
\label{app:gd}
We will consider the following gradient descent:
\begin{align}
\label{eq:gd}  
    w_{k + 1} - w_k = - \eta \nabla L(w_k) , w_0 \in \river.
\end{align}

We will track the changes of $\flatproj(w_k) \nabla L(w_k)$ and $\sharpproj(w_k) \nabla L(w_k)$, for simplicity, we will denote them us $fg(k)$ and $sg(k)$. Further, we will use the following denotation 
\begin{align*}
    w_{k, \tau} = (1 - \tau) w_k + \tau w_{k + 1}
\end{align*}
    We will first prove some lemmas bounding the difference between gradient and projections at different points.

    \begin{lemma}
    \label{lem:stayinU}
    Under~\Cref{assum:river,assum:technical}, when $w_k \in V$, $\forall \tau \in (0,1)$, $w_{k, \tau} \in U$.
    \end{lemma}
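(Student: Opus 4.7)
The plan is to show that the one-step update stays close enough to $w_k$ that it cannot escape $U$, by exploiting Assumption~\ref{assum:technical}.6 which guarantees an $r$-neighborhood of $w_k$ lies inside $U$ whenever $w_k \in V$.

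First, I would expand the definition: $w_{k,\tau} = (1-\tau)w_k + \tau w_{k+1} = w_k - \tau \eta \nabla L(w_k)$, so that
\begin{align*}
    \| w_{k,\tau} - w_k \|_2 = \tau \eta \| \nabla L(w_k) \|_2 \le \eta \| \nabla L(w_k) \|_2.
\end{align*}
Since $w_k \in V \subset U$, Assumption~\ref{assum:technical}.4 gives $\| \nabla L(w_k) \|_2 \le \normbound$, and therefore $\| w_{k,\tau} - w_k \|_2 \le \eta \normbound$ for every $\tau \in (0,1)$.

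Second, I would invoke Assumption~\ref{assum:technical}.6 at the flow time $t=0$: the $r$-neighborhood of the gradient flow starting at $w_k$ contains in particular the ball $\mathcal{B}(w_k, r)$, and this ball is contained in $U$ with $r > 10 \normbound/\gamma$. It then suffices to verify $\eta \normbound < r$. Using the hypothesis of \Cref{thm:gdtracksriver} that $\eta < \gamma/(2 \maxgamma^2)$ together with $\gamma \le \maxgamma$ (from Assumptions~\ref{assum:technical}.2 and \ref{assum:technical}.3, since any eigenvalue is bounded by the operator norm of the Hessian), one gets
\begin{align*}
    \eta \normbound < \frac{\gamma \normbound}{2 \maxgamma^2} \le \frac{\normbound}{2 \gamma} < \frac{10 \normbound}{\gamma} < r,
\end{align*}
so $w_{k,\tau} \in \mathcal{B}(w_k, r) \subset U$ as required.

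There is essentially no hard step here; the lemma is a short consistency check. The only thing to be careful about is correctly chaining the constants: verifying that the step size bound assumed in \Cref{thm:gdtracksriver} together with the lower bound on $r$ in Assumption~\ref{assum:technical}.6 are compatible, and using $\gamma \le \maxgamma$ to turn the ratio $\gamma/(2\maxgamma^2)$ into something at most $1/(2\gamma)$ so that the comparison with $10\normbound/\gamma$ is immediate.
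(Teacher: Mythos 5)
Your proof is correct and takes exactly the same approach as the paper's: bound $\|w_{k,\tau}-w_k\|_2 \le \eta\normbound$, use the learning-rate bound $\eta < \gamma/(2\maxgamma^2)$ together with $\gamma\le\maxgamma$ to get $\eta\normbound \le \normbound/(2\gamma)$, and compare against the $r$-ball guaranteed by Assumption~\ref{assum:technical}.6. The paper simply states the distance bound and leaves the containment implicit, whereas you spell out the chain of inequalities $\normbound/(2\gamma) < 10\normbound/\gamma < r$ and the reason the ball lies in $U$ — a cleaner presentation of the same argument.
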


    \begin{proof}
        It holds that,
        \begin{align*}
            \| w_k - w_{k, \tau} \|_2 \le \eta \normbound \le \frac{\normbound}{2\gamma}.
        \end{align*}
    \end{proof}

    \begin{lemma}
    \label{lem:moveps}
         Under~\Cref{assum:river,assum:technical}, when $w_k \in V$, $\forall \tau, \tau' \in [0, 1]$, it holds that 
        \begin{align*}
            \|  \sharpproj(w_{k, \tau})  - \sharpproj(w_{k, \tau'}) \|_2 \le \eta \gamma \err.
        \end{align*} 
    \end{lemma}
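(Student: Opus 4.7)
The plan is to express the projection difference as a path integral of its directional derivative along the straight segment between $w_k$ and $w_{k+1}$, then bound the integrand using the uniform operator norm estimate on $\nabla \sharpproj$ from \Cref{lem:blockdiag} together with the gradient magnitude bound from \Cref{assum:technical}.

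First, I will invoke \Cref{lem:stayinU} to guarantee that the entire segment $\{w_{k,s} : s \in [0,1]\}$ lies inside $U$, so that $\sharpproj$ is differentiable along the path and \Cref{lem:blockdiag} applies pointwise. Next, since $\frac{d w_{k,s}}{ds} = w_{k+1} - w_k = -\eta \nabla L(w_k)$ is constant in $s$, the fundamental theorem of calculus gives
\begin{align*}
    \sharpproj(w_{k,\tau}) - \sharpproj(w_{k,\tau'}) = \int_{\tau'}^{\tau} \nabla \sharpproj(w_{k,s})\bigl[-\eta \nabla L(w_k)\bigr]\, ds.
\end{align*}
Taking operator norms and using the bound $\|\nabla \sharpproj(x)[v]\|_{\textup{op}} \le \frac{\gamma \err}{\normbound}\|v\|_2$ from \Cref{lem:blockdiag} yields
\begin{align*}
    \|\sharpproj(w_{k,\tau}) - \sharpproj(w_{k,\tau'})\|_{\textup{op}} \le |\tau - \tau'| \cdot \frac{\gamma \err}{\normbound} \cdot \eta \|\nabla L(w_k)\|_2.
\end{align*}
Finally, \Cref{assum:technical}.4 bounds $\|\nabla L(w_k)\|_2 \le \normbound$, and $|\tau - \tau'| \le 1$, so the right-hand side collapses to $\eta \gamma \err$, which is exactly the claimed estimate.

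I do not anticipate a real obstacle here: the result is essentially a Lipschitz-type estimate that follows by chaining a one-step displacement bound ($\eta\|\nabla L(w_k)\|$) with a derivative bound already proven in \Cref{lem:blockdiag}. The only subtle point is ensuring the path stays inside $U$ where the regularity assumptions hold, which is precisely what \Cref{lem:stayinU} provides. No additional smoothness of higher derivatives of $v_d$ is needed, since the bound on $\|\nabla v_d\|_{\textup{op}}$ from \Cref{assum:technical}.4 is already uniform over $U$.
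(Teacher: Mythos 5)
Your argument is correct and follows essentially the same route as the paper: invoke \Cref{lem:stayinU} to keep the segment inside $U$, write the projection difference as a line integral of $\nabla \sharpproj$ along the segment, and bound the integrand using the $\|\nabla \sharpproj(x)[v]\|\le \frac{\gamma\err}{\normbound}\|v\|_2$ estimate from \Cref{lem:blockdiag} together with $\|\nabla L(w_k)\|_2\le\normbound$. You simply spell out the intermediate factorization $|\tau-\tau'|\cdot \frac{\gamma\err}{\normbound}\cdot\eta\|\nabla L(w_k)\|_2$ that the paper compresses into a single inequality.
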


    \begin{proof}
        According to~\Cref{lem:stayinU}, it holds that $w_{k, \tau}, w_{k, \tau'} \in U$.
        Assume without loss of generality $\tau > \tau'$,
        \begin{align*}
            \| \sharpproj(w_{k, \tau})  - \sharpproj(w_{k, \tau'}) \| =&
            \| \int_{\tau}^{\tau'} \nabla \sharpproj(w_{k, \tau''})[\eta \nabla L(w)] d\tau''  \|_2 \\
            \le& \int_{\tau'}^\tau \|   \nabla \sharpproj(w_{k, \tau''})[\eta \nabla L(w)]  \|_2 d\tau''  \\
            \le & \eta \gamma \err.
        \end{align*}
    \end{proof}

    \begin{lemma}
    \label{lem:moveg}
        Under~\Cref{assum:river,assum:technical}, when $w_k \in V$, $\forall \tau \in (0, 1)$, it holds that 
        \begin{align*}
            \|  \sharpproj(w_{k, \tau}) \nabla L(w_{k, \tau}) - \sharpproj(w_{k, \tau}) \nabla L(w_k) \|_2 \le \eta \maxgamma \|  sg(k) \|_2 + 2\eta^2 \maxgamma \gamma \err \| \nabla L(w_k)\|_2.
        \end{align*}
    \end{lemma}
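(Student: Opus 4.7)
\textbf{Proof proposal for \Cref{lem:moveg}.}

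The plan is to apply the fundamental theorem of calculus along the straight-line segment $w_{k,s} = w_k - s\eta \nabla L(w_k)$ and then separate the contribution of the gradient in the sharp direction from that in the flat direction. By \Cref{lem:stayinU} the segment stays in $U$, so $\nabla^2 L$ is well-defined along it, and we obtain
\begin{align*}
\sharpproj(w_{k,\tau})\bigl[\nabla L(w_{k,\tau})-\nabla L(w_k)\bigr]
\;=\;-\eta\int_0^\tau \sharpproj(w_{k,\tau})\,\nabla^2 L(w_{k,s})\,\nabla L(w_k)\,ds.
\end{align*}
Decomposing $\nabla L(w_k) = sg(k) + fg(k)$ splits this integral into two pieces. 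The $sg(k)$ piece is bounded directly by $\|\nabla^2 L\|_{\textup{op}}\le\maxgamma$, yielding $\eta\tau\maxgamma\|sg(k)\|_2\le\eta\maxgamma\|sg(k)\|_2$, which is the first term of the claimed bound.

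The $fg(k)$ piece is where the slow-spinning assumption must be used, and this is the step I expect to require the most care. The idea is to further decompose $fg(k)$ at the shifted point $w_{k,s}$ as $fg(k) = \flatproj(w_{k,s})fg(k) + [\sharpproj(w_{k,s})-\sharpproj(w_k)]fg(k)$, using that $\sharpproj(w_k)fg(k)=0$ by definition of $fg(k)$. For the tangential-at-$w_{k,s}$ part, since $\flatproj(w_{k,s})$ projects onto the $\lambda_d$-eigenspace, $\nabla^2 L(w_{k,s})\flatproj(w_{k,s}) = \lambda_d(w_{k,s})\flatproj(w_{k,s})$, so hitting this by $\sharpproj(w_{k,\tau})$ yields $\lambda_d(w_{k,s})[\sharpproj(w_{k,\tau})-\sharpproj(w_{k,s})]\flatproj(w_{k,s})fg(k)$, bounded in norm by $\flatgamma\cdot\eta\gamma\err\|fg(k)\|_2$ via the refined (linear in $|\tau-s|$) form of \Cref{lem:moveps}. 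For the residual $[\sharpproj(w_{k,s})-\sharpproj(w_k)]fg(k)$, a direct application of $\|\nabla^2 L\|_{\textup{op}}\le\maxgamma$ together with \Cref{lem:moveps} gives a bound $\maxgamma\cdot\eta\gamma\err\cdot s\|fg(k)\|_2$.

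Combining these two sub-bounds uses $\flatgamma\le\maxgamma$ (from \Cref{assum:technical}, where $\gamma + 4\flatgamma \le \maxgamma$ in the relevant regime) to get a pointwise bound of $2\maxgamma\,\eta\gamma\err\|fg(k)\|_2$ on the integrand, and integrating over $s\in[0,\tau]$ with the outer factor $\eta$ produces $2\eta^2\maxgamma\gamma\err\|fg(k)\|_2\le 2\eta^2\maxgamma\gamma\err\|\nabla L(w_k)\|_2$, which is exactly the second term of the claimed bound. The main obstacle is the bookkeeping around the two different ``sharp projections'' at $w_{k,\tau}$ and at $w_{k,s}$: one needs to insert $\sharpproj(w_{k,s})$ as an intermediate and carefully exploit both that $fg(k)$ is annihilated by $\sharpproj(w_k)$ and that $\flatproj$ is an exact eigenprojector of $\nabla^2 L$ in order to avoid losing the $\err$ factor on the cross terms.
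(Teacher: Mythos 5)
Your proof is correct and arrives at the claimed bound, but it takes a recognizably different route from the paper's. The paper applies Lagrange's Mean Value Theorem to $g(\tau') = \|\sharpproj(w_{k,\tau})\nabla L(w_{k,\tau'}) - \sharpproj(w_{k,\tau})\nabla L(w_k)\|_2$, lands on a single derivative term $\eta\|\sharpproj(w_{k,\tau})\nabla^2 L(w_{k,\tau'})\nabla L(w_k)\|_2$, and then inserts the orthogonal decomposition $\sharpproj(w_{k,\tau'}) + \flatproj(w_{k,\tau'})$ at the MVT intermediate point; the sharp piece is bounded by $\maxgamma\|\sharpproj(w_{k,\tau'})\nabla L(w_k)\|_2$, which is then translated back to $\|sg(k)\|_2$ via \Cref{lem:moveps} (incurring $\eta\maxgamma\gamma\err\|\nabla L(w_k)\|_2$), while the flat piece is killed except for the projector difference $\sharpproj(w_{k,\tau})-\sharpproj(w_{k,\tau'})$, giving $\eta\gamma\flatgamma\err\|\nabla L(w_k)\|_2$. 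You instead integrate (FTC rather than MVT) and decompose $\nabla L(w_k)=sg(k)+fg(k)$ at the base point $w_k$, which lets the $\eta\maxgamma\|sg(k)\|_2$ term emerge with no projector-translation error, but forces the extra bookkeeping on the $fg(k)$ piece that you flagged: a two-stage decomposition exploiting both $\sharpproj(w_k)fg(k)=0$ and the eigenprojector identity $\nabla^2 L(w_{k,s})\flatproj(w_{k,s})=\lambda_d(w_{k,s})\flatproj(w_{k,s})$. Each approach spends its $\eta\gamma\err$ budget in a different place, and both end at $\eta\maxgamma\|sg(k)\|_2 + 2\eta^2\maxgamma\gamma\err\|\nabla L(w_k)\|_2$ after using $\flatgamma<\maxgamma$. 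One small note: the "refined, linear-in-$|\tau-s|$" version of \Cref{lem:moveps} you invoke is never actually needed — since $s,|\tau-s|\le 1$, the uniform bound $\eta\gamma\err$ suffices throughout, exactly as stated in the lemma.
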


    \begin{proof}
        According to~\Cref{lem:stayinU}, it holds that $w_{k, \tau}, w_{k, \tau'} \in U$.
        Define $g(\tau') = \| \sharpproj(w_{k, \tau}) \nabla L(w_{k, \tau'}) - \sharpproj(w_{k, \tau}) \nabla L(w_k) \|_2  $, then by  Lagrange's Mean Value Theorem, there exists $\tau'$, such that
        \begin{align*}
            &\|  \sharpproj(w_{k, \tau}) \nabla L(w_{k, \tau}) - \sharpproj(w_{k, \tau}) \nabla L(w_k) \|_2 = g(\tau) - g(0) \\
            =& \tau g'(\tau') = \tau \frac{ d \|  \sharpproj(w_{k, \tau}) \nabla L(w_{k, \tau'}) - \sharpproj(w_{k, \tau}) \nabla L(w_k) \|_2}{d\tau'} \\
            \le& \| \frac{ d   \sharpproj(w_{k, \tau}) \nabla L(w_{k, \tau'}) - \sharpproj(w_{k, \tau}) \nabla L(w_k) }{d\tau'} \|_2 \\
            =& \eta \| \sharpproj(w_{k, \tau}) \nabla^2 L(w_{k, \tau'}) \nabla L(w_k) \|_2 \\
             \\
            \le & \eta  \| \sharpproj(w_{k, \tau}) \nabla^2 L(w_{k, \tau'}) \sharpproj (w_{k, \tau'}) \nabla L(w_k) \|_2  + \eta \| \sharpproj(w_{k, \tau}) \nabla^2 L(w_{k, \tau'}) \flatproj (w_{k, \tau'}) \nabla L(w_k) \|_2 \\
            \le & \eta \maxgamma \|  \sharpproj (w_{k, \tau'}) \nabla L(w_k) \|_2 + \eta \| (\sharpproj(w_{k, \tau}) - \sharpproj(w_{k, \tau'})) \nabla^2 L(w_{k, \tau'}) \flatproj (w_{k, \tau'}) \nabla L(w_k) \|_2.
        \end{align*}
        By~\Cref{lem:moveps}, it holds that
        \begin{align*}
            \maxgamma \|  \sharpproj (w_{k, \tau'}) \nabla L(w_k) \|_2 \le \maxgamma \|  sg(k) \|_2 + \eta \maxgamma \gamma \err \| \nabla L(w_k)\|_2. \\
            \| (\sharpproj(w_{k, \tau}) - \sharpproj(w_{k, \tau'}) \nabla^2 L(w_{k, \tau'}) \flatproj (w_{k, \tau'}) \nabla L(w_k) \|_2 \le
            \eta \gamma  \flatgamma \err \| \nabla L(w_k)\|_2.
        \end{align*}
        Summing up and the proof is complete.
    \end{proof}

    \begin{lemma}
    \label{lem:movegps}
        $\forall \tau \in (0, 1)$, it holds that 
        \begin{align*}
            \|  \sharpproj(w_{k, \tau}) \nabla L(w_{k, \tau}) -sg(k) \|_2 \le  \|  sg(k) \|_2 + 3\eta  \gamma \err \| \nabla L(w_k)\|_2.
        \end{align*}
    \end{lemma}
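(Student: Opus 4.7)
The plan is to split the target quantity via the triangle inequality around an intermediate term and then invoke the two preceding lemmas essentially directly. Concretely, I would insert $\sharpproj(w_{k,\tau}) \nabla L(w_k)$ as the pivot and write
\[
\|\sharpproj(w_{k,\tau}) \nabla L(w_{k,\tau}) - sg(k)\|_2 \le \|\sharpproj(w_{k,\tau}) \nabla L(w_{k,\tau}) - \sharpproj(w_{k,\tau}) \nabla L(w_k)\|_2 + \|(\sharpproj(w_{k,\tau}) - \sharpproj(w_k)) \nabla L(w_k)\|_2.
\]
The first summand is the exact quantity controlled by \Cref{lem:moveg}, contributing at most $\eta \maxgamma \|sg(k)\|_2 + 2 \eta^2 \maxgamma \gamma \err \|\nabla L(w_k)\|_2$. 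The second summand I would factor as a matrix difference acting on $\nabla L(w_k)$; then \Cref{lem:moveps} (which gives $\|\sharpproj(w_{k,\tau}) - \sharpproj(w_k)\|_{\textup{op}} \le \eta \gamma \err$) yields a bound of $\eta \gamma \err \|\nabla L(w_k)\|_2$.

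To finish, I would invoke the step-size hypothesis $\eta < \gamma/(2\maxgamma^2)$ from \Cref{thm:gdtracksriver} to absorb the prefactor on $\|sg(k)\|_2$: since $\maxgamma \ge \gamma$, one has $\eta \maxgamma \le \gamma/(2\maxgamma) \le 1/2 \le 1$, so the first contribution is at most $\|sg(k)\|_2$. The two $\err$-terms then combine as $(2\eta \maxgamma + 1)\, \eta \gamma \err \|\nabla L(w_k)\|_2 \le 2\, \eta \gamma \err \|\nabla L(w_k)\|_2$, which is certainly at most $3 \eta \gamma \err \|\nabla L(w_k)\|_2$, matching the stated inequality. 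I also need to check that $w_{k,\tau} \in U$ so that \Cref{lem:moveps} and \Cref{lem:moveg} apply on the whole interpolation segment, but this is exactly \Cref{lem:stayinU}.

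The argument is pure bookkeeping built on the two preceding lemmas, so I do not anticipate any substantive obstacle. The only care point is confirming that the step-size restriction from the ambient theorem is strong enough to collapse the $\eta \maxgamma \|sg(k)\|_2$ term into a clean $\|sg(k)\|_2$ without losing a multiplicative factor; the inequality $\maxgamma \ge \gamma$ (implicit from \Cref{assum:technical}, since the operator norm of the Hessian dominates $\eigen{d-1}{\nabla^2 L(w)} > \gamma$) makes this immediate.
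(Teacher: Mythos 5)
Your proposal is correct and follows the paper's own argument exactly: insert $\sharpproj(w_{k,\tau})\nabla L(w_k)$ as the pivot, apply \Cref{lem:moveg} to the first piece and \Cref{lem:moveps} (as an operator-norm bound acting on $\nabla L(w_k)$) to the second, then absorb the $\eta\maxgamma$ prefactor using the ambient step-size restriction $\eta < \gamma/(2\maxgamma^2)$ together with $\maxgamma \ge \gamma$. The paper's proof is stated as a one-line "direct combination" with the same triangle-inequality decomposition; you have simply filled in the bookkeeping it leaves implicit.
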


    \begin{proof}
        This is a direct combination of~\Cref{lem:moveg,lem:moveps}, with
        \begin{align*}
            &\|  \sharpproj(w_{k, \tau}) \nabla L(w_{k, \tau}) - \sharpproj(w_k) \nabla L(w_k) \|_2 \\
            \le& \|  \sharpproj(w_{k, \tau}) \nabla L(w_{k, \tau}) - \sharpproj(w_{k, \tau}) \nabla L(w_k) \|_2  + \| (\sharpproj(w_{k, \tau}) - \sharpproj(w_k)) \nabla L(w_k) \|_2.
        \end{align*}
        The proof is then complete.
    \end{proof}

    \begin{lemma}
    \label{lem:movepfg}
        $\forall \tau \in (0, 1)$, it holds that 
        \begin{align*}
            \|  \flatproj(w_{k, \tau}) \nabla L(w_{k, \tau}) - \flatproj(w_{k, \tau}) \nabla L(w_k) \|_2 \le \eta \flatgamma \|  fg(k) \|_2 + 2\eta^2 \maxgamma \gamma \err \| \nabla L(w_k)\|_2.
        \end{align*}
    \end{lemma}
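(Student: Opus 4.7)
The plan is to mirror the proof of Lemma~\ref{lem:moveg} almost verbatim, with the roles of the sharp and flat projections swapped, and to exploit the fact that the ``diagonal'' Hessian block on the flat eigenspace has operator norm $\flatgamma$ rather than $\maxgamma$. First, I would define $g(\tau') = \|\flatproj(w_{k,\tau}) \nabla L(w_{k,\tau'}) - \flatproj(w_{k,\tau}) \nabla L(w_k)\|_2$ and apply Lagrange's Mean Value Theorem exactly as in Lemma~\ref{lem:moveg}: there is some $\tau' \in (0,\tau)$ with $g(\tau) - g(0) = \tau g'(\tau') \le \eta \|\flatproj(w_{k,\tau}) \nabla^2 L(w_{k,\tau'}) \nabla L(w_k)\|_2$. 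Lemma~\ref{lem:stayinU} guarantees $w_{k,\tau'} \in U$ so the Hessian bounds of \Cref{assum:technical} apply.

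Next I would decompose $\nabla L(w_k) = \flatproj(w_{k,\tau'}) \nabla L(w_k) + \sharpproj(w_{k,\tau'}) \nabla L(w_k)$ using the projections at the \emph{intermediate} point $w_{k,\tau'}$ (rather than at $w_k$ or $w_{k,\tau}$). This is the same choice of anchor used inside Lemma~\ref{lem:moveg}, and it is crucial because $\flatproj(w_{k,\tau'})$ and $\sharpproj(w_{k,\tau'})$ project onto orthogonal eigenspaces of $\nabla^2 L(w_{k,\tau'})$, so the mixed block $\flatproj(w_{k,\tau'}) \nabla^2 L(w_{k,\tau'}) \sharpproj(w_{k,\tau'})$ is identically zero. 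For the flat-flat term, the eigengap bound $|\eigen{d}{\nabla^2 L(w_{k,\tau'})}| < \flatgamma$ gives $\|\nabla^2 L(w_{k,\tau'}) \flatproj(w_{k,\tau'})\|_{\mathrm{op}} \le \flatgamma$, so
\[
\|\flatproj(w_{k,\tau}) \nabla^2 L(w_{k,\tau'}) \flatproj(w_{k,\tau'}) \nabla L(w_k)\|_2 \le \flatgamma \|\flatproj(w_{k,\tau'}) \nabla L(w_k)\|_2,
\]
and I would then replace $\flatproj(w_{k,\tau'})$ by $\flatproj(w_k)$ using Lemma~\ref{lem:moveps}, paying an additive error of $\eta\gamma\err \|\nabla L(w_k)\|_2$. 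This yields a contribution at most $\flatgamma \|fg(k)\|_2 + \eta\flatgamma\gamma\err \|\nabla L(w_k)\|_2$.

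For the flat-sharp cross term, I would insert the vanishing $\flatproj(w_{k,\tau'}) \nabla^2 L(w_{k,\tau'}) \sharpproj(w_{k,\tau'}) = 0$ to rewrite it as $\|(\flatproj(w_{k,\tau}) - \flatproj(w_{k,\tau'})) \nabla^2 L(w_{k,\tau'}) \sharpproj(w_{k,\tau'}) \nabla L(w_k)\|_2$, which is bounded by $\eta\gamma\err \cdot \maxgamma \cdot \|\nabla L(w_k)\|_2$ via Lemma~\ref{lem:moveps} and \Cref{assum:technical}.2. Multiplying the combined flat-flat and flat-sharp bounds by $\eta$ and using $\flatgamma \le \maxgamma$ to absorb the two second-order terms into a single $2\eta^2 \maxgamma \gamma \err \|\nabla L(w_k)\|_2$, I recover exactly the claimed inequality. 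There is no real obstacle here: the only subtlety, already employed in Lemma~\ref{lem:moveg}, is choosing $w_{k,\tau'}$ as the anchor for the Hessian's eigen-decomposition so that the cross term collapses up to the projection-rotation error controlled by Lemma~\ref{lem:moveps}.
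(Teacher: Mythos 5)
Your proof is correct and follows the paper's own argument essentially verbatim: same mean-value-theorem setup, same decomposition of $\nabla L(w_k)$ by the projections anchored at the intermediate point $w_{k,\tau'}$, same use of the $\flatgamma$ bound on the flat block and the vanishing cross block, and the same two invocations of Lemma~\ref{lem:moveps} to pay the projection-drift error. No meaningful differences.
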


    \begin{proof}
        Define $g(\tau') = \| \flatproj(w_{k, \tau}) \nabla L(w_{k, \tau'}) - \flatproj(w_{k, \tau}) \nabla L(w_k) \|_2  $, then by  Lagrange's Mean Value Theorem, there exists $\tau'$, such that
        \begin{align*}
            &\|  \flatproj(w_{k, \tau}) \nabla L(w_{k, \tau}) - \flatproj(w_{k, \tau}) \nabla L(w_k) \|_2 = g(\tau) - g(0) \\
            =& \tau g'(\tau') = \tau \frac{ d \|  \flatproj(w_{k, \tau}) \nabla L(w_{k, \tau'}) - \flatproj(w_{k, \tau}) \nabla L(w_k) \|_2}{d\tau'} \\
            \le& \| \frac{ d   \flatproj(w_{k, \tau}) \nabla L(w_{k, \tau'}) - \flatproj(w_{k, \tau}) \nabla L(w_k) }{d\tau'} \|_2 \\
            =& \eta \| \flatproj(w_{k, \tau}) \nabla^2 L(w_{k, \tau'}) \nabla L(w_k) \|_2 \\
             \\
            \le & \eta  \| \flatproj(w_{k, \tau}) \nabla^2 L(w_{k, \tau'}) \flatproj (w_{k, \tau'}) \nabla L(w_k) \|_2  + \eta \| \flatproj(w_{k, \tau}) \nabla^2 L(w_{k, \tau'}) \sharpproj (w_{k, \tau'}) \nabla L(w_k) \|_2 \\
            \le & \eta \flatgamma \|  \flatproj (w_{k, \tau'}) \nabla L(w_k) \|_2 + \eta \| (\flatproj(w_{k, \tau}) - \flatproj(w_{k, \tau'})) \nabla^2 L(w_{k, \tau'}) \sharpproj (w_{k, \tau'}) \nabla L(w_k) \|_2.
        \end{align*}
        By~\Cref{lem:moveps}, it holds that
        \begin{align*}
            \flatgamma \|  \flatproj (w_{k, \tau'}) \nabla L(w_k) \|_2 \le \flatgamma \| fg(k) \|_2 + \eta \flatgamma \gamma \err \| \nabla L(w_k)\|_2. \\
            \| (\flatproj(w_{k, \tau}) - \flatproj(w_{k, \tau'}) \nabla^2 L(w_{k, \tau'}) \flatproj (w_{k, \tau'}) \nabla L(w_k) \|_2 \le
            \eta \gamma  \maxgamma \err \| \nabla L(w_k)\|_2.
        \end{align*}
        Summing up and the proof is complete.
    \end{proof}

    \begin{lemma}
    \label{lem:movegfs}
        $\forall \tau \in (0, 1)$, it holds that 
        \begin{align*}
            \|  \flatproj(w_{k, \tau}) \nabla L(w_{k, \tau}) -fg(k) \|_2 \le  \|  fg(k) \|_2 + 3\eta  \gamma \err \| \nabla L(w_k)\|_2.
        \end{align*}
    \end{lemma}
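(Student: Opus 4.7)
The plan is to apply exactly the triangle-inequality decomposition used in the proof of the companion lemma \Cref{lem:movegps}, with the roles of $\sharpproj$ and $\flatproj$ swapped. Specifically, I would split
\begin{align*}
\|\flatproj(w_{k,\tau})\nabla L(w_{k,\tau}) - fg(k)\|_2 &\le \|\flatproj(w_{k,\tau})(\nabla L(w_{k,\tau}) - \nabla L(w_k))\|_2 \\
&\quad + \|(\flatproj(w_{k,\tau}) - \flatproj(w_k))\nabla L(w_k)\|_2,
\end{align*}
so that the first summand is controlled by \Cref{lem:movepfg} and the second summand by \Cref{lem:moveps}.

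\Cref{lem:movepfg} directly bounds the first term by $\eta \flatgamma \|fg(k)\|_2 + 2\eta^2 \maxgamma \gamma \err \|\nabla L(w_k)\|_2$. For the second term, since $\flatproj(w) + \sharpproj(w) = \identity$ for every $w$, we have $\|\flatproj(w_{k,\tau}) - \flatproj(w_k)\|_{\mathrm{op}} = \|\sharpproj(w_{k,\tau}) - \sharpproj(w_k)\|_{\mathrm{op}} \le \eta \gamma \err$ by \Cref{lem:moveps}, which yields an upper bound of $\eta \gamma \err \|\nabla L(w_k)\|_2$. Summing the two contributions and then using the standing learning-rate assumption $\eta < \gamma/(2\maxgamma^2)$ of \Cref{thm:gdtracksriver} to get $\eta \flatgamma \le \eta \maxgamma \le 1/2$ (using $\flatgamma \le \maxgamma$ and $\gamma \le \maxgamma$) and $2\eta\maxgamma \le 1$, one absorbs the $\eta\flatgamma$ factor into $\|fg(k)\|_2$ and collapses the two $\err$-terms into $(1 + 2\eta\maxgamma)\eta\gamma\err\|\nabla L(w_k)\|_2 \le 3\eta\gamma\err\|\nabla L(w_k)\|_2$, producing precisely the stated inequality.

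The only precondition to check along the way is that the interpolated point $w_{k,\tau}$ lies in $U$, so that Assumption \ref{assum:technical} and hence \Cref{lem:movepfg} and \Cref{lem:moveps} apply on the whole segment; this is exactly the content of \Cref{lem:stayinU}, which uses only $w_k \in V$ and the smallness of $\eta$ (both of which are already in force). I do not expect any real obstacle: this lemma is a mechanical corollary of the two preceding perturbation bounds composed through the triangle inequality, and the argument is structurally identical to the proof of its sibling \Cref{lem:movegps}.
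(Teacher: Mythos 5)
Your proof is correct and matches the paper's argument essentially line for line: the same triangle-inequality decomposition into the two summands controlled by \Cref{lem:movepfg} and \Cref{lem:moveps}, with the elementary observation that $\flatproj - \flatproj' = -(\sharpproj - \sharpproj')$ and the standing bound $\eta\maxgamma \le 1/2$ used to absorb the $\eta\flatgamma$ prefactor and collapse the two $\err$-terms. The paper simply states the decomposition and says it is "a direct combination" of those two lemmas; you have supplied exactly the arithmetic it leaves implicit.
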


    \begin{proof}
        This is a direct combination of~\Cref{lem:movepfg,lem:moveps}, with
        \begin{align*}
            &\|  \flatproj(w_{k, \tau}) \nabla L(w_{k, \tau}) - \flatproj(w_k) \nabla L(w_k) \|_2 \\
            \le& \|  \flatproj(w_{k, \tau}) \nabla L(w_{k, \tau}) - \flatproj(w_{k, \tau}) \nabla L(w_k) \|_2  + \| (\flatproj(w_{k, \tau}) - \flatproj(w_k)) \nabla L(w_k) \|_2.
        \end{align*}
        The proof is then complete.
    \end{proof}

We will prove a discrete version of~\Cref{lem:gfsg}.

\begin{lemma}
\label{lem:gdsg}
    Under~\Cref{assum:river,assum:technical}, when $\eta < 1 / \maxgamma$, along the gradient flow~\Cref{eq:gd}, it holds that $\| \sharpproj(w_k) \nabla L(w_k)\|_2 \le 10 \err \| \flatproj(w_k) \nabla L(w_k)\|_2$ as long as $w(\tau) \in U, \forall \tau \le k$.
\end{lemma}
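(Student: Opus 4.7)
The plan is to carry out a discrete analogue of the argument in~\Cref{lem:gfsg}, proceeding by induction on $k$ while tracking $\|sg(k)\|_2$ and $\|fg(k)\|_2$ through one step of gradient descent. The base case is immediate: since $w_0 \in \river$,~\Cref{assum:river} forces $\nabla L(w_0)$ to be parallel to $v_d(\nabla^2 L(w_0))$, so $sg(0) = 0$ and the desired inequality holds trivially at $k=0$.

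For the inductive step, I would use the fundamental theorem of calculus to expand
\[
sg(k+1) = \sharpproj(w_{k+1})\nabla L(w_k) - \eta\int_0^1 \sharpproj(w_{k+1})\nabla^2 L(w_{k,\tau})\nabla L(w_k)\,d\tau,
\]
and then decompose $\nabla L(w_k) = sg(k) + fg(k)$. Three ingredients control the right-hand side: (i) the change of projection $\sharpproj(w_{k+1})-\sharpproj(w_k)$, which by~\Cref{lem:moveps} and the slow-spinning Assumption~\ref{assum:technical}.4 contributes at most $O(\eta\err\gamma\|\nabla L(w_k)\|_2)$; (ii) the principal contraction $-\eta\sharpproj(w_{k+1})\nabla^2 L\,sg(k)$, which by the eigengap Assumption~\ref{assum:technical}.3 shrinks the sharp component by a factor of at least $1-\eta(\gamma+4\flatgamma)$ (up to a projection-mismatch correction of size $O(\eta\err)$); and (iii) the cross term $-\eta\sharpproj(w_{k+1})\nabla^2 L\,fg(k)$, which is again bounded by $O(\eta\err\gamma\|fg(k)\|_2)$ because $fg(k)$ lies nearly in the flat eigenspace and the mixing with the sharp subspace is controlled by $\err$.

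Combining these estimates yields a schematic recurrence
\[
\|sg(k+1)\|_2 \le (1-\eta(\gamma+2\flatgamma))\|sg(k)\|_2 + 2\eta\err\gamma\|fg(k)\|_2 + O(\eta^2\maxgamma^2)\|\nabla L(w_k)\|_2,
\]
together with the analogous two-sided bound on $\|fg(k+1)\|_2$ of the form $(1\pm\eta\flatgamma)\|fg(k)\|_2 \pm \eta\err\gamma\|sg(k)\|_2 \pm O(\eta^2\maxgamma^2)\|\nabla L(w_k)\|_2$, obtained by the same expansion applied to $fg$. I would then introduce the Lyapunov combination $\|sg(k)\|_2 - \alpha_\err\|fg(k)\|_2$, where $\alpha_\err$ is chosen as the smaller root of the discrete-time analogue of the quadratic $\err\alpha^2 - \alpha + \err = 0$ from the continuous proof (which is still $O(\err)$, certainly bounded by $1.5\err$). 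A direct algebraic verification shows this Lyapunov is non-increasing under the recurrence, so starting from $\|sg(0)\|_2 = 0 \le \alpha_\err\|fg(0)\|_2$ we retain $\|sg(k)\|_2 \le \alpha_\err\|fg(k)\|_2 \le 10\err\|fg(k)\|_2$ for all $k$ with $w(\tau)\in U$ for $\tau\le k$.

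The main obstacle is controlling the $O(\eta^2)$ discretization residuals arising from the Taylor expansion of $\nabla L$ along the segment $w_{k,\tau}$ and from the change-of-projection estimates collected in~\Cref{lem:moveps,lem:moveg,lem:movepfg}. The hypothesis $\eta < 1/\maxgamma$ keeps the contraction factor $1-\eta(\gamma+2\flatgamma)$ safely positive and bounds $\eta\maxgamma$ by a constant, so these residuals are subdominant to the leading $\eta\err\gamma\|fg(k)\|_2$ term and do not flip the sign of the Lyapunov update. The looser constant $10\err$, in contrast to the tighter $1.5\err$ achievable in the continuous setting, provides exactly the slack needed to absorb these discretization errors while preserving the geometric contraction.
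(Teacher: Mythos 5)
Your overall strategy --- one-step recurrences for $\|sg(k)\|_2$ and $\|fg(k)\|_2$ built from \Cref{lem:moveps,lem:moveg,lem:movepfg,lem:movegps,lem:movegfs}, closed by a Lyapunov combination mirroring the continuous \Cref{lem:gfsg}, with base case $sg(0)=0$ because $w_0\in\river$ --- is exactly the paper's. The substantive differences are (i) the paper tracks \emph{squared} norms, getting the one-step change from Lagrange's mean-value theorem applied to $\tau\mapsto\|\sharpgradtau\|_2^2$ on $[0,1]$, while you work with plain norms; and (ii) the paper's Lyapunov is $\|sg(k)\|_2^2 - a_\kappa\|fg(k)\|_2^2$ with $a_\kappa$ the small root of $5\kappa a^2+(10\kappa^2+5\kappa-\tfrac14)a+10\kappa^2=0$, so $a_\kappa\approx 40\kappa^2$ and the effective plain-norm constant is $\sqrt{a_\kappa}\approx 6\kappa$, not your $\alpha_\err\le 1.5\err$ taken from the continuous quadratic $\err\alpha^2-\alpha+\err=0$.

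That second point is a genuine gap. Unlike the continuous case, the discrete $sg$-recurrence picks up an \emph{additive} error of order $\eta\gamma\kappa^2\|\nabla L(w_k)\|_2^2$ (at the squared-norm level) that has no factor of $\|sg(k)\|_2$ and so cannot be absorbed into the contraction; it can only be absorbed into $\alpha^2\|fg(k)\|_2^2$, forcing $\alpha^2\gtrsim 40\kappa^2$. This enlargement is precisely the reason the lemma states $10\err$ rather than the $1.5\err$ achievable in \Cref{lem:gfsg}; your final paragraph gestures at this slack but directly contradicts your earlier claim that the discrete-time root is still bounded by $1.5\err$. The same additive term is also why your plain-norm recurrence is not cleanly derivable: dividing the squared-norm bound by $\|sg(k)\|_2$ produces a $\kappa^2\|\nabla L\|_2^2/\|sg(k)\|_2$ term that degenerates near $sg(k)=0$, which is exactly what the paper avoids by staying with squared norms. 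Finally, your claim that $\eta<1/\maxgamma$ alone renders the $O(\eta^2\maxgamma^2)$ residuals subdominant is imprecise: the paper's argument actually invokes $\eta\maxgamma^2\le\gamma/2$ (inherited from the hypothesis of \Cref{thm:gdtracksriver}), and even then the residuals are controlled only because careful bookkeeping shows each one carries either a $\kappa$ prefactor or a $\|sg(k)\|_2^2$ factor the contraction can eat --- not merely because $\eta$ is small.
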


\begin{proof}

    We will first consider $sg(k)$, By Lagrange's Mean Value Theorem, there exists $\tau$, such that,
    \begin{align*}
        &\| sg(k + 1) \|_2^2 - \| sg(k)\|_2^2 \\ =& 
        \| \sharpproj(w_{k,1}) \nabla L(w_{k,1})\|_2^2 - \| \sharpproj(w_{k,0}) \nabla L(w_{k,0})\|_2^2 = \frac{d \| \sharpgradtau \|_2^2}{d \tau} \\
        =& -\eta \langle \sharpgradtau, \partial \sharpproj(w_{k, \tau})[\nabla L(w_k)] \nabla L(w_{k, \tau}) + \sharpproj(w_{k, \tau}) \nabla^2 L(w_{k, \tau}) \nabla L(w)\rangle
    \end{align*}

    The first term satisfies that
    \begin{align*}
        &-\eta \langle \sharpgradtau, \partial \sharpproj(w_{k, \tau})[\nabla L(w_k)] \nabla L(w_{k, \tau})\rangle \\ \le& \eta \gamma \err \| \nabla L(w_k)\|_2 \| \sharpgradtau \|_2  \\
        \le& \eta\gamma \err \| \nabla L(w_k)\|_2 ( 2\|  sg(k) \|_2 + 3\eta  \gamma \err \| \nabla L(w_k)\|_2).
    \end{align*}

    The second term satisfies that
    \begin{align*}
        &-\eta \langle \sharpgradtau, \sharpproj(w_{k, \tau}) \nabla^2 L(w_{k, \tau}) \nabla L(w_k)\rangle \\
        &= -\eta\langle \sharpproj(w_{k, \tau}) \nabla L(w_k), \sharpproj(w_{k, \tau}) \nabla^2 L(w_{k, \tau}) \nabla L(w_k)\rangle  + \langle \sharpproj(w_{k,\tau}) (\nabla L(w_k) - \nabla L(w_{k,\tau})), \nabla^2 L(w_{k, \tau}) \nabla L(w_k)\rangle \\
        &\le -\eta (\gamma  + 4 \flatgamma)\| \sharpproj(w_{k, \tau}) \nabla L(w_k)\|_2^2  + \eta \maxgamma \| \sharpproj(w_{k, \tau}) \nabla L(w_k) \|_2 \|\sharpproj(w_{k, \tau}) (\nabla L(w_k) - \nabla L(w_{k,\tau})) \|_2  
    \end{align*}

    As we have that $\| a - b \|^2 \ge \frac{\| a\|^2}{2} - 4\| b\|^2$, by~\Cref{lem:moveps}, it holds that
    \begin{align*}
     &- \| \sharpproj(w_{k, \tau}) \nabla L(w_k)\|_2^2 \\
        =&-\| \sharpproj(w_k) \nabla L(w_k) + \left(  \sharpproj(w_{k, \tau}) - \sharpproj(w_k)\right) \nabla L(w_k)\|_2^2 \\ \le& - \frac{\| \sharpproj(w_k) \nabla L(w_k)\|_2^2}{2} + 4\| \left(  \sharpproj(w_{k, \tau}) - \sharpproj(w_k)\right) \nabla L(w_k) \|_2^2 \\
        \le&  -\frac{\| \sharpproj(w_k) \nabla L(w_k)\|_2^2}{2} + 4  (\eta \gamma \err \| \nabla  L(w_k) \|)^2. \\
        =& - \frac{\| \sharpproj(w_k) \nabla L(w_k)\|_2^2}{2} + 4 (\eta\gamma)^2  (\err \| \nabla  L(w_k) \|)^2. \\
        \le& -\| sg(k)\|_2^2  + 2\eta\gamma \err^2 \| \nabla  L(w_k) \|^2
    \end{align*}

    Hence 
    \begin{align*}
         &-\eta (\gamma  + 4 \flatgamma)\| \sharpproj(w_{k, \tau}) \nabla L(w_k)\|_2^2 \\
         \le&   -\eta (\gamma  + 4 \flatgamma) \| sg(k)\|_2^2  + 2\eta^2 (\gamma  + 4 \flatgamma)\gamma \err^2 \| \nabla  L(w_k) \|^2 \\
         \le&   -\eta (\gamma  + 4 \flatgamma) \| sg(k)\|_2^2  + 2\eta \gamma \err^2 \| \nabla  L(w_k) \|^2
    \end{align*}

    By~\Cref{lem:moveg,lem:moveps} and $\eta \maxgamma^2 \le \gamma / 2$, it holds that
    \begin{align*}
         &\eta \maxgamma \| \sharpproj(w_{k, \tau}) \nabla L(w_k) \|_2 \|\sharpproj(w_{k, \tau}) (\nabla L(w_k) - \nabla L(w_{k,\tau})) \|_2 \\
         \le& \eta^2 \maxgamma^2  (\| sg(k) \|_2 + \eta  \gamma \kappa \|\nabla L(w_k)\|_2) (\| sg(k) \|_2 + 2\eta  \gamma \kappa \|\nabla L(w_k)\|_2)\\
         \le& \eta \gamma  (\| sg(k) \|_2 +  \kappa \|\nabla L(w_k)\|_2/2) (\| sg(k) \|_2 +  \kappa \|\nabla L(w_k)\|_2)/2.
    \end{align*}

    Hence, we can conclude that 
    \begin{align}
        &\|sg(k + 1)\|_2^2 - \|sg(k)\|_2^2 \notag\\
        &-\eta((\gamma  + 4 \flatgamma) \| sg(k)\|_2^2  + 4 \eta  \gamma \err^2 \| \nabla  L(w_k) \|^2 \notag \\
        &+ \eta \gamma   (\| sg(k)\|_2 
 +   \err \| \nabla L(w_k) \|_2 / 2)(\| sg(k)\|_2 
 +   \err \| \nabla L(w_k) \|_2)/ 2 \notag 
    \end{align}
    Let $b = \kappa  \| \nabla L(w_k)\|_2$ and $a = sg(k)$, as $b(2a + 3b/2) -a^2 + 4 b^2 + \frac{1}{2} (a + \frac{b}{2})(a + b) \le -\frac{a^2}{4} + 10b^2$, it holds that
    \begin{align}
    \label{eq:sg}
    \|sg(k + 1)\|_2^2 - \|sg(k)\|_2^2 \le& 
    -\eta \gamma \frac{\| sg(k)\|_2^2 }{4} + 10 \eta \gamma \kappa^2  \| \nabla  L(w_k) \|^2  - 4 \eta \flatgamma \| sg(k)\|_2^2 
    \end{align}

    Similarly, we can control the $fg(k) $ changes. By Lagrange's Mean Value Theorem, there exists $\tau'$, such that,
    \begin{align*}
        &\| fg(k + 1) \|_2^2 - \| fg(k)\|_2^2 \\ =& 
        \| \flatproj(w_{k,1}) \nabla L(w_{k,1})\|_2^2 - \| \flatproj(w_{k,0}) \nabla L(w_{k,0})\|_2^2 = \frac{d \| \flatgradtau \|_2^2}{d \tau} \\
        =& -\eta \langle \flatgradtau, \partial \flatproj (w_{k, \tau})[\nabla L(w_k)] \nabla L(w_{k, \tau}) + \flatproj (w_{k, \tau}) \nabla^2 L(w_{k, \tau}) \nabla L(w)\rangle 
    \end{align*}

    The first term satisfies that 
    \begin{align*}
        &\eta \langle \flatgradtau, \partial \flatproj (w_{k, \tau})[\nabla L(w_k)] \nabla L(w_{k, \tau}) \rangle \\
        \le& \gamma \eta \kappa \| \nabla L(w_k)\|_2 \| \flatgradtau \|_2. \\
        \le&  \gamma \eta \kappa \| \nabla L(w_k)\|_2 ( \| fg(k) \|_2 + \eta \flatgamma \|  fg(k) \|_2 + 2\eta^2 \maxgamma \gamma \err \| \nabla L(w_k)\|_2). \\\le& 4  \gamma \eta \kappa  \| \nabla L(w_k)\|_2^2.
    \end{align*}

    Similarly, the second term satisfies that 
    \begin{align*}
         &\eta \langle \flatgradtau, \flatproj (w_{k, \tau}) \nabla^2 L(w_{k, \tau}) \nabla L(w)\rangle \\
         \le&  \eta \flatgamma \| \flatproj (w_{k, \tau}) \nabla L(w_k) \|_2  \| \flatgradtau \|_2 \\
         \le&  \eta \flatgamma  (\| fg(k) \|_2 + \eta \gamma \kappa \| \nabla L(w_k) \|_2) (\| fg(k) \|_2 + \eta \flatgamma \|  fg(k) \|_2 + 2\eta^2 \maxgamma \gamma \err \| \nabla L(w_k)\|_2) \\
         \le&  2\eta \flatgamma  (\| fg(k) \|_2 + \eta \gamma \kappa \| \nabla L(w_k) \|_2)^2 \\
         \le& 4 \eta \flatgamma \| fg(k) \|_2^2 + 4\eta^2 \gamma^2 \kappa^2  \| \nabla L(w_k) \|_2^2
    \end{align*}

    Summarizing and we have
    \begin{align}
        \label{eq:fg}  
        \|fg(k + 1)\|_2^2 - \|fg(k)\|_2^2 \ge& - 5 \gamma \eta \kappa  \| \nabla L(w_k)\|_2^2 - 4 \eta \flatgamma \| fg(k) \|_2^2
    \end{align}

    Let $a_\kappa$ be the smaller positive solution of
    \begin{align*}
        5\kappa a^2 + (10\kappa^2 + 5\kappa - \frac{1}{4})a + 10 \kappa^2 = 0.
    \end{align*}
    Then $a_\kappa = \frac{(-10\kappa^2 - 5\kappa + \frac{1}{4}) - \sqrt{(-10\kappa^2 - 5\kappa + \frac{1}{4})^2 - 200 \kappa^3}}{10 \kappa}< 100 \kappa^2$.

    Then combining~\Cref{eq:fg,eq:sg}
    \begin{align*}
        &\|sg(k+1)\|_2^2  - a_{\kappa} \|fg(k + 1)\|_2^2 \\
        \le& (1 - 4 \eta\flatgamma) (\|sg(k)\|_2^2  - a_{\kappa} \|fg(k + 1)\|_2^2) - \eta\gamma (\frac{1}{4} + 10\kappa^2 - 5 \kappa a_{\kappa}) \|sg(k)_2\|^2 + \eta \gamma (10 \kappa^2 + 5\kappa a_{\kappa}) \|fg(k)_2\|^2 \\
        =& (1 - 4 \eta\flatgamma - \eta\gamma(\frac{1}{4} + 10\kappa^2 - 5 \kappa a_{\kappa})) ( \|sg(k)\|_2^2  - a_{\kappa} \|fg(k)\|_2^2).
    \end{align*}

    As $\| sg(0) \|_2 ^2 - a_{\kappa} \|fg(0)\|_2^2 < 0$, we have that $\|sg(k)\|_2^2  < a_{\kappa} \|fg(k + 1)\|_2^2  < 100 \kappa^2 \| fg(k)\|_2^2$ for all the $t$.
\end{proof}

Then we can show that gradient descent will also track the river closely.

\begin{lemma}
\label{lem:gdclose}
    Under~\Cref{assum:river,assum:technical}, along the gradient descent~\Cref{eq:gd}, it holds that $w_k \in U$ and $\| w_k- \Phi(w_k) \|_2 \le \frac{10 \err \| \flatproj(w_{k}) \nabla L(w_{k})\|_2}{\gamma + \flatgamma}$.
\end{lemma}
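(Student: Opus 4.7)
The plan is to combine the two lemmas already established earlier in the appendix: \Cref{lem:gdsg}, which shows that $\| sg(k) \|_2 \le 10 \err \| fg(k) \|_2$ along the discrete trajectory, and \Cref{lem:phiexists}, which says that whenever a point $w$ admits a ball of radius $2\normbound/\gamma$ inside $U$, the projection $\Phi(w)$ is well-defined and $\| w - \Phi(w) \|_2 \le 2 \| \sharpproj(w) \nabla L(w) \|_2 / (\gamma + 2 \flatgamma)$. Chaining these two estimates immediately gives $\| w_k - \Phi(w_k) \|_2 \le 20 \err \| fg(k) \|_2 / (\gamma + 2 \flatgamma)$, which is of the order claimed in the lemma; a slightly tighter bookkeeping in \Cref{lem:gdsg} (equivalently, a small numerical sharpening of the multiplier $10$) recovers the displayed constant $10 \err / (\gamma + \flatgamma)$.

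The only real preliminary work is verifying that each iterate $w_k$ sits inside $U$ with enough room for the ball of radius $2\normbound/\gamma$ to fit inside $U$, since both invoked lemmas assume this. I would do this by induction on $k$, showing that $w_k$ stays inside the $r$-neighborhood of the continuous gradient flow $w(\cdot)$ starting from $w_0 \in \river \subset V$. By \Cref{assum:technical}.6 this tube is contained in $U$ for $t \in [0, \maxT]$ and has radius $r > 10 \normbound / \gamma > 2 \normbound / \gamma$, so if $w_k$ stays inside the tube, then automatically $\mathcal{B}(w_k, 2 \normbound / \gamma) \subset U$. For the inductive step, one discrete step moves by at most $\eta \normbound$ while the continuous flow moves at the same speed, so a discrete Gr\"onwall comparison using the bounded Hessian $\maxgamma$ in $U$ together with the hypothesis $\eta < \gamma / (2 \maxgamma^2)$ shows that the drift between $w_k$ and the corresponding flow point accumulates to at most $O(\eta \maxgamma \normbound \cdot \eta k)$, which stays well below $r$ for $\eta k \le \maxT$.

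Once $w_\tau \in U$ is established for all $\tau \le k$, \Cref{lem:gdsg} fires (its other hypothesis $\eta \le 1/\maxgamma$ being implied by $\eta < \gamma / (2\maxgamma^2)$) and gives the sharp-gradient bound, while \Cref{lem:phiexists} gives the projection-distance bound; multiplying the two finishes the proof. The main (and essentially only) obstacle is packaging the discretization comparison cleanly enough to honestly assert $w_k \in U$ for every $k$ with $\eta k \le \maxT$, since everything dynamical is already packaged inside the earlier lemmas.
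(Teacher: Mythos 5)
Your central step is exactly the paper's: the entire stated proof of this lemma is the one-liner ``This is a direct combination of \Cref{lem:gdsg,lem:phiexists}.'' Your chaining of the two estimates is faithfully the same, and you correctly observe the slight constant slippage ($20\err/(\gamma+2\flatgamma)$ from the literal combination vs.\ the displayed $10\err/(\gamma+\flatgamma)$); the paper silently absorbs this.

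Where you diverge is in trying to \emph{earn} the hypotheses of those two lemmas --- that $w_\tau\in U$ for all $\tau\le k$ (needed by \Cref{lem:gdsg}) and that $\mathcal{B}(w_k,2\normbound/\gamma)\subset U$ (needed by \Cref{lem:phiexists}). The paper simply does not verify this invariance anywhere in its deterministic-GD analysis, so your instinct to supply it is sound. However, the mechanism you propose is not quite right as stated. A discrete Gr\"onwall comparison to the gradient flow accumulates a one-step residual of size $O(\eta^2\maxgamma\normbound)$ \emph{amplified} through each subsequent step by the Jacobian $\|I-\eta\nabla^2L\|_{\mathrm{op}}$; this amplification is governed by the smallest Hessian eigenvalue, and since $\lambda_d\ge-\flatgamma$ the per-step factor is $1+\eta\flatgamma$, giving a cumulative multiplier of $e^{\flatgamma\eta k}$, not $O(1)$. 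Over the horizon $\eta k\le\maxT\sim\log\!\bigl(2\normbound/(\err\normbound_{\min})\bigr)/\gamma$ this multiplier is of order $(\normbound/(\err\normbound_{\min}))^{O(\flatgamma/\gamma)}$, which is not obviously ``well below $r$'' without further assumptions on $\flatgamma/\gamma$ and $\err$. Your quoted bound $O(\eta\maxgamma\normbound\cdot\eta k)$ drops this exponential factor, so the inductive step does not honestly close. The cleaner route, consistent with the spirit of the paper's machinery, would be to bootstrap directly from the distance-to-river estimate itself: use \Cref{lem:gdsg} and \Cref{lem:phiexists} to keep $w_k$ within $O(\err\normbound/\gamma)$ of $\river$, then use \Cref{lem:gdspeedcalc} to control the displacement \emph{along} the river relative to the reference flow (where curvature is only $\flatgamma$), and invoke \Cref{assum:technical}.6 with the margin $r-2\normbound/\gamma>8\normbound/\gamma$. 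This still requires a smallness condition on $30\err+4\eta\flatgamma$ relative to $1/\log(\cdot)$, which is an honest gap in the paper as well.
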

\begin{proof}
    This is a direct combination of~\Cref{lem:gdsg,lem:phiexists}.
\end{proof}

Finally, we will show that the movement of the projection of the gradient flow moves approximately at the same rate as the river, a discrete version of~\Cref{lem:gfclose}.

\begin{lemma}
\label{lem:gdspeedcalc}
    Under~\Cref{assum:river,assum:technical}, along the gradient flow~\Cref{eq:gf}, along the gradient descent~\Cref{eq:gd}, if $T(t)$ satisfies $x(T(t)) = \Phi(w_{[t], t - [t]})$ where $[t]$ is the integer part of $t$, then for any $t$ that is not integer
\begin{align*}
    \frac{d T(t)}{d t} \in [\eta - (30 \kappa + 4\eta\flatgamma)\eta, \eta + (30 \kappa + 4\eta\flatgamma)\eta].
\end{align*}
\end{lemma}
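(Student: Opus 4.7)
The plan is to follow the structure of the gradient-flow calculation in \Cref{lem:gfspeedcalc} while tracking the extra error incurred by interpolating between two consecutive iterates. For non-integer $t$, the interpolation satisfies $\frac{d}{dt} w_{[t], t-[t]} = w_{[t]+1} - w_{[t]} = -\eta \nabla L(w_{[t]})$, so differentiating the defining identity $x(T(t)) = \Phi(w_{[t], t-[t]})$ via the chain rule gives
\begin{align*}
\frac{dT(t)}{dt}\, \frac{dx(\tau)}{d\tau}\Big|_{\tau = T(t)} = -\eta\, \partial\Phi(w_{[t], t-[t]})\,\nabla L(w_{[t]}).
\end{align*}
The whole proof then reduces to comparing the right-hand side with $\eta\, \frac{dx(\tau)}{d\tau}\mid_{\tau = T(t)}$. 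I will do this via the splitting
\begin{align*}
\partial\Phi(w_{[t], t-[t]})\nabla L(w_{[t]}) = \partial\Phi(w_{[t], t-[t]})\nabla L(w_{[t], t-[t]}) + \partial\Phi(w_{[t], t-[t]})\bigl[\nabla L(w_{[t]}) - \nabla L(w_{[t], t-[t]})\bigr].
\end{align*}

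For the first summand, I argue that the interpolated point $w_{[t], t-[t]}$ still satisfies the closeness-to-river bound of \Cref{lem:gdclose}. This is because the proof of \Cref{lem:gdclose} only relies on controlling $\sharpproj(w)\nabla L(w)$, which was already shown to hold along the entire segment by \Cref{lem:movegps} and \Cref{lem:gdsg}. Hence \Cref{lem:similarmove} applies at $w = w_{[t], t-[t]}$, and yields
\begin{align*}
\Big\| \partial\Phi(w_{[t], t-[t]})\nabla L(w_{[t], t-[t]}) + \tfrac{dx(\tau)}{d\tau}\mid_{\tau = T(t)} \Big\|_2 \le 30\kappa\, \Big\|\tfrac{dx(\tau)}{d\tau}\mid_{\tau = T(t)}\Big\|_2,
\end{align*}
which, after multiplication by $\eta$, produces the $30\kappa$ portion of the desired bound.

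For the second summand, writing the gradient difference via the fundamental theorem of calculus along the segment connecting $w_{[t]}$ and $w_{[t], t-[t]}$ gives
\begin{align*}
\nabla L(w_{[t]}) - \nabla L(w_{[t], t-[t]}) = \eta (t-[t]) \int_0^1 \nabla^2 L\bigl(w_{[t], (1-s)(t-[t])}\bigr)\, ds\; \nabla L(w_{[t]}).
\end{align*}
The main obstacle, and the source of the $4\eta \flatgamma$ contribution, is bounding $\|\partial\Phi(w_{[t], t-[t]})\cdot \int \nabla^2 L\, ds\|_{\mathrm{op}}$ by $\lesssim \flatgamma$ rather than the crude $\maxgamma$. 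The key observation is that the image of $\partial\Phi$ is essentially one-dimensional and aligned with the river direction: by \Cref{lem:boundpartialphigen}, $\partial\Phi(w)\cdot u$ equals $\flatproj(w)\, u$ up to $O(\kappa)$ error, and since $v_d$ is the exact eigenvector of $\nabla^2 L$ with eigenvalue of magnitude at most $\flatgamma$, one has $\|\flatproj(w)\nabla^2 L(w)\|_{\mathrm{op}}\le \flatgamma$. The slow-spin assumption (\Cref{assum:technical}.4) over the interpolation distance $O(\eta\|\nabla L\|_2)$ contributes only an additional small term absorbed into the same $\kappa$-error budget. Combining this with \Cref{lem:alignspace}, which converts $\|\nabla L(w_{[t]})\|_2$ into $\|\tfrac{dx(\tau)}{d\tau}\|_2$ up to $O(\kappa)$ since $w_{[t]}\in\river$, yields
\begin{align*}
\Big\|\partial\Phi(w_{[t], t-[t]})\bigl[\nabla L(w_{[t]}) - \nabla L(w_{[t], t-[t]})\bigr]\Big\|_2 \le 4\eta\flatgamma\, \Big\|\tfrac{dx(\tau)}{d\tau}\mid_{\tau = T(t)}\Big\|_2.
\end{align*}
Multiplying by $\eta$, combining the two summands by the triangle inequality, and dividing through by $\|\tfrac{dx(\tau)}{d\tau}\mid_{\tau = T(t)}\|_2$ yields $|\tfrac{dT(t)}{dt} - \eta| \le (30\kappa + 4\eta\flatgamma)\eta$, exactly as claimed. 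The principal technical effort is thus the operator-norm bound on $\partial\Phi\cdot \nabla^2 L$ evaluated at nearby but distinct points, which requires careful tracking of the commutator error arising from $v_d$ being the exact eigenvector only locally and from the averaging over the segment.
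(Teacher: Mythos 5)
Your decomposition is different from the paper's in its bookkeeping order, but the ingredients are identical. Both proofs start from the chain-rule identity $\frac{dT(t)}{dt}\,v = -\eta\,\partial\Phi(w_{k,\tau})\nabla L(w_k)$ with $v = \frac{dx(\tau)}{d\tau}\mid_{\tau=T(t)}$, and both ultimately rely on the same four pillars: \Cref{lem:similarmove} to align $\flatproj(w_{k,\tau})\nabla L(w_{k,\tau})$ with $v$, the flat-eigenvector bound $\|\flatproj\nabla^2 L\|_{\mathrm{op}}\le\flatgamma$ (which produces the $\eta\flatgamma$ term), \Cref{lem:moveps} to control the commutator error from evaluating the projector and Hessian at nearby but distinct points along the segment, and \Cref{lem:boundpartialphigen} to exchange $\partial\Phi$ for $\flatproj$ at $O(\kappa)$ cost.

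The paper works under $\flatproj$ throughout and only swaps to $\partial\Phi$ in the final step: it bounds $\|\flatproj(w_{k,\tau})\nabla L(w_k) - \flatproj(w_{k,\tau})\nabla L(w_{k,\tau})\|$ via the mean value theorem (this is where $\eta\flatgamma$ appears, in \cref{eq:fpg2fpg}), adds the \Cref{lem:similarmove} error to reach $v$, and only then invokes \Cref{lem:boundpartialphigen}. You instead split $\partial\Phi(w_{k,\tau})\nabla L(w_k)$ into a term evaluated at the interpolation point $w_{k,\tau}$ plus a gradient difference, absorb the $\partial\Phi$-to-$\flatproj$ conversion into the first term via \Cref{lem:similarmove} directly, and isolate the $\eta\flatgamma$ budget entirely in the gradient-difference term. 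Your version makes the source of each contribution more transparent (the $30\kappa$ is purely river-alignment error, the $4\eta\flatgamma$ purely discretization), at the cost of having to apply the $\partial\Phi\approx\flatproj$ exchange twice (once inside each summand) rather than once at the end; the constants work out the same because the errors stack rather than compound. One point you gloss over: applying \Cref{lem:similarmove} at the interpolated point $w_{k,\tau}$ requires verifying $\|w_{k,\tau}-\Phi(w_{k,\tau})\|_2 \le 10\kappa\|\flatgradtau\|_2/(\gamma+\flatgamma)$, which needs both the upper bound on $\|\sharpgradtau\|_2$ from \Cref{lem:movegps,lem:gdsg} \emph{and} a lower bound on $\|\flatgradtau\|_2$ relative to $\|fg(k)\|_2$, the latter coming from \Cref{lem:movepfg} together with \Cref{lem:moveps}; your appeal to \Cref{lem:movegps} alone only gives the upper bound. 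The paper shares this implicit step (\cref{eq:fpg1river}), so it is not a flaw peculiar to your argument, but it is the place where the hypothesis check is least automatic and deserves a line.
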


\begin{proof}
Let $[t] = k, t- [t] = \tau$, as $T(t)$ satisfies $x(T(t)) = \Phi(w(k, t - [t]))$, let $v = \frac{d x(\tau)}{d \tau} \mid_{\tau = T(t)}$, taking derivative on both sides yield,
\begin{align*}
    \frac{d T(t)}{d t} v = -\eta \partial \Phi(w_{k, \tau}) \nabla L(w_k).
\end{align*}

As the proof of~\Cref{lem:movepfg}, there exists $\tau'$
\begin{align*}
    &\| \flatproj(w_{k, \tau}) \nabla L(w_k)  - \flatproj(w_{k, \tau}) \nabla L(w_{k, \tau}) \|_2 \\
    \le& \eta  \| \flatproj(w_{k, \tau}) \nabla^2 L(w_{k, \tau'}) \flatproj (w_{k, \tau'}) \nabla L(w_k) \|_2  + \eta \| \flatproj(w_{k, \tau}) \nabla^2 L(w_{k, \tau'}) \sharpproj (w_{k, \tau'}) \nabla L(w_k) \|_2 \\
    \le& \eta \flatgamma \|\flatproj (w_{k, \tau'}) \nabla L(w_k) \|_2 + \eta^2 \gamma \kappa \maxgamma \| \nabla L(w_k)\|_2
\end{align*}

By~\Cref{lem:moveps}, it holds that
\begin{align}
&\| \flatproj(w_{k, \tau}) \nabla L(w_k)  - \flatproj(w_{k, \tau}) \nabla L(w_{k, \tau}) \|_2 \notag \\
\le& \eta \flatgamma \|\flatproj (w_{k, \tau}) \nabla L(w_k) \|_2 + 2\eta^2 \gamma \maxgamma \kappa \| \nabla L(w_k)\|_2 \notag \\
\le& \eta \flatgamma \|\flatproj (w_{k, \tau}) \nabla L(w_k) \|_2 + \kappa \| \nabla L(w_k)\|_2\label{eq:fpg1fpg} 
\end{align}

By~\Cref{lem:gdsg}, 
\begin{align*}
    \| \nabla L(w_k)\|_2 &\le \frac{1}{1 - 10 \kappa} \| \flatproj(w_k) \nabla L(w_k)\|_2 \\
    &\le \frac{1}{1 - 10 \kappa} \left(   \|\flatproj (w_{k, \tau}) \nabla L(w_k) \|_2 + \eta \gamma \kappa \| \nabla L(w_k) \|_2 \right)
\end{align*}

This shows that 
\begin{align*}
    \| \nabla L(w_k)\|_2 &\le  \frac{1}{1 - 10 \kappa - \eta \gamma \kappa}  \|\flatproj (w_{k, \tau}) \nabla L(w_k) \|_2 \le (1 + 12 \kappa) \|\flatproj (w_{k, \tau}) \nabla L(w_k) \|_2 
\end{align*}

Combining with~\Cref{eq:fpg1fpg}, we have that
\begin{align}
    &\| \flatproj(w_{k, \tau}) \nabla L(w_k)  - \flatproj(w_{k, \tau}) \nabla L(w_{k, \tau}) \|_2  \notag\\
    \le& \eta \flatgamma \|\flatproj (w_{k, \tau}) \nabla L(w_k) \|_2 + \kappa(1 + 12 \kappa) \|\flatproj (w_{k, \tau}) \nabla L(w_k) \|_2  \notag\\
    \le& (\eta \flatgamma + 2\kappa) \|\flatproj (w_{k, \tau}) \nabla L(w_k) \|_2 \notag
\end{align}

This shows that
\begin{align}
    \| \flatproj(w_{k, \tau}) \nabla L(w_k)  - \flatproj(w_{k, \tau}) \nabla L(w_{k, \tau}) \|_2 \le& \frac{(\eta \flatgamma + 2\kappa) }{1 - (\eta \flatgamma + 2\kappa)}  \|\flatproj (w_{k, \tau}) \nabla L(w_{k, \tau}) \|_2 \notag\\ \le&(2\eta \flatgamma + 3\kappa) \|\flatproj (w_{k, \tau}) \nabla L(w_k) \|_2 \label{eq:fpg2fpg}
\end{align}

By~\Cref{lem:similarmove}
\begin{align}
\label{eq:fpg1river}
   \|  \flatproj(w_{k, \tau}) \nabla L(w_{k, \tau}) + v \|_2 \le 16 \kappa \| v \|_2 
\end{align}

Combining~\Cref{eq:fpg1river,eq:fpg2fpg},
\begin{align}
    &\| \flatproj(w_{k, \tau}) \nabla L(w_k) + v \|_2 \notag \\
    \le&  \| \flatproj(w_{k, \tau}) \nabla L(w_k) -  - \flatproj(w_{k, \tau}) \nabla L(w_{k, \tau})\|_2 +  \| \flatproj(w_{k, \tau}) \nabla L(w_{k,\tau}) + v \|_2 \notag \\
    \le& (2\eta \flatgamma + 3\kappa) \|\flatproj (w_{k, \tau}) \nabla L(w_k) \|_2 + 16\kappa \|v\|_2 \notag \\
    \le& ((2\eta \flatgamma + 3\kappa)( 1 + 16 \kappa) + 16\kappa)\|v\|_2 \notag  \\
    \le& ( 19\kappa + 3\eta\flatgamma) \| v\|_2. \label{eq:fpg2river}
\end{align}

By~\Cref{lem:boundpartialphigen}
\begin{align}
\label{eq:phg1fpg}
     \| \partial \Phi(w_{k, \tau})  \nabla L(w_k) - \flatproj(w_{k, \tau}) \nabla L(w_k) \|_2 \le 10 \kappa \|\flatproj(w_{k, \tau}) \nabla L(w_k) \|_2.
\end{align}

Combining~\Cref{eq:fpg2river,eq:phg1fpg}, it holds that
\begin{align*}
&\| \partial \Phi(w_{k, \tau})  \nabla L(w_k)  + v\|_2 \\
\le& \| \partial \Phi(w_{k, \tau})  \nabla L(w_k) - \flatproj(w_{k, \tau}) \nabla L(w_k) \|_2 + \| \flatproj(w_{k, \tau}) \nabla L(w_k) + v \|_2 \\
\le& 10 \kappa \|\flatproj(w_{k, \tau}) \nabla L(w_k) \|_2 + ( 19 \kappa + 3\eta\flatgamma) \| v\|_2 \\
\le& (30 \kappa + 4\eta\flatgamma) \|v\|_2.
\end{align*}

Hence 
\begin{align*}
    \frac{d T(t)}{d t} \in [\eta - (30 \kappa + 4\eta\flatgamma)\eta, \eta + (30 \kappa + 4\eta\flatgamma)\eta].
\end{align*}
This concludes the proof.
\end{proof}

\begin{proof}[Proof of~\Cref{thm:gdtracksriver}]
The proof is a direct combination of~\Cref{lem:gdclose,lem:gdspeedcalc}.
\end{proof}

\subsection{Proof of~\Cref{thm:sgd-main}}
\label{app:sgd}

We will first show that under~\Cref{assum:straight}, the loss is separable within $U$.

\begin{lemma}
\label{lem:straightriver}
Under~\Cref{assum:river,assum:straight,assum:technical}, the river is a straight line parallel to $v_d$.
\end{lemma}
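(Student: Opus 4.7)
The plan is to show that $\mathcal{M}$ coincides with a straight line in the direction $v_d$. By~\Cref{assum:straight}, the directional derivative of the map $w \mapsto v_d(\nabla^2 L(w))$ vanishes on $U$, so this map is locally constant; I denote its constant value by the unit vector $v_d$ itself. The key consequence is that $v_d$ is an eigenvector of $\nabla^2 L(w)$ at every $w \in U$, with eigenvalue $\lambda_d(\nabla^2 L(w))$ (by the eigengap in~\Cref{assum:technical}.3). I would use this fact to argue that if one starts from any point on the river and moves in the direction $v_d$, the gradient remains parallel to $v_d$, so the entire line must lie on the river.

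Concretely, I would fix an arbitrary $w_0 \in \mathcal{M}$ and consider $\ell(t) = w_0 + t v_d$ for $t$ in the maximal interval such that $\ell(t) \in U$. Along this line,
\begin{align*}
\frac{d}{dt} \nabla L(\ell(t)) \;=\; \nabla^2 L(\ell(t)) \, v_d \;=\; \lambda_d(\nabla^2 L(\ell(t))) \, v_d,
\end{align*}
which is parallel to $v_d$. Since $\nabla L(w_0)$ is parallel to $v_d$ by~\Cref{assum:river}, integrating in $t$ shows $\nabla L(\ell(t))$ remains parallel to $v_d$ throughout. Now~\Cref{assum:technical}.5 (uniqueness of $\mathcal{M}$) asserts that any $w \in U$ with $\nabla L(w)$ parallel to $v_d$ must lie in $\mathcal{M}$ (gradients being nonzero by the lower bound $\|\nabla L(w)\|_2 \ge \Delta_{\min} > 0$ in~\Cref{assum:technical}.4). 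Hence $\ell(t) \in \mathcal{M}$ for every admissible $t$, and in particular the tangent space $T_{w_0}\mathcal{M}$ contains $v_d$; since $\mathcal{M}$ is 1-dimensional, $T_{w_0}\mathcal{M} = \mathrm{span}(v_d)$.

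To globalize, I would argue by connectedness. The subset $S \subseteq \mathcal{M}$ of points lying on the affine line $\{w_0 + t v_d : t \in \R\}$ is open in $\mathcal{M}$ (by the above argument applied at each of its points), and is also closed in $\mathcal{M}$ (as the intersection of $\mathcal{M}$ with a closed subset of $\R^d$). By connectedness of $\mathcal{M}$, $S = \mathcal{M}$, so $\mathcal{M}$ is a straight line parallel to $v_d$. The main subtlety I anticipate is this final globalization step: the paper does not explicitly assume $\mathcal{M}$ is connected, but this is the natural reading of ``1-dimensional manifold'' in this setting; if $\mathcal{M}$ had multiple connected components, the identical argument would show each component is a straight line parallel to the same constant vector $v_d$.
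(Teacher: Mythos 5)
Your proof is correct and takes a genuinely different route from the paper. The paper's proof is a one-liner: it sets $\kappa = 0$ (legitimate under~\Cref{assum:straight}) and invokes \Cref{lem:similarmove}, which with $\kappa = 0$ gives the exact identity $\flatproj(w)\nabla L(w) = -\frac{dx(\tau)}{d\tau}\big|_{\tau=T}$ for $w = x(T) \in \river$; since the left side is $\nabla L(w) \parallel v_d$ by~\Cref{assum:river}, the river's velocity field is everywhere parallel to the fixed vector $v_d$, so it is a straight line. Your argument is more elementary and self-contained: you integrate $\frac{d}{dt}\nabla L(w_0 + t v_d) = \lambda_d(\nabla^2 L(\cdot))\,v_d$ along the line to show the gradient stays parallel to $v_d$, then appeal to the contrapositive of the uniqueness assumption (\Cref{assum:technical}.5) to conclude the entire line segment lies on $\river$. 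The paper's route leans on the heavy machinery of \Cref{lem:similarmove} (which in turn rests on \Cref{lem:boundpartialphi}, \Cref{lem:simgrad}, and \Cref{lem:alignspace}), so it is shorter only because that machinery is already in place for other theorems; your argument would stand alone without any of it. Both proofs implicitly rely on $\river$ being connected and $v_d$ being a single constant vector over the relevant component of $U$, a point you correctly flag. One small remark: you don't actually need the open-closed connectedness argument at the end — once you know the tangent space of the one-dimensional manifold $\river$ at each of its points is $\mathrm{span}(v_d)$, connectedness of $\river$ plus the constant tangent direction already forces it to be a line segment, which is the paper's implicit logic as well.
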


\begin{proof}
    In this case, the $\kappa$ in~\Cref{assum:technical} is $0$ and this is a direct corollary of of~\Cref{lem:similarmove}.
\end{proof}

\begin{lemma}
\label{lem:separable}
Under~\Cref{assum:river,assum:straight,assum:technical}, there exists functions $g$ and $h$, such that for any $w \in U$ satisfying that $\mathcal{B}(w, \frac{2 \normbound}{\gamma}) \subset U$, it holds that
\begin{align*}
    L(w) = g(\Phi(w)) + h(w - \Phi(w)).
\end{align*}
Furthermore, $h$ is a $\gamma$-strongly convex function when constrained on the range of $\sharpproj$. 
\end{lemma}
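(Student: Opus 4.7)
The plan is to exploit the Straight River assumption to show that the Hessian is block-diagonal with respect to the splitting $\R^d = \mathrm{span}(v_d) \oplus \mathrm{range}(\sharpproj)$ throughout $U$, and then integrate this structural fact to obtain a global separability of $L$. First, I would observe that~\Cref{assum:straight} forces $v_d(\nabla^2 L(w))$ to be a single constant unit vector $v_d$ on all of $U$. Since $v_d$ remains an eigenvector of $\nabla^2 L(w)$ at every $w \in U$, both $\mathrm{span}(v_d)$ and $\mathrm{range}(\sharpproj) = v_d^\perp$ are invariant under the Hessian. Therefore $\sharpproj \nabla^2 L(w) v_d = 0$, which is equivalent to saying that the mixed second derivative of $L$ in the river direction $v_d$ and any mountain direction $s \in \mathrm{range}(\sharpproj)$ vanishes identically on $U$.

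Fix a reference point $y_0 \in \river$; by~\Cref{lem:straightriver}, the river is the affine line $y_0 + \R v_d$ inside $U$. For any admissible $w$, the projection $\Phi(w) \in \river$ exists by~\Cref{lem:phiexists}, and because the projection flow~\eqref{eq:projection} is driven by $\sharpproj \nabla L$, the displacement $s := w - \Phi(w)$ lies in $\mathrm{range}(\sharpproj)$. I would then define
\begin{align*}
g(y) \;:=\; L(y) \text{ for } y \in \river, \qquad h(s) \;:=\; L(y_0 + s) - L(y_0).
\end{align*}
The main identity to establish is $L(\Phi(w) + s) - L(\Phi(w)) = L(y_0 + s) - L(y_0)$, i.e.\ that the perpendicular cross-section of $L$ has the same shape everywhere along the river. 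This is a Fubini-type consequence of the vanishing mixed partial: integrating $\partial_y \partial_s L$ over the rectangle with vertices $y_0, \Phi(w), \Phi(w)+s, y_0+s$ in the 2-plane $y_0 + \mathrm{span}(v_d, s)$ gives zero, and two paths around this rectangle compute the two sides of the identity. Rearranging yields $L(w) = g(\Phi(w)) + h(w - \Phi(w))$. The strong convexity of $h$ on $\mathrm{range}(\sharpproj)$ is immediate from $\nabla^2 h(s) = \sharpproj \nabla^2 L(y_0 + s) \sharpproj$, whose eigenvalues are all at least $\eigen{d-1}{\nabla^2 L(y_0 + s)} \ge \gamma + 4\flatgamma \ge \gamma$ by~\Cref{assum:technical}.3.

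The main obstacle I anticipate is verifying that the integration rectangle stays inside $U$, since a priori the reference point $y_0$ may be far from $\Phi(w)$ and the intermediate points of the rectangle need not satisfy the ball condition $\mathcal{B}(\,\cdot\,, 2\normbound/\gamma) \subset U$. Two ways to circumvent this: (i) prove the identity locally along $\river$ in a tubular neighborhood where the rectangle clearly stays in $U$, then extend to the whole admissible set by the analyticity of $L$ (\Cref{assum:technical}.1) and connectedness of $\river \cap U$; or (ii) note that it suffices to define $h$ on the set $\{s \in \mathrm{range}(\sharpproj) : y + s \in U \text{ for some } y \in \river\}$ and show consistency across different choices of the base point $y$, which again reduces to the local Fubini argument. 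Either route keeps the analysis close to the structural fact that the Hessian splits along $v_d \oplus v_d^\perp$, which is the substance of the lemma.
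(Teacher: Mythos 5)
Your proof is correct and follows essentially the same route as the paper. Both hinge on the observation that under the Straight River assumption $v_d$ is a fixed vector, so the Hessian is block-diagonal along $\mathrm{span}(v_d)\oplus v_d^\perp$ and the mixed second derivative vanishes; the paper packages this by defining $h(s)=\int_0^1\langle s,\nabla L(a+\tau s)\rangle\,d\tau$ and arguing the integral is independent of the base point $a\in\river$, which is precisely your Fubini-rectangle computation. One minor remark: your concern about the integration path staying inside $U$ is legitimate, but the paper's own proof shares the same implicit assumption (it invokes "for any $a\in\river$" without verifying the segment $a+\tau(w-\Phi(w))$ remains in $U$), so this is a gap in the original argument as much as in yours, and your proposed fixes via analyticity or local consistency are reasonable ways to close it.
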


\begin{proof}
    We will choose $g$ as the constraint of $L$ on $\river$.  Now $w - \Phi(w)$ will always fall in the range of $\sharpproj$. Consider any $y$ in the range of $\sharpproj$ and as $\nabla v_d(w)[v] = 0$, we have that
    \begin{align*}
        y^T \nabla^2 L(w) v_d = 0.
    \end{align*}
    This then suggest that
    \begin{align*}
        \nabla[ \langle \nabla L(w), y\rangle][v_d] = 0.
    \end{align*}
    We then have for any $a \in \river$, by~\Cref{lem:straightriver},
    \begin{align*}
        L(w) - L(\Phi(w))  &= \int_{0}^{1} \langle (w - \Phi(w)), \nabla L(\Phi(w) + \tau(w - \Phi(w))) \rangle d \tau \\
         &= \int_{0}^{1} \langle (w - \Phi(w)), \nabla L(a + \tau(w - \Phi(w))) \rangle d \tau.
    \end{align*}

    We will then define $h(w - \Phi(w)) =  \int_{0}^{1} \langle (w - \Phi(w)), \nabla L(a + \tau(w - \Phi(w))) \rangle d \tau$. and this concludes the proof.

    Now, as $h(w - \Phi(w)) = L(w) - L(\Phi(w))$, $\nabla^2 h(y)$ when constrained on the range of $\sharpproj$ has an eigenvalue greater than $\gamma$.
\end{proof}

We will first consider the mixing dynamics of the current SGD iterates on a strongly convex loss $h$ with a minimizer at $0$.
\begin{align}
\label{eq:sgd-h}
    y(k + 1) = y_k - \eta \nabla h(y_k) - \eta \noise_k, y(0) = 0,{\noise_k} \sim \Normal(0, \sigma^2 \identity)
\end{align}

We will define a coupling process $\tilde y_k$ as 
\begin{align}
    \label{eq:sgd-h-gauss}
    \tilde y(k + 1) = \tilde y_k - \eta H \tilde y_k - \eta \noise_k, w(0) = 0, \noise_k \sim \Normal(0, \sigma^2 \identity), \tilde y(0) = 0.
\end{align}
Here $H = \nabla^2 h(0)$ is positive definite.

\begin{assumption}[Regularity of $h$]
\label{assum:regular}
We will assume the following for the function $h$, constant $\delta \in (0,1]$, learning rate $\eta$.
    \begin{enumerate}
        \item The smallest eigenvalue of $\nabla^2 h(y)$ within $\mathcal{B}(0,r)$ is at least $\gamma > 0$ and the largest eigenvalue for $H$ is at most $\maxgamma$.
        \item $\forall y, h(y) \in [0, M]$.
        \item  $\forall y \in \mathcal{B}(0,r), \| \nabla h(y) \|_2 \le \normbound, \| \nabla^3 h(y) \|_2 \le \rho$.
        \item $T > 1 / \gamma$.
        \item $\eta < 1/(2\maxgamma)$.
        \item $\eta \rho^2 \sigma^2 \le {\gamma^3}/{(1600  d \log(8\gamma T/\delta))}$.
        \item $10\frac{\sqrt{\eta} \sigma}{\sqrt{\gamma}} \sqrt{d \log(8 \gamma T / \delta)} + 400 \eta \rho \sigma^2 d \log(8 \gamma T / \delta) / \gamma^2 \le r$.
    \end{enumerate}
\end{assumption}

We will first show that $\tilde y_k$ will be bounded with a high probability for $T/\eta$ steps.

\begin{lemma}
\label{lem:normboundsgd}
    For any $\delta \in (0,1]$, with probability $1 - \delta$, for $\tilde y_k$ defined in~\Cref{eq:sgd-h-gauss}, under~\Cref{assum:regular}, it holds that for any $k \le T/\eta$, 
    \begin{align*}
   \| \tilde y_k \|_2 \le \frac{10 \sqrt{\eta} \sigma}{\sqrt{\gamma}} \sqrt{d \log(8\gamma T/\delta)}.
    \end{align*}
\end{lemma}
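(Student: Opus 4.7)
The recursion \eqref{eq:sgd-h-gauss} is a linear Gaussian autoregression, so the first step is to unroll it into closed form. Starting from $\tilde y_0 = 0$, induction gives
\begin{align*}
\tilde y_k = -\eta \sum_{j=0}^{k-1} (I - \eta H)^{k-1-j}\, \noise_j,
\end{align*}
which is a linear combination of independent Gaussians and hence itself Gaussian with mean zero and covariance
\begin{align*}
\Sigma_k = \eta^2 \sigma^2 \sum_{j=0}^{k-1} (I - \eta H)^{2j}.
\end{align*}
This explicit form reduces the problem to a standard norm-of-Gaussian concentration question.

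The second step is to control $\Sigma_k$ in operator norm. Assumption~\ref{assum:regular}.1 gives $\gamma I \preceq H \preceq \maxgamma I$, and Assumption~\ref{assum:regular}.5 gives $\eta \maxgamma \le 1/2$, so $0 \preceq I - \eta H \preceq (1 - \eta \gamma) I$. Summing the resulting geometric series in PSD order,
\begin{align*}
\Sigma_k \preceq \eta^2 \sigma^2 \sum_{j=0}^{\infty} (1-\eta\gamma)^{2j} I \;=\; \frac{\eta^2 \sigma^2}{1-(1-\eta\gamma)^2} I \;\preceq\; \frac{\eta \sigma^2}{\gamma}\, I,
\end{align*}
using $1-(1-\eta\gamma)^2 \ge \eta\gamma$ for $\eta\gamma \le 1$. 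In particular $\mathrm{tr}(\Sigma_k) \le d\eta\sigma^2/\gamma$ and $\|\Sigma_k\|_{\mathrm{op}} \le \eta\sigma^2/\gamma$.

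The third step is a Gaussian tail bound plus a union bound over $k \in \{0,1,\ldots,\lceil T/\eta\rceil\}$. For a centered Gaussian $\tilde y_k$ with covariance dominated by $(\eta\sigma^2/\gamma)I$, the standard concentration inequality $\prob(\|\tilde y_k\|_2 \ge \sqrt{\|\Sigma_k\|_{\mathrm{op}}}(\sqrt{d} + \sqrt{2\log(1/\delta')})) \le \delta'$ applies. Choosing $\delta' = \delta\eta/(8T)$ and union bounding over at most $\lceil T/\eta\rceil$ time steps, together with the fact that Assumption~\ref{assum:regular}.5 forces $1/\eta \ge 2\maxgamma \ge 2\gamma$ so $\log(T/(\eta\delta))$ is of order $\log(\gamma T/\delta)$, the event $\|\tilde y_k\|_2 \le (10\sqrt{\eta}\sigma/\sqrt{\gamma})\sqrt{d\log(8\gamma T/\delta)}$ holds simultaneously for all $k\le T/\eta$ with probability at least $1-\delta$; the factor $10$ and the constant $8$ inside the log are slack to absorb both the $\sqrt{d}$ term and the union-bound constants.

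\textbf{Expected obstacle.} The genuinely routine part is the Gaussian norm concentration; the mildly delicate bookkeeping is making sure that $\log(T/(\eta\delta))$ can be bounded by a constant multiple of $\log(8\gamma T/\delta)$ uniformly, which is where the learning-rate upper bound $\eta \le 1/(2\maxgamma)$ together with $\maxgamma \ge \gamma$ is essential (it is what keeps $1/\eta$ controlled by $\gamma$ up to constants, rather than arbitrarily small). No additional structure of $h$ beyond the Hessian bounds at zero is needed at this stage, since the recursion~\eqref{eq:sgd-h-gauss} uses only the linearization $H$.
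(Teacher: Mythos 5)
The closed-form unrolling of \eqref{eq:sgd-h-gauss} and the covariance bound $\Sigma_k \preceq (\eta\sigma^2/\gamma)\,I$ are correct and match what the paper implicitly uses. However, the union bound in your third step has a genuine gap, and the ``expected obstacle'' you flag is in fact fatal to the argument as written.

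You union bound over all $\lceil T/\eta\rceil$ time steps with $\delta' = \delta\eta/(8T)$, which produces a factor of $\sqrt{\log(T/(\eta\delta))}$. You then assert that because Assumption~\ref{assum:regular}.5 gives $1/\eta \ge 2\maxgamma \ge 2\gamma$, this log is ``of order'' $\log(\gamma T/\delta)$. But the implication is backwards: $1/\eta \ge 2\gamma$ gives $\log(T/(\eta\delta)) \ge \log(2\gamma T/\delta)$, a \emph{lower} bound, not an upper one. The assumptions place no lower bound on $\eta$, so $\log(1/(\eta\gamma))$ — the excess of your log over the target $\log(8\gamma T/\delta)$ — is unbounded. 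As $\eta \to 0$ the union-bound cost diverges, and the stated bound $\tfrac{10\sqrt{\eta}\sigma}{\sqrt{\gamma}}\sqrt{d\log(8\gamma T/\delta)}$ cannot be recovered no matter how large you take the constant $10$.

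The paper's proof avoids exactly this pitfall by a two-scale argument. It first subsamples the trajectory at $I = \lceil\gamma T\rceil$ coarse checkpoints $Y_i = \tilde y_{i\lceil 1/(\eta\gamma)\rceil}$ and union bounds only over these $I$ points, which costs $\log(\gamma T/\delta)$ rather than $\log(T/(\eta\delta))$. Within each block of $\lceil 1/(\eta\gamma)\rceil$ steps, it controls the \emph{maximum} deviation from the block's anchor $Y_i$ in one shot via Doob's maximal inequality (Lemma~\ref{lem:doob}) applied to the submartingale $\|m_k\|_2^2$, which incurs no additional $\log$ in the block length. Combining the two layers gives the claimed bound with only $\log(8\gamma T/\delta)$. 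To repair your proof you would need to replace the per-step union bound with a maximal-inequality argument of this kind (or a chaining/comparison argument achieving the same effect); a direct union bound over $T/\eta$ steps cannot produce the stated constant.
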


\begin{proof}

For integer $I = \lceil \gamma T \rceil$. We first have that for $i \le I$
\begin{align*}
    \tilde y_{i \lceil\frac{1}{\eta \gamma} \rceil} = (1 - \eta \gamma)^{\lceil\frac{1}{\eta \gamma} \rceil} \tilde y_{(i-1)\lceil\frac{1}{\eta \gamma} \rceil }  + \eta \sum_{\tau = 0}^{\lceil\frac{1}{\eta \gamma} \rceil}  (1 - \eta \gamma)^{\lceil\frac{1}{\eta \gamma} \rceil - t}  \noise_{(i-1)\lceil\frac{1}{\eta \gamma} \rceil + \tau}.
\end{align*}

Denote $\bar \noise_{k} = \eta \sum_{\tau = 0}^{\lceil\frac{1}{\eta \gamma} \rceil}  (1 - \eta \gamma)^{\lceil\frac{1}{\eta \gamma} \rceil - t}  \noise_{(i-1)\lceil\frac{1}{\eta \gamma} \rceil + \tau}$, then $\noise_k$ is a normal vector with variance
\begin{align*}
     \eta^2 \sum_{\tau = 0}^{\lceil\frac{1}{\eta \gamma} \rceil} (1 - \eta \gamma)^{2(\lceil\frac{1}{\eta \gamma} \rceil - t)} \sigma^2 \identity \le \frac{\eta \sigma^2}{2\gamma - \eta \gamma^2} \le  \frac{\eta \sigma^2}{\gamma}.
\end{align*}

Further, denote $Y_i = y_{i \lceil\frac{1}{\eta \gamma} \rceil}$ and $e_{\gamma} = (1 - \eta \gamma)^{\lceil\frac{1}{\eta \gamma} \rceil} < \frac{1}{e}$, then
\begin{align*}
    Y_i = e_{\gamma} Y_{i - 1} + \bar \noise_{i - 1} = \sum_{j \le i - 1} e_{\gamma}^{j - 1} \noise_{i - j}
\end{align*}

Then each variable $Y_i$ is also a Gaussian variable with variance smaller than 
\begin{align*}
     \sum_{j \le i - 1} e_{\gamma}^{2(j - 1)} \E[  \bar \noise_{i - j}^T \bar \noise_{i - j}] \le \frac{1}{1-1/e^2}\frac{\eta \sigma^2} {\gamma} \identity \le \frac{2\eta \sigma^2}{\gamma} \identity .
\end{align*}

Hence, by~\Cref{lem:normbound}, for each $i$, it holds that
\begin{align*}
    \prob(\Big | Y_i \Big| > \frac{2 \sqrt{\eta} \sigma}{\sqrt{\gamma}} \sqrt{d \log(4I/\delta)})) < \delta/2I.
\end{align*}

Using union bound,
\begin{align*}
    \prob(\exists i \le I, \Big | Y_i \Big| > \frac{2 \sqrt{\eta} \sigma}{\sqrt{\gamma}} \sqrt{d \log(4I/\delta)}) < \delta/2.
\end{align*}

We now proceed to bound the distance of $y_k$ compared with close $Y_i$, without loss of generality, considering $i = 0$, we will define a new process called $m_k$ satisfying that
\begin{align*}
    m_k = \sum_{j \le k} (1 - \eta \gamma)^{\lceil\frac{1}{\eta \gamma} \rceil - j} \noise_j. 
\end{align*}
Then $m_k$ is a martingale and each $m_k$ is a Gaussian vector. In particular, $m_{\lceil\frac{1}{\eta \gamma}\rceil} = \bar \noise_1$. This further suggests that $\| m_k \|^2$ is a super martingale
\begin{align*}
    \E[\| m_k \|_2^2 \mid m_{k - 1}] \ge \| m_{k - 1} \|_2^2.
\end{align*}

By Doob's lemma (\Cref{lem:doob})
\begin{align*}
    \prob(\sup_{k \le \lceil\frac{1}{\eta \gamma} \rceil} \| m_k \|_2^2 > C^2) &\le
    \prob(\sup_{k \le \lceil\frac{1}{\eta \gamma} \rceil} \exp(\lambda  \| m_k \|_2^2 ) > \exp(\lambda C^2))\\
    &\le \E[{\exp(\lambda \| m_{\lceil\frac{1}{\eta \gamma} \rceil } \|_2^2 - \lambda C^2)}]\\
    &= \E[{\exp(\lambda \| \bar g_1 \|_2^2 - \lambda C^2)}].
\end{align*}

Following the same line of proof as~\Cref{lem:normbound}, we have that 
\begin{align*}
 \prob(\sup_{k \le \lceil\frac{1}{\eta \gamma} \rceil} \| m_k \|_2 > \frac{2 \sqrt{\eta} \sigma}{\sqrt{\gamma}} \sqrt{d \log(4K/\delta)}) \le \delta / 2K
\end{align*}

We further note that $\Big | y_k - Y_0 \Big | \le (1 - \eta \gamma)^{-\lceil\frac{1}{\eta \gamma} \rceil} m_k \le 4 m_k$. We have that for any $k < K$
\begin{align*}
 \prob(\sup_{k \le \lceil\frac{1}{\eta \gamma} \rceil} \| y_k - Y_0 \|_2 > \frac{8 \sqrt{\eta} \sigma}{\sqrt{\gamma}} \sqrt{d \log(4K/\delta)}) \le \delta / 2K
\end{align*}

Combining with the bound on $Y_k$, we have that
\begin{align*}
    \prob(\sup_{0 \le k \le T} |y_k| > \frac{10 \sqrt{\eta} \sigma}{\sqrt{\gamma}} \sqrt{d \log(8\gamma T/\delta)}))  \le \delta.
\end{align*}
The proof is then complete.
\end{proof}

The following lemma states that $y_k$ and $\tilde y_k$ are close with high probability.
\begin{lemma}
\label{lem:track}
    Assume function $h(y)$ is $\gamma$-strong convex in $\mathcal{B}(0,r)$ and has a minimizer at $0$,then for $\delta \in (0,1)$, under~\Cref{assum:regular}, it holds that with probability $1 - \delta$,
    \begin{align*}
        \forall k < T /\eta, \| \tilde y_k - y_k \|_2  \le   400{\eta \rho \sigma^2 d\log(8\gamma T/\delta)}/{\gamma^2}, 
        &y_k \in \mathcal{B}(0, r), \tilde y_k \in \mathcal{B}(0,r)
    \end{align*}
\end{lemma}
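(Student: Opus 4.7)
The plan is to establish the claim by a coupling and induction argument on the error process $d_k := y_k - \tilde y_k$. First, I would invoke~\Cref{lem:normboundsgd} to obtain, with probability at least $1-\delta$, the uniform bound
\begin{align*}
\|\tilde y_k\|_2 \;\le\; \frac{10 \sqrt{\eta}\sigma}{\sqrt{\gamma}}\sqrt{d \log(8\gamma T/\delta)} \quad\text{for all } k \le T/\eta.
\end{align*}
By Assumption~\ref{assum:regular}.7, this guarantees $\tilde y_k \in \mathcal{B}(0,r)$ on the good event, and the rest of the proof takes place deterministically on that event.

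Next, subtracting the two recursions~\eqref{eq:sgd-h} and~\eqref{eq:sgd-h-gauss} cancels the noise exactly and produces the noiseless recursion
\begin{align*}
d_{k+1} \;=\; (I - \eta H)\, d_k \;-\; \eta \bigl(\nabla h(y_k) - H y_k\bigr),
\end{align*}
with $d_0 = 0$. Since $H = \nabla^2 h(0)$ has eigenvalues in $[\gamma, \maxgamma]$ and $\eta < 1/(2\maxgamma)$ by Assumption~\ref{assum:regular}.5, the contraction $\|I - \eta H\|_{\mathrm{op}} \le 1 - \eta \gamma$ holds. Furthermore, whenever $y_k \in \mathcal{B}(0,r)$, a second-order Taylor expansion of $\nabla h$ around $0$ with the $\rho$-bound on $\nabla^3 h$ gives the pointwise residual bound $\|\nabla h(y_k) - H y_k\|_2 \le \rho \|y_k\|_2^2/2$. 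Iterating the contraction yields
\begin{align*}
\|d_{k+1}\|_2 \;\le\; \frac{\eta \rho}{2} \sum_{j=0}^{k} (1-\eta\gamma)^{k-j}\, \|y_j\|_2^2 \;\le\; \frac{\rho}{2\gamma}\, \max_{j \le k} \|y_j\|_2^2.
\end{align*}

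The heart of the argument is a simultaneous induction on $k$ showing both (i) $y_k \in \mathcal{B}(0,r)$ and (ii) $\|d_k\|_2 \le 400 \eta \rho \sigma^2 d \log(8\gamma T/\delta)/\gamma^2$. Given the inductive hypothesis, the triangle inequality and the bound on $\|\tilde y_j\|_2$ yield $\|y_j\|_2^2 \le 2\|\tilde y_j\|_2^2 + 2\|d_j\|_2^2$, and Assumption~\ref{assum:regular}.6 ($\eta \rho^2 \sigma^2 \le \gamma^3/(1600 d \log(8\gamma T/\delta))$) is exactly calibrated so that the $\|d_j\|^2$ contribution is dominated by the Gaussian envelope $\|\tilde y_j\|_2^2 \le 100\, \eta \sigma^2 d \log(8\gamma T/\delta)/\gamma$. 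Plugging into the summed recursion produces $\|d_{k+1}\|_2 \lesssim \eta^2 \rho \sigma^2 d \log(8\gamma T/\delta)/\gamma^2$, comfortably inside the inductive window of (ii). Finally, Assumption~\ref{assum:regular}.7 is set up precisely so that $\|\tilde y_{k+1}\|_2 + \|d_{k+1}\|_2 \le r$, confirming $y_{k+1} \in \mathcal{B}(0,r)$ and closing the induction.

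The main obstacle is managing the circular dependence: the Taylor residual $\rho \|y_j\|^2/2$ requires $y_j \in \mathcal{B}(0,r)$ in order to use the $\nabla^3 h$ bound, yet verifying $y_j \in \mathcal{B}(0,r)$ in turn requires the bound on $\|d_j\|_2$, which itself depends on past residuals. The simultaneous induction resolves this, but the bookkeeping must be handled carefully: one has to check that the constants appearing in Assumptions~\ref{assum:regular}.6 and~\ref{assum:regular}.7 are tight enough that the inductive bound on $\|d_k\|_2$ strictly improves upon the hypothesis at each step, ensuring the argument does not drift toward the boundary of $\mathcal{B}(0,r)$ over the $T/\eta$ iterations.
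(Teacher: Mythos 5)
Your proposal is correct and follows essentially the same route as the paper: both invoke \Cref{lem:normboundsgd} to control $\|\tilde y_k\|$ on a high-probability event, subtract the two recursions so the Gaussian noise cancels, bound the residual by a Taylor/$\nabla^3 h$ estimate, and close a simultaneous induction on $\|\tilde y_k - y_k\|$ together with $y_k\in\mathcal{B}(0,r)$ using \Cref{assum:regular}.6--7. The only cosmetic difference is that you unroll the contraction into a geometric sum while the paper argues per-step against a fixed point; the bookkeeping and constants land in the same place.
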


\begin{proof}
By~\Cref{lem:normboundsgd}, with probability $1 - \delta$, 
\begin{align*}
   \forall k < T /\eta,  \| \tilde y_k \|_2^2 \le \frac{100 {\eta} \sigma^2}{{\gamma}} {d \log(8\gamma T/\delta)} 
\end{align*}

We will use $C$ as a shorthand for $100 {d \log(8\gamma T/\delta)}$. Under such scenario, define $\nu_k =  \tilde y_k - y_k $, we will prove by induction for $k \le T/\eta$ that 
\begin{align}
\label{eq:ind}
    \| \nu_k \|_2 \le 4{\eta \rho \sigma^2 C}/{\gamma^2} , y_k \in \mathcal{B}(0,r).
\end{align}

Clearly $\nu_0 = 0$, satisfies the induction hypothesis.
Assuming~\Cref{eq:ind} hold for $t$, then 
\begin{align*}
    y(k + 1) &=  y_k - \eta \nabla L(y_k) - \eta \noise_k \\
    &= y_k - \eta \nabla^2 L(0) y_k + e_k - \eta \noise_k \\
    &= \tilde y_k( 1 - \eta\nabla^2 L(0)) - \eta \noise_k + \nu_k( 1 - \eta\nabla^2 L(0)) + e_k.
\end{align*}

Here $\| e_k \| = \| -\eta ( \nabla L(y_k) -  \eta \nabla^2 L(0) y_k) \|_2 \le \eta \rho \|y_k\|_2^2 \le 2 \eta \rho (\|\tilde y_k \|_2^2 + \| \nu_k\|_2^2)$. Hence we have that
\begin{align*}
    \| \nu_{k + 1} \|_2 \le (1 - \eta \gamma) \| \nu_k \|_2 + 2 \eta \rho (\|\tilde y_k \|_2^2 + \| \nu_k\|_2^2). 
\end{align*}

As $\| \nu_k \| \le 4\frac{\eta \rho \sigma^2 C}{\gamma^2} \le \frac{\gamma}{4 \rho}$, we have that $2 \rho \| \nu_k \|_2^2 \le \eta \gamma \| \nu_k\|_2^2 / 2$. 

Hence 
\begin{align*}
    \| \nu_{k + 1} \|_2 &\le (1 - \eta \gamma/2)  \| \nu_k \|_2 + 2 \eta \rho \frac{\eta \sigma^2 C}{\gamma}  \\
    &= (1 - \eta \gamma/2)  \| \nu_k \|_2  + 2  \frac{\eta^2 \rho \sigma^2 C}{\gamma}
\end{align*}

By induction $\| \nu_k \|_2 \le 4{\eta \rho \sigma^2 C}/{\gamma^2}$. It is then easy to check $\| \nu_{k + 1} \|_2 \le  4{\eta \rho \sigma^2 C}/{\gamma^2}$.
\end{proof}

The following lemma tracks the changes of $\E[h(y_k)]$.
\begin{lemma}
\label{lem:tracksgd}
    Assume function $h(y)$ is $\gamma$-strong convex in $\mathcal{B}(0,r)$ and has a minimizer at $0$,then for $\delta \in (0,1)$, denote $100 \log(8\gamma T / \delta) $ as $C$, under~\Cref{assum:regular}, it holds that $\forall t \in [1/\eta \gamma, T /\eta]$,
    \begin{align*}
    \Big | \E[h(\tilde y_k)] -  \eta \sigma^2d / 2\Big|\le \eta^2 \sigma^2 \mathrm{Tr}(H) + \frac{ \normbound \rho C}{\gamma^2} d \eta  \sigma^2 + \rho \left( \frac{ d \eta \sigma^2  C}{{\gamma}} \right)^{3/2} + 2\delta M + \delta \eta \sigma^2 d / 2
    \end{align*}
\end{lemma}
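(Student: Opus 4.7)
The plan is to expand $h$ quadratically around its minimizer at $0$, which reduces the statement to a second-moment computation for the Gaussian linear surrogate $\tilde y_k$; this surrogate moment is solvable in closed form via the discrete Lyapunov equation and yields the leading $\eta\sigma^2 d/2$. The three residual pieces---the cubic Taylor error, the coupling error $y_k - \tilde y_k$, and the $\delta$ failure probability---should each be absorbed into one of the error terms on the right-hand side.

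I would first fix a good event $\mathcal{E}$ on which the conclusions of both~\Cref{lem:normboundsgd} and~\Cref{lem:track} hold, so that $\|\tilde y_k\|_2 \leq 10\sqrt{\eta\sigma^2 d C/\gamma}$ and $\|\nu_k\|_2 := \|y_k - \tilde y_k\|_2 \leq 400\eta\rho\sigma^2 C/\gamma^2$ for all $k\leq T/\eta$, with $\prob(\mathcal{E}) \geq 1-\delta$ by a union bound. On $\mathcal{E}^c$ the a-priori bound $h\leq M$ delivers the $2\delta M$ term and the Gaussian tail of $\E[\tilde y_k^\top H \tilde y_k\,\mathbf{1}_{\mathcal{E}^c}]$ contributes at most $\delta\eta\sigma^2 d/2$, matching the last two terms of the stated bound. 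On $\mathcal{E}$, I would Taylor-expand $h(y) = \tfrac12 y^\top H y + R(y)$ with $|R(y)| \leq \tfrac{\rho}{6}\|y\|_2^3$ (using $h(0)=0$, $\nabla h(0)=0$, and $\|\nabla^3 h\|_{\mathrm{op}}\leq\rho$), so that $\E[h(\tilde y_k)\mathbf{1}_{\mathcal{E}}]$ splits into a quadratic part $\tfrac12 \mathrm{Tr}(H\Sigma_k)$ and a cubic remainder bounded by $\tfrac{\rho}{6}\E\|\tilde y_k\|_2^3$, which for a Gaussian with covariance at most $\eta\sigma^2 d C/\gamma$ is of order $\rho(d\eta\sigma^2 C/\gamma)^{3/2}$.

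Next I would compute $\mathrm{Tr}(H\Sigma_k)$ exactly. The covariance of $\tilde y_k$ satisfies the discrete Lyapunov recursion $\Sigma_{k+1} = (I-\eta H)\Sigma_k(I-\eta H) + \eta^2\sigma^2 I$, which iterates to $\Sigma_k = \eta^2\sigma^2\sum_{j=0}^{k-1}(I-\eta H)^{2j}$ and converges at geometric rate $(1-\eta\gamma)^{2k}$ to the stationary $\Sigma_\infty = \eta\sigma^2\,H^{-1}(2I-\eta H)^{-1}$. The assumption $k\geq 1/(\eta\gamma)$ makes the tail exponentially small, so $\mathrm{Tr}(H\Sigma_k)$ equals $\eta\sigma^2\mathrm{Tr}((2I-\eta H)^{-1})$ up to negligible error; a first-order Neumann expansion $\mathrm{Tr}((2I-\eta H)^{-1}) = d/2 + \eta\mathrm{Tr}(H)/4 + O(\eta^2\|H\|_{\mathrm{op}}\mathrm{Tr}(H))$ yields the main constant $\eta\sigma^2 d/2$ with an $\eta^2\sigma^2\mathrm{Tr}(H)$ correction, exactly the first error term of the statement.

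Finally, to bridge from the linear surrogate to the nonlinear iterate (needed if the lemma is read about $y_k$), I would write $h(y_k) - h(\tilde y_k) = \langle \nabla h(\xi_k), \nu_k\rangle$ for some $\xi_k$ between them, bound $\|\nabla h(\xi_k)\|_2\leq \normbound$ by~\Cref{assum:regular}, and use $\|\nu_k\|_2\leq 400\eta\rho\sigma^2 C/\gamma^2$ from~\Cref{lem:track}; the natural $d$-factor that arises from summing the pointwise inequality against the Gaussian $\tilde y_k$ (or from averaging across coordinates) reproduces the $\normbound\rho C d\eta\sigma^2/\gamma^2$ term. Summing the four sources gives the claim. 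The main obstacle I anticipate is the bookkeeping of $d$ and $\mathrm{Tr}(H)$ through these error budgets: the Neumann-expansion correction must not inherit a condition-number blow-up $\maxgamma/\gamma$, and the coupling term must land precisely on the stated product $\normbound\rho C d\eta\sigma^2/\gamma^2$. This is routine but care-intensive, and likely requires exploiting the nuclear-norm bound $\|H\|_*\leq\tau$ from the global regularity assumption rather than a loose operator-norm bound at the coupling step.
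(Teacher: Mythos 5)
Your approach matches the paper's proof in all essentials: condition on the good events from \Cref{lem:normboundsgd} and \Cref{lem:track}, Taylor-expand $h$ around $0$, compute $\mathrm{Tr}(H\Sigma_k)$ in closed form via the Lyapunov recursion, and park the coupling error and off-event contributions in the $\delta M$, $\delta\eta\sigma^2 d/2$, and $\normbound\rho C d\eta\sigma^2/\gamma^2$ buckets. The paper handles the conditioned Gaussian tail by changing variables to $u_k = \Sigma_k^{-1/2}\tilde y_k$ and integrating the chi-density, and bounds the per-eigenvalue deviation $\bigl|\eta\sigma^2\tfrac{1-(1-\eta\gamma_i)^{2k}}{2-\eta\gamma_i} - \eta\sigma^2/2\bigr| \le \eta^2\sigma^2\gamma_i/2$ directly rather than via a Neumann expansion, but these are presentational differences, not conceptual ones.

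There is, however, a factor-of-two inconsistency you should reconcile. You correctly write $h(y) = \tfrac12 y^\top H y + R(y)$ with $|R(y)| \le \tfrac\rho6\|y\|_2^3$, so on the good event the quadratic part of $\E[h(\tilde y_k)]$ is $\tfrac12\mathrm{Tr}(H\Sigma_k)$. You then compute $\mathrm{Tr}(H\Sigma_k) \approx \eta\sigma^2 d/2$, which under your own expansion makes the quadratic contribution $\approx \eta\sigma^2 d/4$, not the $\eta\sigma^2 d/2$ you claim as the leading term. The paper's proof avoids this contradiction by omitting the $\tfrac12$ altogether: it asserts $\|h(y) - y^\top\nabla^2 h(0) y\| \le \rho\|y\|^3$, which is Taylor's theorem without the $\tfrac{1}{2!}$, and then compares $\E[h(\tilde y_k)]$ directly against $\E[(\tilde y_k)^\top H \tilde y_k]$. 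Your proposal carries both the $\tfrac12$ and the $\eta\sigma^2 d/2$ target, which cannot both hold; you should either drop the $\tfrac12$ to reproduce the paper's literal steps, or keep it and land on $\eta\sigma^2 d/4$ and flag the resulting constant-factor discrepancy with the stated lemma.

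A smaller imprecision: the factor $d$ in the coupling term does not arise from ``summing the pointwise inequality against the Gaussian.'' It is already present in the conclusion of \Cref{lem:track}, whose bound $\|y_k - \tilde y_k\|_2 \le 400\eta\rho\sigma^2 d\log(8\gamma T/\delta)/\gamma^2$ carries an explicit $d$; the coupling error is simply $\normbound\|y_k - \tilde y_k\|_2$, so the $d$ is inherited from that lemma rather than generated at this step.
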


\begin{proof}
    By~\Cref{lem:track}, with probability $1 - \delta$, 
    \begin{align*}
        \| \tilde y_k - y_k \|_2  \le   4{\eta \rho d\sigma^2 C }/{\gamma^2}, y_k\in \mathcal{B}(0,r) , \tilde y_k \in \mathcal{B}(0,r).
    \end{align*}
    Define this event as $\mathcal{E}_1$.

    Hence 
    \begin{align*}
        &\Big| \E[h(y_k)] - \E[ h(\tilde y_k)] \Big|\\
        \le& \Big| \E[h(\tilde y_k) - h(y_k) \mid \mathcal{E}_1 ]\prob(\mathcal{E}_1  ) + \E[h(\tilde y_k) - h(y_k) \mid \mathcal{E}_1^c ]\prob(\mathcal{E}_1^c) \Big | \\
        \le& 4\frac{ \normbound \rho C}{\gamma^2} \eta  d \sigma^2 + \delta M.
    \end{align*}

    For $\| y \|_2 < r$, it holds that
    \begin{align*}
        \| h(y) - y^T \nabla^2 h(0) y \|_2 \le \rho \| y \|_2^3.
    \end{align*}

    By~\Cref{lem:normboundsgd}, with probability $1 - \delta$, for $\eta < 1/\maxgamma$, it holds that for any $k \le T/\eta$, 
    \begin{align*}
   \| \tilde y_k \|_2^2 \le \frac{ d\eta \sigma^2  C}{{\gamma}} < r^2.
    \end{align*}
Define this event as $\mathcal{E}_2$.

    Denote $H = \nabla^2 h(0)$, we have that,
    \begin{align*}
        &\Big| \E[h(\tilde y_k)] - \E[(\tilde y_k)^T H \tilde y_k] \Big|\\
    =&  \Big| \E[h(\tilde y_k) \mid \mathcal{E}_2] \prob(\mathcal{E}_2) - \E[(\tilde y_k)^T H \tilde y_k] +   \E[h(\tilde y_k) \mid \mathcal{E}_2^c]\prob(\mathcal{E}_2^c)\Big | \\
    \le&  \delta M + \Big|\E[h(\tilde y_k) \mid \mathcal{E}_2] \prob(\mathcal{E}_2) -   \E[(\tilde y_k)^T H \tilde y_k] \Big| \\
    \le&  \delta M + \rho  \left( \frac{ \eta d \sigma^2  C}{{\gamma}} \right)^{3/2}+ \E[(\tilde y_k)^T H \tilde y_k \mid \mathcal{E}_2^c]
    \end{align*}

    Combining the both and we have that
    \begin{align*}
          \Big| \E[h(y_k)] - \E[(\tilde y_k)^T H \tilde y_k]  \Big| \le 4\frac{ \normbound \rho C}{\gamma^2} \eta  d \sigma^2 + \rho \left( \frac{ \eta d \sigma^2  C}{{\gamma}} \right)^{3/2} + 2\delta M + \E[(\tilde y_k)^T H \tilde y_k \mid \mathcal{E}_2^c].
    \end{align*}

    Here the covariance of $\tilde y_k$, denoted as $\Sigma_k$ satisfies that
    \begin{align*}
        \Sigma_{k + 1} = (\identity - \eta H)^2 \Sigma_{k} + \sigma^2 \eta^2 \identity.
    \end{align*}

    Therefore 
    \begin{align*}
        &\Sigma_k - \eta \sigma^2 (2 \eta H - \eta^2 H^2)^{-1}  = (\identity - \eta H)^2 (\Sigma_{k - 1} - \eta \sigma^2 (2 \eta H - \eta^2 H^2)^{-1}). \\
        &\Sigma_k = \sigma^2 (2H - \eta H^2)^{-1} (\identity - (\identity - \eta H)^{2k}).
    \end{align*}

    Hence assuming the eigenvalues of $H$ is $\gamma_1, \dots, \gamma_d$
    \begin{align*}
        \E[(\tilde y_k)^T H \tilde y_k] = \mathrm{Tr}(\Sigma_k H) = \eta \sigma^2 \sum_{i = 1}^d \frac{1}{2 - \eta \gamma_i } (1 - (1 - \eta \gamma_i)^{2k}).
    \end{align*}

    When $t \ge \frac{1}{\eta\gamma_i}, \eta \gamma_i < 1/2$, it holds that
    \begin{align*}
        &\Big|  \eta \sigma^2 \frac{1}{2 - \eta \gamma_i} (1 - (1 - \eta \gamma_i)^{2k}) - \eta \sigma^2 /2 \Big | \\
        =& \eta \sigma^2 \frac{(1 - (1 - \eta \gamma_i)^{2k})}{2 - \eta \gamma_i} - \eta \sigma^2 /2 \\
        \le& \eta \sigma^2 \left( \frac{1}{2 - \eta \gamma_i} - \frac{1}{2} \right) \\
        \le& \eta^2 \sigma^2 \gamma_i / 2.
    \end{align*}

    Hence 
    \begin{align*}
        \Big | \E[(\tilde y_k)^T H \tilde y_k] - d \eta \sigma^2 / 2 \Big | \le \mathrm{Tr}(H)\eta^2 \sigma^2 / 2.
    \end{align*}
    Further, let $u_k = \Sigma_k^{-1/2} \tilde y_k$, under $\mathcal{E}_2^c$, we have that
    \begin{align*}
        \| u_k \|_2^2 \ge \lambda_{min}(\Sigma_k^{-1}) \| \tilde y_k \|_2^2 = \lambda_{min}\left( (2H - \eta H^2) (\identity - (\identity - \eta H)^{2k})^{-1}  \right) \| \tilde y_k \|_2^2 / \sigma^2 \ge d \sigma^2 C.
    \end{align*}

    As $u_k$ is isometric Gaussian,
    \begin{align*}
        \E[(\tilde y_k)^T H \tilde y_k \mid \mathcal{E}_2^c] &\le \E[u_k^T (\Sigma_k^{1/2})^T H \Sigma_k^{1/2} u_k \mid \| u_k \|_2^2 \ge d \sigma^2 C] \\
        &= \E[u_k^T (\Sigma_k^{1/2})^T H \Sigma_k^{1/2} u_k] \frac{\E[\|u_k\|_2^2 \mid \| u_k \|_2^2 \ge d \sigma^2 C]}{\E[\|u_k\|^2]} \\
        &\le d \eta \sigma^2 \frac{\E[\|u_k\|_2^2 \mid \| u_k \|_2^2 \ge d \sigma^2 C]}{\E[\|u_k\|^2]}  \end{align*}

    Plugging in the density function of $\|u_k\|_2$, we have that
    \begin{align*}
    \frac{\E[\|u_k\|_2^2 \mid \| u_k \|_2^2 \ge d \sigma^2 C]}{\E[\|u_k\|_2^2]} &= \frac{\int_{\sqrt{dC} \sigma }^{\infty} r^{d + 1 } e^{-r^2/(2\sigma^2)} dr}{\int_{0}^{\infty} r^{d + 1 } e^{-r^2/(2\sigma^2)} dr} 
    \end{align*}

    Let $r' = \sqrt{\frac{d}{d + 1}} r$, then 
    \begin{align*}
        \int_{\sqrt{dC} \sigma}^{\infty} r^{d + 1 } e^{-r^2/\sigma^2} dr &= (\frac{d + 1}{d })^{\frac{d + 2}{2}} \int_{d \sigma^2 C}^{\infty} r'^{d + 1} e^{-(r')^2 (d + 1)/ (2 d \sigma^2)} dr' \\
        &\le 4 \int_{\sqrt{dC} \sigma}^{\infty} r'^{d + 1} e^{-(r')^2 / (2\sigma^2)}  e^{-(r')^2/(2d \sigma^2)}  dr' \\
        &\le 4 e^{-C / 2} \int_{0}^{\infty} r^{d + 1 } e^{-r^2/(2\sigma^2)} dr.
    \end{align*}
    Hence, we have that 
    \begin{align*}
        \E[(\tilde y_k)^T H \tilde y_k \mid \mathcal{E}_2^c] \le 4 e^{-C / 2} d \eta \sigma^2 \le \delta d \eta \sigma^2 / 2.
    \end{align*}

    Putting together, we have that,
    \begin{align*}
        \Big | \E[h(\tilde y_k)] -  \eta \sigma^2d / 2\Big|\le \eta^2 \sigma^2 \mathrm{Tr}(H) + \frac{ \normbound \rho C}{\gamma^2} d \eta  \sigma^2 + \rho \left( \frac{ d \eta \sigma^2  C}{{\gamma}} \right)^{3/2} + 2\delta M + \delta \eta \sigma^2 d / 2.
    \end{align*}
    The proof is then complete.
\end{proof}

We will now state the complete version of~\Cref{thm:sgd-main}.

\begin{assumption}[Sufficient Small Learning Rate]
\label{assum:regular-L}
We will assume the following for constant $\delta \in (0,1]$ and learning rate $\eta$:
    \begin{enumerate}
        \item $\eta < 1/(2\maxgamma)$.
        \item $\eta \le {\gamma^3}/{(1600 \rho^2 \sigma^2 d \log(8\gamma T/\delta))}$.
        \item $10\frac{\sqrt{\eta} \sigma}{\sqrt{\gamma}} \sqrt{d \log(8 \gamma T / \delta)} + 400 \eta \rho \sigma^2 d \log(8 \gamma T / \delta) / \gamma^2 \le r$.
    \end{enumerate}
\end{assumption}

\begin{theorem}[Complete version of~\Cref{thm:sgd-main}]
\label{thm:sgd}
If a loss $L$ is a river valley (\Cref{def:river-valley}) and satisfies~\Cref{assum:straight,assum:noise}, for any constants $\delta \in (0,1)$ and $T > 1/\gamma$, for sufficiently small learning rate $\eta$ satisfying~\Cref{assum:regular-L}, there exists a time shift $T_0$ depending on $w$ and $\eta$, the SGD iterates (defined in~\Cref{eq:sgd}) with $\eta_k = \eta$ satisfies that for any integer $k \in [1 /{\eta\gamma}, \maxT / \eta]$, there exists a $\tilde t \in [(1 - \epsilon_{t}) \eta k, (1 + \epsilon_t) \eta k]$ satisfying that,
\begin{align*}
\E[L\left( \tilde w (k)\right)] - L( x( T_0  + \tilde t))& =  {(d - 1) \eta \sigma^2}/{2} + \epsilon_{L} \end{align*}
where $\epsilon_{t} = 4 \eta \flatgamma$ and $|\epsilon_{L}| \le \tau \eta^2  \sigma^2+ \rho (C {d\eta \sigma^2}/{\gamma})^{3/2} + C \kappa' d \eta \sigma^2 +  \delta (2 M + \eta \sigma^2 d) \ll (d - 1) \eta \sigma^2$ with $C = 200 \log(64 \gamma T/\delta)$. 
\end{theorem}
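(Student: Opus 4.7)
The plan is to reduce the theorem to a combination of a deterministic river-tracking argument and a stochastic mixing argument on the mountain subspace, exploiting the decoupling afforded by \Cref{assum:straight}. First, I would apply \Cref{lem:separable} to write $L(w) = g(\Phi(w)) + h(w - \Phi(w))$ in a tubular neighborhood of $\river$, where $g$ is the restriction of $L$ to the river and $h$ is $\gamma$-strongly convex on the mountain subspace. Since $v_d$ is a fixed vector under \Cref{assum:straight}, aligning coordinates so that $v_d = e_d$ and the river is the $d$-th axis turns this into $L(w) = g(w_d) + h(w_{1:d-1})$. The noise $\noise_k \sim \Normal(0, \sigma^2(\identity_d - v_d v_d^\top))$ has no component along $v_d$, so \Cref{eq:sgd} decouples into a purely deterministic scalar iteration $\tilde w(k+1)_d = \tilde w(k)_d - \eta g'(\tilde w(k)_d)$ on the river direction and an independent $(d-1)$-dimensional SGD $\tilde w(k+1)_{1:d-1} = \tilde w(k)_{1:d-1} - \eta \nabla h(\tilde w(k)_{1:d-1}) + \eta (\noise_k)_{1:d-1}$ on the mountain subspace, initialized at zero because $\tilde w(0) \in \river$.

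Next, I would track the river component with \Cref{thm:gdtracksriver}. Because \Cref{assum:straight} permits $\err = 0$, the spatial error $10\err\normbound/\gamma$ vanishes and the time-alignment error reduces to $\epsilon_t = 4\eta\flatgamma$. Applying the theorem produces a time shift $T_0$ and, for each $k$, a $\tilde t \in [(1-\epsilon_t)\eta k, (1+\epsilon_t)\eta k]$ with $\tilde w(k)_d$ matching the $d$-th coordinate of $x(T_0 + \tilde t)$ exactly, so $g(\Phi(\tilde w(k))) = L(x(T_0 + \tilde t))$. For the mountain component I would invoke \Cref{lem:tracksgd} with the strongly convex function $h$, effective ambient dimension $d-1$, third-derivative bound $\rho$, Hessian nuclear norm $\mathrm{Tr}(\nabla^2 h(0)) \le \tau$, and loss bound $M$. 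This yields $\E[h(\tilde w(k)_{1:d-1})] = (d-1)\eta\sigma^2/2 + \epsilon_L$, where the four error terms of the lemma map onto those of the theorem after using $\normbound \rho \le \kappa' \gamma^2$ from \Cref{assum:noise} to rewrite $\normbound \rho C d \eta \sigma^2 / \gamma^2$ as $C \kappa' d \eta \sigma^2$, and after mildly enlarging the constant $C$ to absorb a factor of two. Summing $\E[L(\tilde w(k))] = g(\Phi(\tilde w(k))) + \E[h(\tilde w(k)_{1:d-1})]$ then gives the claimed identity.

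The main obstacle is ensuring the mountain iterate remains in the neighborhood $U$ throughout the horizon $k \le \maxT/\eta$, which is needed for \Cref{lem:separable} to apply at every step. This is delivered by \Cref{lem:track}: under \Cref{assum:regular-L}, transplanted verbatim to the mountain subspace, the SGD iterate $\tilde w(k)_{1:d-1}$ stays inside $\mathcal{B}(0,r)$ uniformly in $k \le T/\eta$ with probability at least $1-\delta$, and its deviation from the linearized Gaussian process $\tilde y_k$ of \Cref{eq:sgd-h-gauss} is controlled. The exceptional event contributes the $\delta(2M + \eta\sigma^2 d)$ term in $\epsilon_L$, the cubic Taylor remainder in replacing $h$ by its Hessian quadratic form supplies the $\rho(Cd\eta\sigma^2/\gamma)^{3/2}$ term, the $\tau\eta^2\sigma^2$ term comes from the $\eta^2\sigma^2 \mathrm{Tr}(H)$ discretization bias of the Ornstein--Uhlenbeck fixed-point covariance, and the $C\kappa' d\eta\sigma^2$ term reflects the drift between $y_k$ and $\tilde y_k$. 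The remainder of the argument is routine bookkeeping on constants and on the lower bound $k \ge 1/(\eta\gamma)$ required for the mountain process to mix.
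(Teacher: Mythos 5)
Your proposal matches the paper's proof in both structure and substance. The paper's argument is exactly the two-step decomposition you describe: apply \Cref{lem:separable} to write $L(w) = L(\Phi(w)) + h(w-\Phi(w))$, observe that under \Cref{assum:straight} the SGD on the river coordinate is a deterministic gradient descent on $\river$ (so a proof ``analogous to the proof of \Cref{thm:gdtracksriver}'' gives the time-alignment error $\epsilon_t = 4\eta\flatgamma$), and invoke \Cref{lem:tracksgd} on the $(d-1)$-dimensional strongly convex $h$ for the mountain component, with the bounds $\mathrm{Tr}(H)\le\tau$ and $\normbound\rho\le\kappa'\gamma^2$ translating the lemma's error terms into the $\tau\eta^2\sigma^2$, $C\kappa' d\eta\sigma^2$, cubic, and $\delta$-tail terms of $\epsilon_L$. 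Your explicit coordinate alignment $v_d = e_d$ making the two blocks of the SGD recursion autonomous of each other is a useful elaboration of what the paper compresses into ``we can separate the dynamics,'' but the underlying argument is the same; the only genuine work left unchecked in both accounts is verifying that $h$ restricted to the range of $\sharpproj$ satisfies \Cref{assum:regular}, which the paper also treats as immediate.
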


\begin{proof}
    By~\Cref{lem:separable}, we can write 
    \begin{align*}
        L(w) = h(w - \Phi(w)) + L(\Phi(w)).
    \end{align*}

    Hence we can separate the dynamics of~\Cref{eq:sgd} into two parts, namely $w = \Phi(w) + (w - \Phi(w))$. It is easy to check that when constrained on range of $\sharpproj$, $h(y)$ satisfies~\Cref{assum:regular}. Hence, we can use~\Cref{lem:tracksgd} to control $h(w_k - \Phi(w_k))$.
    For $\Phi(w_k)$, the iterates is running a gradient descent with learning rate $\eta$ on $\river$ and we can use proof analogous to the proof  of~\Cref{thm:gdtracksriver} to show that if $\Phi(w_k) = x(\tilde T(k, \eta))$, then there exists $T_0$, such that 
    \begin{align*}
        \tilde T(k, \eta) \in [T_0 + (1 - 4 \eta \flatgamma)\eta k, T_0 +  (1 + 4 \eta\flatgamma)\eta k].
    \end{align*}
    This completes the proof.
\end{proof}

\subsection{Proof of~\Cref{thm:decay-main}}
\label{app:decay}

We will first state the complete version of~\Cref{thm:decay-main}.

\begin{theorem}
\label{thm:decay}
Under the setting of~\Cref{thm:sgd}, the SGD iterates (defined in~\Cref{eq:sgd}) with the learning rate schedule defined in~\Cref{eq:decaylr} satisfies that for any integer $k \in [k_s, 1.1 k_s]$, there exists a $\tilde t \in [(1 - \epsilon_{t}) T(t), (1 + \epsilon_t) T(t)]$ satisfying that, 
\begin{align*}
\E[L\left( \tilde w (k)\right)] - L( x(T_0  + \tilde t))& \le  {(d - 1) {\color{red}{ \eta_k}} \sigma^2}/{2} + \epsilon_{L} 
\end{align*}
with $T(t) = T + \sum\limits_{i = k_s}^{k} \eta_i$.
\end{theorem}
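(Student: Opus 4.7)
The plan is to mirror the proof of~\Cref{thm:sgd} with the learning rate $\eta$ replaced by $\eta_k$ at each step. Applying~\Cref{lem:separable}, decompose $L(\tilde w(k)) = h(\tilde w(k) - \Phi(\tilde w(k))) + L(\Phi(\tilde w(k)))$. For the river-progress term $L(\Phi(\tilde w(k)))$, the analysis of~\Cref{lem:gdspeedcalc} carries over verbatim after the substitution $\eta\to\eta_k$: each SGD step displaces the projection along the reference flow by approximately $\eta_k$, so the cumulative progress from the beginning of the decay phase is $\sum_{i=k_s}^{k}\eta_i = T(k)-T$, and $\tilde t$ matches $T(k)$ up to a $1\pm 4\eta_k\flatgamma$ multiplicative error. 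Since $\eta_k\le\eta$, the bound $\epsilon_t = 4\eta\flatgamma$ from~\Cref{thm:sgd} is inherited unchanged.

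The substantive new content is bounding the hill component $\E[h(\tilde w(k)-\Phi(\tilde w(k)))]$. As in~\Cref{lem:tracksgd}, couple the iterate's displacement with the linear time-varying process $\tilde y_{k+1}=(\identity-\eta_k H)\tilde y_k-\eta_k \noise_k$ on the sharp subspace, where $H=\nabla^2 h(0)$. The initialization $\tilde y_{k_s}$ is approximately distributed according to the stable-phase stationary covariance $\eta\sigma^2(2H-\eta H^2)^{-1}$ guaranteed by~\Cref{thm:sgd}, and the coupling error is controlled by the same third-order Taylor and tail-event arguments as~\Cref{lem:track} and~\Cref{lem:tracksgd}, with each $\eta$ in the error bounds replaced by $\max_{i\in[k_s,k]}\eta_i = \eta/2$; the resulting discretization, third-order, and tail-event error terms $\eta^2\sigma^2\tau$, $\rho(Cd\eta\sigma^2/\gamma)^{3/2}$, $C\kappa' d\eta\sigma^2$, and $\delta(2M+\eta\sigma^2 d)$ from~\Cref{thm:sgd} carry over unchanged.

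The key new calculation is showing that $\mathrm{Tr}(\Sigma_k H)$, where $\Sigma_k=\E[\tilde y_k\tilde y_k^T]$, scales with $\eta_k$ rather than the initial $\eta$. In the eigenbasis of $H$ with eigenvalues $\gamma_1,\ldots,\gamma_{d-1}\ge\gamma$, each coordinate variance obeys $\sigma_{k+1}^{(i)}=(1-\eta_k\gamma_i)^2\sigma_k^{(i)}+\eta_k^2\sigma^2$. The decay schedule $\eta_k=1/(\gamma(k-k_s)+2/\eta)$ is precisely the optimal Bellman schedule for the smallest sharp eigenvalue $\gamma$, so for that coordinate~\Cref{lem:fast} gives $\gamma\sigma_k^{(d-1)}\propto\eta_k\sigma^2$ directly from the explicit closed form. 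For $\gamma_i>\gamma$, a one-line algebra check shows that the frozen-$\eta_k$ stationary value $\eta_k\sigma^2/(2\gamma_i-\eta_k\gamma_i^2)$ is exactly a fixed point of the one-step recursion, and the perturbation introduced when $\eta_k\mapsto\eta_{k+1}$ is only $O(\eta_k^2\gamma/\gamma_i)$ while the per-step contraction is $1-2\eta_k\gamma_i$; an inductive argument then shows $\sigma_k^{(i)}$ tracks the instantaneous stationary value with relative error $O(\gamma/\gamma_i)\le 1$. Summing over the $(d-1)$ sharp coordinates yields the claimed hill component $(d-1)\eta_k\sigma^2/2$. The main technical obstacle is making this quasi-stationary tracking rigorous for eigenvalues $\gamma_i>\gamma$ where the schedule is suboptimal; this is overcome by the observation that $\eta_{k+1}/\eta_k=1-O(\gamma\eta_k)$ decays strictly slower than the contraction rate $1-\Theta(\gamma_i\eta_k)$ for every $\gamma_i\ge\gamma$, which guarantees the tracking persists throughout the window $[k_s,1.1k_s]$.
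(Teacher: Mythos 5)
Your proof takes essentially the same route as the paper's: decompose via \Cref{lem:separable}, couple the hill displacement with the time-varying linear Gaussian process $\tilde y_{k+1}=(\identity-\eta_k H)\tilde y_k-\eta_k\noise_k$, and analyze the per-eigencoordinate variance recursion $\sigma_{k+1}^{(i)}=(1-\eta_k\gamma_i)^2\sigma_k^{(i)}+\eta_k^2\sigma^2$ to show $\sigma_k^{(i)}$ tracks a quantity proportional to the decaying $\eta_k$, with the river-progress and error terms carrying over from \Cref{thm:sgd}. The only cosmetic difference is that the paper folds your two cases into a single uniform induction, $\sigma_{k_s+r+1,i}\le \tfrac{\sigma^2}{\gamma_i}\,\eta_{k_s+r}+\tfrac{4\eta^2\sigma^2}{\gamma_i}$, which closes for every $\gamma_i\ge\gamma$ (including $\gamma_i=\gamma$) because of the identity $\eta_{k+1}/\eta_k = 1-\eta_{k+1}\gamma \ge 1-\eta_{k+1}\gamma_i$ — exactly the ``decay slower than contraction'' observation you flag at the end — so your separate appeal to \Cref{lem:fast} for the smallest sharp eigenvalue is harmless but not needed.
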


\begin{proof}
    The proof is analogous to~\Cref{thm:sgd} and~\Cref{lem:tracksgd}. We will omit the detail derivation and only focus on deriving the variance of corresponding $\tilde y_k$.
    \begin{align*}
        \tilde y(k + 1) = \tilde y_k - \eta_k H \tilde y_k - \eta_k \noise_k, w(0) = 0,\mathbf{\noise_k} \sim \Normal(0, \sigma^2 \identity), \tilde y(0) = 0.
    \end{align*}
    
    Here the covariance of $\tilde y_k$, denoted as $\Sigma_k$ satisfies that
    \begin{align*}
        \Sigma_{k + 1} = (\identity - \eta_k H)^2 \Sigma_{k} + \sigma^2 \eta_k^2 \identity.
    \end{align*}

    If we consider $i$-th eigenvector of $H$ as $v_i$, and denote $\sigma_{k,i} = v_i^\top \Sigma v_i$.

    Analogous to the proof of~\Cref{thm:sgd}, $\Big | \sigma_{k_s, i}  - \frac{\eta \sigma^2}{\gamma_i} \Big | \le \frac{4 \eta^2 \sigma^2}{\gamma_i}$.

    We further have that
    \begin{align*}
        \sigma_{k_s + r + 1, i} &= (1 - \frac{\eta}{2 + r \eta \gamma} \gamma_i)^2  \sigma_{k_s + r , i} + \sigma^2 \frac{\eta^2}{(2 + r \eta \gamma)^2}.
    \end{align*}

    Then by induction, we can prove that for $r \ge 0$
    \begin{align*}
        \sigma_{k_s + r + 1, i}  \le  \frac{\sigma^2}{\gamma_i} \frac{\eta}{2 + r \eta \gamma} + \frac{4 \eta^2 \sigma^2}{\gamma_i} = \frac{\sigma^2}{\gamma_i} \eta_{t_s + r + 1} + \frac{4 \eta^2 \sigma^2}{\gamma_i}.
    \end{align*}

    The rest follows the proof of~\Cref{thm:sgd}.
\end{proof}

\subsection{Proof of~\Cref{lem:sharp,thm:toy}}

In this section, we will denote $\frac{\exp(\Theta_{i, j})}{\sum_{j = 1}^m {\exp(\Theta_{i, j})}}$ as $\mathcal{Q}_{i,j}$

\begin{lemma}
\label{lem:helpsharp}

The loss defined $L$ in~\Cref{eq:toy} satisfies that
\begin{align*}
        (\nabla L(\Theta))_{(i,j)} &= \mathcal{P}_{i,j}  - \mathcal{Q}_{i,j}.\\
    (\nabla^2 L(\Theta))_{(i,j), (i',j')} &= \one(i = i') (\mathcal{Q}_{i,j} \one(j = j') - \mathcal{Q}_{i,j} \mathcal{Q}_{i, j'}).
\end{align*}
\end{lemma}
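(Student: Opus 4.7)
The plan is to exploit the separable structure of $L$ across cities and then reduce to the standard softmax--cross-entropy gradient/Hessian calculation on a single city's parameter row.

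First, I would note that $L(\Theta) = \frac{1}{n}\sum_{i} \ell_i(\Theta_{i,:})$ depends on $\Theta_{i,:}$ only through $\ell_i$. Consequently, $(\nabla L)_{(i,j)}$ depends only on $\Theta_{i,:}$ and $(\nabla^2 L)_{(i,j),(i',j')}$ vanishes whenever $i \neq i'$. This immediately gives the indicator factor $\one(i=i')$ in the Hessian formula and reduces the problem to computing $\partial \ell_i/\partial \Theta_{i,j}$ and $\partial^2 \ell_i/\partial \Theta_{i,j}\partial \Theta_{i,j'}$.

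Next, I would carry out the single-row softmax calculation. Writing $\log \mathcal{Q}_{i,k} = \Theta_{i,k} - \log \sum_{l} \exp(\Theta_{i,l})$, a direct differentiation yields
\begin{align*}
\frac{\partial \log \mathcal{Q}_{i,k}}{\partial \Theta_{i,j}} = \one(k=j) - \mathcal{Q}_{i,j}.
\end{align*}
Summing against $\mathcal{P}_{i,k}$ and using $\sum_k \mathcal{P}_{i,k}=1$ gives the first-order formula $(\nabla \ell_i)_{j} = \mathcal{Q}_{i,j} - \mathcal{P}_{i,j}$ (up to the global sign/normalization convention used in the statement). For the Hessian, I would differentiate $\mathcal{Q}_{i,j}$ once more, using
\begin{align*}
\frac{\partial \mathcal{Q}_{i,j}}{\partial \Theta_{i,j'}} = \mathcal{Q}_{i,j}\bigl(\one(j=j') - \mathcal{Q}_{i,j'}\bigr),
\end{align*}
which yields the claimed expression $\mathcal{Q}_{i,j}\one(j=j') - \mathcal{Q}_{i,j}\mathcal{Q}_{i,j'}$. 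Crucially, the $\mathcal{P}_{i,j}$ terms drop out of the Hessian because they are linear in $\Theta$, so the Hessian depends only on the model's own predictive distribution $\mathcal{Q}_{i,:}$, not on the target $\mathcal{P}_{i,:}$.

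There is no real obstacle here --- this is a routine softmax/cross-entropy calculation. The only small care needed is bookkeeping of the block-diagonal structure across cities (to justify the $\one(i=i')$) and being consistent about whether the $\frac{1}{n}$ normalization is absorbed into $\nabla L$ as written in the statement. Once those two points are stated cleanly, the computation reduces to one chain-rule step for the gradient and one more for the Hessian.
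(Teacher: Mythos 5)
Your proposal is correct and follows essentially the same route as the paper: a direct softmax/cross-entropy chain-rule computation, using the city-wise separability of $L$ to obtain the block-diagonal ($\one(i=i')$) structure. Your side remark about sign and $1/n$ normalization is well taken --- differentiating $L = \frac{1}{n}\sum_i \ell_i$ with $\ell_i = -\sum_j \mathcal{P}_{i,j}\log \mathcal{Q}_{i,j}$ literally gives $(\nabla L)_{(i,j)} = \frac{1}{n}(\mathcal{Q}_{i,j} - \mathcal{P}_{i,j})$, so the lemma (and the paper's own one-line proof, which rewrites $L$ with an absorbed sign and without the $1/n$) is off by a sign and a scale factor, though neither affects the trace-based sharpness claim in \Cref{lem:sharp} or the eigenspace argument in \Cref{thm:toy}.
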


\begin{proof}
    The loss satisfies that
    \begin{align*}
        L(\Theta) = \sum_{i = 1}^n (\sum_{j = 1}^m \mathcal{P}_{i,j} \Theta_{i,j}) - \log(\sum_{j' = 1}^m \mathcal{P}_{i,j'}).
    \end{align*}
    Hence,
    \begin{align*}
        (\nabla L(\Theta))_{(i,j)} = \mathcal{P}_{i,j}  - \mathcal{Q}_{i,j}.
    \end{align*}
    Taking differentiation for another time yields the desired result.
\end{proof}

\begin{proof}[Proof of~\Cref{lem:sharp}]
    This can be done by directly summing diagonal entries in~\Cref{lem:helpsharp}.
\end{proof}

\begin{assumption}
\label{assum:P}
    We will assume there exists constant $\gamma$ and positive integer $n' < n$ such that $\mathcal{P}$ satisfies the following assumption,
    \begin{enumerate}
        \item For any $i \le n'$, $\forall j, \mathcal{P}_{i,j} > 8\gamma$.
        \item For any $i > n'$, there exists $j_i$, $\mathcal{P}_{i,j'} > 1 - \gamma$.
    \end{enumerate}
\end{assumption}

\begin{theorem}
\label{thm:toy}
Under~\Cref{assum:P}, a generalized river with dimension $n'm + (n - n')$ exists in the loss landscape defined by $L$ in~\Cref{eq:toy}. 
\end{theorem}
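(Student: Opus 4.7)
The argument is constructive. By \Cref{lem:helpsharp}, both $\nabla L$ and $\nabla^2 L$ are block-diagonal across cities, so I build $\river$ as a product manifold with one factor per city: each row $\Theta_{i,:}$ is either \emph{pinned} to the one-dimensional gauge line $\{\mathcal{Q}_{i,:}=\mathcal{P}_{i,:}\}$ (one dof) or left \emph{free} ($m$ dofs). The spectral-gap computation below forces pinning every city whose true distribution is spread (those with $\mathcal{P}_{i,j}>8\gamma$) and leaving the peaked ones ($\mathcal{P}_{i,j_i}>1-\gamma$) free; I further intersect with the open neighborhood $N=\{\Theta:\mathcal{Q}_{i,j_i}>1-2\gamma\ \forall i>n'\}$ to keep every free row near one-hot. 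Summing dofs over cities returns the claimed dimension (up to the evident relabeling of the two groups).

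\textbf{Gradient support.} By \Cref{lem:helpsharp}, $(\nabla L)_{(i,j)}=\mathcal{P}_{i,j}-\mathcal{Q}_{i,j}$ vanishes on every pinned row and has row-sum zero on every free row. Hence $\nabla L(\Theta)\in V:=\bigoplus_{i\text{ free}}\mathbf{1}_i^\perp$, the non-gauge subspace of the free blocks.

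\textbf{Spectral gap (main obstacle).} The Hessian block $H_i=\operatorname{diag}(\mathcal{Q}_{i,:})-\mathcal{Q}_{i,:}\mathcal{Q}_{i,:}^\top$ has $\mathbf{1}_i$ in its kernel, and for $v\perp\mathbf{1}_i$ one has $v^\top H_iv=\operatorname{Var}_{\mathcal{Q}_{i,:}}(v)$. The technical heart is a uniform separation between the non-gauge spectra of the two groups. On a pinned block, using $\sum_j v_j=0\Rightarrow\sum_j(v_j-\mathbb{E}v)^2\ge\|v\|_2^2$,
\begin{align*}
\operatorname{Var}_{\mathcal{P}_{i,:}}(v)\ \ge\ \bigl(\min_j\mathcal{P}_{i,j}\bigr)\sum_j(v_j-\mathbb{E}v)^2\ \ge\ 8\gamma\,\|v\|_2^2,
\end{align*}
so every non-gauge eigenvalue of a pinned block is at least $8\gamma$. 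On a free block inside $N$, the trace bound $\operatorname{tr}(H_i)=1-\|\mathcal{Q}_{i,:}\|_2^2\le 1-(1-2\gamma)^2\le 4\gamma$ together with non-negativity of the non-gauge spectrum forces every non-gauge eigenvalue into $[0,4\gamma]$. The gap $8\gamma>4\gamma$ is what does the work.

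\textbf{Conclusion.} Order the eigenvalues of $\nabla^2 L(\Theta)$: the $\dim\river$ smallest are the $n$ gauge zeros together with every non-gauge eigenvalue of a free block, and their joint eigenspace is exactly $\bigoplus_i\mathrm{span}(\mathbf{1}_i)\oplus V$. The gradient-support step puts $\nabla L(\Theta)$ inside this subspace, verifying \Cref{assum:gen-river} and establishing the generalized river.
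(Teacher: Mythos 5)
Your proof is correct and follows the same blueprint as the paper: pin the spread cities $i\le n'$ to the gauge line $\{\mathcal{Q}_{i,:}=\mathcal{P}_{i,:}\}$, leave the near-deterministic cities $i>n'$ free in an open set, and establish a spectral gap between the two groups' Hessian blocks so that the gradient (supported on the free, non-gauge directions) lies in the small-eigenvalue eigenspace. The only real technical difference is in the pinned-block eigenvalue lower bound: you use the Rayleigh-quotient inequality $\operatorname{Var}_q(v)\ge(\min_j q_j)\,\|v\|_2^2$ on $\mathbf{1}^\perp$, whereas the paper reads $\lambda\ge\min_j q_j$ off the secular equation $\sum_j q_j^2/(q_j-\lambda)=1$ for the rank-one perturbation $\operatorname{diag}(q)-qq^\top$; both are elementary and yield the same $8\gamma$ threshold, and your loosening of the open constraint to $\mathcal{Q}_{i,j_i}>1-2\gamma$ merely relaxes the free-block trace bound from $2\gamma$ to $4\gamma$, still below $8\gamma$, so the gap survives. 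You are also right to flag the dimension count: with the labeling of \Cref{assum:P}, the manifold both you and the paper construct has dimension $n'+(n-n')m$ (one gauge degree of freedom per pinned city, $m$ per free city), not $n'm+(n-n')$; the theorem's stated figure appears to be a typo, repeated in the last line of the paper's proof.
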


\begin{proof}
    According to~\Cref{lem:helpsharp}, the Hessian for $L$ is block-diagonal. Now fixing a city $i$, we will analyze the eigenvalue distribution in this block. Let $q = [\mathcal{Q}_{i,j'}]_{j' \in [m]})$, then this block is $\mathrm{diag}(q) - qq^T$. 
    
    For all non-zero eigenvalue $\lambda$ for this block, there exists $v$ such that
    \begin{align*}
        \mathrm{diag}(q) v - q^T v q = \lambda v.
    \end{align*}

    Hence, we have that
    \begin{align*}
        v_j = \frac{q_j q^T v}{q_j - \lambda}
    \end{align*}

    This implies $\sum_{j = 1}^m \frac{q_j^2}{q_j - c} = 1$. We then have $\lambda \ge 0$ and there exists only one eigenvector corresponding to  $\lambda = 0$. For the rest nonzero eigenvalue, we have that $\lambda > \min q_i$.

    Now if we consider the manifold $\mathcal{M}$ defined as 
    \begin{align*}
        \mathcal{M} = \{ \Theta \mid \forall i \le n', \mathcal{Q}_{i,j} = \mathcal{P}_{i,j}; \forall i \ge n', \mathcal{Q}_{i,j_i} > 1 -  \gamma\}.
    \end{align*}

    Then for all $\Theta \in \mathcal{M}$, we have that the gradient is zero for all dimensions $(i, j)$ with $i \le n'$. Further, we know all the nonzero eigenvalues for these dimensions are at least $8 \gamma$ by~\Cref{assum:P}. For the rest of dimensions $(i, j)$ with $i > n'$, by~\Cref{lem:sharp}, the largest eigenvalue is bounded by $1 - (1 - \gamma)^2 < 2 \gamma$. This shows that the gradient falls in the eigenspace spanned by the last $n'm + (n - n')$ eigenvectors, which concludes the proof.
\end{proof}

\subsection{Technical Lemma}

\begin{lemma}
\label{lem:expdecay}
If a function $F(t)$ satisfies that
\begin{align*}
    \frac{d F(t)}{dt} \le -A F(t),
\end{align*}
then $F(t) \le e^{-At} F(0)$.
\end{lemma}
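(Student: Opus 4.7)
The plan is to apply the standard integrating factor trick, which converts the differential inequality into a monotonicity statement. Concretely, I would define the auxiliary function $G(t) = e^{At} F(t)$ and compute its derivative. By the product rule, $G'(t) = A e^{At} F(t) + e^{At} F'(t) = e^{At}\bigl(F'(t) + A F(t)\bigr)$. The hypothesis $F'(t) \le -A F(t)$ is exactly the statement that $F'(t) + A F(t) \le 0$, and since $e^{At} > 0$, this forces $G'(t) \le 0$ for all $t \ge 0$.

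Having established that $G$ is non-increasing, I would conclude that $G(t) \le G(0)$ for all $t \ge 0$. Since $G(0) = e^{0} F(0) = F(0)$ and $G(t) = e^{At} F(t)$, rearranging gives $F(t) \le e^{-At} F(0)$, which is precisely the claim. One could alternatively derive the result by integrating $\tfrac{d}{dt} \log F(t) \le -A$ when $F > 0$, but the integrating factor avoids any case analysis on the sign of $F$ and works uniformly.

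I do not expect any real obstacle here; this is a one-line Gr\"onwall-type argument and the only thing worth stating carefully is that $F$ is assumed differentiable so that the chain rule and product rule apply, and that the inequality holds pointwise in $t$. If one wanted to be fully rigorous under weaker regularity (e.g.\ $F$ only absolutely continuous), the same argument goes through by interpreting derivatives almost everywhere and using the fundamental theorem of calculus on $G$, but the paper clearly intends the smooth setting since $F$ arises in the applications as a norm-squared along a gradient flow.
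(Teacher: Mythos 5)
Your proof is correct and uses exactly the same integrating-factor argument as the paper: define $G(t) = e^{At} F(t)$, show $G'(t) \le 0$, and conclude $G(t) \le G(0)$. The extra remarks on regularity and the alternative logarithmic derivation are fine but not needed; the core argument matches the paper's.
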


\begin{proof}[Proof of \Cref{lem:expdecay}]
    Consider $G(t) = F(t) e^{At}$, then
    \begin{align*}
        d G(t) = e^{At} dF(t) + A e^{At} F(t) \le 0.
    \end{align*}
    Hence $G(t) \le G(0)$.
\end{proof}

\begin{lemma}
\label{lem:normbound}
If a random vector $g \sim \Normal(0, \Sigma)$ and $\delta \in (0, 1)$, then it holds that
\begin{align*}
    \prob( \| g \|_2  \ge 2 \sqrt{ \mathrm{Tr}(\Sigma)}\sqrt{ \log(2/\delta)} )  \le \delta
\end{align*}
\end{lemma}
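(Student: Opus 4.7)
The plan is to prove this Gaussian quadratic concentration bound by the classical Chernoff method applied to $\|g\|_2^2$. First I would diagonalize the covariance: writing $\Sigma = U\Lambda U^\top$ with $\Lambda = \mathrm{diag}(\lambda_1,\ldots,\lambda_d)$ and $g = \Sigma^{1/2} z$ for $z \sim \Normal(0, \identity)$, the squared norm has the same law as $\sum_{i=1}^d \lambda_i Z_i^2$ with $Z_i$ iid standard normal. This immediately yields the factorized moment generating function
\begin{align*}
    \E[\exp(s\|g\|_2^2)] \;=\; \prod_{i=1}^d (1 - 2s\lambda_i)^{-1/2}
\end{align*}
valid for $s < 1/(2\|\Sigma\|_{\mathrm{op}})$.

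The key estimate is the elementary inequality $-\log(1-x) \le 2x$ on $x \in [0, 1/2]$. I would choose $s = 1/(4\,\mathrm{Tr}(\Sigma))$; since $\lambda_i \le \mathrm{Tr}(\Sigma)$ for every $i$, this guarantees $2s\lambda_i \le 1/2$ and lets me sum the per-coordinate bound to obtain $\log \E[\exp(s\|g\|_2^2)] \le 2s\,\mathrm{Tr}(\Sigma) = 1/2$.

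Applying Markov's inequality with this $s$ and threshold $t^2 = 4\,\mathrm{Tr}(\Sigma)\log(2/\delta)$ then gives
\begin{align*}
    \prob(\|g\|_2 \ge t) \;=\; \prob(\|g\|_2^2 \ge t^2) \;\le\; \exp(-s t^2 + 1/2) \;=\; \tfrac{\sqrt{e}}{2}\,\delta \;\le\; \delta,
\end{align*}
which is exactly the claim. There is no real obstacle here: the argument is entirely self-contained, and the only bookkeeping is to verify that $\sqrt{e}/2 < 1$ so that the constant $2$ inside the square root in the stated bound suffices (a larger constant would be needed otherwise). A slicker alternative would be to apply Gaussian Lipschitz concentration (Borell--TIS) to the map $z \mapsto \|\Sigma^{1/2} z\|_2$, whose Lipschitz constant is $\|\Sigma^{1/2}\|_{\mathrm{op}} \le \sqrt{\mathrm{Tr}(\Sigma)}$ and whose mean is bounded by $\sqrt{\mathrm{Tr}(\Sigma)}$ via Jensen, but the direct Chernoff route matches the elementary flavor of the other technical lemmas in the appendix.
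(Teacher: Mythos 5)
Your proof is correct and takes essentially the same route as the paper: diagonalize, reduce to a weighted sum of independent $\chi^2$'s, apply the Chernoff bound with the chi-square moment generating function, and pick $s = 1/(4\,\mathrm{Tr}(\Sigma))$. The only cosmetic difference is in how the product of per-coordinate MGF factors is collapsed to a constant—you use $-\log(1-x)\le 2x$ on $[0,1/2]$ to get $e^{1/2}$, while the paper uses $\prod_i(1-a_i)\ge 1-\sum_i a_i$ to get $\sqrt{2}$—and both constants are comfortably below $2$, so the stated bound follows either way.
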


\begin{proof}[Proof of \Cref{lem:normbound}]
    Assume $\Sigma = Q \Lambda^2 Q^T$ with $Q$ being an orthonormal matrix and $\Lambda$ being diagonal with diagonal $\lambda_i$ for $i \in [d]$, further let $g'$ being a standard gaussian random vector, then $g$ follows the same distribution as that of $\Lambda Q^T g'$, which is further identical to $\Lambda g'$.

    \begin{align*}
        &\prob( \| g \|_2  \ge   C) \\
        =&  \prob( \| \Lambda g' \|_2  \ge   C) \\
        =& \prob( \sum_{i = 1}^d \Lambda_i^2 (g'_i)^2 \ge C^2) \\
        \le& \E[\exp( t\sum_{i = 1}^d \Lambda_i^2 (g'_i)^2 - tC^2)].
    \end{align*}

    It is well known that the moment-generating function of $(g'_i)^2$ is
    \begin{align*}
        E[\exp(t\Lambda_i^2 (g'_i)^2)] = \frac{1}{\sqrt{1 - 2t \Lambda_i^2}}.
    \end{align*}

    Hence
    \begin{align*}
        \prob( \| g \|_2  \ge   C) \le& e^{-tC^2} \prod_{i = 1}^d \frac{1}{\sqrt{1 - 2t \Lambda_i^2}} \\
        \le&  \frac{e^{-tC^2}}{\sqrt{1 - 2t \mathrm{Tr}(\Sigma)}}.
    \end{align*}

    With $t = \frac{1}{4\mathrm{Tr}(\Sigma)}$, it holds that
    \begin{align*}
    \prob( \| g \|_2  \ge   C) \le& 2 e^{-\frac{C^2}{4\mathrm{Tr}(\Sigma)}}.
    \end{align*}
    This concludes the proof.
\end{proof}

\begin{lemma}[Doob's Inequality]
\label{lem:doob}
    Let $X_1, \dots, X_n$ as a positive submartingale adapted to filtration $\mathcal{F}_1, \ldots, \mathcal{F}_n$, which means $X_i \le \E[X_{i + 1} \mid  \mathcal{F}_i]$, then 
    \begin{align*}
        \prob( \sup_{i \le n} X_i > C) \le \frac{\E[X_n]}{C}.
    \end{align*}
\end{lemma}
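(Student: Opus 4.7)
The plan is to apply the standard stopping-time argument for maximal inequalities. First, I would introduce the first-passage time $\tau = \min\{i \le n : X_i > C\}$, with the convention that $\tau$ takes a value outside $\{1,\ldots,n\}$ when the supremum never exceeds $C$. The event $\{\sup_{i \le n} X_i > C\}$ then partitions into the disjoint union of the level sets $\{\tau = i\}$ for $i = 1, \ldots, n$, and each of these level sets is $\mathcal{F}_i$-measurable because membership is determined by $X_1, \ldots, X_i$ alone.

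The core of the argument is a two-step bound on each slice $\{\tau = i\}$. By the very definition of $\tau$, we have $X_i > C$ pointwise on this set, giving $C \cdot \prob(\tau = i) \le \E[X_i \mathbf{1}_{\{\tau = i\}}]$. Iterating the submartingale property $X_j \le \E[X_{j+1} \mid \mathcal{F}_j]$ from $j = i$ up to $j = n-1$ and applying the tower property, together with the fact that $\{\tau = i\} \in \mathcal{F}_i$, yields $\E[X_i \mathbf{1}_{\{\tau = i\}}] \le \E[X_n \mathbf{1}_{\{\tau = i\}}]$. Summing over $i = 1, \ldots, n$ and using the disjointness of the events $\{\tau = i\}$ collapses the right-hand side to $\E[X_n \mathbf{1}_{\{\sup_i X_i > C\}}]$, and positivity of $X_n$ lets me drop the indicator to obtain an upper bound of $\E[X_n]$.

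Dividing through by $C$ delivers the stated inequality. There is no serious obstacle here; the argument is entirely standard. The only place where the positivity hypothesis enters is the final step, where it allows me to replace $\E[X_n \mathbf{1}_{\{\sup X_i > C\}}]$ by $\E[X_n]$; without positivity, the conclusion would have to be stated in terms of $\E[X_n^+]$ instead. One minor cleanliness point worth noting is the choice of convention for $\tau$ when no crossing occurs, but this only affects the empty-set contribution and does not affect the estimate.
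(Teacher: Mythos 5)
Your argument is correct: the first-passage decomposition $\{\tau=i\}$, the pointwise bound $X_i > C$ on each slice, the iterated use of the submartingale property via the tower rule (valid since $\{\tau=i\}\in\mathcal{F}_i\subseteq\mathcal{F}_j$ for $j\ge i$), and the final use of positivity to drop the indicator constitute the standard and complete proof of Doob's maximal inequality. The paper itself states this lemma without proof, invoking it as a classical result, so there is nothing to compare against; your write-up supplies exactly the canonical argument, and your remark that positivity is only needed in the last step (otherwise one gets $\E[X_n^+]$) is accurate.
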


\section{Omitted Experiments Details}
\label{app:exp}
We train LLaMA models with 4 parameter sizes using the Levanter framework for our study on $\wsds$. For our theoretical study, we pretrain a 124M GPT-2 using the nanoGPT framework with a learning rate 6e-4 and train it with a batch size of 0.5M for 100k steps with warmup steps of 2k. 

We hereby provide all the hyperparameters we used for the LLaMA and GPT-2 models training.

\begin{table}[h!]
\centering
\begin{tabular}{cccccc}
\toprule
\textbf{Model} & \textbf{Hidden Dim} & \textbf{Intermediate Dim} & \textbf{Num Layers} & \textbf{Num Heads} & \textbf{Peak LR} \\
\midrule
0.1B LLaMa & 768 & 3072 & 12 & 12 & 6e-4 \\
0.3B LLaMa & 1024 & 2048 & 24 & 16 & 6e-4 \\
0.6B LLaMa & 1536 & 6144 & 24 & 32 & 4e-4 \\
1.2B LLaMa & 2048 & 8096 & 16 & 32 & 4e-4 \\
0.1B GPT-2 & 768 & 3072 & 12 & 12 & 6e-4 \\
\bottomrule
\end{tabular}
\caption{Specifications for Different Sizes of LLaMa Models}
\label{tab:llama_models}
\end{table}

We decay the model for the last $10\%$ of the training runs with one exception for 0.3B model using $\wsd$ method near 25k steps to avoid loss spikes. We outline the decaying and resuming point (the unit is 1k steps) we choose here:

\begin{table}[h!]
\centering
\begin{tabular}{cccccc}
\toprule
\textbf{Model} & \textbf{1st Decay Starts/Resume} & \textbf{2nd Decay Starts/Resume} & \textbf{3rd  Decay Starts} \\
\midrule
0.1B LLaMa & 11.25 / 12.5 & 22.5 / 25 & 48.75/ 53.75 \\
0.3B LLaMa & 11.25 / 12.5 & 22.5 / 25 & 48.75/ 53.75 \\
0.6B LLaMa & 11.25 / 12.5 & 22.5 / 25 & 48.75/ 53.75 \\
1.2B LLaMa & 11.25 /12.5 & 22.5 / 25 & 48.75/ 53.75 \\
\bottomrule
\end{tabular}
\caption{Specifications for Decaying Steps for $\wsds$ Method}
\label{tab:llama_models_wsds}
\end{table}

\begin{table}[h!]
\centering
\begin{tabular}{cccccc}
\toprule
\textbf{Model} & \textbf{1st Decay Starts/Ends} & \textbf{2nd Decay Starts/Ends} & \textbf{3rd Decay Starts} & \textbf{Total Steps} \\
\midrule
0.1B LLaMa & 11.25 / 12.5 & 22.5 / 25 & 45/ 50 & 53.75 \\
0.3B LLaMa & 11.25 / 12.5 & 22 / 25 & 45/ 50 & 54 \\
0.6B LLaMa & 11.25 / 12.5 & 22.5 / 25 &  45/ 50  & 53.75 \\
1.2B LLaMa & 11.25 /12.5 & 22.5 / 25 &  45/ 50  &  53.75\\
\bottomrule
\end{tabular}
\caption{Specifications for Decaying Steps for $\wsd$ Method}
\label{tab:llama_models_wsd}
\end{table}

\end{document}